\documentclass{article} 
\usepackage{iclr2026_conference,times}

\usepackage{hyperref}
\usepackage{url}
\usepackage{mylatexstyle}
\usepackage{enumitem}
\usepackage{bbm}
\usepackage{multirow} 
\allowdisplaybreaks

\title{Transformers Trained via Gradient Descent Can Provably Learn a Class of Teacher Models}

\author{Chenyang Zhang$^{\dagger}$, 
Qingyue Zhao$^{\ddagger}$, Quanquan Gu$^{\ddagger}$, 
Yuan Cao$^{\dagger}$\\
\textsuperscript{$\dagger$}
The University of Hong Kong \ 
 \ \textsuperscript{$\ddagger$}
University of California, Los Angeles
\\
\texttt{chyzhang@connect.hku.hk,zhaoqy24@ucla.edu}\\
\texttt{qgu@cs.ucla.edu,yuancao@hku.hk}
}

\iclrfinalcopy 
\begin{document}

\maketitle
\begin{abstract}
Transformers have achieved great success across a wide range of applications, yet the theoretical foundations underlying their success remain largely unexplored. To demystify the strong capacities of transformers applied to versatile scenarios and tasks, we theoretically investigate utilizing transformers as students to learn from a class of teacher models. Specifically, the teacher models covered in our analysis encompass convolution layers with average pooling, graph convolution layers, and various classic statistical learning models, including a variant of sparse token selection models \citep{sanford2023representational, wangtransformers24} and group-sparse linear predictors \citep{zhang2025transformer}. 
When learning from this class of teacher models, we prove that one-layer transformers with simplified ``position-only'' attention can successfully recover all parameter blocks of the teacher models, thus achieving the optimal population loss. Building upon the efficient mimicry of trained transformers towards teacher models, we further demonstrate that they can generalize well to a broad class of out-of-distribution data under mild assumptions. The key in our analysis is to identify a fundamental bilinear structure shared by various learning tasks, which enables us to establish unified learning guarantees for these tasks when treating them as teachers for transformers.
\end{abstract}

\section{Introduction}
Transformers have rapidly become a cornerstone in the field of modern machine learning, demonstrating exceptional performance and versatility across diverse applications, including natural language processing \citep{vaswani2017attention, radford2019language, openai2023gpt, devlin2018bert, achiam2023gpt, vig2019analyzing, touvron2023llama, ouyang2022training}, computer vision \citep{dosovitskiy2020image, rao2021dynamicvit, liu2021swin, yuan2021tokens, zhang2025mr, zhang2025v}, and reinforcement learning \citep{jumper2021highly, chen2021decision, janner2021offline, reed2022generalist}. Acting as a critical component of transformers,  self-attention layers assign varying weights to features based on their relevance and embedded positional context. This design principle intuitively endows transformers with a remarkable ability to efficiently process both structural and positional information, as empirically validated in numerous applications mentioned above. However, despite their profound impact, the theoretical foundations of transformers, especially the mechanisms of how self-attention layers work, remain largely unexplored due to their intricate architecture.

Some recent theoretical studies aimed to understand transformers by analyzing their capability in solving specific tasks \citep{zhang2024trained, freitrained, jelassi2022vision, wangtransformers24,zhang2025transformer}.  Specifically, \citet{zhang2024trained} considered in-context linear regression, and demonstrated that for Gaussian data, a one-layer transformer with linear attention can perform linear regression based on the context, and then apply the obtained linear model to make predictions on query data. Later, \citet{freitrained} further extended the setting to in-context linear classification, and studied the in-context benign overfitting phenomena when learning from Gaussian mixture data. \citet{jelassi2022vision} investigated a specific data model based on the 'patch association' assumption, where an image is divided into disjoint partitions, and patches within the same partition share similar characteristics. They theoretically demonstrate that a one-layer vision transformer (ViT) can extract the spatial structure among patches when trained on this data model. \citet{wangtransformers24} studied a problem termed 'sparse token selections', where the objective is to find the average of several tokens from specific positions, and they proved that a one-layer transformer can successfully solve this task on Gaussian data when the positional information of the target positions is embedded into the query token.  \citet{zhang2025transformer} considered a group sparse linear model, where the input's label is determined by features from only one of several input feature groups (the 'label-relevant group'), and prove that for Gaussian data, a trained one-layer transformer can achieve correct classification by identifying features from this group and learning the ground truth linear classifier. Although these works have offered valuable insights into the underlying mechanisms of transformers, their focus on very specific learning tasks limits the generality of their theoretical findings, prompting us to seek a unified theoretical framework accounting for a broader range of examples.

Despite the distinctions among the model simplifications and technical assumptions, we observe that for some learning tasks discussed above, including a variant of the sparse token selection \citep{sanford2023representational, wangtransformers24}, the group sparse linear predictors \citep{zhang2025transformer}, and patch association \citep{jelassi2022vision}, their true responses are essentially given by bilinear functions. 
In addition, the linear attention studied in \citet{zhang2024gradient, freitrained} inherently constitutes a bilinear structure with respect to its parameter matrices. Motivated by this observation, we define a general class of ``teacher models" that employ a bilinear structure, and investigate the setting where one-layer transformers are trained as ``student'' models under the supervision from these teacher models. 
Our framework not only encompasses the learning tasks from prior works but also covers popular, previously unexplored models such as convolution layers with average pooling and graph convolution layers on regular graphs. The purpose of our analysis is to establish unified theoretical guarantees for one-layer transformer models trained with gradient descent in learning this class of teacher models.



The major contributions of this work are as follows.
\begin{itemize}[leftmargin = *]
    \item We theoretically demonstrate that one-layer transformers trained via gradient descent can effectively recover a general class of teacher models. To support this claim, we establish a tight convergence guarantee for the population loss, with matching upper and lower bounds at the rate of $\Theta\big(\frac{1}{T}\big)$, where $T$ is the iteration number of gradient descent.  We also establish out-of-distribution generalization bounds for the obtained transformer model and demonstrate that it is competitive with the teacher model over a wide rage of learning tasks. This illustrates the effectiveness and robustness of transformer models in learning from diverse teacher models.
    \item 
    Our theory covers a wide range of learning tasks, including some settings closely related to those studied in \citep{wangtransformers24,zhang2025transformer}. Specifically, \citet{wangtransformers24} study a type of ``sparse token selection'' task where the goal is to select a number of target input tokens specified by a query column, and then output their average. Assuming that the positions of the target tokens are randomly generated for each data point, the authors establish an $\cO\big(\frac{\log(T)}{T}\big)$ convergence rate. In comparison, our setting covers a slightly different task where the target positions are fixed but  are not explicitly fed to model, and our theoretical results demonstrate a tight $\Theta\big(\frac{1}{T}\big)$ convergence rate with matching upper and lower bounds. Compared with \citet{zhang2025transformer} which focuses on group sparse linear classification, our work provides complementary results and demonstrates that transformers can also perform efficient group sparse linear regression.

    \item 
    Experiments on both synthetic and real-world data are conducted to verify our theory through the examples of learning a convolution layer with average pooling, learning a graph convolution layer with regular graphs, learning sparse token selection, and group sparse linear regression.  
    In all experiments, we can observe clear loss convergence and parameter convergence that match our theory. The experiments setup does not exactly match our theory assumptions, indicating that our theory conclusions can also hold in more practical training setups and real-data learning tasks. 

    
\end{itemize}

\section{Problem setup}\label{section:problem_setup}
In this section, we introduce the definition of the teacher models we study in this paper,  and give various examples covered in our definition.

We consider a teacher model with an input matrix $\Xb\in \RR^{d\times D}$ of the following form:
\begin{align}\label{eq:def_teacher_model}
    f^*(\Xb) = \sigma(\Vb^*\Xb\Sbb^*),
\end{align}
where $\Vb^*\in \RR^{M\times d}$ is the ground truth value matrix of the teacher model, and $\Sbb^*\in \RR^{D\times D}$ is the ground truth softmax scores. Each column of $\Sbb^*$ has $K$ non-zero entries equivalent to $\frac{1}{K}$. In addition, $\sigma(\cdot)$ denotes either an identity map, ReLU, or Leaky ReLU activation function.

The teacher models defined in~\eqref{eq:def_teacher_model} can cover a general class of functions (models). Notably, when $K=1$ and all the non-zero entries of $\Sbb^*$ appear on its diagonal, $\Sbb^*$ equals the identity matrix $\Ib_D$. In this scenario,  the teacher model~\eqref{eq:def_teacher_model} reduces to $f^*(\Xb) = \sigma(\Vb^*\Xb)$, and can be seen as a single-layer neural network. Besides this naive example where $\Sbb^*=\Ib_D$, the teacher model~\eqref{eq:def_teacher_model} also includes some other common architectures and models. We discuss these examples in the following.

\begin{example}[Single convolutional layer with average pooling]\label{example:cnn} 
We consider a convolution layer consisting of convolution operation, average pooling, and then the activation function. 
The convolution operation is essentially performed by taking inner products between each convolution kernel with each patch of the input. We consider a convolution layer with $M$ (vectorized) kernels $\vb_1^*,\ldots, \vb_M^*$, and consider an input consisting of $D$ (vectorized) patches $\xb_1,\ldots, \xb_D$. In average pooling, we take averages according to a partition of the $D$ patches. Let $\cG = \{g_1, g_2, \ldots, g_J\}$ be a disjoint partition of $[D]$, forming $J$ pooling groups with $|g_j|=K$, $j\in [J]$. Then the final output of this convolution layer corresponding to the $j$-th pooling group and the $m$-th kernel is given as
\begin{align*}
    \sigma \bigg( \frac{1}{K} \sum_{i\in g_j} \la \vb_m^*, \xb_i \ra \bigg) =\sigma( \vb_m^{*\top} \Xb \mathbf{1}_{g_j} / K),~m\in[M],~ j\in[J],
\end{align*}
where $\sigma$ is the activation function, $\Xb = [\xb_1, \xb_2, \ldots, \xb_D] \in \mathbb{R}^{d \times D}$, and $\mathbf{1}_{g_j}\in \RR^D$ is a vector whose entries are $1$ for indices in $g_j$, and $0$ otherwise. Then, we can summarize all outputs into a matrix:
\begin{align*}
    F_{\text{CNN}}(\Xb) = \sigma( \Vb^{*} \Xb [\mathbf{1}_{g_1},\ldots, \mathbf{1}_{g_J}] / K) \in \RR^{M\times J},
\end{align*}
where $\Vb^{*} = [\vb_1^*,\ldots, \vb_M^*]^\top \in \RR^{M\times d}$. Here, the $j$-th column of $F_{\text{CNN}}(\Xb)$ corresponds to the output of $j$-th pooling group $g_j$, and $m$-th row of $F_{\text{CNN}}(\Xb)$ corresponds to the output of $m$-th kernel $\vb^*_m$. 

To formulate the convolution layer above as a teacher for transformers, we further specify the correspondence between each input patch and the output. The teacher model can then be given as $f^*(\Xb) = \sigma(\Vb^*\Xb\Sbb^*)$, 
where the $i$-th column of $\Sbb^*$ is $\mathbf{1}_{g_j}/K$, with $g_j$ being the group containing $i$. 

\end{example}


\begin{example}[Single graph convolution layer on a regular graph]\label{example:gcn}
Let $\Ab\in \RR^{D\times D}$ be an adjacency matrix of a degree-$(K-1)$ regular graph with $D$ nodes, and $\Xb = [\xb_1, \xb_2, \ldots, \xb_D] \in \mathbb{R}^{d \times D}$ be the feature matrix of this graph, with each column $\xb_i$ (for all $i$ in $[D]$) representing the $d$-dimensional feature vector of the $i$-th node. A typical single graph convolution layer \citep{kipf2017semi}, with weight matrix $\Vb^*\in \RR^{M\times d}$ is defined as
\begin{align}\label{eq:def_gcn}
    F_{\text{GCN}}(\Xb) = \sigma(\Vb^*\Xb\tilde \Db^{-1/2} \tilde\Ab\tilde \Db^{-1/2}),
\end{align}
where $\tilde\Ab = \Ab+\Ib_D$ is the adjacency matrix with self-connections added,  and $\tilde \Db$ is the diagonal degree matrix of $\tilde\Ab$. For a degree-$(K-1)$ regular graph, each node has $K-1$ neighbors, and hence each column of $\tilde{\Ab}$ contains $K$ ones and $D-K$ zeroes, and $\tilde \Db = K\cdot \Ib_D$. Therefore,
the GCN defined in~\eqref{eq:def_gcn} is equivalent to  a $f^*(\Xb) = \sigma(\Vb^*\Xb\Sbb^*)$ with $\Vb^*$ and  $\Sbb^* = \tilde \Db^{-1/2} \tilde\Ab\tilde \Db^{-1/2} = \tilde{\Ab} / K$.
\end{example}

\begin{example}[Sparse token selection model \citep{sanford2023representational, wangtransformers24}]\label{example:sparse1}
Let $\Xb = [\xb_1, \xb_2, \ldots, \xb_D] \in \mathbb{R}^{d \times D}$ be a sequence of $d$-dimensional tokens. Given a $K$-element index set $g\subseteq [D]$, the goal of sparse token selection is to (i) select the tokens $\xb_i$, $i\in g$, and (ii) take an average over the selected tokens. Hence, we can define 
$$F_{\text{STS}}(\Xb) = \frac{1}{K}\sum_{i \in g}\xb_i.$$ 
Then it is clear that $f^*(\Xb) = \sigma(\Vb^*\Xb\Sbb^*)$
with $\Vb^*=\Ib_D$, $\Sbb^*=\frac{1}{K}\mathbf{1}_{g}\cdot \mathbf{1}_{D}^\top \in \RR^{D\times D}$, and $\sigma(\cdot)$ being identity map
is equivalent to $F_{\text{STS}}(\Xb)$, except that $f^*(\Xb)$ duplicates the output $D$ times to match the output dimensions of a self-attention layer.
\begin{remark}
    The ``sparse token selection" task defined in Example~\ref{example:sparse1} is slightly different from that studied in \citet{wangtransformers24}. In our setting, the index set $g$ is specified as part of the learning objective and therefore remains fixed across all inputs. In contrast, \citet{wangtransformers24} considers a setting in which $g$ is provided as part of the input, allowing target positions to vary between different inputs. We remark that despite the difference, our learning task and that studied in \citet{wangtransformers24} essentially lead to very similar learning dynamics. We provide a detailed discussion in Appendix~\ref{section:comparison}.
\end{remark}
\end{example}

\begin{example}[Group sparse linear predictors \citep{zhang2025transformer}]\label{example:sparse2}Let $\Xb = [\xb_1, \xb_2, \ldots, \xb_D] \in \mathbb{R}^{d \times D}$ be a sequence of $d$-dimensional feature groups. For a given ground truth vector $\vb^*\in \RR^d$, and a label-relevant group index $i^*$, the group sparse linear predictor will first search for the variable group $\xb_i$ corresponding to the label-relevant index $i^*$, and then calculate its inner product with the ground truth vector $\vb^*$. Hence, we define 
\begin{align*}
    F_{\text{GSLP}} = \la\vb^*, \xb_{i^*}\ra.
\end{align*}
Consider a teacher model $f^*(\Xb) = \sigma(\Vb^*\Xb\Sbb^*)$
with $\Vb^* = \vb^*$ by reducing $M$ to 1, $\Sbb^* = \mathbbm{\eb}_{i^*}\cdot\mathbf{1}_{D}^\top$, and $\sigma(\cdot)$ being identity map. Then similar to Example~\ref{example:sparse1}, $f^*(\Xb)$ duplicates the output of $F_{\text{GSLP}}(\Xb)$ for $D$ times, and is essentially equivalent to $F_{\text{GSLP}}(\Xb)$.
\end{example}

\noindent\textbf{One-layer transformer.} 
A one-layer transformer model \citet{vaswani2017attention, dosovitskiy2020image} can be defined as
\begin{align}\label{def:classisc_tf}
\text{TF}(\Zb; \Wb_V; \Wb_Q; \Wb_K) = \sigma\Bigg(\Wb_V \Zb \cS\bigg(\frac{\Zb^\top \Wb_{K}^{\top}\Wb_Q \Zb}{\sqrt{D}}\bigg)\Bigg).
\end{align}
In this formulation, $\Zb$ represents the input matrix of the transformers, obtained by concatenating the original feature matrix $\Xb$ with its positional encoding matrix $\Pb$. Specifically, for each column $\xb_i$ (for all $i\in [D]$) of the original feature matrix $\Xb$, we concatenate it with the position encoding vector $\pb_i$, which contains the positional information of this specific index, to generate a column of $\Zb$ as $\zb_i =[\xb_i^\top, \pb_i^\top]^\top$. The complete positional encoding matrix is denoted as $\Pb=[\pb_1, \pb_2, \ldots, \pb_D]$, and we employ an orthogonal design for $\Pb$, meaning that $\Pb$ is an $D\times D$ orthogonal matrix. For analytical convenience, the practice of concatenating feature and positional encoding matrices has been widely adopted in recent theoretical studies \citep{nichanitransformers2024,bai2024transformers,wangtransformers24, zhang2025transformer}.
Furthermore, $\cS(\cdot):\RR^{D\times D}\mapsto\RR^{D\times D}$ denotes the softmax operator, which implements the softmax function column-wisely, and $\Wb_V$, $\Wb_Q$, $\Wb_K$  represent the value matrix, query matrix, and key matrix in a typical self-attention structure, respectively. Instead of studying the typical structure~\eqref{def:classisc_tf}, we consider a moderately simplified ``position-only'' softmax self-attention in this paper, which is defined as 
\begin{align}\label{def:position_only_tf}
    \text{TF}(\Zb; \Wb_V; \Wb_{KQ}) = \sigma\Bigg(\Wb_V \Xb \cS\bigg(\frac{\Pb^\top \Wb_{KQ} \Pb}{\sqrt{D}}\bigg)\Bigg) = \sigma(\Wb_V \Xb \Sbb)\in \RR^{M\times D}.
\end{align}


    In comparison with the typical single-head self-attention architecture~\eqref{def:classisc_tf}, our model~\eqref{def:position_only_tf} is simplified from the following two aspects: (i). We re-parameterize the original key matrix $\Wb_{K}$ and query matrix $\Wb_{Q}$ into one trainable key-query matrix $\Wb_{KQ}$, which has been adopted in almost theoretical studies regarding the optimization of transformers \citep{tian2023scan, zhang2024trained,wangtransformers24, huangcontext, freitrained, zhang2025transformer, he2025context}. (ii). We employ an architecture such that
    only the positional encoding matrix $\Pb$ is involved when calculating the softmax attention score, and the value matrix $\Wb_V$ only interacts with the feature matrix $\Xb$. To illustrate a rationale for this design, consider the following one-layer transformers:
    \begin{align}\label{eq:entire_tf}
        \tilde{\text{TF}}(\Zb; \tilde\Wb_V; \tilde\Wb_{KQ})  = \sigma\Bigg(\tilde\Wb_V \Zb \cS\bigg(\frac{\Zb^\top \tilde\Wb_{KQ} \Zb}{\sqrt{D}}\bigg)\Bigg),
    \end{align}
    where the entire input matrix $\Zb$ is involved in both the calculation of attention score and interactions with the value matrix. 
    Empirical observations (illustrated in Figure~\ref{fig:para_complete}) reveal that when the transformer model $\tilde{\text{TF}}$ in ~\eqref{eq:entire_tf} is used to learn a teacher model $f^*$ in~\eqref{eq:def_teacher_model}, substantial training predominantly occurs in the left block of $\tilde\Wb_V$ and the `bottom-right' block of $\tilde\Wb_{KQ}$. These actively trained blocks map to $\Wb_V$ and $\Wb_{KQ}$ respectively in our model~\eqref{def:position_only_tf}, while other parameter blocks of $\tilde{\text{TF}}$ exhibit negligible changes from their initial values. Consequently, our model~\eqref{def:position_only_tf} can be considered essentially equivalent to the transformer model $\tilde{\text{TF}}$ if these rarely updated blocks within $\tilde\Wb_V$ and $\tilde\Wb_{KQ}$ are fixed to zero.  This strategy of fixing certain transformer parameters during training is widely adopted in the theoretical studies on the optimization of transformers \citep{wu2023many, tarzanagh2023transformers, huangcontext, sakamoto2024benign, freitrained, he2025context}, and analogous ``position-only'' attention structures are also adopted in~\citet{jelassi2022vision, wangtransformers24}.     

\begin{figure}[ht!]
\vspace{-0.15in}
	\begin{center}
			\subfigure[Heatmap of $\tilde\Wb_V$]{\includegraphics[width=0.4\textwidth]{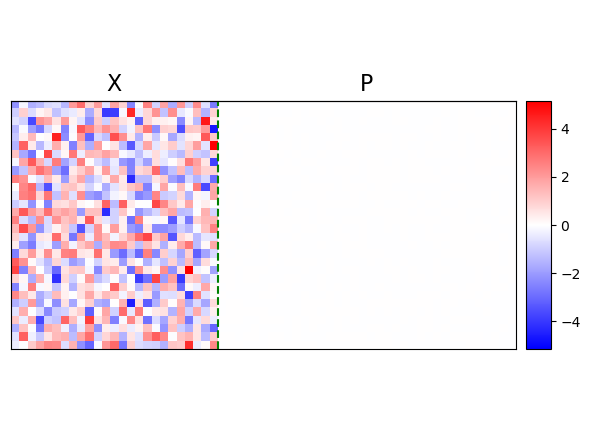}\label{subfig:tilde_W_V}}
			\subfigure[Heatmap of $\tilde\Wb_{KQ}$]{\includegraphics[width=0.4\textwidth]{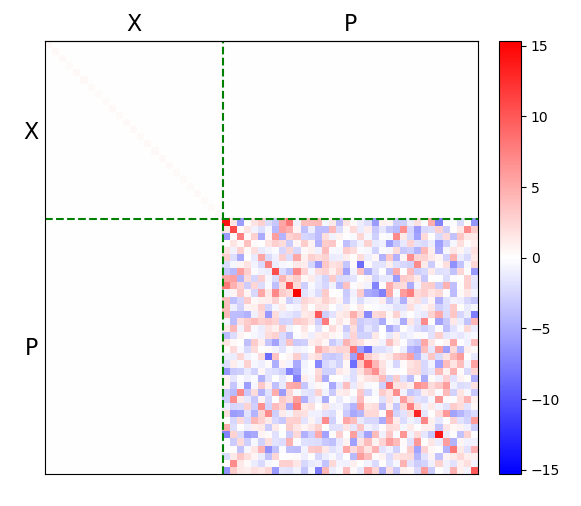}\label{subfig:tilde_W_KQ}}
	\end{center}
	\vskip -12pt
        \caption{Visualization of parameter matrices for the transformer $\tilde{\text{TF}}$ in~\eqref{eq:entire_tf}, obtained after training to learn the teacher model $f^*$ and achieving loss convergence. The formal illustration of the loss function and training algorithm is provided in the next section.}
	\label{fig:para_complete}
\end{figure}
\section{Main results}\label{section:main_results}
In this section, we demonstrate our theoretical conclusions of utilizing a one-layer transformer~\eqref{def:position_only_tf} to learn a given teacher model $f^*$ in~\eqref{eq:def_teacher_model}. For a teacher model $f^*$ parameterized with the ground truth value matrix $\Vb^*$ and ground truth softmax scores $\Sbb^*$, the observed label $\Yb$ for an input matrix $\Xb$ is assumed to be generated as:
\begin{align}\label{def:observe_label}
    \Yb = f^*(\Xb) + \cE = \sigma(\Vb^* \Xb \Sbb^*) + \cE\in \RR^{M\times D},
\end{align}  
where $\cE\in \RR^{M\times D}$ is a noise matrix independent of $\Xb$ and following a zero-mean distribution. To train a one-layer transformer~\eqref{def:position_only_tf}, we consider the population mean squared error as the objective loss function. Specifically, given an input-label pair $(\Xb, \Yb)$, the loss function is defined as 
\begin{align}\label{def:loss_I}
    \cL(\Wb_V; \Wb_{KQ}) = \frac{1}{2}\EE_{\Xb, \Yb}\big[\|\Yb -\mathrm{TF}(\Zb;\Wb_V;\Wb_{KQ})\|_F^2\big].
\end{align}
 Here, each column of $\Xb$ is assumed to independently follow the standard Gaussian distribution during the training stage of~\eqref{def:position_only_tf}, i.e. $\xb_i\overset{\mathrm{i.i.d}}{\sim}\cN(0, \Ib_d)$ for all $i\in [D]$. Due to the variance introduced by the noise component $\cE$, even the loss of the ground truth model $f^*$ has an irreducible term, and we denote this term as the optimal loss, i.e.
\begin{align*}
    \cL_{\mathbf{opt}} = \frac{1}{2}\EE_{\Xb, \Yb}\big[\|\Yb -f^*(\Xb)\|_F^2\big] = \frac{1}{2}\EE\big[\|\cE\|_F^2\big].
\end{align*}
To evaluate the performance of one-layer transformer with different $\Wb_V$ and $\Wb_{KQ}$, we consider the excess loss defined as: $\cL(\Wb_V; \Wb_{KQ}) -  \cL_{\mathbf{opt}}$.
While the choice population loss implicitly suggests an infinite training data set—a scenario not feasible in practice—it significantly simplifies the technical challenges of conducting a rigorous optimization analysis for transformer models.  This approach enables us to focus on the global optimization trajectories, and has been adopted in most of the recent theoretical studies regarding the optimization of transformer models \citep{zhang2024trained, huangcontext, wangtransformers24, jelassi2022vision, freitrained, zhang2025transformer}.

For the training objective loss~\eqref{def:loss_I}, we utilize the gradient descent to derive the optimal solutions for the value matrix $\Wb_{V}$, and key-query matrix $\Wb_{KQ}$. The iterative rule for $\Wb_{V}$ and $\Wb_{KQ}$ during the learning process can be expressed as
\begin{align}
    &\Wb_{V}^{(t+1)}=\Wb_{V}^{(t)}-\eta \nabla_{\Wb_{V}}\cL(\Wb_{V}^{(t)};\Wb_{KQ}^{(t)}); \label{eq:wvt_update}\\
    &\Wb_{KQ}^{(t+1)}=\Wb_{KQ}^{(t)}-\eta \nabla_{\Wb_{KQ}}\cL(\Wb_{V}^{(t)};\Wb_{KQ}^{(t)}),  \label{eq:wkqt_update}
\end{align}
where $\eta$ is the learning rate, and the initializations are set as $\Wb_V^{(0)}, \Wb_{KQ}^{(0)} = \mathbf{0}$. Based on these preliminaries, the following theorem characterizes the convergence of gradient descent~\eqref{eq:wvt_update} and~\eqref{eq:wkqt_update}.

\begin{theorem}\label{thm:main_result}
Suppose that $D\geq \Omega\big(\poly(M, K)\big)$, $\eta\leq\cO(M^{-1}D^{-5/2})$. Under these conditions, there exists $T^* = \Theta\Big(\frac{KD^2}{\eta \|\Vb^*\|_F^2}\Big)$, such that for all $T\geq T^*$, the following results hold.
\begin{enumerate}[leftmargin = *]
     \item The attention scores achieved by the one-layer transformer~\eqref{def:position_only_tf}, match the ground truth softmax scores of the teacher model: $\Sbb^{(T)}$ at the $T$-th iteration satisfies that 
  \begin{align*}
      \big\|\Sbb^{(T)} - \Sbb^*\big\|_F= \Theta\Bigg(\frac{D^{\frac{5}{2}}}{\|\Vb^*\|_F\sqrt{\eta T}}\Bigg).
  \end{align*}
     
    
    \item The value matrix $\Wb_{V}$ of the one-layer transformer~\eqref{def:position_only_tf} aligns with the ground truth value matrix of the teacher model:
    \begin{align*}
        \big\|\Wb_{V}^{(T)}- \Vb^*\big\|_F= \Theta\Bigg(D^2\sqrt{\frac{K}{\eta T}}\Bigg).
    \end{align*}
    \item The excess loss is minimized with matching lower and upper bounds:
    \begin{align*}
       \frac{\underline c KD^4}{\eta T}\leq \cL\Big(\Wb_V^{(T)}; \Wb_{KQ}^{(T)}\Big) -\cL_{\mathbf{opt}} \leq  \frac{\bar cKD^4}{\eta T},
    \end{align*}
    where $\underline c$ and $\bar c$ are two positive constants satisfying $\underline c \leq \bar c$.
    
\end{enumerate}
\end{theorem}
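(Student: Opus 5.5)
The plan is to reduce the population loss to an explicit function of $(\Wb_V,\Sbb)$ and then track the gradient descent trajectory in a low-dimensional, symmetry-reduced form. I present the argument for $\sigma$ the identity. The ReLU and Leaky ReLU cases follow the same lines, with the Gaussian expectations replaced by their arc-cosine-kernel analogues; for those cases one still has to check that the cross terms keep the same sign structure, and I have not verified this.

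\textbf{Step 1: closed form of the loss.} Since the columns of $\Xb$ are i.i.d.\ $\cN(0,\Ib_d)$ and $\cE$ is independent with zero mean, we have
\begin{align*}
\cL-\cL_{\mathbf{opt}}=\tfrac12\EE\|\Wb_V\Xb\Sbb-\Vb^*\Xb\Sbb^*\|_F^2
=\tfrac12\Big(\|\Wb_V\|_F^2\|\Sbb\|_F^2-2\la \Wb_V,\Vb^*\ra\la\Sbb,\Sbb^*\ra+\|\Vb^*\|_F^2\|\Sbb^*\|_F^2\Big).
\end{align*}
This is the bilinear structure mentioned in the abstract. Because $\Pb$ is orthogonal, $\Sbb=\cS(\Pb^\top\Wb_{KQ}\Pb/\sqrt D)$ and the dynamics of $\Wb_{KQ}$ are equivalent to gradient descent on the logit matrix $\Ab=\Pb^\top\Wb_{KQ}\Pb$ with the same step size. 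Each column of $\Ab$ then evolves through its own softmax Jacobian.

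\textbf{Step 2: invariant structure.} From zero initialization, $\nabla_{\Wb_V}\cL=\Wb_V\|\Sbb\|_F^2-\Vb^*\la\Sbb,\Sbb^*\ra$. By induction, $\Wb_V^{(t)}=a_t\Vb^*$ for a scalar $a_t$.

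For $\Ab$, the gradient of column $i$ is
\begin{align*}
\frac{1}{\sqrt D}\big(\mathrm{diag}(\sbb_i)-\sbb_i\sbb_i^\top\big)\big(a_t^2\|\Vb^*\|_F^2\,\sbb_i-a_t\|\Vb^*\|_F^2\,\sbb_i^*\big).
\end{align*}
Since every column of $\Sbb^*$ is $\mathbf 1_{g}/K$ for some support of size $K$, symmetry shows by induction that each column of $\Ab^{(t)}$ takes only two values: $b_t$ on the support of $\sbb^*_i$ and $c_t$ off it. Setting $u_t=b_t-c_t$, the whole system reduces to a two-dimensional recursion in $(a_t,u_t)$. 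In this recursion the on-support attention mass is $p_t=Ke^{u_t/\sqrt D}/(Ke^{u_t/\sqrt D}+D-K)$ and $\|\Sbb-\Sbb^*\|_F^2\asymp D(1-p_t)^2/K+\dots$.

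\textbf{Step 3: phase analysis.}
\begin{itemize}
\item \emph{Phase 1} (lasting $\Theta(1/(\eta\|\Vb^*\|_F^2))$ in scaled time). Here $p_t\approx K/D$. Then $a_t$ rises quickly to its conditional optimum $\la\Sbb,\Sbb^*\ra/\|\Sbb\|_F^2$, which is of order $1$ since both inner products are about $1$ when attention is uniform. Meanwhile $u_t$ grows linearly at rate $\eta\|\Vb^*\|_F^2/(\sqrt D\cdot D)$-type, because the gradient signal is $a_t(1/K-1/D)\cdot$ Jacobian entries of order $1/D$.
\item \emph{Phase 2.} Once $u_t\gtrsim\sqrt D\log(D/K)$, which takes $T^*=\Theta(KD^2/(\eta\|\Vb^*\|_F^2))$ steps (the dominant cost), we have $p_t\to1$ and $a_t\to1$.
\item \emph{Phase 3.} The recursion becomes $u_{t+1}\approx u_t+\eta C(1-p_t)$ with $1-p_t\approx (D/K)e^{-u_t/\sqrt D}$. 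This logarithmic growth gives $1-p_t=\Theta(D^{3/2}/(\eta\|\Vb^*\|_F^2 t))$ up to the stated powers. Feeding this into $\|\Sbb-\Sbb^*\|_F$, into $1-a_t$ (which tracks $1-p_t$ through the closed form of $a$), and into the loss gives the three $\Theta(\cdot)$ rates. The lower bounds come from the fact that the increments of $u$ are at most of the same order, so the logarithmic growth is two-sided.
\end{itemize}
The condition $\eta\le\cO(M^{-1}D^{-5/2})$ ensures that each discrete step stays close to the ODE and keeps $a_t$ bounded, with $a_t\le 1+o(1)$.

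\textbf{Main obstacle.} The coupled nonlinear recursion in Step 3 is the hard part, especially obtaining matching upper and lower bounds with the exact $D$ and $K$ powers. My first approach is to compare the recursion with the ODE $\dot u=C(D/K)e^{-u/\sqrt D}$, whose solution is explicit. I would sandwich the discrete trajectory between two such ODEs with constants $\underline c,\bar c$, using monotonicity of $u_t$ and the requirement $D\ge\poly(M,K)$ to absorb the $K/D$ corrections and the discretization error.
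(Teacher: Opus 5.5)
Your Steps 1--2 are correct for the identity case and match the paper's reduction (Lemma~\ref{lemma:para_pattern}). The argument breaks in Phase 3, because the logit increment is quadratic, not linear, in $1-p_t$. Apply the softmax Jacobian to your residual: the on-support entry of the gradient for column $i$ is $\frac{\|\Vb^*\|_F^2}{\sqrt D}\cdot\frac{a p_t}{K}(1-p_t)\big[\frac{a p_t-1}{K}-\frac{a(1-p_t)}{D-K}\big]$.

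The Jacobian already supplies one factor $1-p_t$. At the conditional optimum $a^*(p)=\frac{(D-K)p}{Dp^2-2Kp+K}$ one has $a^*p-1=-\frac{K(1-p)^2}{Dp^2-2Kp+K}$, so the bracket is $\approx-\frac{1-p_t}{D-K}$, which is a second factor. Hence $u_{t+1}-u_t\asymp(1-p_t)^2\asymp e^{-2u_t/\sqrt D}$. Then $e^{2u_T/\sqrt D}$ grows linearly in $T$, so $u_T=\frac{\sqrt D}{2}\log(\Theta(T))$ (Lemma~\ref{lemma:convergence_p} via Lemma~\ref{lemma:exp_sequence}) and $1-p_T=\Theta(T^{-1/2})$.

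Your recursion $u_{t+1}\approx u_t+\eta C(1-p_t)$ instead gives $1-p_T\propto 1/T$. That would make $\|\Sbb^{(T)}-\Sbb^*\|_F\propto 1/T$ and the excess loss $\propto 1/T^2$, which contradicts the rates you are trying to prove. The $T^{-1/2},T^{-1/2},T^{-1}$ pattern in the statement comes precisely from this squared factor.

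The second missing piece is control of $a_t$. Your one-variable ODE sandwich assumes $a_t=a^*(p_t)$. But the bracket above changes sign once $a_t$ exceeds $a^*(p_t)$ by a relative amount $\frac{K(1-p_t)}{p_t(Dp_t-K)}$, which is about $K(1-p_t)/D$ near the end. Past that point the attention logits start to decrease. So you need a two-sided tracking band whose width shrinks to zero, and this is the core of the paper's proof:
\begin{itemize}
\item Lemmas~\ref{lemma:upperbound_C_1} and~\ref{lemma:lowerbound_C_1} maintain $|C_1(t)/C_1^*(t)-1|\le\frac{4A(t)}{5(A(t)+B(t))}\frac{1-Kp(t)}{Kp(t)(Dp(t)-1)}$ for $t\ge t_1$. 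They do this by a two-threshold induction that exploits the slow motion of $C_1^*(t)$ (Lemmas~\ref{lemma:delta_s}, \ref{lemma:C_1_star_t+1_lower}, \ref{lemma:C_1_star_t+1_upper}).
\item Lemma~\ref{lemma:delta_c_2c_3} shows that inside this band the brackets driving $C_2,C_3$ equal $(1-\alpha)$ times their values at $C_1=C_1^*$, with $\alpha<4/5$. This is also what gives monotonicity of $C_2,C_3$.
\end{itemize}

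Your claim $a_t\le 1+o(1)$ is false. $a^*(p)$ rises to about $\frac12\sqrt{D/K}$ at $p\approx\sqrt{K/D}$ before returning to $1$, and for ReLU it already starts at $\Theta(\sqrt{D/K})$.

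$T^*$ is asserted rather than derived. Linear growth of $u_t$ up to $\sqrt D\log(D/K)$ would cost an extra logarithm. The actual driver is self-accelerating growth: in the identity case, $p_{t+1}-p_t\asymp\eta\|\Vb^*\|_F^2p_t^3/K^3$ while $p_t\ll\sqrt{K/D}$. This is a power-type recursion whose cost is dominated by its start; the paper handles this phase with Lemma~\ref{lemma:sequence_time}.

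Finally, ReLU and Leaky ReLU are part of the statement and are not covered. For them the loss is not your bilinear form. The sign structure you flagged as unverified is exactly what Lemma~\ref{lemma:delta_c_2c_3} establishes, using the closed forms of $F_1,\dots,F_5$ and the split $F_3^{(t)}=A(t)+B(t)$.
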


The proof of Theorem~\ref{thm:main_result} is given in Appendix~\ref{section: proof}. Theorem~\ref{thm:main_result}  demonstrates that a one-layer transformer can learn the teacher model $f^*$ formulated in~\eqref{eq:def_teacher_model} from two aspects. The first and second results 
show that the one-layer transformer's value matrix $\Wb_V^{(T)}$ and attention scores $\Sbb^{(T)}$ converge (in the Frobenius norm) to the teacher model's ground truth value matrix $\Vb^*$ and softmax scores $\Sbb^*$, respectively. This reveals that  a one-layer transformer trained via gradient descent can correctly recover the teacher model by accurately learning all its core components. The third result in Theorem~\ref{thm:main_result} shows that the training loss will eventually converge to the optimal loss at a rate of $\Theta \big(\frac{KD^4}{\eta T}\big)$. 
The third result characterizes the convergence of the training loss. It shows that the excess loss decreases at the rate $\Theta\big(\frac{KD^4}{\eta T}\big)$, with matching upper and lower bounds. We note that the factor $D^4$ indicates that the convergence takes a large number of iterations when the sequence length $D$ is large. However, the matching lower bound in Theorem~\ref{thm:main_result} confirms that this rate is already optimal and cannot be improved under our current setting.
In fact, this polynomial dependence on $D$ originates from two intrinsic aspects of the learning task: (i) Since the loss is the squared Frobenius distance between two $M\times D$ matrices, it necessarily aggregates errors over all $D$ columns, and thus scales proportionally with the sequence length; (ii) The $1/\sqrt{D}$ factor appears in the gradients of $\Wb_{KQ}$ and requires $\Wb_{KQ}$ to scale larger to achieve sufficient convergence, thereby introducing additional factors of $D$ into the convergence rate.

As illustrated in Examples~\ref{example:sparse1} and \ref{example:sparse2}, our teacher model $f^*$ encompasses settings that are closely related to the learning tasks studied in \citet{wangtransformers24} and \citet{zhang2025transformer}.
For the ``sparse token selection" problem, Theorem~\ref{thm:main_result} establishes a learning guarantee for the setting in which the target index set is fixed by the learning objective and not provided as part of the input. This offers a complementary perspective to the settings in \citet{wangtransformers24}, where the target index set is given as a part of input, and may vary across different data points.
Under our setting, Theorem~\ref{thm:main_result} yields a tight $\Theta\big(\frac{1}{T}\big)$ convergence rate with matching upper and lower bounds, sharper than the $\cO\big(\frac{\log (T)}{T}\big)$guarantee obtained under the different problem formulation of \citet{wangtransformers24}
A detailed comparison between the convergence rate is provided in Appendix~\ref{section:comparison}.
Regarding group-sparse linear prediction, \citet{zhang2025transformer} focus primarily on the classification setting, while Theorem~\ref{thm:main_result} delivers a complementary result by addressing the regression setting.

The learning guarantee in Theorem~\ref{thm:main_result} is established under the assumption that the data input matrix $\Xb$ is Gaussian, and the target response matrix $\Yb$ is provided by the teacher with noises. Here, we can also study the out-of-distribution (OOD) generalization guarantee of the obtained transformer model on data without such assumptions. Specifically, we consider any feature and response matrices $\tilde\Xb\in\RR^{d\times D}$, $\tilde\Yb\in\RR^{M\times D}$ with bounded second moments, and establish bounds on the OOD loss 
\begin{align*}
    \cL_{\mathbf{OOD}}(\Wb_V; \Wb_{KQ}) = \frac{1}{2}\EE_{\tilde\Xb, \tilde\Yb}\big[\|\tilde\Yb -\mathrm{TF}(\tilde\Zb;\Wb_V;\Wb_{KQ})\|_F^2\big]
\end{align*}
by comparing it with the loss achieved by the teacher model. 
We have the following theorem.


\begin{theorem}\label{thm:ood_test}
Suppose that $D\geq \Omega\big(\poly(M, K)\big)$ and $\eta\leq\cO(M^{-1}D^{-5/2})$.  In addition, the OOD input pairs $(\tilde\Xb, \tilde \Yb)$ satisfy the condition that each column $\tilde\xb_i$ and $\tilde\yb_i$ has finite second moments, i.e. there exists a constant $\xi>0$ such that $\EE[\|\tilde\xb_i\|_2^2], \EE[\|\tilde\yb_i\|_2^2]\leq \xi$ for all $i\in [D]$. Then for any $\epsilon>0$, there exists $T_\epsilon = \cO\big(\frac{KD^6\xi^2\sum_{m=1}^M\|\vb_m^*\|_2^2}{\eta\epsilon^2}\big)$ such that for any $T>T_\epsilon$, the OOD loss  satisfies that:
\begin{align*}
    \cL_{\mathbf{OOD}}\Big(\Wb_V^{(T)}; \Wb_{KQ}^{(T)}\Big) \leq \frac{1}{2}\EE\big[\|\tilde\Yb -f^*(\tilde\Xb)\|_F^2\big] + \epsilon.
\end{align*}


\end{theorem}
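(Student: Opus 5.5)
The plan is to reduce the OOD bound to the parameter-recovery guarantees of Theorem~\ref{thm:main_result}. Write $\hat f(\tilde\Xb) = \sigma(\Wb_V^{(T)}\tilde\Xb\Sbb^{(T)})$, so that the transformer output is $\mathrm{TF}(\tilde\Zb;\Wb_V^{(T)};\Wb_{KQ}^{(T)}) = \hat f(\tilde\Xb)$. The attention scores depend only on $\Pb$, not on $\tilde\Xb$, so they are the same matrix $\Sbb^{(T)}$ as during training. Expanding the square gives
\begin{align*}
\|\tilde\Yb - \hat f\|_F^2 = \|\tilde\Yb - f^*\|_F^2 + 2\langle \tilde\Yb - f^*, f^* - \hat f\rangle + \|f^*-\hat f\|_F^2 .
\end{align*}
We then bound the cross term by Cauchy--Schwarz, first in Frobenius inner product and then in expectation:
\begin{align*}
\cL_{\mathbf{OOD}} \le \tfrac12\EE\|\tilde\Yb-f^*(\tilde\Xb)\|_F^2 + \sqrt{\EE\|\tilde\Yb-f^*\|_F^2}\,\sqrt{\EE\|f^*-\hat f\|_F^2} + \tfrac12\EE\|f^*-\hat f\|_F^2 .
\end{align*}
The task is therefore to make $\Delta:=\EE\|f^*(\tilde\Xb)-\hat f(\tilde\Xb)\|_F^2$ small, and to control $\EE\|\tilde\Yb-f^*\|_F^2$ by a polynomial in $D$, $\xi$ and $\|\Vb^*\|_F$.

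To bound $\Delta$, note that every admissible $\sigma$ (identity, ReLU, Leaky ReLU) is $1$-Lipschitz entrywise. Hence
\begin{align*}
\|f^*-\hat f\|_F \le \|\Vb^*\tilde\Xb\Sbb^* - \Wb_V^{(T)}\tilde\Xb\Sbb^{(T)}\|_F \le \|\Vb^*-\Wb_V^{(T)}\|_F\|\tilde\Xb\|_F\|\Sbb^*\|_2 + \|\Wb_V^{(T)}\|_F\|\tilde\Xb\|_F\|\Sbb^*-\Sbb^{(T)}\|_F .
\end{align*}
Here $\|\Sbb^*\|_2\le \|\Sbb^*\|_F = \sqrt{D/K}$, which could be sharpened in the examples but the crude bound suffices. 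Also $\|\Wb_V^{(T)}\|_F\le \|\Vb^*\|_F + o(1)$ for $T\ge T^*$, and $\EE\|\tilde\Xb\|_F^2\le D\xi$. Plugging in the two rates from Theorem~\ref{thm:main_result} gives
\begin{align*}
\Delta \lesssim D\xi\Big( \tfrac{D}{K}\cdot \tfrac{KD^4}{\eta T} + \|\Vb^*\|_F^2\tfrac{D^5}{\|\Vb^*\|_F^2\eta T}\Big) \lesssim \frac{D^6\xi}{\eta T}.
\end{align*}
Similarly, $\EE\|\tilde\Yb-f^*\|_F^2 \le 2D\xi + 2\|\Vb^*\|_F^2 \|\Sbb^*\|_2^2 D\xi \lesssim D^2\xi\|\Vb^*\|_F^2$, up to $K$ factors.

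Each of the two excess terms must be at most $\epsilon/2$. The cross term dominates: it is of order $\sqrt{D^2\xi\|\Vb^*\|_F^2}\cdot\sqrt{D^6\xi/(\eta T)}$, and requiring this to be $\le\epsilon/2$ gives $T\gtrsim D^{8}\xi^2\|\Vb^*\|_F^2/(\eta\epsilon^2)$. The main obstacle is matching the stated power $KD^6\xi^2\sum_m\|\vb_m^*\|_2^2$ in $T_\epsilon$, since $\|\Vb^*\|_F^2=\sum_m\|\vb^*_m\|_2^2$ so only the powers of $D$ and $K$ are at stake. To get it, I would replace the crude operator-norm bounds with column-wise ones. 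Each column of $\Sbb^*$ is a probability vector, so $\|\tilde\Xb\Sbb^*\|_F^2\le \sum_i\|\tilde\xb_i\|^2\cdot(\text{row sums})$, and row sums equal $1$ in the CNN and GCN examples. Each column of $\Sbb^{(T)}$ is also a probability vector, which gives $\|\tilde\Xb(\Sbb^*-\Sbb^{(T)})\|_F^2\le \|\tilde\Xb\|_F^2\|\Sbb^*-\Sbb^{(T)}\|_F^2$ without a $\sqrt{D}$ loss. These refinements should remove the superfluous factors of $D$. Finally, $T_\epsilon\ge T^*$ is needed so that Theorem~\ref{thm:main_result} applies; this holds automatically for small $\epsilon$, and otherwise we take the maximum of the two thresholds.
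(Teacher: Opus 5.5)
Your skeleton is the same as the paper's. You split off $\frac12\EE\|\tilde\Yb-f^*(\tilde\Xb)\|_F^2$, handle the cross term by Cauchy--Schwarz, and bound $\Delta=\EE\|f^*(\tilde\Xb)-\mathrm{TF}\|_F^2$ using that $\sigma$ is $1$-Lipschitz. The gap is in the step you flag as the main obstacle: as written, the proposal does not reach the stated $T_\epsilon$.

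Refinement (a) is sound, and it is not limited to CNN/GCN. After taking expectations, $\EE\sum_i\|\tilde\xb_i\|_2^2 r_i\le \xi\sum_i r_i=\xi D$ for every column-stochastic $\Sbb^*$, so $\EE\|\tilde\Xb\Sbb^*\|_F^2\le D\xi$. This gives $\EE\|f^*(\tilde\Xb)\|_F^2\le D\xi\|\Vb^*\|_F^2$ and an $\cO(KD^5\xi/(\eta T))$ bound on the value-matrix part of $\Delta$.

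Refinement (b), however, changes nothing. The inequality $\|\tilde\Xb(\Sbb^*-\Sbb^{(T)})\|_F\le\|\tilde\Xb\|_F\|\Sbb^*-\Sbb^{(T)}\|_F$ is the same submultiplicative bound you already used. So the attention part of $\Delta$ stays at $\|\Vb^*\|_F^2\cdot D\xi\cdot D^5/(\|\Vb^*\|_F^2\eta T)=D^6\xi/(\eta T)$. The cross term is then of order $D^{7/2}\xi\|\Vb^*\|_F/\sqrt{\eta T}$, and you only get $T_\epsilon$ of order $D^7\xi^2\|\Vb^*\|_F^2/(\eta\epsilon^2)$. That is a factor $D/K$ too large, which is a real loss since $D\geq\poly(K)$.

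The missing idea is to control columns in $\ell_1$ rather than $\ell_2$. For any $\mathbf{c}\in\RR^D$, $\|\tilde\Xb\mathbf{c}\|_2\le\sum_i|c_i|\|\tilde\xb_i\|_2$. Cauchy--Schwarz then gives $\EE\|\tilde\Xb\mathbf{c}\|_2^2\le\|\mathbf{c}\|_1\sum_i|c_i|\EE\|\tilde\xb_i\|_2^2\le\xi\|\mathbf{c}\|_1^2$, with no independence assumption on the columns. Take $\mathbf{c}_j=\Sbb^*_{:,j}-\Sbb^{(T)}_{:,j}$. Both columns are probability vectors, so $\mathbf{c}_j$ sums to zero and its positive part lives on the $K$ indices $G^j$. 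Hence $\|\mathbf{c}_j\|_1=2\sum_i(c_{ij})_+\le 2\sqrt K\|\mathbf{c}_j\|_2$, and
$\EE\|\tilde\Xb(\Sbb^*-\Sbb^{(T)})\|_F^2\le 4K\xi\|\Sbb^*-\Sbb^{(T)}\|_F^2=\cO\big(KD^5\xi/(\|\Vb^*\|_F^2\eta T)\big)$.
With this, $\Delta=\cO(KD^5\xi/(\eta T))$, the cross term is $\cO\big(D^3\xi\|\Vb^*\|_F\sqrt{K/(\eta T)}\big)$, and the stated $T_\epsilon$ follows.

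The paper reaches the same $\Delta$ without using Theorem~\ref{thm:main_result} as a black box. Via Lemma~\ref{lemma:para_pattern}, it writes each entry of $f^*-\mathrm{TF}$ as $(\frac1K-C_1p)\sum_{i_1\in G^i}\la\vb_m^*,\tilde\xb_{i_1}\ra-\frac{C_1(1-Kp)}{D-K}\sum_{i_1\notin G^i}\la\vb_m^*,\tilde\xb_{i_1}\ra$, then plugs in Lemma~\ref{lemma:convergence_p}. The off-support weight is spread evenly over $D-K$ positions, and that is exactly the structure the Frobenius rate alone discards.
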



Theorem~\ref{thm:ood_test} requires only the mild assumption that $\tilde{\Xb}$ and $\tilde{\Yb}$ have bounded second moments. Notably, the response matrix $\tilde{\Yb}$ need not be generated by or correlated with the output of the teacher model $f^*(\tilde{\Xb})$. Therefore, the term $\frac{1}{2}\EE\big[\|\tilde\Yb - f^*(\tilde\Xb)\|_F^2\big]$ measures the teacher model's O.O.D. test loss, analogous to the role of $\cL_{\mathrm{opt}}$ in Theorem~\ref{thm:main_result}. This shows that the trained transformer's O.O.D. loss exceeds that of the teacher model by at most $\epsilon$, demonstrating its robustness to distribution shift. In addition, although it is challenging to establishing a matching lower bound for all pairs $(\tilde{\Xb}, \tilde{\Yb})$ like Theorem~\ref{thm:main_result}, a worst-case $\tilde{\Yb}$ can be constructed to demonstrate that this upper bound is attainable, thereby validating the tightness of Theorem~\ref{thm:ood_test}. The complete proof of Theorem~\ref{thm:ood_test} and the worst-case example are provided in Section~\ref{section:ood_proof}.

\section{Experiments}\label{section: experiments}
In this section, we present our experimental results. 
As detailed in Section~\ref{section:problem_setup}, the teacher model can cover various models, including (i). convolution layer with average pooling, (ii). graph convolution layer on a regular graph, (iii). sparse token selection model, and (iv). group sparse linear predictor. Our experiments also focus on these four cases.

We conduct experiments on both synthetic data and real-world data sets, respectively. For experiments on synthetic data, we follow the exact definitions in Section~\ref{section:problem_setup} to build up teacher models $f^*$.
For experiments on real-world datasets, we pre-train a teacher CNN on the MNIST dataset, whose first convolution layer is then served as the teacher model to train the student transformer.

\subsection{Synthetic data experiments}

We begin by detailing the common experimental setups on synthetic data.
Given parameters $d$ and $D$, an fixed orthogonal matrix $\Pb \in \RR^{D\times D}$ 
serves as the positional encoding matrix
We adopt an online gradient descent algorithm to simulate training over the population loss. At each iteration, we sample a new batch of $N=100$ standard $d\times D$ Gaussian matrices, i.e. $\{\Xb_n\}_{n=1}^{N}\subseteq \RR^{d\times D}$. For each $\Xb_n$ with $n\in[N]$, its corresponding label $\Yb_n = f^*(\Xb_n) + \cE_n$, where $\cE_n \in \RR^{M\times D}$ is another independently sampled Gaussian matrix. 
 We concatenate each $\Xb_n$ with the fixed positional encoding matrix $\Pb$ to form $\Zb_n$ as the inputs to the transformer
 Subsequently, a gradient descent update is performed using this batch of $N=100$ data pairs $\{(\Zb_n, \Yb_n)\}_{n=1}^N$. Furthermore, we also generate another batch of $N=100$ data pairs $\{(\tilde\Zb_n, \tilde \Yb_n)\}_{n=1}^N$ following the almost identical procedure, except that each $\tilde\Xb_n$ is generated from the exponential distribution. This batch of data pairs $\{(\tilde\Zb_n, \tilde \Yb_n)\}_{n=1}^N$ is prepared for calculating the excess OOD loss, defined as $\cL_{\text{OOD}} - \frac{1}{2N}\sum_{n=1}^N\|\tilde \Yb_n - f^*(\tilde\Xb_n)\|_F^2$.

 In the next, we introduce the distinct settings for different tasks, specifically the ground-truth softmax score matrices $\Sbb^*$. For the task of learning a convolution layer with average pooling, we set $D=36$ and $K=4$, where the pooling groups are partitioned by aggregating the $K$ neighbor patches into a group. Given this partition of pooling groups, the ground truth softmax score of the teacher model can be formulated into a diagonal block matrix as 
$\Sbb^* = \frac{1}{K}\text{Diag}(\mathbf{1}_{K\times K}, \ldots, \mathbf{1}_{K\times K})$, with totally $D/K$ blocks.
For the task of learning a graph convolution layer, we consider a 'cycle-graph' with $D=20$ nodes, where each node is connected to exactly two other nodes, i.e. the $i$-th node is connected to its adjacent nodes $(i-1)$ and $(i+1)$. Under this setup, the ground-truth softmax score $\Sbb^*$ is constructed as follows: for each column $i$, the entries at rows $(i\!-\!1), i$, and $(i\!+\!1)$ are set to $1/K$ with $K=3$, while all other entries are zero. For both the tasks of learning the sparse token selection model and the group sparse linear predictor, we set the total number of tokens/feature groups $D=20$, and randomly generate $K$ indices from $[D]$ as indices of target tokens/ label-relevant group, where $K=4$ and $1$ respectively. In these two sets of tasks, the rows representing the target tokens/ label-relevant group equal to $1/K$, while other rows are filled with 0.

 \begin{figure}[h]
	\begin{center}
 		\vspace{-0.15in}
			\subfigure[Excess training loss (log-log)]{\includegraphics[width=0.32\textwidth]{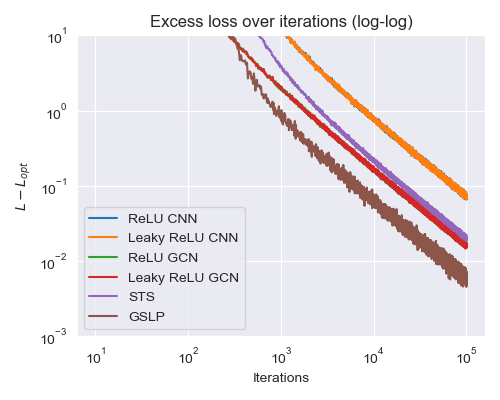}\label{subfig:excess_training}}
			\subfigure[Excess OOD test loss (log-log)]{\includegraphics[width=0.32\textwidth]{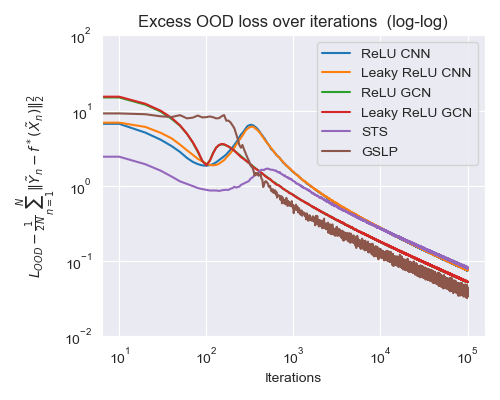}\label{subfig:excess_testing}}
            \subfigure[Cosine similarity]{\includegraphics[width=0.323\textwidth]{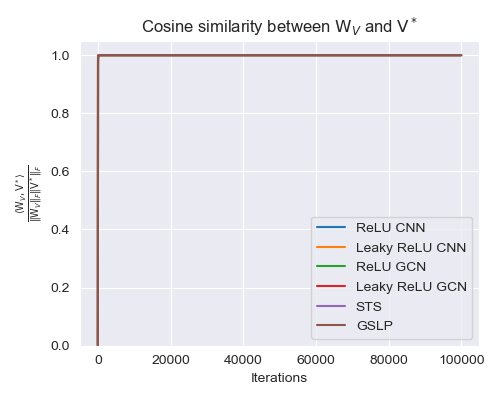}\label{subfig:cos_similarity}}
	\end{center}
	\vskip -12pt
        \caption{Excess training loss, excess OOD test loss (both in log-log scales), and cosine similarity between the value matrix $\Wb_V$ of one layer transformer~\eqref{def:position_only_tf}, and ground truth value matrix $\Vb^*$. These results are presented for six experimental sets, which originate from four distinct tasks.} 
	\label{fig:synthe_1}
\end{figure}

 For the task of learning CNN and GCN, we conduct two sets for each with ReLU and Leaky ReLU respectively. Experiment results are given in Figures~\ref{fig:synthe_1} and~\ref{fig:synthe_2}. Figure~\ref{subfig:excess_training} and Figure~\ref{subfig:excess_testing} demonstrate the convergence curves for the excess training loss and the excess OOD test loss (both in log-log scales). We can clearly observe that both the excess training loss and the OOD test loss converge to a small value on all six sets of experiments. After initial iterations, the curves for excess training loss appear almost straight with slopes equal to $-1$, and excess OOD loss curves have approximate $-0.5$ slopes. These observations validate the $\Theta(1/T)$ convergence rate in Theorem~\ref{thm:main_result}, and $\cO(1/{\sqrt T})$ convergence rate in Theorem~\ref{thm:ood_test}. Figure~\ref{subfig:cos_similarity} displays the cosine similarity curve between the value matrix $\Wb_V^{(t)}$, and the ground truth value matrix $\Vb^*$. It shows that $\Wb_V^{(t)}$ directionally aligns with the ground truth value matrix $\Vb^*$ in all six experiments since the very beginning.

 Furthermore, Figure~\ref{fig:synthe_2} provides the heatmaps of the attention scores when the loss converges. Specifically, Figure~\ref{subfig:attn_score_cnn_r} and Figure~\ref{subfig:attn_score_cnn_l} respectively display the attention scores when learning a convolution layer with ReLU and Leaky ReLU. In both figures, the attention scores exhibit a diagonal block matrix pattern, where each diagonal block has approximately equal values 1/4. 
 Figure~\ref{subfig:attn_score_gcn_r} and Figure~\ref{subfig:attn_score_gcn_l} show the attention scores when learning a graph convolution layer on a cycle graph. Specifically, the attention scores show a pattern of a cyclic tridiagonal matrix, with all the significant entries having approximately equal values 1/3. 
Figure~\ref{subfig:attn_score_sts} and Figure~\ref{subfig:attn_score_gslp} show the attention scores when learning a sparse token selection task and group sparse linear predictor. We can observe that only the rows corresponding to the target positions are assigned significant values in both tasks. In summary, all these patterns match the ground truth softmax scores, which are described previously.

 
 
  
 
 
 

\begin{figure}[h]
	\begin{center}
        \vspace{-0.15in}
			\subfigure[ReLU CNN]{\includegraphics[width=0.30\textwidth]{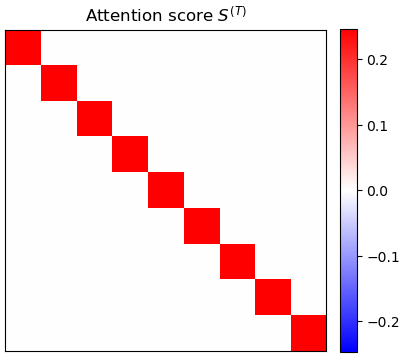}\label{subfig:attn_score_cnn_r}}
			\subfigure[Leaky ReLU CNN]{\includegraphics[width=0.30\textwidth]{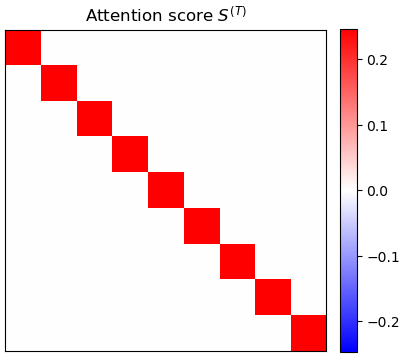}\label{subfig:attn_score_cnn_l}}
            \subfigure[ReLU GCN]{\includegraphics[width=0.30\textwidth]{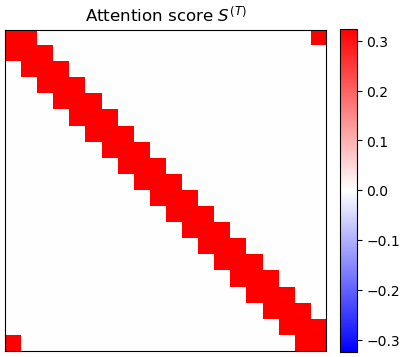}\label{subfig:attn_score_gcn_r}}
            \vspace{-0.1in}
            \\
			\subfigure[Leaky ReLU GCN]{\includegraphics[width=0.30\textwidth]{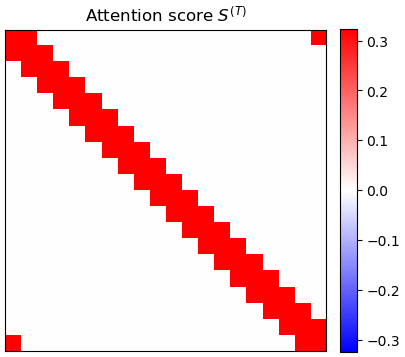}\label{subfig:attn_score_gcn_l}}
			\subfigure[Sparse token selection]{\includegraphics[width=0.30\textwidth]{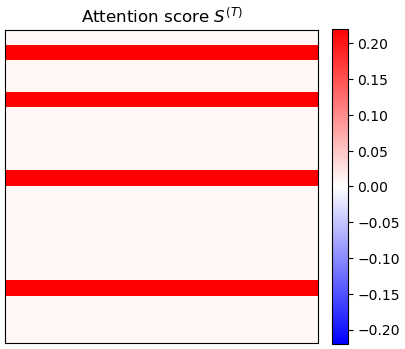}\label{subfig:attn_score_sts}}
            \subfigure[Group sparse linear predictor]{\includegraphics[width=0.30\textwidth]{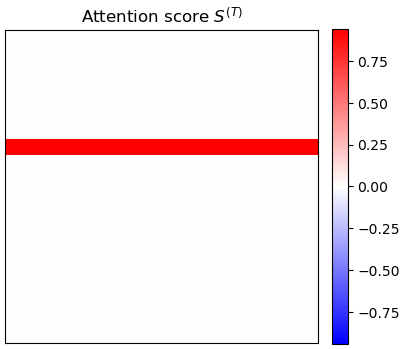}\label{subfig:attn_score_gslp}}
	\end{center}
	\vskip -12pt
        \caption{Heatmap of attention score matrix $\Sbb^{(T)}$ when the training loss converges. The results are presented for six different experimental sets, indicated by the captions of sub-figures.}
	\label{fig:synthe_2}
    \vspace{-0.15in}
\end{figure}


\subsection{Real data experiments}
We also conduct experiments on the MNIST dataset. 
Each image is normalized and resized to $27 \times 27$ pixels. We train a two-layer CNN with $M=16$ convolution kernels, each having a $3\times 3$ kernel size. Given the $27 \times 27$ image dimensions, each image is divided into $D=81$ patches. 
An average pooling layer with a $3\times 3$ pooling receptive field (i.e $K=9$) is additive to the first convolution layer, and then cascaded with activation and a linear layer for classification. This two-layer CNN is trained by minimizing the cross-entropy loss, achieving a moderate test accuracy of about $71\%$ on the test set after 20 epochs. After training of this teacher CNN, its first convolution layer with average pooling is extracted as the teacher model $f^*$, with its hidden-layer outputs supervising a one-layer transformer~\eqref{def:position_only_tf}. The training of the one-layer transformer is still conducted on the MNIST dataset, and the mean-squared loss is employed for optimization.
\begin{figure}[h]
	\begin{center}
 		\vspace{-0.05in}
			\subfigure[Training loss on MNIST dataset]{\includegraphics[width=0.45\textwidth]{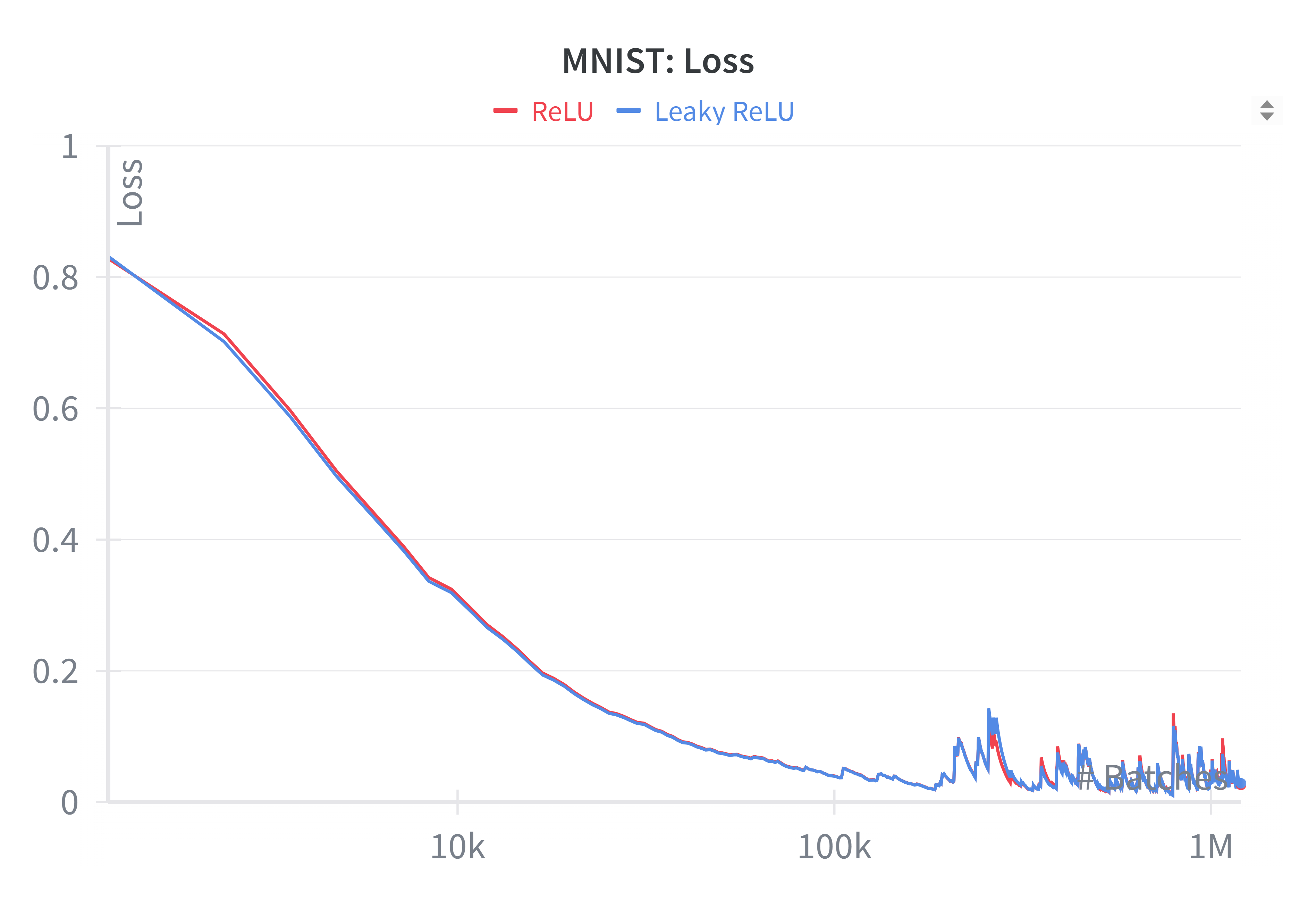}\label{subfig:training_loss_mnist}}
            \subfigure[Cosine similarity between parameter matrices]{\includegraphics[width=0.45\textwidth]{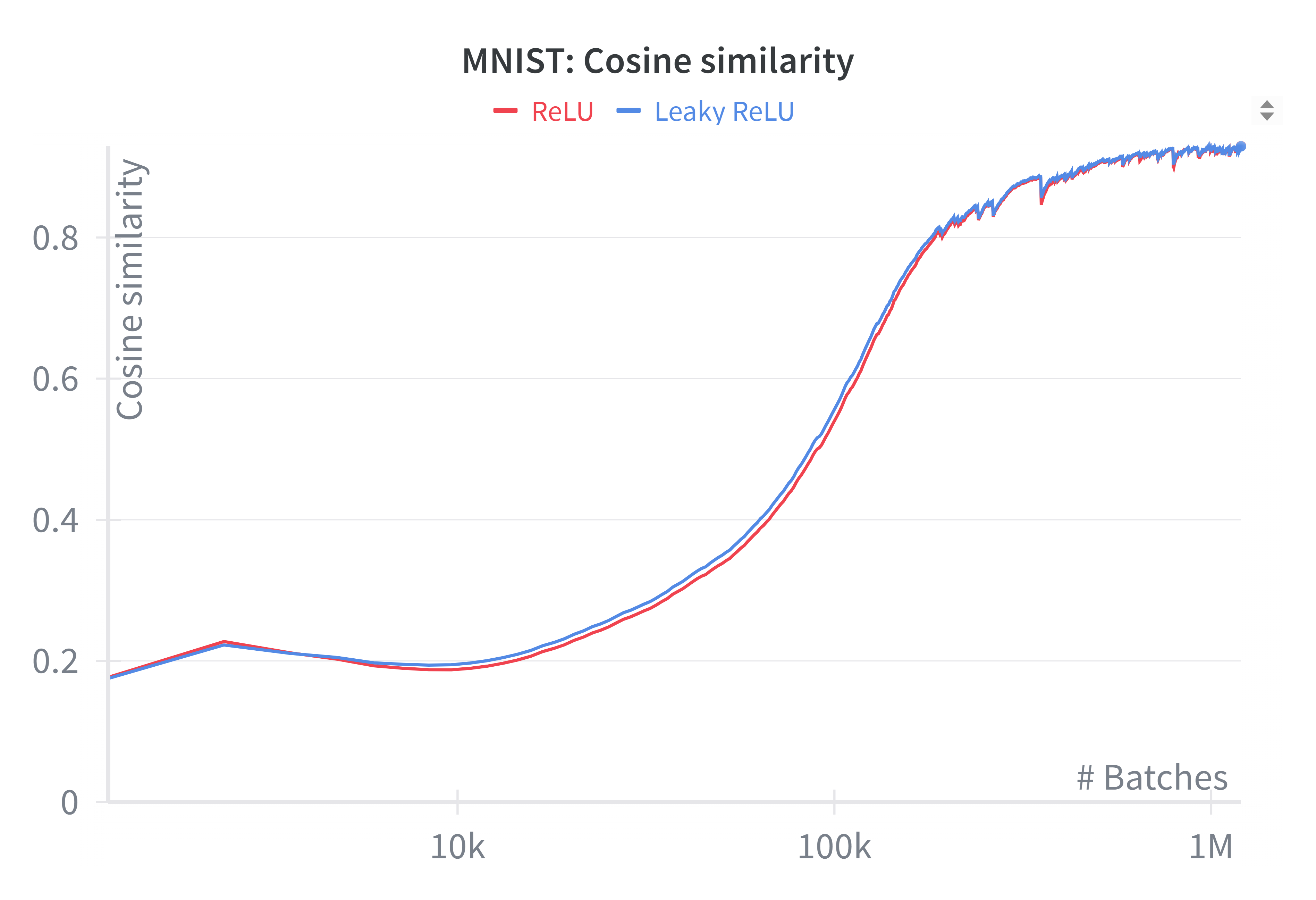}\label{subfig:cos_similarity_mnist}}
	\end{center}
	\vskip -12pt
        \caption{Training loss and cosine similarity between the value matrix $\Wb_V$ of the one-layer transformer~\eqref{def:position_only_tf}, and convolution kernel matrix $\Vb^*$ of the pre-trained teacher CNN. }
	\label{fig:mnist_1}
    \vspace{-0.2in}
\end{figure}
\begin{figure}[h]
	\begin{center}
        \vspace{-0.1in}
			\subfigure[teacher CNN]{\includegraphics[width=0.24\textwidth]{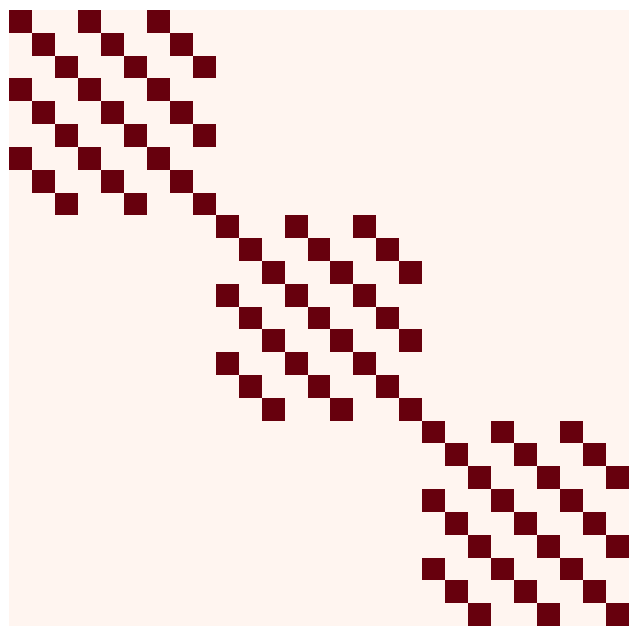}\label{subfig:teacher_pooling_mnist}}
			\subfigure[ReLU result]{\includegraphics[width=0.24\textwidth]{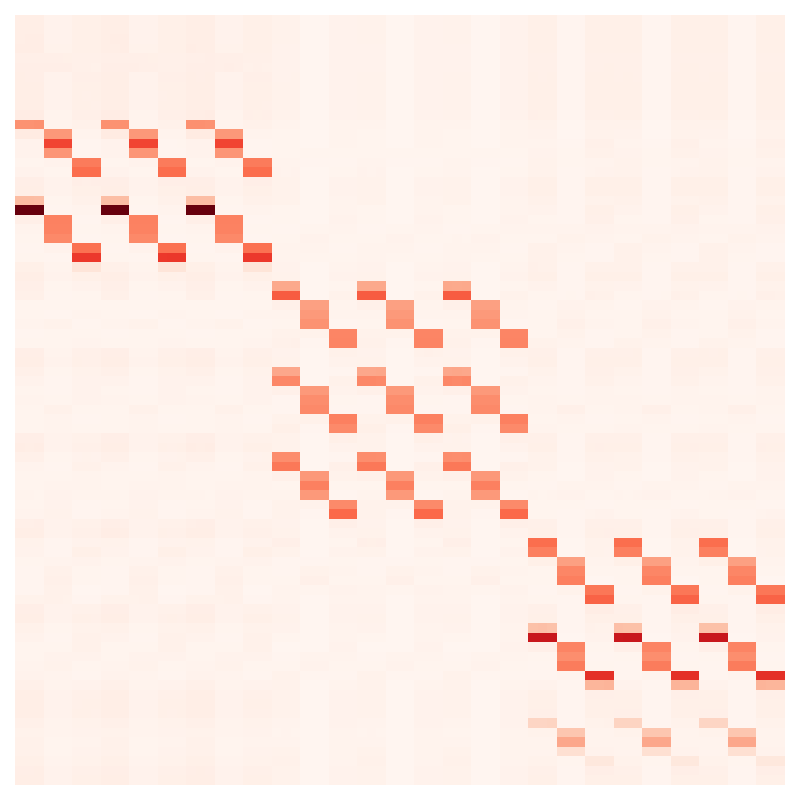}\label{subfig:attn_score_r_mnist}}
            \subfigure[Leaky ReLU result]{\includegraphics[width=0.24\textwidth]{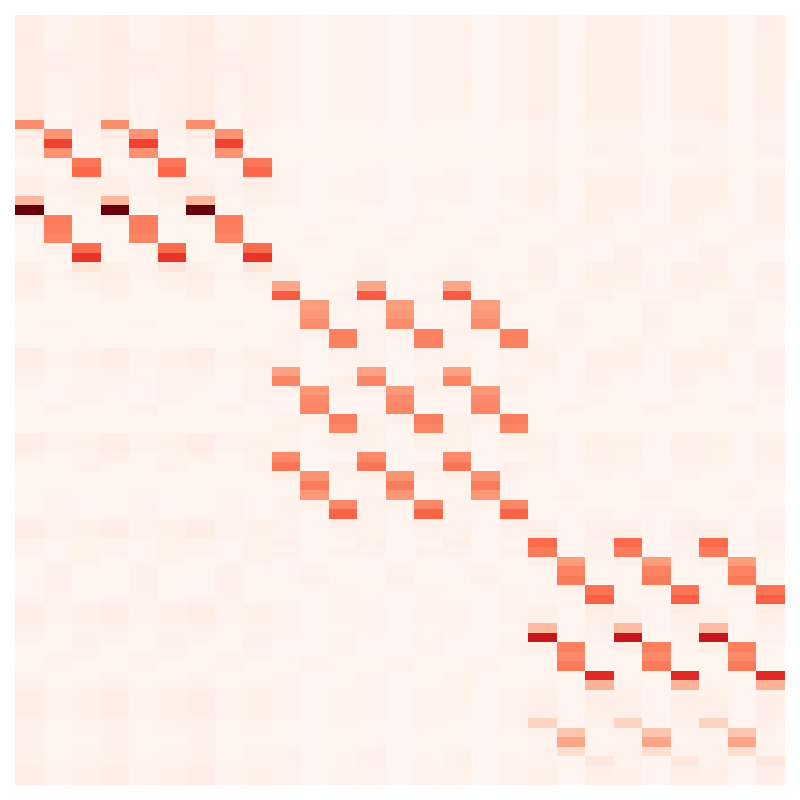}\label{subfig:attn_score_l_mnist}}
            \subfigure[Example image]{\includegraphics[width=0.24\textwidth]{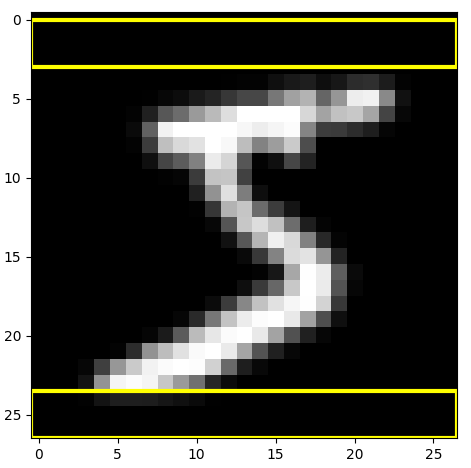}\label{subfig:example_mnist}}
	\end{center}
	\vskip -12pt
        \caption{Heatmap of the ground truth softmax scores of average pooling, Heatmap of the attention scores $\Sbb^{(T)}$ of trained one-layer transformer when loss converges, and an image example in MNIST. }
	\label{fig:mnist_2}
    \vspace{-0.2in}
\end{figure}

The experiment results are given in Figure~\ref{fig:mnist_1} and Figure~\ref{fig:mnist_2}. 
Figure~\ref{subfig:training_loss_mnist} displays the training loss curves. We can observe that for both ReLU and Leaky ReLU, the training loss very quickly converges to a small value. 
Figure~\ref{subfig:cos_similarity_mnist} demonstrates the cosine similarity curve between the value matrix $\Wb_V^{(t)}$ of the transformer and the convolution kernel matrix $\Vb^*$ of the teacher convolution layer. The similarity rises above 0.9, indicating that the transformer successfully learns the ground-truth value matrix of the teacher model. 
 Furthermore, Figure~\ref{subfig:teacher_pooling_mnist} provides the heatmap of the ground truth softmax score derived from the teacher CNN's average pooling layer. Figure~\ref{subfig:attn_score_r_mnist} and Figure~\ref{subfig:attn_score_l_mnist} respectively present heatmaps of attention scores at convergence for the transformers with ReLU and Leaky ReLU activations. We can observe that both the attention scores achieved by transformers can capture the pattern of the ground truth softmax scores, with notable exceptions in the first and last nine rows in the softmax heatmap. We remark that the failure in learning these rows of ground-truth softmax scores is due to the fact that they correspond to MNIST image patches that are mostly all background (all zero). 
 Figure~\ref{subfig:example_mnist} highlights the image regions corresponding to failed-to-learn softmax scores, marked by yellow rectangles. We can see that they are indeed boundary regions and are mostly pure background. Consequently, they offer minimal informative content to the model, explaining why transformers can not attend to these positions. Overall, it is clear that the real-world data experiments corroborate our theory.

\section{Proof sketch of Theorem~\ref{thm:main_result}}
In this section, we outline the major steps in the proof of Theorem~\ref{thm:main_result}.
For simplicity, here we focus the case where $\sigma(\cdot)$ is the identity map. More details, including more general choices of $\sigma(\cdot)$, are formally proved in  Appendix~\ref{section: proof}. The proof consists of three main steps:

\noindent\textbf{Step 1. Structures of $\Wb_V$ and $\Wb_{KQ}$ during training.} 
A critical step in our proof is to show that throughout training, the parameter matrices $\Wb_V$ and $\Wb_{KQ}$ preserve the following decompositions:
\begin{align*}
   \Wb_V^{(t)}=C_1(t)\Vb^*; \   \Wb_{KQ}^{(t)}=C_{2}(t)\sum_{i=1}^D\sum_{i'\in G^i} \pb_{i'} \mathbf p_{i}^\top - C_{3}(t)\sum_{i=1}^D\sum_{i'\notin G^i} \mathbf p_{i'} \mathbf p_{i}^\top,
\end{align*}
where $G^i$ denotes the index set of entries of value $1/K$ in $i$-th column of $\Sbb^*$. The details of this conclusion are given in Lemma~\ref{lemma:para_pattern}. Based on the decompositions, we can express $\Sbb^{(t)}$ as: $\mathbf S_{i', i}^{(t)}=\frac{1}{K+(D-K)\exp(-(C_2(t)+C_3(t))/\sqrt{D})}$ if $i'\in G^i$; $\mathbf S_{i', i}^{(t)}=\frac{\exp(-(C_2(t)+C_3(t))/\sqrt{D})}{K+(D-K)\exp(-(C_2(t)+C_3(t))/\sqrt{D})}$ if $i'\notin G^i$.
Comparing these results with the definition of the teacher model $f^*(\cdot)$, we can further observe that 
\begin{align*}
    \mathbf W_V^{(t)}\to  \mathbf V^*\Leftrightarrow C_1(t)\to 1; \quad\quad \mathbf S^{(t)}\to  \mathbf S^*\Leftrightarrow C_2(t) + C_3(t)\to \infty.
\end{align*}
In this way, the original optimization analysis regarding full matrices $\mathbf W_V$ and $\mathbf W_{KQ}$ is simplified into studying the updates of three scalars $C_1(t)$, $C_2(t)$, $C_3(t)$. 

\noindent\textbf{Step 2. Accurate characterization of convergence that $C_1(t)\to 1$ and  $C_2(t)+C_3(t)\to \infty$.}
The decompositions obtained in \textbf{Step 1.} implies that the coefficients $C_1(t),C_2(t),C_3(t)$ essentially follow gradient descent starting from zero initialization minimizing the loss
\begin{align*}
    \tilde{\cL}(C_1,C_2,C_3) \propto \frac{D-K}{K}\Bigg[1-  \frac{KC_1}{K+(D-K)e^{-\frac{C_2 + C_3}{\sqrt{D}}}}\Bigg]^2 + C_1^2\Bigg[1-\frac{K}{K+(D-K)e^{-\frac{C_2 + C_3}{\sqrt{D}}}}\Bigg]^2,
\end{align*}
We remark that this expression of $\tilde{\cL}(C_1,C_2,C_3)$ corresponds to the special case where $\sigma(\cdot)$ is the identity map. The general formulation for $\sigma(\cdot)$ is activation is deferred to Lemma~\ref{lemma:para_pattern}.
Then by carefully analyzing the training dynamics, we can show that for sufficiently large $T$, 
\begin{align*}
    C_1(T) -1  = \Theta\bigg(\frac{D^2\sqrt{K}}{\|\Vb^*\|_F\sqrt{\eta T}}\bigg),\quad 
    C_2(T) + C_3(T) = \frac{\sqrt D}{2}\log\bigg(\Theta\Big(\frac{\eta \|\Vb^*\|_F^2}{K^3D^2}\Big)T + e^{\frac{2}{K\sqrt{D}}}\bigg).
\end{align*}
The details are provided in Lemmas~\ref{lemma:para_pattern},~\ref{lemma:upperbound_C_1}, ~\ref{lemma:lowerbound_C_1},  ~\ref{lemma:convergence_p}, and~\ref{lemma:exp_sequence}.

\noindent\textbf{Step 3. Final convergence results.} 
Combining the convergence rates obtained in \textbf{Step 2.} and the formulations of $\Sbb^{(T)}$ and $\Wb_V^{(T)}$ in \textbf{Step 1.}, we can further obtain that $\|\Sbb^{(T)} - \Sbb^*\|_F, \|\Wb_V^{(T)} - \Vb^*\|_F = \Theta\big(\frac{1}{\sqrt{T}}\big)$. Under mean-squared loss, the $\Theta\big(\frac{1}{\sqrt{T}}\big)$ convergence of the matrices $\Sbb^{(t)}$ and $\Vb^{(t)}$ directly suggests that loss will decay at the rate of $\Theta\big(\frac{1}{T}\big)$, which finishes the proof.

\section{Conclusions and limitations}

In this paper, we provide the theoretical guarantee that a one-layer transformer can learn a class of teacher models, covering a wide range of common models in machine learning. Specifically, we establish a tight convergence bound at the rate of $\Theta\big(\frac{1}{T}\big)$ for the population loss.  We also establish out-of-distribution generalization 
bounds for the obtained transformer model, demonstrating its robustness. To empirically support our findings, we conduct experiments on both synthetic data and real data, and all results align with our theoretical conclusion.  Our current theory focuses on one-layer models, and we make certain simplifications and assumptions on the model and data, which present a limitation. We believe establishing teacher-student learning guarantees for more complex models and under milder assumptions is an interesting and promising further work direction. 

\section*{Acknowledgments}
We would like to thank the anonymous reviewers and area chairs for their helpful comments.
Yuan Cao is supported in part by NSFC 12301657, Hong Kong RGC ECS 27308624, and Hong Kong RGC GRF 17301825.
\bibliography{iclr2026_conference}
\bibliographystyle{iclr2026_conference}

\newpage
\appendix
\section{Notation}\label{section:notations}
In this section, we introduce the key notations we use throughout paper. We first introduce the following mathematical notations.

\noindent\textbf{Mathematical notations.} Given two sequences $\{x_n\}$ and $\{y_n\}$, we denote $x_n = \cO(y_n)$ if there exist some absolute constant $C_1 > 0$ and $N > 0$ such that $|x_n|\le C_1 |y_n|$ for all $n \geq N$. Similarly, we denote $x_n = \Omega(y_n)$ if there exist $C_2 >0$ and $N > 0$ such that $|x_n|\ge C_2 |y_n|$ for all $n > N$. We say $x_n = \Theta(y_n)$ if $x_n = \cO(y_n)$ and $x_n = \Omega(y_n)$ both holds.  We use $\tilde \cO(\cdot)$, $\tilde \Omega(\cdot)$, and $\tilde \Theta(\cdot)$ to hide logarithmic factors in these notations respectively. Moreover, we denote $x_n=\poly(y_n)$ if $x_n=O( y_n^{D})$ for some positive constant $D$, and $x_n = \polylog(y_n)$ if $x_n= \poly( \log (y_n))$. For two scalars $a$ and $b$, we denote $a \vee b = \max\{a, b\}$ and $a\wedge b =\min\{a,b\}$. For any $n\in \NN_+$, we use $[n]$ to denote the set $\{1, 2, \cdots, n\}$. In addition, we use $\mathbf{1}_n$ to denote a $ n$-dimensional vector with all 1 entries. 
For an index set $g$, $\mathbf{1}_g$ denotes a vector whose entries are $1$ for indices in $g$, and $0$ otherwise. Let $\Ab_1, \ldots, \Ab_n$ be $n$ matrices with the same dimensionality $d_1\times d_2$, then $\text{Diag}(\Ab_1, \ldots, \Ab_n)$ is a $nd_1\times nd_2$ diagonal block matrix, with $\Ab_1, \ldots, \Ab_n$ being the block entries.

In addition, we also provide a summary table of the key variables in our study in Table~\ref{tab:key_variables}.



\begin{table}[h!]
\centering
\caption{Key variables and their meanings.}
\vspace{0.2cm}
\begin{tabular}{c l}
\toprule
\textbf{Symbol} & \textbf{Meaning} \\
\midrule
$\Vb^*$& Ground truth value matrix in $f^*$, a $M\times D$ matrix.\\
$\Sbb^*$ & Ground truth softmax score matrix in $f^*$, a $D\times D$ column-stochastic matrix.\\
$D$ & Sequence length (number of input tokens). \\
$d$ & Feature dimension of each token. \\
$K$ & Number of none zero entries in each column of $\Sbb^*$. It can represent: \\
& (i) the pooling size in CNN and pooling layer,\\
& (ii) the number of neighbors of GCN layer,\\
& (iii) the number of target tokens in sparse token selection, \\
& (iv) it equals to 1 in group-sparse linear models.   \\
$G^i$ & Target index set of $i$-th input token, namely $\Sbb_{i', i} = \frac{1}{K}$ if $i'\in G^i$, and $0$ otherwise. \\
$t, T$ & Number of gradient descent iterations. \\
$\eta$ & Learning rate. \\
$\Wb_V$, $\Wb_{KQ}$ & Parameter matrices of the transformer. \\
$\cL$ & Population loss (objective function). \\
$\cL_{\mathrm{O.O.D.}}$ & Out of distribution loss. \\
$C_1(t), C_2(t), C_3(t)$ & Coefficients of the decompositions of $\Wb_V$ and $\Wb_{KQ}$ during the training. \\
\bottomrule
\end{tabular}
\label{tab:key_variables}
\end{table}

\section{Additional related works}


\textbf{Optimization of transformers.} 
There exist multiple recent works studying the optimizations of transformers, most of which focus on the single-layer architecture. \citet{zhang2020adaptive, kunstnernoise2023, pan2023toward, li2024optimization} investigate performance comparison between the adaptive methods and SGD under different settings from both theoretical and empirical perspectives.  \citet{li2023transformers} investigates the optimal parameters of transformers applied to a masked topic structure model similar to the Bert framework through a two-stage training regime.\citet{ildizself2024, chenunveiling, shitowards2025} explain the mechanism of attention from the perspective of Markov chains. \citet{tian2023scan, tianjoma2024} study the training dynamics of transformers, jointly with a decoder layer and a fully-connected layer, respectively. \citet{li2024one} analyzes transformer training behavior in the context of one-nearest neighbor selection.
\citet{gao2024global} addresses the global convergence of transformers given certain prerequisites. \citet{tarzanagh2023transformers, ataee2023max} demonstrates that single-layer attention mechanisms can converge directionally towards the hard margin solution typical of Support Vector Machines (SVMs). Furthermore, \citet{litheoretical} presents a generalization error bound for vision transformers optimized using stochastic gradient descent. Some works consider the transformers to perform certain algorithms. \citet{he2025transformers} theoretically characterizes Softmax attention as approximating the Expectation and Maximization updates in EM for Gaussian mixture models. \citet{he2025learning} show that multi-layer Transformers can provably learn and implement spectral methods for Gaussian mixture models via pre-training. Furthermore, many other existing works investigate the optimization of transformers under the so-called ``in-context learning'' settings \citep{chen2024transformers, huangcontext, zhang2024trained, zhang2024context, nichanitransformers2024, huangtransformers, chen2025towards, lirobustness2025, cao2025transformers}. Based on the framework proposed in \citep{zhang2024trained}, \citet{huangcontext} extends this result to one-layer softmax attention transformers. \citet{siyu2024training} investigates the multi-head self-attention under this setting, and summarizes two distinct patterns among all heads. \citet{nichanitransformers2024} demonstrates that when solving in-context learning tasks with latent causal structure, transformers can encode the latent causal graph. \citet{huangtransformers} demonstrates that Chain of Thought (CoT) prompting enables Transformer models to learn to perform multi-step gradient descent and effectively recover true weights. \citet{chen2025towards} focuses on the test time computing on the in-context linear regression. \citet{lirobustness2025} studies the context hijacking phenomenon by investigating an optimization procedure with different learning rates.  \citet{cao2025transformers} proves that Transformers can implement in-context maximum likelihood estimation and autoregressive sampling for Bayesian networks, establishing their capability to simulate MLE-based sequence generation.

\textbf{Teacher-student framework for training neural networks.} We also introduce some related theoretical works regarding the training of a ``student'' neural network under the guidance of a ``teacher model'' \citep{brutzkus2017globally, tian2017analytical, soltanolkotabi2017learning, goel2018learning, du2018convolutional, du2018gradient, zhou2019toward, liu2019towards, xu2023over}. Several studies establish convergence guarantees for gradient descent in specific ReLU network settings: \citet{brutzkus2017globally} demonstrated polynomial-time global convergence for one-hidden-layer non-overlapping convolutional ReLU networks with Gaussian inputs; \citet{tian2017analytical} characterized critical points and proved gradient descent convergence for two-layer ReLU student-teacher networks under Gaussian inputs; and \citet{du2018convolutional, du2018gradient} provided polynomial-time recovery guarantees for learning convolutional ReLU filters and networks, respectively, using (stochastic) gradient descent, even with potential spurious minimizers and for general or Gaussian inputs.
Furthermore, \citet{zhou2019toward} and \citet{liu2019towards} showed that methods like perturbed gradient descent with noise annealing or specific normalizations and initializations can achieve polynomial-time global convergence in convolutional neural networks (including ResNets) despite the presence of spurious local optima.
Research focusing on single ReLU scenarios includes \citet{soltanolkotabi2017learning}'s analysis of linear convergence for a single ReLU in a high-dimensional Gaussian model with structured weights, and \citet{xu2023over}'s finding that over-parameterizing a student network to learn a single target ReLU neuron under Gaussian inputs can surprisingly slow convergence.
\citet{goel2018learning} introduced Convotron, a provably efficient algorithm for one-hidden-layer convolutional networks with general patches, achieving global convergence through noise-tolerant stochastic updates without requiring special initialization or learning rate tuning. \citet{zhaotowards2024} studies the statistical limits of knowledge transfer over finite domains, characterizing minimax rates under different levels of teacher supervision.

\section{Comparison with~\citet{wangtransformers24}}\label{section:comparison}

In this section, we compare the essential optimization dynamics in~\citet{wangtransformers24} and our works. 
 ~\citet{wangtransformers24} and our work both rely on the symmetry of Gaussian data and the uniform distribution among the target tokens expected to be selected. 
A critical technical step shared by both analyses is to simplify the optimization regarding the full parameter matrices to investigate the evolutions of several specific scalars, as demonstrated in Lemma 3.2 in~\citet{wangtransformers24} and in our Lemma~\ref{lemma:para_pattern}. Specifically, the analysis in~\citet{wangtransformers24} tracks the evolution of two scalars, $\alpha(t)$ and $C(t)$, for which the coefficients are essentially minimizing the loss
\begin{align}\label{eq:loss_wang_etal}
     \tilde \cL(\alpha, C) = \frac{d}{2(D-K)}\Bigg[K(D-K) \bigg(\frac{\alpha}{K+(D-K)e^{-C}}-\frac{1}{K}\bigg)^2 + \alpha^2\bigg(1-\frac{K}{K+(D-K)e^{-C}}\bigg)^2\Bigg],
\end{align}
as demonstrated on top of Page 31 in~\citet{wangtransformers24}.

As demonstrated in Lemma~\ref{lemma:para_pattern}, our analysis focus on the scalars $C_1(t), C_2(t), C_3(t)$. When the teacher model is reduced to the “sparse token selection” task defined in Example~\ref{example:sparse1}, with $\mathbf V^* = \mathbf{I}_d$ and without activation function, the coefficients $C_1(t), C_2(t), C_3(t)$ essentially minimize the loss
\begin{align}\label{eq:loss_ours}
    \tilde \cL(C_1, C_2, C_3)=& \frac{dD}{2(D-K)}\Bigg[K(D-K) \bigg(\frac{C_1}{K+(D-K)e^{-\frac{C_2+C_3}{\sqrt{D}}}}-\frac{1}{K}\bigg)^2\notag\\
    &+ C_1^2\bigg(1-\frac{K}{K+(D-K)e^{-\frac{C_2+C_3}{\sqrt{D}}}}\bigg)^2\Bigg]
\end{align}
Comparing these two loss functions in~\eqref{eq:loss_ours} and ~\eqref{eq:loss_ours}, we can observe that they essentially share the same function structure. Specifically, if we regard $\frac{C_2+C_3}{\sqrt{D}}$ in~\eqref{eq:loss_wang_etal} as one term, playing the role as $C$ in~\eqref{eq:loss_wang_etal}, then these two functions only differ by a factor $D$. Therefore, while the setting of the  ``sparse token selection" task in our work is different from that considered in \citet{wangtransformers24}, they can be formulated into an essentially identical optimization problem. Notably, the loss in \eqref{eq:loss_ours}  is only the special case in our setting with $\mathbf V^* = \mathbf{I}_d$ and without activation function, while the general case is much more complicated and provided in Lemma~\ref{lemma:para_pattern}. Therefore, the setting considered in our work is more general compared with that in \citet{wangtransformers24}, from a technical perspective. This also highlights that establishing a tight convergence rate with a matching lower bound indeed constitutes a technical advantage of our work.

\section{Proof of Theorem~\ref{thm:main_result}}\label{section: proof}
In this section, we provide a detailed proof for Theorem~\ref{thm:main_result}. We first introduce several notations used in the following proof. 
For each $i\in [D]$, we use $G^i$ to denote the index set to which the entries of $i$-th column of $\Sbb^*$ is $\frac{1}{k}$, i.e. $\Sbb_{i', i}^* = \frac{1}{K}$ if $i' \in G^i$ and 0 otherwise. With this notation, we can express that $ \big[f^*(\Xb)\big]_{m, i} = \sigma(\vb_{m}^{*\top} \Xb \mathbf{1}_{G^i}))=\frac{1}{K}\sigma(\sum_{i'\in G^i}\la\vb_{m}^*, \xb_{i'}\ra)$. In addition we let $\Vb^*=[\vb_1^*, \vb_2^*, \ldots, \vb_M^*]^\top$, and $\Wb_{V} = [\wb_{V, 1}, \wb_{V, 2}, \ldots, \wb_{V, M}]^\top \in \RR^{M\times d}$. Based on this notation, it is equivalent to consider the gradient descent updating regarding each $\wb_{V, m}$ for all $m\in [M]$, expressed as 
\begin{align}\label{eq:wvt_update_II}
    \wb_{V, m}^{(t+1)}=\wb_{V, m}^{(t)}-\eta \nabla_{\wb_{V,m}}\cL(\Wb_{V}^{(t)};\Wb_{KQ}^{(t)}).
\end{align}
In the following proof, we will consider the gradient descent updating details for each $\wb_{V, m}^{(t)}$, and derive the conclusion for $\Wb_{V}^{(t)}$ based on the result of $\wb_{V, m}^{(t)}$ for all $m\in [M]$. For simplicity of presentation, we assume that each $\vb_{m}^*$ is normalized in the remaining sections, i.e. $\|\vb_{m}^*\|_2=1$ for all $m\in [M]$, without loss of generality (W.L.O.G.). However, our theoretical findings and proofs can be directly extended to the case where $\vb_{m}$ is not normalized. For each $\vb_{m}^*$, let $\bGamma_{m} = [\vb_{m}^*, \bxi_{m, 2}, \ldots, \bxi_{m, d}]\in \RR^{d\times d}$ be an orthogonal matrix with $\vb_{m}$ being its first column. (Actually, if $\vb_{m}^*$ is not normalized, the first column of $\bGamma_{m}$ will be $\frac{\vb_{m}^*}{\|\vb_{m}^*\|_2}$.)

Furthermore, we introduce several definitions regarding the expectations of Gaussian random variables. Let $x_1\sim \cN(0, a)$, $x_2\sim\cN(0, b)$, and $x_3\sim \cN(0, c)$ be three independent Gaussian random variables. In addition, $\sigma(\cdot)$ can be the identity map, the ReLU activation function, and the Leaky ReLU activation function, with $\kappa$ denoting the coefficient of the Leaky ReLU activation function when the input is negative. Specifically, when $\sigma(\cdot)$ indicates the Leaky ReLU activation function, $\sigma(x) = x\mathbbm{1}_{\{x\geq 0\}} + \kappa x\mathbbm{1}_{\{x< 0\}}$. Then, based on these notations, we define that
\begin{align}
&F_1(a) = \EE[x_1\sigma(x_1)\sigma'(x_1)];\label{eq:F_1(a)}\\
&F_2(a, b) = \EE[x_1\sigma(x_1+x_2)\sigma'(x_1+x_2)];\label{eq:F_2(a,b)}\\
    &F_3(a, b) = \EE[(x_1+x_2)\sigma(x_1)\sigma'(x_1+x_2)];\label{eq:F_3(a,b)}\\
    &F_4(a, b, c) = \EE[x_1\sigma(x_1+x_2)\sigma'(x_1+x_2+x_3)];\label{eq:F_4(a,b,c)}\\
    &F_5(a, b, c) =  \EE[x_2\sigma(x_1)\sigma'(x_1+x_2+x_3)].\label{eq:F_5(a,b,c)}
\end{align}
We provide the detailed calculations for these expectations in Section~\ref{section:gaussian_expectations}
\subsection{Detailed gradient descent updating rules}
In this subsection, we introduce and prove several lemmas regarding the calculation details regarding the gradient descent iterative rule~\eqref{eq:wvt_update_II} and~\eqref{eq:wkqt_update}.
\begin{lemma}\label{lemma:gd_rule_details}
The gradient descent updating regarding $\wb_{V,m}^{(t)}$ for all $m\in [M]$ and $\Wb_{KQ}^{(t)}$, which have been defined in~\eqref{eq:wvt_update_II} and~\eqref{eq:wkqt_update}, can be rewritten as
\begin{align}
    &\wb_{V, m}^{(t+1)}=\wb_{V, m}^{(t)} + \eta\sum_{i=1}^D\sum_{i_1=1}^D\EE\Bigg[ \bigg[ \Yb_{m, i} - \sigma\bigg(\sum_{i_1=1}^D\la\wb_{V, m}^{(t)}, \xb_{i_1}\ra\Sbb^{(t)}_{i_1, i}\bigg)\bigg]\sigma'\bigg(\sum_{i_1=1}^D\la\wb_{V, m}^{(t)}, \xb_{i_1}\ra\Sbb^{(t)}_{i_1, i}\bigg)\xb_{i_1}\Sbb^{(t)}_{i_1, i}\Bigg];\label{eq:wvt_update_III}\\
    &\Wb_{KQ}^{(t+1)}=\Wb_{KQ}^{(t)}  + \frac{\eta}{\sqrt{D}}\sum_{m=1}^M\sum_{i=1}^D \EE\Bigg[ \bigg[ \Yb_{m, i} - \sigma\bigg(\sum_{i_1=1}^D\la\wb_{V, m}^{(t)}, \xb_{i_1}\ra\Sbb^{(t)}_{i_1, i}\bigg)\bigg]\sigma'\bigg(\sum_{i_1=1}^D\la\wb_{V, m}^{(t)}, \xb_{i_1}\ra\Sbb^{(t)}_{i_1, i}\bigg)\notag\\
    &\qquad\qquad\qquad \cdot \sum_{i_1=1}^D \sum_{i_2=1}^D\la\wb_{V, m}^{(t)},\xb_{i_1}\ra\Sbb_{i_1, i}^{(t)}\Sbb_{i_2, i}^{(t)}(\pb_{i_1}-\pb_{i_2})\pb_i^\top\Bigg].\label{eq:wkqt_update_II}
\end{align} 
\end{lemma}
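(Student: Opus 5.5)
This lemma is a direct gradient computation, so I would differentiate the population loss entrywise and push the derivative inside the expectation. First I rewrite the model columnwise. Write $\Sbb = \cS(\Pb^\top\Wb_{KQ}\Pb/\sqrt{D})$, so that $\Sbb_{i_1,i} = \exp(\pb_{i_1}^\top\Wb_{KQ}\pb_i/\sqrt D)/\sum_{i_2}\exp(\pb_{i_2}^\top\Wb_{KQ}\pb_i/\sqrt D)$. Then the $(m,i)$ entry of $\mathrm{TF}$ is $\sigma(u_{m,i})$ with $u_{m,i}=\sum_{i_1}\la\wb_{V,m},\xb_{i_1}\ra\Sbb_{i_1,i}$. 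The loss is therefore $\cL=\frac12\EE\sum_{m,i}(\Yb_{m,i}-\sigma(u_{m,i}))^2$.

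Next I justify exchanging the gradient and the expectation. The integrand is piecewise smooth in the parameters. For ReLU and Leaky ReLU, $\sigma$ is Lipschitz, and the set where $u_{m,i}=0$ has probability zero whenever $\wb_{V,m}\neq 0$. If $\wb_{V,m}=0$, the convention $\sigma'(0)$ is harmless because the prefactor or $\xb$-linearity makes the term vanish in expectation. The derivatives are dominated by polynomials in $\|\xb_{i}\|$ and $|\Yb_{m,i}|$, which have finite moments (I assume $\cE$ has finite second moment). So dominated convergence lets me differentiate under $\EE$, with $\sigma'$ understood almost everywhere.

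For $\wb_{V,m}$, only the terms with that index $m$ depend on it, and $\nabla_{\wb_{V,m}}u_{m,i}=\sum_{i_1}\xb_{i_1}\Sbb_{i_1,i}$. This gives $\nabla_{\wb_{V,m}}\cL=-\sum_i\EE[(\Yb_{m,i}-\sigma(u_{m,i}))\sigma'(u_{m,i})\sum_{i_1}\xb_{i_1}\Sbb_{i_1,i}]$. Substituting into \eqref{eq:wvt_update_II} yields \eqref{eq:wvt_update_III}; the double sum there just pulls the $i_1$ sum outside.

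For $\Wb_{KQ}$, I use the softmax Jacobian. With $s_{j,i}=\pb_j^\top\Wb_{KQ}\pb_i/\sqrt D$, we have $\nabla_{\Wb_{KQ}}s_{j,i}=\pb_j\pb_i^\top/\sqrt D$ and $\partial\Sbb_{i_1,i}/\partial s_{j,i}=\Sbb_{i_1,i}(\mathbbm{1}_{\{i_1=j\}}-\Sbb_{j,i})$. Hence
\begin{align*}
\nabla_{\Wb_{KQ}}\Sbb_{i_1,i}=\frac{1}{\sqrt D}\Sbb_{i_1,i}\Big(\pb_{i_1}-\sum_{i_2}\Sbb_{i_2,i}\pb_{i_2}\Big)\pb_i^\top=\frac{1}{\sqrt D}\sum_{i_2}\Sbb_{i_1,i}\Sbb_{i_2,i}(\pb_{i_1}-\pb_{i_2})\pb_i^\top,
\end{align*}
where the second equality uses $\sum_{i_2}\Sbb_{i_2,i}=1$. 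The chain rule then gives $\nabla_{\Wb_{KQ}}u_{m,i}=\frac{1}{\sqrt D}\sum_{i_1,i_2}\la\wb_{V,m},\xb_{i_1}\ra\Sbb_{i_1,i}\Sbb_{i_2,i}(\pb_{i_1}-\pb_{i_2})\pb_i^\top$. Summing over $m,i$ with the factor $-(\Yb_{m,i}-\sigma(u_{m,i}))\sigma'(u_{m,i})$ and plugging into \eqref{eq:wkqt_update} gives \eqref{eq:wkqt_update_II}.

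The algebra is routine. The only real point of care is the interchange of gradient and expectation at the nondifferentiable point of ReLU, in particular at the zero initialization $\wb_{V,m}^{(0)}=0$. There I would fix the convention for $\sigma'(0)$ and note that the formulas are meant as the gradient iteration with this convention.
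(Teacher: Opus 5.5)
Your computation is the same as the paper's. You apply the chain rule through $\sigma$ for $\wb_{V,m}$, and for $\Wb_{KQ}$ you contract the column-softmax Jacobian $\mathrm{diag}(\mathbf s)-\mathbf s\mathbf s^\top$ against $\pb_{i_2}\pb_i^\top/\sqrt D$. Your rewriting via $\sum_{i_2}\Sbb_{i_2,i}=1$ gives the paper's $\sum_{i_2\neq i_1}$ form, since the $i_2=i_1$ term vanishes. The paper never discusses exchanging $\nabla$ and $\EE$ or the kink of $\sigma$, so that part of your write-up is extra.

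One claim in that extra part is wrong. At $\wb_{V,m}=0$ the value of $\sigma'(0)$ is indeed irrelevant for the $\Wb_{KQ}$ block, because the factor $\la\wb_{V,m},\xb_{i_1}\ra$ kills it. It is not irrelevant for the $\wb_{V,m}$ block. There the term is $\sigma'(0)\sum_{i}\sum_{i_1}\Sbb_{i_1,i}\EE[\Yb_{m,i}\xb_{i_1}]$. This is a nonzero multiple of $\vb_m^*$ whenever $\sigma'(0)\neq 0$, since $\EE[\Yb_{m,i}\xb_{i_1}]\neq 0$ for $i_1\in G^i$. Conversely, $\sigma'(0)=0$ would make the zero initialization a fixed point of gradient descent.

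In fact, for ReLU and Leaky ReLU the population loss is not differentiable in $\wb_{V,m}$ at $0$. The cross term $\wb\mapsto\EE\big[\Yb_{m,i}\,\sigma\big(\sum_{i_1}\Sbb_{i_1,i}\la\wb,\xb_{i_1}\ra\big)\big]$ is positively homogeneous but not odd, hence not linear. So, as your last paragraph says, the $t=0$ update is a convention, and the lemma is only a formal identity there.

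Be aware that a fixed scalar convention does not match what the paper later uses. In the proof of Lemma~\ref{lemma:para_pattern_v}, the paper sets $\sigma'(C_1(t)z)=\sigma'(z)$ even at $C_1(0)=0$. This amounts to taking the limit of $\nabla_{\wb_{V,m}}\cL$ at $\epsilon\vb_m^*$ as $\epsilon\downarrow 0$. For ReLU with $\|\vb_m^*\|_2=1$, that choice gives $C_1(1)=\Theta(\eta\sqrt{D/K})$. By contrast, any fixed $\sigma'(0)\in[0,1]$ in the lemma's formula gives $C_1(1)=\eta\,\sigma'(0)/2\le\eta/2$.
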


\begin{proof}[Proof of Lemma~\ref{lemma:gd_rule_details}]
By the chain rule of derivatives, we have 
\begin{align*}
    \wb_{V, m}^{(t+1)}&=\wb_{V, m}^{(t)}-\eta \nabla_{\wb_{V, m}}\cL(\Wb_{V}^{(t)};\Wb_{KQ}^{(t)})=\wb_{V, m}^{(t)} - \frac{\eta}{2} \nabla_{\wb_{V, m}}\EE\big[\| \Yb -\mathrm{TF}(\Zb;\Wb_V;\Wb_{KQ})\|_F^2\big]\\
    &=\wb_{V, m}^{(t)} - \frac{\eta}{2}\sum_{m'=1}^M\sum_{i=1}^D \nabla_{\wb_{V, m}}\EE\big[ (\Yb_{m', i} -\sigma(\Wb_V^{(t)}\Xb\Sbb^{(t)})_{m', i})^2\big] \\
    &= \wb_{V, m}^{(t)} - \frac{\eta}{2}\sum_{m'=1}^M\sum_{i=1}^D \nabla_{\wb_{V, m}}\EE\Bigg[ \bigg[ \Yb_{m', i} - \sigma\bigg(\sum_{i_1=1}^D\la\wb_{V, m'}^{(t)}, \xb_{i_1}\ra\Sbb^{(t)}_{i_1, i}\bigg)\bigg]^2\Bigg]\\
    &= \wb_{V, m}^{(t)} + \eta\sum_{i=1}^D\sum_{i_1=1}^D\EE\Bigg[ \bigg[ \Yb_{m, i} - \sigma\bigg(\sum_{i_1=1}^D\la\wb_{V, m}^{(t)}, \xb_{i_1}\ra\Sbb^{(t)}_{i_1, i}\bigg)\bigg]\sigma'\bigg(\sum_{i_1=1}^D\la\wb_{V, m}^{(t)}, \xb_{i_1}\ra\Sbb^{(t)}_{i_1, i}\bigg)\xb_{i_1}\Sbb^{(t)}_{i_1, i}\Bigg],
\end{align*}
where the last equality holds simply by the chain rule of differentiation. This proves~\eqref{eq:wvt_update_III}. Next for $\Wb_{KQ}$, we have \footnote{Here we slightly abuse the notation of $\cS(\cdot)$. If the input is a $D$-dimensional vector,  $\cS(\cdot)$ denotes the softmax function from $\RR^D\mapsto \RR^D$. If the input is a $D_1\times D_2$-dimensional matrix, $\cS(\cdot)$ represents the softmax operator which implements the softmax normalization defined above column-wisely.} 
\begin{align}\label{eq:wkqt_update_III}
    \Wb_{KQ}^{(t+1)}&=\Wb_{KQ}^{(t)}-\eta \nabla_{\Wb_{KQ}}\cL(\Wb_{V}^{(t)};\Wb_{KQ}^{(t)})=\Wb_{KQ}^{(t)} - \frac{\eta}{2} \nabla_{\Wb_{KQ}}\EE\big[\| \Yb -\mathrm{TF}(\Zb;\Wb_V;\Wb_{KQ})\|_F^2\big]\notag\\
    &=\Wb_{KQ}^{(t)} - \frac{\eta}{2}\sum_{m=1}^M\sum_{i=1}^D \nabla_{\Wb_{KQ}}\EE\big[ ( \Yb_{m, i} -\sigma(\Wb_V^{(t)}\Xb\Sbb^{(t)})_{m, i})^2\big]\notag\\
    &=\Wb_{KQ}^{(t)}  + \eta\sum_{m=1}^M\sum_{i=1}^D \EE\Bigg[ \bigg[ \Yb_{m, i} - \sigma\bigg(\sum_{i_1=1}^D\la\wb_{V, m}^{(t)}, \xb_{i_1}\ra\Sbb^{(t)}_{i_1, i}\bigg)\bigg]\sigma'\bigg(\sum_{i_1=1}^D\la\wb_{V, m}^{(t)}, \xb_{i_1}\ra\Sbb^{(t)}_{i_1, i}\bigg)\notag\\
    &\qquad\qquad\qquad \cdot \underbrace{\nabla_{\Wb_{KQ}}(\wb_{V, m}^{(t)})^\top\Xb\cS\bigg(\frac{\Pb\Wb_{KQ}^{(t)}\pb_i}{\sqrt{D}}\bigg)}_{I}\Bigg].
\end{align}
    For the derivative calculation of $I$, we have 
\begin{align}\label{eq:wkqt_update_IV}
    I &= \sum_{i_1=1}^D \nabla_{\Wb_{KQ}}\big[(\wb_{V, m}^{(t)})^\top\Xb\big]_{i_1}\bigg[\cS\bigg(\frac{\Pb\Wb_{KQ}^{(t)}\pb_i}{\sqrt{D}}\bigg)\bigg]_{i_1}=\sum_{i_1=1}^D \la\wb_{V, m}^{(t)},\xb_{i_1}\ra\nabla_{\Wb_{KQ}}\bigg[\cS\bigg(\frac{\Pb\Wb_{KQ}^{(t)}\pb_i}{\sqrt{D}}\bigg)\bigg]_{i_1}\notag\\
    &=\sum_{i_1=1}^D \la\wb_{V, m}^{(t)},\xb_{i_1}\ra\sum_{i_2=1}^D\frac{\mathrm{d}\Big[\cS\Big(\frac{\Pb\Wb_{KQ}^{(t)}\pb_i}{\sqrt{D}}\Big)\Big]_{i_1}}{\mathrm{d}\Big[\frac{\Pb\Wb_{KQ}^{(t)}\pb_i}{\sqrt{D}}\Big]_{i_2}}\nabla_{\Wb_{KQ}}\Big[\frac{\Pb\Wb_{KQ}^{(t)}\pb_i}{\sqrt{D}}\Big]_{i_2}\notag\\
    &=\frac{1}{\sqrt{D}}\sum_{i_1=1}^D \la\wb_{V, m}^{(t)},\xb_{i_1}\ra\sum_{i_2=1}^D\bigg[\cS'\bigg(\frac{\Pb\Wb_{KQ}^{(t)}\pb_i}{\sqrt{D}}\bigg)\bigg]_{i_1, i_2}\pb_{i_2}\pb_i^\top = \sum_{i_1=1}^D \sum_{i_2\neq i_1}\la\wb_{V, m}^{(t)},\xb_{i_1}\ra\Sbb_{i_1, i}^{(t)}\Sbb_{i_2, i}^{(t)}(\pb_{i_1}-\pb_{i_2})\pb_i^\top.
\end{align}
The last equality holds as $\cS'(\ab) = \mathrm{diag}(\ab) - \cS(\ab)\cS(\ab)^\top \in \RR^{d\times d}$ for any vector $\ab\in \RR^{d}$, and consequently, 
\begin{align*}
    \bigg[\cS'\bigg(\frac{\Pb\Wb_{KQ}^{(t)}\pb_i}{\sqrt{D}}\bigg)\bigg]_{i_1, i_2} = 
   \begin{cases}
\bigg[\cS\bigg(\frac{\Pb\Wb_{KQ}^{(t)}\pb_i}{\sqrt{D}}\bigg)\bigg]_{i_1}\bigg(1-\bigg[\cS\bigg(\frac{\Pb\Wb_{KQ}^{(t)}\pb_i}{\sqrt{D}}\bigg)\bigg]_{i_1}\bigg) = \Sbb_{i_1, i}^{(t)}(1- \Sbb_{i_1, i}^{(t)}), & \text{if } i_1 = i_2; \\
- \bigg[\cS\bigg(\frac{\Pb\Wb_{KQ}^{(t)}\pb_i}{\sqrt{D}}\bigg)\bigg]_{i_1}\bigg[\cS\bigg(\frac{\Pb\Wb_{KQ}^{(t)}\pb_i}{\sqrt{D}}\bigg)\bigg]_{i_2} = -\Sbb_{i_1, i}^{(t)}\Sbb_{i_2, i}^{(t)},& \text{otherwise}.
\end{cases} 
\end{align*}
By substituting the result of $I$ from~\eqref{eq:wkqt_update_IV} into~\eqref{eq:wkqt_update_III}, we complete the proof of~\eqref{eq:wkqt_update_II}.
\end{proof}
The next lemma demonstrates that the training dynamics of $\wb_{V, m}^{(t)}$ for all $m\in [M]$ and $\Wb_{KQ}^{(t)}$ exhibit specific patterns. Analyzing the training processes described in \eqref{eq:wvt_update_II} and \eqref{eq:wkqt_formula} can be reframed as an investigation into the coefficients of these patterns.

\begin{lemma}\label{lemma:para_pattern}
    Under the same conditions of Theorem~\ref{thm:main_result}, there exist a time dependent non-negative scalar $C_1(t)$, and non-negative, monotonically increasing scalars $C_{2}(t)$ and  $C_{3}(t)$, such that 
    \begin{align*}
        &\wb_{V, m}^{(t)}=C_1(t)\cdot\vb_{m}^*, \ \mathrm{for \ all} \ m\in [M];\\
    &\Wb_{KQ}^{(t)}=C_{2}(t)\sum_{i=1}^D\sum_{i_1\in G^i} \pb_{i_1} \pb_{i}^\top - C_{3}(t)\sum_{i=1}^D\sum_{i_1\notin G^i} \pb_{i_1} \pb_{i}^\top.  
    \end{align*}
    Due to the specific pattern of $\Wb_{KQ}^{(t)}$ demonstrated above, there exist a time dependent scalar 
    \begin{align*}
        p(t) = \frac{1}{K+(D-K)e^{-\frac{C_2(t) + C_3(t)}{\sqrt{D}}}}, 
    \end{align*}
    such that $\Sbb_{i_1, i}^{(t)} = p(t)$ for all $i\in [D]$ and $i_1\in G^{i}$. Otherwise, $\Sbb_{i_1, i}^{(t)} = \frac{1-Kp(t)}{D-K}$. 
    Additionally, $\frac{1}{D}\leq p(t)\leq \frac{1}{K}$ and $p(t)$ is monotonically increasing. 
    Based on the definition of $p(t)$,  $C_1(t)$, $C_{2}(t)$, and $C_{3}(t)$ have the following iterative rules:
    \begin{align*}
        &C_1(t+1) = C_1(t) + D\eta\bigg(\frac{F_3^{(t)}}{Kp(t)} -C_1(t)F_1^{(t)}\bigg)=  C_1(t) + \frac{\eta DF_3^{(t)}}{Kp(t)}\bigg(1-\frac{C_1(t)}{C_1^*(t)}\bigg);\\
        &C_2(t+1) = C_2(t) + \eta \frac{C_1(t)M}{\sqrt{D}}\Bigg(\frac{1}{K}\bigg(\frac{F_4^{(t)}}{p(t)}-F_3^{(t)}\bigg)-C_1(t)\Big(F_{2, 1}^{(t)} + p(t) F_1^{(t)}\Big)\Bigg);\\
        &C_3(t+1) = C_3(t)\!-\!\eta \frac{C_1(t)M(1\!-\!Kp(t))}{\sqrt D(D-K)}\!\Bigg(\!\!\bigg(\frac{F_3^{(t)}}{Kp(t)} \!- \!\frac{(D-K)F_5^{(t)}}{Kp(t)\big(1-Kp(t)\big)}\bigg)-C_1(t)\bigg(F_1^{(t)} - \frac{(D\!-\!K)F_{2, 2}^{(t)}}{1\!-\!Kp(t)}\bigg)\!\Bigg),
    \end{align*}
    where $F_1^{(t)} = F_1\Big(Kp(t)^2+ \frac{(1-Kp(t))^2}{D-K}\Big)$, $F_{2, 1}^{(t)} = F_2\Big(p(t)^2, (K-1)p(t)^2 + \frac{(1-Kp(t))^2}{D-K}\Big)$, $F_{2, 2}^{(t)} = F_2\Big(\frac{(1-Kp(t))^2}{(D-K)^2}, Kp(t)^2 + \frac{(D-K-1)(1-Kp(t))^2}{(D-K)^2}\Big)$, $F_3^{(t)}= F_3\Big(Kp(t)^2, \frac{(1-Kp(t))^2}{D-K}\Big)$, $F_4^{(t)} = F_4\Big(p(t)^2, (K-1)p(t)^2, \frac{(1-Kp(t))^2}{D-K}\Big)$, $F_5^{(t)}= F_5\Big(Kp(t)^2, \frac{(1-Kp(t))^2}{(D-K)^2}, \frac{(D-K-1)(1-Kp(t))^2}{(D-K)^2}\Big)$, and $C_1^*(t) = \frac{F_3^{(t)}}{Kp(t)F_1^{(t)}}$. In addition, based on all these definitions, the coefficients $C_1(t)$, $C_2(t)$, and $C_3(t)$ are essentially minimizing the following loss function by gradient descent 
    \begin{align*}
        &\tilde \cL(C_1, C_2, C_3) 
        =  \frac{c_\sigma D\|\Vb^*\|_F^2}{2(D-K)}\bigg[K(D-K)\bigg(\frac{1}{K}-C_1p\bigg)^2 + C_1^2\Big(1-Kp\Big)^2\bigg] - D\|\Vb^*\|_F^2 F_6(C_1, p).
    \end{align*} 
where $c_\sigma$ is an absolute constant such that $c_\sigma=\mathbbm{1}_{\{\sigma(\cdot)\text{ is identity map}\}} +\frac{1}{2}\mathbbm{1}_{\{\sigma(\cdot)\text{ is ReLU}\}} +\frac{1+\kappa^2}{2}\mathbbm{1}_{\{\sigma(\cdot)\text{ is Leaky ReLU}\}}  $. In addition,  $F_6(C_1, p)$ is defined as 
\begin{align*}
    F_6 = \begin{cases}
        0;& \text{ If }\sigma(\cdot)\text{is identity map}\\
        pC_1^2\Bigg(Kp\bigg(\frac{1}{\pi}\arctan\bigg(\frac{p\sqrt{K(D-K)}}{1-Kp}\bigg)-\frac{1}{2}\bigg) + \frac{(1-Kp)\sqrt{K}}{\pi\sqrt{D-K}}\Bigg);& \text{ If }\sigma(\cdot)\text{is ReLU activation}\\
        (1-\kappa)^2pC_1^2\Bigg(Kp\bigg(\frac{1}{\pi}\arctan\bigg(\frac{p\sqrt{K(D-K)}}{1-Kp}\bigg)-\frac{1}{2}\bigg) + \frac{(1-Kp)\sqrt{K}}{\pi\sqrt{D-K}}\Bigg).& \text{ If }\sigma(\cdot)\text{is Leaky ReLU activation}
        \end{cases}
\end{align*}
\end{lemma}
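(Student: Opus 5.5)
The plan is an induction on $t$: we show that the ansatz $\wb_{V,m}^{(t)}=C_1(t)\vb_m^*$ and $\Wb_{KQ}^{(t)}=C_2(t)\sum_i\sum_{i_1\in G^i}\pb_{i_1}\pb_i^\top-C_3(t)\sum_i\sum_{i_1\notin G^i}\pb_{i_1}\pb_i^\top$ is preserved by the updates \eqref{eq:wvt_update_III} and \eqref{eq:wkqt_update_II} of Lemma~\ref{lemma:gd_rule_details}. At the same time we read off the scalar recursions.

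\textbf{Base case and softmax structure.} At $t=0$ all parameters vanish, so the ansatz holds with $C_1=C_2=C_3=0$. Given the ansatz at time $t$, orthogonality of $\Pb$ gives
\begin{align*}
\pb_{i_1}^\top\Wb_{KQ}^{(t)}\pb_i = C_2(t)\quad\text{if } i_1\in G^i, \qquad \pb_{i_1}^\top\Wb_{KQ}^{(t)}\pb_i = -C_3(t)\quad\text{otherwise}.
\end{align*}
Since $|G^i|=K$, the softmax formula for $p(t)$ follows, and so does the value $(1-Kp)/(D-K)$ off $G^i$. Nonnegativity of $C_2+C_3$ yields $1/D\le p\le 1/K$, and monotonicity of $p$ follows once $C_2,C_3$ are shown to be increasing.

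\textbf{Value update.} Write $\xb_{i_1}$ in the basis $\bGamma_m$. Then $\la\wb_{V,m},\xb_{i_1}\ra=C_1\la\vb_m^*,\xb_{i_1}\ra$ and $\Yb_{m,i}$ (up to independent zero-mean noise) depend only on the Gaussian scalars $u_{i_1}=\la\vb_m^*,\xb_{i_1}\ra$. Hence the components of the gradient orthogonal to $\vb_m^*$ have zero mean by symmetry, and $\wb_{V,m}^{(t+1)}\parallel\vb_m^*$. For each column $i$ we group the Gaussians:
\begin{align*}
a=\sum_{i_1\in G^i}u_{i_1}p\sim\cN(0,Kp^2), \qquad b=\sum_{i_1\notin G^i}u_{i_1}\tfrac{1-Kp}{D-K}\sim\cN\Big(0,\tfrac{(1-Kp)^2}{D-K}\Big).
\end{align*}
The teacher term is $\sigma(a/(Kp))$, and positive homogeneity of $\sigma$ lets us pull out $1/(Kp)$. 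The expectation then reduces to $F_3$ (target term) and $C_1F_1$ (self term) at the stated variances. Summing over the $D$ columns gives the $C_1$ recursion. That it has the same form for every $m$ uses $\|\vb_m^*\|=1$ and the fact that each column of $\Sbb^*$ has the same structure.

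\textbf{Key-query update.} In \eqref{eq:wkqt_update_II} the gradient is a combination of $(\pb_{i_1}-\pb_{i_2})\pb_i^\top$. For fixed $i$ we compute the coefficient of $\pb_{i_1}\pb_i^\top$ for $i_1\in G^i$ and for $i_1\notin G^i$ separately. By exchangeability of the i.i.d.\ Gaussian tokens within $G^i$ and within its complement, these coefficients depend only on membership and not on $i$ or $i_1$, which gives the preserved two-coefficient pattern. To evaluate them, split the sum into a single token $u_{i_1}$ versus the rest: for $i_1\in G^i$ the variances are $p^2$, $(K-1)p^2$ and the outside variance, giving $F_4$ and $F_{2,1}$; for $i_1\notin G^i$ they give $F_5$ and $F_{2,2}$. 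The $-\sum_{i_2}\Sbb_{i_2,i}$ part recombines into the $F_3$ and $F_1$ terms. Summing over $m$ produces the factor $M$.

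\textbf{Surrogate loss and main obstacle.} To confirm the update is gradient descent on $\tilde\cL$, substitute the ansatz into $\cL$. The Gaussian second moments of $\sigma$ give $c_\sigma$ times the quadratic bracket; for ReLU and Leaky ReLU the cross term $\EE[\sigma(a/(Kp))\sigma(C_1(a+b))]$ with correlated Gaussians produces the arctan expression $F_6$. We then check that $\partial_{C_1}\tilde\cL$ and $\partial_{C_2}\tilde\cL,\partial_{C_3}\tilde\cL$ (via the chain rule through $p$) reproduce the recursions, up to the ansatz normalizations. The hardest step is proving $C_2,C_3$ are nonnegative and increasing, that is, that the brackets in their recursions have the right sign. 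I would first use induction to keep $0\le C_1\le C_1^*(t)\approx 1$, which holds because $\eta$ is small and the $C_1$ recursion contracts toward $C_1^*$. Then I would use explicit Gaussian formulas for $F_1,\dots,F_5$ (Section~\ref{section:gaussian_expectations}) to show the $C_2$ bracket is positive and the $C_3$ bracket is negative whenever $C_1\le C_1^*$ and $p\le 1/K$, with $D\ge\poly(M,K)$ absorbing lower-order terms.
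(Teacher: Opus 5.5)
Your reduction of both updates to the scalar recursions follows the paper's computation. The step you flag as hardest, however, has a genuine gap.

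\textbf{The invariant $C_1\le C_1^*$ fails.} The invariant $0\le C_1(t)\le C_1^*(t)$ cannot be carried through the induction, and $C_1^*\approx 1$ is false. As a function of $p$, $C_1^*$ starts at $1$ for the identity map and at $\Theta(\sqrt{D/K})$ for ReLU. It is of order $\sqrt{D/K}$ in the middle of training and then decreases back to $1$ as $p\to 1/K$; it is decreasing once $p\ge 2/\sqrt{DK}$ (Lemma~\ref{lemma:C_1_star_t+1_upper}). Set $\gamma_t=\frac{\eta DF_3^{(t)}}{Kp(t)C_1^*(t)}$. The $C_1$ recursion then gives
\[
C_1(t+1)-C_1^*(t+1)=(1-\gamma_t)\big(C_1(t)-C_1^*(t)\big)+\big(C_1^*(t)-C_1^*(t+1)\big).
\]
In the decreasing phase the forcing term is positive and of the same order in $\eta$ as $\gamma_t$. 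So the lagging $C_1$ generically sits \emph{above} its moving target, no matter how small $\eta$ is. Contraction toward $C_1^*$ only yields $C_1\le C_1^*$ while $C_1^*$ is nondecreasing. That is exactly how the paper treats $p\le 1/(2\sqrt{\pi DK})$ (Lemma~\ref{lemma:monotonity_C_1*}).

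\textbf{The missing idea: a relaxed envelope.} Lemma~\ref{lemma:delta_c_2c_3} shows that if $C_1=\big(1+\alpha\frac{A}{A+B}\frac{1-Kp}{Kp(Dp-1)}\big)C_1^*$, then both brackets equal $(1-\alpha)A(t)$ times an explicit positive factor. Positivity therefore survives any overshoot with $\alpha<1$, although the allowed slack vanishes as $p\to 1/K$. Lemma~\ref{lemma:upperbound_C_1} then proves $\alpha\le 4/5$ throughout by a two-band induction:
\begin{itemize}
\item If $C_1(t)$ lies below the $\frac25$-envelope, so does $C_1(t+1)$, and the $\frac45$-envelope at time $t+1$ still lies above it.
\item If $C_1(t)$ lies between the $\frac25$- and $\frac45$-envelopes, $C_1$ decreases by at least $\frac{\eta D(1-Kp)}{10K(Dp-1)}$, while the $\frac45$-envelope drops by less.
\end{itemize}
This induction is coupled with the sign claim itself. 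It needs the refined bound on $\Delta C_2+\Delta C_3$ coming from the same bracket identity, and the resulting control of $\Delta p$ (Lemma~\ref{lemma:delta_s}). It also needs the one-step perturbation bounds of Lemmas~\ref{lemma:C_1_star_t+1_lower} and~\ref{lemma:A/(A+B)_t+1_lower}, and $D\ge\Omega(M)$.

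\textbf{Two of your targets also need correcting.}

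First, do not try to show the $C_3$ bracket is negative. By Lemma~\ref{lemma:delta_c_2c_3} it is positive, like the $C_2$ bracket, and it must be. Write $\nabla_{\Wb_{KQ}}\cL=-c_2\sum_i\sum_{i'\in G^i}\pb_{i'}\pb_i^\top+c_3\sum_i\sum_{i'\notin G^i}\pb_{i'}\pb_i^\top$. The reduced loss depends on $(C_2,C_3)$ only through $C_2+C_3$, which forces $Kc_2=(D-K)c_3$. The descent step therefore gives $C_3(t+1)=C_3(t)+\eta c_3$. The minus sign in the displayed $C_3$ recursion is a slip; Lemma~\ref{lemma:delta_s} and the proof of Lemma~\ref{lemma:upperbound_C_1} use the plus sign.

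Second, the cross term you plan to compute is linear in $C_1$. For each entry $(m,i)$ with $\|\vb_m^*\|_2=1$, it equals $\frac{C_1}{Kp}F_3^{(t)}$ by Lemma~\ref{lemma:gaussian_expecation_relu_VIII}. The stated $F_6$, by contrast, is quadratic in $C_1$, even though the $C_1^2$ coefficient of the exact reduced loss is already fully accounted for by the $c_\sigma$ bracket. Your computation will therefore give the correct reduced loss, but not the $F_6$ as written.
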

We establish these conclusions by induction. It can be easily verified that all these conclusions hold at $t=0$, since the parameters are initialized as $\Wb_{V}^{(0)} =\mathbf{0}_{M\times d}$ and $\Wb_{KQ}^{(0)} =\mathbf{0}_{D\times D}$. However, for the sake of conciseness and coherence in the presentation, we rearrange the contents of Lemma~\ref{lemma:para_pattern} into Lemma~\ref{lemma:para_pattern_v} and Lemma~\ref{lemma:para_pattern_KQ}, including the relevant details regarding $\Wb_{V, m}$ and $\Wb_{KQ}$ respectively.  To prevent the proof of a single Lemma~\ref{lemma:para_pattern} from becoming overly lengthy, we prove Lemmas~\ref{lemma:para_pattern_v} and \ref{lemma:para_pattern_KQ} separately.

As we use induction, we assume that the conclusions of both Lemma~\ref{lemma:para_pattern_v} and Lemma~\ref{lemma:para_pattern_KQ} hold at the current iteration. We then demonstrate that the conclusion of either Lemma~\ref{lemma:para_pattern_v} or Lemma~\ref{lemma:para_pattern_KQ} holds at the next iteration, depending on which lemma we are proving. It is important to clarify that this is not circular reasoning; all these contents can indeed be organized into a single Lemma~\ref{lemma:para_pattern}. It is reasonable to assume that all conclusions hold for each iteration and to verify that these conclusions remain valid for the next iteration, as long as we rigorously demonstrate their validity at the outset. 

In the following, we introduce and prove Lemma~\ref{lemma:para_pattern_v} and Lemma~\ref{lemma:para_pattern_KQ} respectively. Besides, the notations defined in Lemma~\ref{lemma:para_pattern}, containing $p(t)$, $F_1^{(t)}$, $F_{2, 1}^{(t)}$, $F_{2, 2}^{(t)}$, $F_{3}^{(t)}$, $F_{4}^{(t)}$, and $F_{5}^{(t)}$ will remain consistent unless stated otherwise.

We first introduce and prove a lemma regarding the ratio between $F_1^{(t)}$ and $F_3^{(t)}$, which will be utilized in the proof of Lemma~\ref{lemma:para_pattern_v}.

\begin{lemma}\label{lemma:ratio_F_1_F_3}
Under the same conditions of Theorem~\ref{thm:main_result}, for $F_{1}^{(t)}$ and $F_{3}^{(t)}$ defined in Lemma~\ref{lemma:para_pattern}, it holds that 
\begin{align*}
    Kp(t)\leq \frac{F_{3}^{(t)}}{F_1^{(t)}} \leq  \sqrt{DK}p(t).
\end{align*}    
\end{lemma}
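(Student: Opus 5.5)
The plan is to reduce both inequalities to elementary inequalities in the single scalar $p=p(t)$. This uses two facts: $F_1$ and $F_3$ are explicit Gaussian expectations, and $p(t)$ lies in $[1/D,1/K]$ by Lemma~\ref{lemma:para_pattern}, which is available under the induction hypothesis. Write $a=Kp^2$ and $b=(1-Kp)^2/(D-K)$, so that $F_3^{(t)}=F_3(a,b)$ and $F_1^{(t)}=F_1(a+b)$.

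For each admissible $\sigma$, positive homogeneity of $\sigma$ gives $F_1(s)=c_\sigma s$, with $c_\sigma$ as in Lemma~\ref{lemma:para_pattern}.

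For $F_3$, I would split $F_3(a,b)=\EE[x_1\sigma(x_1)\sigma'(x_1+x_2)]+\EE[x_2\sigma(x_1)\sigma'(x_1+x_2)]$ and use the closed forms from Section~\ref{section:gaussian_expectations}. In the identity case $F_3=a$ exactly. In the ReLU and Leaky ReLU cases, $F_3$ is an arctan-type expression in $\sqrt{a/b}$. The second term in the split is nonnegative: condition on $x_1$ and note that $\sigma'$ is nondecreasing, so $x_2$ correlates nonnegatively with $\sigma'(x_1+x_2)$, while $\sigma(x_1)$ has the sign needed to keep the product nonnegative. This should give two-sided comparisons $c_\sigma' a\le F_3(a,b)\le c_\sigma'' a$, where the constants match $c_\sigma$ closely enough that the ratio $F_3/F_1$ behaves like $a/(a+b)$ up to the factors that the claimed bounds can absorb.

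Main computation (identity case first, then transferred to the other activations):

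\textbf{Lower bound.} Show $a/(a+b)\ge Kp$, which is equivalent to $p\ge Kp^2+\frac{(1-Kp)^2}{D-K}$. Since $p-Kp^2=p(1-Kp)$, this reduces to $p(D-K)\ge 1-Kp$, i.e.\ $pD\ge1$, which holds because $p\ge 1/D$.

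\textbf{Upper bound.} Show $Kp^2/(Kp^2+b)\le\sqrt{DK}\,p$, i.e.\ $Kp^2+b\ge\sqrt{K/D}\,p$. Set $u=\sqrt{KD}\,p$ and use $b\ge(1-Kp)^2/D$. Multiplying by $D$, it suffices to show $u^2+(1-u\sqrt{K/D})^2\ge u$. This follows from $u^2-u(1+2\sqrt{K/D})+1\ge0$, whose discriminant $(1+2\sqrt{K/D})^2-4$ is negative once $D\ge\Omega(\poly(K))$.

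The step I expect to be the real obstacle is the ReLU and Leaky ReLU case. There the constants in $F_3$ do not equal $c_\sigma$ exactly; for instance, ReLU gives $F_3\approx \frac{a}{2}\big(\frac12+\frac1\pi\arctan\sqrt{a/b}\big)+\frac{\sqrt{ab}}{2\pi}$. One must check that the extra $\sqrt{ab}$ term and the arctan factor still fit inside $[Kp,\sqrt{DK}p]$.

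\textbf{Upper side for ReLU.} I would handle it through the upper bound $F_3\le c_\sigma a+O(\sqrt{ab})$ and the inequality $\sqrt{ab}\le\sqrt{K/D}\,p\cdot(\cdots)$. The slack between $a/(a+b)\le1$ and $\sqrt{DK}p$ is exactly where the $\sqrt{ab}$ term gets absorbed, which is presumably why the upper bound involves $\sqrt{DK}$.

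\textbf{Lower side for ReLU.} Here I would use $F_3\ge \EE[x_1\sigma(x_1)\sigma'(x_1+x_2)]$ together with the fact that $\sqrt{ab}\ge0$. In the worst regime $p\approx1/D$ one has $\arctan$ near $0$, so I would separately verify $F_3(a,b)/(c_\sigma(a+b))\ge Kp$ using the explicit formula in the two regimes $a\le b$ and $a\ge b$.
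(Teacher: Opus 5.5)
Your identity-case computations are correct. Your upper-bound route for ReLU and Leaky ReLU also matches the paper's: bound $F_3\le c_\sigma a+c'\sqrt{ab}$ with $c'/c_\sigma\le 1/\pi$, then use $\frac{a}{a+b}\le\frac{\sqrt{DK}+K}{2}p$ and $\frac{\sqrt{ab}}{a+b}\le\sqrt{K(D-K)}\,p$. It closes because $\frac12+\frac1\pi<1$.

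\textbf{The gap is the ReLU/Leaky ReLU lower bound.} You propose to keep only $\EE[x_1\sigma(x_1)\sigma'(x_1+x_2)]$ and discard both the cross term and the $\sqrt{ab}$ contributions. This fails near $p=1/D$:
\begin{itemize}
\item At $p=1/D$ you have $a=K/D^2$ and $a+b=1/D$, so $a/(a+b)=K/D=Kp$ exactly. The identity-case inequality has zero slack there, so any lower bound on $F_3$ a constant factor below $c_\sigma a$ fails.
\item In this regime $\lambda=\sqrt{a/b}=\sqrt{K/(D-K)}\to0$. For ReLU, $\EE[x_1\sigma(x_1)\sigma'(x_1+x_2)]=\frac a4+\frac{a}{2\pi}\bigl(\arctan\lambda+\frac{\lambda}{1+\lambda^2}\bigr)\approx\frac a4$. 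Your route therefore only certifies $F_3/F_1\ge(\frac12+o(1))Kp$; for Leaky ReLU it certifies $\frac{(1+\kappa)^2}{2(1+\kappa^2)}Kp<Kp$.
\item The dominant part of $F_3$ in this regime is exactly the term $\frac{\sqrt{ab}}{2\pi}$ that you throw away.
\end{itemize}
The regime cannot be avoided: $p(0)=1/D$, and Phase I keeps $p(t)\le(1+D^{-1/4})/D$. The lemma is used there to get $C_1^*(t)\ge1$. Your plan to ``separately verify in the regimes $a\le b$ and $a\ge b$'' does not say how this factor of two is recovered.

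\textbf{The missing ingredient is the uniform bound $F_3(a,b)\ge c_\sigma a$.} Once you have it, $F_3/F_1\ge a/(a+b)\ge Kp$ follows verbatim from your identity-case argument; this is what the paper does via Lemma~\ref{lemma:gaussian_expecation_relu_V}. The bound is easy to prove:
\begin{itemize}
\item Since $y\sigma'(y)=\sigma(y)$, you have $F_3=\EE[\sigma(x_1)\sigma(x_1+x_2)]$.
\item For ReLU, $\sigma(x_1)\ge0$ and $\sigma(y)\ge y$ give $F_3\ge\EE[\sigma(x_1)(x_1+x_2)]=\EE[x_1\sigma(x_1)]=a/2$.
\item For Leaky ReLU, write $\sigma(y)=\kappa y+(1-\kappa)y^+$ and expand, using $\EE[x_1(x_1+x_2)^+]=a/2$. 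This gives $F_3\ge\frac{(1+\kappa^2)a}{2}$.
\item Equivalently, from the closed form, $F_3-c_\sigma a=(1-\kappa)^2a\bigl(\frac{1}{2\pi}(\arctan\lambda+\lambda^{-1})-\frac14\bigr)\ge0$, because $\arctan\lambda+\lambda^{-1}$ decreases to $\pi/2$.
\end{itemize}

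Two smaller points:
\begin{itemize}
\item Your two-sided claim $F_3\le c''_\sigma a$ is false when $a\ll b$. You later replace it with $c_\sigma a+O(\sqrt{ab})$, which is correct.
\item Your pointwise sign argument for nonnegativity of $\EE[x_2\sigma(x_1)\sigma'(x_1+x_2)]$ fails for Leaky ReLU, since $\sigma(x_1)<0$ when $x_1<0$. The term is nonnegative only via the evenness of $\EE[x_2\sigma'(x_1+x_2)\mid x_1]=(1-\kappa)b\,\phi_b(x_1)$, where $\phi_b$ is the $\mathcal N(0,b)$ density, or via the closed form $\frac{(1-\kappa)^2b\sqrt{ab}}{2\pi(a+b)}$.
\end{itemize}
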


\begin{proof}[Proof of Lemma~\ref{lemma:ratio_F_1_F_3}]
By Lemma~\ref{lemma:gaussian_expecation_relu_I} and Lemma~\ref{lemma:gaussian_expecation_relu_V}, we can derive that 
\begin{itemize}[leftmargin=*]
    \item If $\sigma(\cdot)$ is the identity map, then 
    \begin{align*}
      &\frac{F_1^{(t)}}{F_3^{(t)}} = \frac{(D-K)Kp(t)^2}{DKp(t)^2 - 2Kp(t)+1} = Kp(t)\frac{D-K}{DKp(t)+\frac{1}{p(t)} - 2K}\geq Kp(t);\\
      &\frac{F_1^{(t)}}{F_3^{(t)}} = \frac{(D-K)Kp(t)^2}{DKp(t)^2 - 2Kp(t)+1} = Kp(t)\frac{D-K}{DKp(t)+\frac{1}{p(t)} - 2K}\leq \sqrt{DK}p(t).
    \end{align*}
    The last inequality is derived by $2\sqrt{DK}-2K\leq DKp(t)+\frac{1}{p(t)} - 2K\leq D-K$ as $\frac{1}{D}\leq p(t)<\frac{1}{K}$. 
    \item If $\sigma(\cdot)$ is ReLU activation function, it is also straightforward that 
    \begin{align*}
        \frac{F_1^{(t)}}{F_3^{(t)}} \geq \frac{2(D-K)\frac{Kp(t)^2}{2}}{DKp(t)^2 - 2Kp(t)+1} = Kp(t)\frac{D-K}{DKp(t)+\frac{1}{p(t)} - 2K}\geq Kp(t).
    \end{align*}
    On the other hand, by Lemma~\ref{lemma:gaussian_expecation_relu_V}, it can be derived that
    \begin{align*}
        \frac{F_1^{(t)}}{F_3^{(t)}}\leq&\frac{2(D-K)\Big(\frac{Kp(t)^2}{2}+\frac{1}{2\pi}\sqrt{\frac{K}{D-K}}p(t)(1-Kp(t))\Big)}{DKp(t)^2 - 2Kp(t)+1} \\
        =& Kp(t)\bigg(\frac{D-K}{DKp(t)+\frac{1}{p(t)} - 2K} + \sqrt{\frac{D-K}{K}}\frac{1-Kp(t)}{\pi\big(DKp(t)^2 - 2Kp(t)+1\big)}\bigg)\\
        \leq& Kp(t)\bigg(\frac{1}{2}\sqrt{\frac{D}{K}}+\frac{1}{\pi}\sqrt{\frac{D-K}{K}} + \frac{1}{2}\bigg)\leq\sqrt{DK}p(t),
    \end{align*}
    where the penultimate inequality holds since $ DKp(t)+\frac{1}{p(t)} - 2K\geq 2\sqrt{DK}-2K$, and $\frac{1-Kp(t)}{DKp(t)^2 - 2Kp(t)+1}$ is a decreasing function w.r.t. $p(t)$ as the numerator is decreasing w.r.t. $p(t)$ while denominator is increasing w.r.t. $p(t)$. Therefore, it takes the maximum value when $p(t)=\frac{1}{D}$, and consequently $\frac{1-Kp(t)}{DKp(t)^2 - 2Kp(t)+1}\leq \sqrt{\frac{D-K}{K}}$.
    \item If $\sigma(\cdot)$ is Leaky ReLU activation function, by utilizing a similar calculation, it holds that 
    \begin{align*}
        &\frac{F_1^{(t)}}{F_3^{(t)}} \geq \frac{2(D-K)\frac{(1+\kappa)^2Kp(t)^2}{2}}{(1+\kappa)^2\big(DKp(t)^2 - 2Kp(t)+1\big)} = Kp(t)\frac{(D-K)p(t)}{DKp(t)^2 - 2Kp(t)+1}\geq Kp(t);\\
        &\frac{F_1^{(t)}}{F_3^{(t)}}\leq\frac{2(D-K)\Big(\frac{(1+\kappa)^2Kp(t)^2}{2}+\frac{(1-\kappa)^2}{2\pi}\sqrt{\frac{K}{D-K}}p(t)(1-Kp(t))\Big)}{(1+\kappa)^2\big(DKp(t)^2 - 2Kp(t)+1\big)} \leq \sqrt{DK}p(t).
    \end{align*}
\end{itemize}
This completes the proof.
\end{proof}
\begin{lemma}[Restatement of Lemma~\ref{lemma:para_pattern}, the first part]\label{lemma:para_pattern_v}
    Under the same conditions of Theorem~\ref{thm:main_result}, there exist time dependent non-negative scalars $C_1(t)$, such that 
    \begin{align}\label{eq:wvt_formula}
        \wb_{V, m}^{(t)}=C_1(t)\cdot\vb^*_{m},\ \mathrm{for \ all} \ m\in [M],
    \end{align}
    where $C_1(t)$ has the following iterative rule:
    \begin{align}\label{eq:C_1_updating}
        C_1(t+1) &= C_1(t) + D\eta\bigg(\frac{F_3^{(t)}}{Kp(t)} -C_1(t)F_1^{(t)}\bigg)=  C_1(t) + \frac{\eta DF_3^{(t)}}{Kp(t)}\bigg(1-\frac{C_1(t)}{C_1^*(t)}\bigg),
    \end{align}
    where $C_1^*(t) = \frac{F_3^{(t)}}{Kp(t)F_1^{(t)}}$.
\end{lemma}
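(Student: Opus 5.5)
We argue by induction on $t$, with the inductive hypothesis being the full content of Lemma~\ref{lemma:para_pattern} at iteration $t$. In particular, $\wb_{V,m}^{(t)}=C_1(t)\vb_m^*$ with $C_1(t)\ge 0$, and $\Sbb^{(t)}$ has entries $p(t)$ on $G^i$ and $\frac{1-Kp(t)}{D-K}$ off $G^i$. The base case is immediate: $\Wb_V^{(0)}=\mathbf 0$ gives $C_1(0)=0$, and $\Wb_{KQ}^{(0)}=\mathbf 0$ gives $p(0)=1/D$. For the inductive step we substitute the hypothesis into the update \eqref{eq:wvt_update_III} of Lemma~\ref{lemma:gd_rule_details}, using $\Yb_{m,i}=\sigma(\vb_m^{*\top}\Xb\mathbf 1_{G^i}/K)+\cE_{m,i}$. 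The noise term drops out because $\cE$ is zero-mean and independent of $\Xb$. The gradient contribution for row $m$ and column $i$ then becomes
\[
\EE\Big[\big(\sigma(u_i)-\sigma(C_1(t) s_i)\big)\sigma'(C_1(t) s_i)\,\Xb\Sbb^{(t)}_{:,i}\Big],
\]
where $u_i=\frac1K\sum_{i'\in G^i}\la\vb_m^*,\xb_{i'}\ra$ and $s_i=\sum_{i_1}\la\vb_m^*,\xb_{i_1}\ra\Sbb^{(t)}_{i_1,i}$.

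\textbf{Step 1 (direction).} Rotate coordinates with the orthogonal matrix $\bGamma_m$. Each $\xb_{i_1}$ splits into its component along $\vb_m^*$ and an independent orthogonal part. Both $u_i$ and $s_i$ depend only on the components $\la\vb_m^*,\xb_{i_1}\ra$, so the orthogonal parts, being independent and mean zero, contribute nothing. The update of $\wb_{V,m}$ is therefore a scalar multiple of $\vb_m^*$. Since $\|\vb_m^*\|_2=1$ and the column structure of $\Sbb^{(t)}$ is identical for all $m$ and all $i$, this scalar is the same for every $m$. This gives \eqref{eq:wvt_formula} at $t+1$.

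\textbf{Step 2 (the scalar).} Take the inner product of the gradient with $\vb_m^*$. Write $g_{i_1}=\la\vb_m^*,\xb_{i_1}\ra\overset{\mathrm{iid}}{\sim}\cN(0,1)$, so that $s_i=\sum_{i_1}g_{i_1}\Sbb^{(t)}_{i_1,i}$. We need two expectations.

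For the term $\EE[\sigma(C_1 s_i)\sigma'(C_1 s_i)s_i]$: since $\sigma$ is positively homogeneous and $C_1\ge 0$, it equals $C_1\EE[s_i\sigma(s_i)\sigma'(s_i)]$. Here $s_i\sim\cN\big(0,Kp^2+\frac{(1-Kp)^2}{D-K}\big)$, so the term is $C_1F_1^{(t)}$.

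For the cross term $\EE[\sigma(u_i)\sigma'(C_1s_i)s_i]$: write $s_i=x_1+x_2$, where $x_1=p\sum_{G^i}g=Kp\,u_i$ has variance $Kp^2$ and $x_2$ collects the off-$G^i$ terms, with variance $\frac{(1-Kp)^2}{D-K}$. Then $\sigma(u_i)=\sigma(x_1)/(Kp)$ by homogeneity, and $\sigma'(C_1s_i)=\sigma'(s_i)$ (for $C_1=0$ we take the convention $\sigma'(0)$ consistently). This term is therefore $F_3^{(t)}/(Kp)$. Summing over the $D$ columns $i$ gives the factor $D$ and yields \eqref{eq:C_1_updating}. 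The second form of the update is just algebra with $C_1^*(t)=F_3^{(t)}/(Kp(t)F_1^{(t)})$.

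\textbf{Step 3 (non-negativity).} This is the main obstacle. We need $C_1(t+1)\ge 0$, and the natural target is the invariant $0\le C_1(t)\le C_1^*(t)$. Lemma~\ref{lemma:ratio_F_1_F_3} gives $C_1^*(t)\ge 1/(\sqrt{DK}\,p)>0$. If $C_1(t)\le C_1^*(t)$, the increment is nonnegative. If $C_1(t)>C_1^*(t)$, which can happen because $C_1^*$ moves as $p$ changes, we rewrite the update as
\[
C_1(t+1)=C_1(t)\big(1-\eta DF_1^{(t)}\big)+\eta DF_3^{(t)}/(Kp).
\]
The step-size condition $\eta\le\cO(M^{-1}D^{-5/2})$, combined with $F_1^{(t)}=\cO(1)$ (its variance is at most $1/K$), makes $1-\eta DF_1^{(t)}>0$. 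Because $F_3^{(t)}\ge 0$, this gives $C_1(t+1)\ge 0$. The one point needing care is to confirm that $F_3\ge 0$ holds for each activation, using the explicit Gaussian calculations for the identity, ReLU and Leaky ReLU cases.
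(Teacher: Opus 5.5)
Your Steps 1--2 coincide with the paper's proof. You use the same projection onto $\vb_m^*$ via $\bGamma_m$ and the same independence argument to kill the orthogonal components. You also identify the two expectations the same way, as $D F_3^{(t)}/(Kp(t))$ and $D C_1(t) F_1^{(t)}$.

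Where you differ is non-negativity. You write $C_1(t+1) = (1-\eta D F_1^{(t)})C_1(t) + \eta D F_3^{(t)}/(Kp(t))$. You then note that $F_3^{(t)}\ge 0$ and that $\eta D F_1^{(t)}\le \eta D/K<1$, because $F_1(a)\le a$ and $Kp^2+(1-Kp)^2/(D-K)\le 1/K$ on $[1/D,1/K]$. This closes the induction in one line.

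The paper takes a longer route:
\begin{itemize}
\item It uses Lemma~\ref{lemma:ratio_F_1_F_3} to get $1\le C_1^*(t)\le\sqrt{D/K}$.
\item It argues that $C_1$ can overshoot $\sqrt{D/K}$ by at most one step, so $C_1(t)\le\sqrt{(D+1)/K}$.
\item It then splits into the case $C_1(t)\le C_1^*(t)$, where $C_1$ increases monotonically, and the case $C_1(t)>C_1^*(t)$, where it shows $C_1(t+1)\ge 1/2$.
\end{itemize}
Your route is shorter and needs only the sign of $F_3$ and a crude bound on $F_1$. The paper's detour buys the a priori bound $C_1(t)\le\sqrt{(D+1)/K}$. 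That bound is not part of this lemma's statement, but it is used downstream, e.g.\ in Lemma~\ref{lemma:delta_s}. With your proof it would have to be established separately. Incidentally, Lemma~\ref{lemma:ratio_F_1_F_3} gives $1\le C_1^*(t)\le\sqrt{D/K}$, not the bound you quote, but you never use it.

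One caveat applies to both your proof and the paper's. At $t=0$ you have $C_1=0$. For ReLU and Leaky ReLU, $\sigma'(0\cdot s_i)$ is then a constant rather than $\sigma'(s_i)$. So no fixed convention for $\sigma'(0)$ makes the cross term equal $F_3^{(t)}/(Kp(t))$; in fact the population loss is not differentiable at $\Wb_V=\mathbf{0}$ in that case. The paper silently uses $\sigma'(ax)=\sigma'(x)$ for all $a\ge 0$, which amounts to taking the one-sided derivative along $\vb_m^*$ at the first step. Your remark about a convention for $\sigma'(0)$ is therefore no weaker than the paper, but it does not actually resolve the point.
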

\begin{proof}[Proof of Lemma~\ref{lemma:para_pattern_v}]
First at the initialization $t=0$, we have $\Wb_{V}^{(0)} =\mathbf{0}_{M\times d}$, satisfying~\eqref{eq:wvt_formula}. Next, we assume that at $t$-th iteration, the conclusion of~\eqref{eq:wvt_formula} still holds, and we will prove that it continues to hold at the $t+1$-th iteration. Actually, it suffices to show that 
\begin{align}\label{eq:wvt_formula_II}
    \nabla_{\wb_{V, m}}\cL(\Wb_{V}^{(t)};\Wb_{KQ}^{(t)}) = c_1(t)\cdot\vb^*_m, \ \mathrm{for \ all} \ m\in [M],
\end{align}
where $c_1(t)$ is a time-dependent scalar. By Lemma~\ref{lemma:gd_rule_details}, we have
\begin{align}\label{eq:wvt_formula_III}
    \nabla_{\wb_{V, m}}\cL(\Wb_{V}^{(t)};\Wb_{KQ}^{(t)}) &= -\sum_{i=1}^D\sum_{i_1=1}^D\EE\Bigg[ \bigg[ \Yb_{m, i} - \sigma\bigg(\sum_{i_1=1}^D\la\wb_{V, m}^{(t)}, \xb_{i_1}\ra\Sbb^{(t)}_{i_1, i}\bigg)\bigg]\sigma'\bigg(\sum_{i_1=1}^D\la\wb_{V, m}^{(t)}, \xb_{i_1}\ra\Sbb^{(t)}_{i_1, i}\bigg)\xb_{i_1}\Sbb^{(t)}_{i_1, i}\Bigg]\notag\\
    &= -\underbrace{\sum_{i=1}^D\sum_{i_1=1}^D\EE\Bigg[ \Yb_{m, i}\sigma'\bigg(\sum_{i_1=1}^D\la\wb_{V, m}^{(t)}, \xb_{i_1}\ra\Sbb^{(t)}_{i_1, i}\bigg)\xb_{i_1}\Sbb^{(t)}_{i_1, i}\Bigg]}_{I_1} \notag\\
    &\quad + \underbrace{\sum_{i=1}^D\sum_{i_1=1}^D\EE\Bigg[ \sigma\bigg(\sum_{i_1=1}^D\la\wb_{V, m}^{(t)}, \xb_{i_1}\ra\Sbb^{(t)}_{i_1, i}\bigg)\sigma'\bigg(\sum_{i_1=1}^D\la\wb_{V, m}^{(t)}, \xb_{i_1}\ra\Sbb^{(t)}_{i_1, i}\bigg)\xb_{i_1}\Sbb^{(t)}_{i_1, i}\Bigg]}_{I_2}
\end{align}
For $I_1$, we have 
\begin{align*}
    I_1 &= \sum_{i=1}^D\sum_{i_1=1}^D\EE\Bigg[ \Yb_{m, i}\sigma'\bigg(\sum_{i_1=1}^D\la\wb_{V, m}^{(t)}, \xb_{i_1}\ra\Sbb^{(t)}_{i_1, i}\bigg)\bGamma_{m}\bGamma_{m}^\top\xb_{i_1}\Sbb^{(t)}_{i_1, i}\Bigg] \\
    &= \sum_{i=1}^D\sum_{i_1=1}^D\EE\Bigg[\big[f^*(\Xb)\big]_{m, i}\sigma'\bigg(\sum_{i_1=1}^D\la\wb_{V, m}^{(t)}, \xb_{i_1}\ra\Sbb^{(t)}_{i_1, i}\bigg)\la\vb^*_{m}, \xb_{i_1}\ra\Sbb^{(t)}_{i_1, i}\Bigg]\cdot\vb^*_m \\
    &\quad + \sum_{i=1}^D\sum_{i_1=1}^D\sum_{k=2}^d\EE\Bigg[\big[f^*(\Xb)\big]_{m, i}\sigma'\bigg(\sum_{i_1=1}^D\la\wb_{V, m}^{(t)}, \xb_{i_1}\ra\Sbb^{(t)}_{i_1, i}\bigg)\la\bxi_{m, k}, \xb_{i_1}\ra\Sbb^{(t)}_{i_1, i}\Bigg]\cdot\bxi_{m, k}\\
    &=\sum_{i=1}^D\sum_{i_1=1}^D\EE\Bigg[\big[f^*(\Xb)\big]_{m, i}\sigma'\bigg(\sum_{i_1=1}^D\la\wb_{V, m}^{(t)}, \xb_{i_1}\ra\Sbb^{(t)}_{i_1, i}\bigg)\la\vb^*_{m}, \xb_{i_1}\ra\Sbb^{(t)}_{i_1, i}\Bigg]\cdot\vb^*_m.
\end{align*}
The first quality holds as $\cE$ is mean-zero and independent with $\Xb$, and the last equality holds as the orthogonality between $\vb^*_m$ and $\bxi_{m, k}$ implies that $\la\vb^*_m, \xb_{i_2}\ra$ is independent with $\la\bxi_{m, k}, \xb_{i_1}\ra$ for all $i_1, i_2 \in [D]$. Notice that $\big[f^*(\Xb)\big]_{m, i} = \frac{1}{K}\sigma\big(\sum_{i'\in G^i}\la\vb^*_{m}, \xb_{i'}\ra\big)$ and $\sigma'\big(\sum_{i_1=1}^D\la\wb_{V, m}^{(t)}, \xb_{i_1}\ra\Sbb^{(t)}_{i_1, i}\big) = \sigma'\big(C_1(t)\sum_{i_1=1}^D\la\vb^*_{m}, \xb_{i_1}\ra\Sbb^{(t)}_{i_1, i}\big) = \sigma'\big(\sum_{i_1=1}^D\la\vb^*_{m}, \xb_{i_1}\ra\Sbb^{(t)}_{i_1, i}\big)$. Consequently, $\la\bxi_{m, k}, \xb_{i_1}\ra$ is a mean-zero Gaussian random variable, and independent with both $\big[f^*(\Xb)\big]_{m, i}$ and $\sigma'\big(\sum_{i_1=1}^D\la\wb_{V, m}^{(t)}, \xb_{i_1}\ra\Sbb^{(t)}_{i_1, i}\big)$ simultaneously, implying that 
\begin{align*}
    &\EE\Bigg[\big[f^*(\Xb)\big]_{m, i}\sigma'\bigg(\sum_{i_1=1}^D\la\wb_{V, m}^{(t)}, \xb_{i_1}\ra\Sbb^{(t)}_{i_1, i}\bigg)\la\bxi_{m, k}, \xb_{i_1}\ra\Sbb^{(t)}_{i_1, i}\Bigg]\\
    =&\EE\Bigg[\big[f^*(\Xb)\big]_{m, i}\sigma'\bigg(\sum_{i_1=1}^D\la\wb_{V, m}^{(t)}, \xb_{i_1}\ra\Sbb^{(t)}_{i_1, i}\bigg)\Sbb^{(t)}_{i_1, i}\Bigg]\EE[\la\bxi_{m, k}, \xb_{i_1}\ra] = 0.
\end{align*}
Based on previous results, by plugging $\big[f^*(\Xb)\big]_{m, i} = \frac{1}{K}\sigma(\sum_{i_1\in G^i}\la\vb^*_{m}, \xb_{i_1}\ra)$ and utilizing the definition of $F_3(a,b)$ in~\eqref{eq:F_3(a,b)}, we can further derive that 
\begin{align*}
    I_1 &= \frac{1}{K}\sum_{i=1}^D\EE\Bigg[\sigma\bigg(\sum_{i_1\in G^i}\la\vb^*_{m}, \xb_{i_1}\ra\bigg)\sigma'\bigg(\sum_{i_1=1}^D\la\wb_{V, m}^{(t)}, \xb_{i_1}\ra\Sbb^{(t)}_{i_1, i}\bigg)\bigg(\sum_{i_1=1}^D\la\vb^*_{m}, \xb_{i_1}\ra\Sbb^{(t)}_{i_1, i}\bigg)\Bigg]\cdot\vb^*_m\\
    &= \frac{1}{p(t)K}\sum_{i=1}^D\EE\Bigg[\sigma\bigg(\sum_{i_1\in G^i}\la\vb^*_{m}, \xb_{i_1}\ra p(t)\bigg)\sigma'\bigg(\sum_{i_1\in G^i}\la\vb^*_{m}, \xb_{i_1}\ra p(t) + \sum_{i_1\notin G^i}\la\vb^*_{m}, \xb_{i_1}\ra\frac{1-Kp(t)}{D-K}\bigg)
    \\&\qquad\cdot\bigg(\sum_{i_1\in G^i}\la\vb^*_{m}, \xb_{i_1}\ra p(t) + \sum_{i_1\notin G^i}\la\vb^*_{m}, \xb_{i_1}\ra\frac{1-Kp(t)}{D-K}\bigg)\Bigg]\cdot\vb^*_m\\
    &= \frac{D}{Kp(t)}F_3\bigg(Kp(t)^2, \frac{\big(1-Kp(t)\big)^2}{D-K}\bigg)\cdot\vb^*_m = \frac{DF_3^{(t)}}{Kp(t)}\vb^*_m.
\end{align*}
The second equality is derived by fact that $\sigma(ax) = a\sigma(x)$ and $\sigma'(ax) = \sigma'(x)$ if $a\geq 0$, and the definition of $p(t)$. The penultimate equality holds as $\sum_{i_1\in G^i}\la\vb^*_{m}, \xb_{i_1}\ra p(t)\sim\cN(0, Kp(t)^2)$, $\sum_{i_1\notin G^i}\la\vb^*_{m}, \xb_{i_1}\ra\frac{1-Kp(t)}{D-K}\sim \cN\big(0, \frac{\big(1-Kp(t)\big)^2}{D-K}\big)$, and they are independent. Then we can conclude the final result by the definition of $F_3(a, b)$ in~\eqref{eq:F_3(a,b)}. Similar to the process of handling $I_1$, we have the following for $I_2$:
\begin{align*}
    I_2 &= \sum_{i=1}^D\sum_{i_1=1}^D\EE\Bigg[ \sigma\bigg(\sum_{i_1=1}^D\la\wb_{V, m}^{(t)}, \xb_{i_1}\ra\Sbb^{(t)}_{i_1, i}\bigg)\sigma'\bigg(\sum_{i_1=1}^D\la\wb_{V, m}^{(t)}, \xb_{i_1}\ra\Sbb^{(t)}_{i_1, i}\bigg)\bGamma_{m}\bGamma_{m}^\top\xb_{i_1}\Sbb^{(t)}_{i_1, i}\Bigg]\\
    &=C_1(t)\sum_{i=1}^D\EE\Bigg[ \sigma\bigg(\sum_{i_1\in G^i}\la\vb^*_{m}, \xb_{i_1}\ra p(t) + \sum_{i_1\notin G^i}\la\vb^*_{m}, \xb_{i_1}\ra\frac{1-Kp(t)}{D-K}\bigg)\\
    &\quad\cdot\sigma'\bigg(\sum_{i_1\in G^i}\la\vb^*_{m}, \xb_{i_1}\ra p(t) + \sum_{i_1\notin G^i}\la\vb^*_{m}, \xb_{i_1}\ra\frac{1-Kp(t)}{D-K}\bigg)\\
    &\quad\cdot\bigg(\sum_{i_1\in G^i}\la\vb^*_{m}, \xb_{i_1}\ra p(t) + \sum_{i_1\notin G^i}\la\vb^*_{m}, \xb_{i_1}\ra\frac{1-Kp(t)}{D-K}\bigg)\Bigg]\cdot\vb^*_m\\
    &=DC_1(t)F_1\bigg(Kp(t)^2+ \frac{\big(1-Kp(t)\big)^2}{D-K}\bigg)\cdot\vb^*_m = DC_1(t)F_1^{(t)}\cdot\vb^*_m.
\end{align*}
where the last equality holds by Lemma~\ref{lemma:gaussian_expecation_relu_I}.
Plugging the calculation results for $I_1$ and $I_2$ into~\eqref{eq:wvt_formula_III}, we can immediately derive~\eqref{eq:wvt_formula_II}, which, as we stated previously, directly conclude~\eqref{eq:wvt_formula}. In addition, we can further calculate that 
\begin{align*}
    \wb_{V, m}^{(t+1)} &= C_1(t+1) \cdot\vb^*_{m}=\Bigg(C_1(t) + D\eta\bigg(\frac{F_3^{(t)}}{Kp(t)} -C_1(t)F_1^{(t)}\bigg) \Bigg)\cdot\vb^*_m,
\end{align*}
which finishes the proof of~\eqref{eq:C_1_updating}. Next, we prove that $C_1(t)$ is always non-negative by induction. Obviously $C_1(t)\geq 0$, and we prove that $C_1(t+1)\geq 0$ by assuming that $C_1(t)\geq 0$. Firstly, we define that
\begin{align*}
    C_1^*(t)=\frac{F_3^{(t)}}{Kp(t)F_1^{(t)}}.
\end{align*}
Then based on the definition of $ C_1^*(t)$, the iterative rule for $C_1(t)$ can be re-written as
\begin{align*}
    C_1(t+1) = C_1(t) + \frac{\eta DF_3^{(t)}}{Kp(t)}\bigg(1-\frac{C_1(t)}{C_1^*(t)}\bigg).
\end{align*}
From the iterative rule above, it is clear that if $C_1(t)\leq C_1^*(t)$, then $C_1(t+1)\geq C_1(t)$, and $C_1(t+1)< C_1(t)$ if $C_1(t)> C_1^*(t)$. Notice that Lemma~\ref{lemma:ratio_F_1_F_3} immediately implies that $1\leq C_1^*(t)\leq\sqrt{\frac{D}{K}}$. We can conclude that once $C_1(t)$ surpasses $\sqrt{\frac{D}{K}}$, then it starts to decrease until it becomes lower than $\sqrt{\frac{D}{K}}$. Therefore, we have 
\begin{align*}
    C_1(t)&\leq \sqrt{\frac{D}{K}} + D\eta\frac{F_3^{(t)}}{Kp(t)}\leq \sqrt{\frac{D}{K}} + D\eta\frac{Kp(t)^2+ \sqrt{\frac{K}{D-K}}p(t)\big(1-Kp(t)\big)}{Kp(t)}\\
    &=\sqrt{\frac{D}{K}} + \eta D\bigg(p(t)+\sqrt{\frac{1}{K(D-K)}}\big(1-Kp(t)\big)\bigg)\leq \sqrt{\frac{D}{K}} + \frac{2\eta D}{K}\leq \sqrt{\frac{D+1}{K}},
\end{align*}
where the second inequality holds as $F_3^{(t)}\leq Kp(t)^2+ \sqrt{\frac{K}{D-K}}p(t)\big(1-Kp(t)\big)$ demonstrated in Lemma~\ref{lemma:gaussian_expecation_relu_V}, and the last inequality holds by the condition of $\eta$ that $\eta\leq \cO(D^{-5/2})$ in Theorem~\ref{thm:main_result}. Now we prove that $C_1(t+1)\geq 0$ holds for both cases: $C_1(t)\leq C_1^*(t)$ and $C_1(t)> C_1^*(t)$. If $C_1(t)\leq C_1^*(t)$, then it is straightforward that $C_1(t+1)\geq C_1(t)\geq 0$. If $C_1(t)> C_1^*(t)$, then we have 
\begin{align*}
    C_1(t+1)&\geq C_1(t)-\eta DC_1(t)F_1(t)\\
    &\geq C_1(t)-\frac{D\eta C_1(t)\big(DKp(t)^2 - 2Kp(t)+1\big)}{D-K}\\
    &\geq 1-D\eta \sqrt{\frac{D+1}{K}}\frac{\big(DKp(t)^2 - 2Kp(t)+1\big)}{D-K}\\
    &\geq 1-\eta\sqrt{\frac{D+1}{K}}\frac{D}{K} \geq \frac{1}{2}.
\end{align*}
Here, the second inequality holds as $F_1^{(t)}\leq \frac{DKp(t)^2 - 2Kp(t)+1}{D-K}$ implied by Lemma~\ref{lemma:gaussian_expecation_relu_I}.
The second inequality holds by $C_1(t)\leq \sqrt{\frac{D+1}{K}}$, and $C_1(t)\geq C_1^*(t)\geq 1$. The third inequality holds as $DKp(t)^2 - 2Kp(t)+1\leq \frac{D-K}{K}$ when $\frac{1}{D}\leq p(t)\leq \frac{1}{K}$.
The last inequality holds by the condition of $\eta$ that $\eta\leq \cO(D^{-5/2})$ in Theorem~\ref{thm:main_result}. This finishes the proof that $C_1(t)$ is always non-negative. 
\end{proof}

In the proof above, we introduce the definition of a proxy $C_1^*(t) = \frac{F_3^{(t)}}{Kp(t)F_1^{(t)}}$, and utilize this proxy to provide an upper bound for $C_1(t)$. In fact, $C_1^*(t)$ can be regarded as a ``stationary point'' of the iterative rule for $C_1(t)$ in~\eqref{eq:C_1_updating}. Inspired by the proof techniques proposed in~\citet{wangtransformers24}, we introduce the following lemma, which offers a more refined upper bound for $C_1(t)$. We demonstrate this lemma prior to Lemma~\ref{lemma:para_pattern_KQ}, as its conclusion will be utilized in the proof of Lemma~\ref{lemma:para_pattern_KQ}.

\begin{lemma}\label{lemma:upperbound_C_1}
Suppose all conditions of Theorem~\ref{thm:main_result} hold, and $C_1(t)$, $C_1^*(t)$ are as defined in Lemma~\ref{lemma:para_pattern_v}. In addition, define that
\begin{align}\label{eq:At_def}
    A(t) = \begin{cases}
        Kp(t)^2 & \text{if } \sigma(\cdot) \text{ is identity map}; \\
    \frac{Kp(t)^2}{4} + \frac{Kp(t)^2}{2\pi}\arctan\Big(\frac{\sqrt{K(D-K)}p(t)}{1-Kp(t)}\Big) & \text{if } \sigma(\cdot) \text{ is ReLU activation function};\\
    \frac{(1+\kappa)^2Kp(t)^2}{4} + \frac{(1-\kappa)^2Kp(t)^2}{2\pi}\arctan\Big(\frac{\sqrt{K(D-K)}p(t)}{1-Kp(t)}\Big) & \text{if } \sigma(\cdot) \text{ is Leaky ReLU activation function},
    \end{cases}
\end{align}
and 
\begin{align}\label{eq:Bt_def}
    B(t) = \begin{cases}
        0 & \text{if } \sigma(\cdot) \text{ is identity map}; \\
    \frac{1}{2\pi}\sqrt{\frac{K}{D-K}}p(t)\big(1-Kp(t)\big) & \text{if } \sigma(\cdot) \text{ is ReLU activation function};\\
    \frac{(1-\kappa)^2}{2\pi}\sqrt{\frac{K}{D-K}}p(t)\big(1-Kp(t)\big)& \text{if } \sigma(\cdot) \text{ is Leaky ReLU activation function}.
    \end{cases}
\end{align}
Then it always holds that
\begin{align}\label{eq:upperbound_C_1}
    C_1(t)\leq \bigg(1+ \frac{4A(t)}{5\big(A(t) + B(t)\big)}\frac{1-Kp(t)}{Kp(t)\big(Dp(t)-1\big)}\bigg)C_1^*(t),
\end{align}
Specifically, when $p(t)\leq \frac{1}{2\sqrt{\pi DK}}$, this upper bound can be tighter as $C_1(t)\leq C_1^*(t)$.
\end{lemma}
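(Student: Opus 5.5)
We argue by induction on $t$, in the same style as Lemma~\ref{lemma:para_pattern_v}, treating $C_1^*(t)$ as a moving target of the damped recursion $C_1(t+1)=C_1(t)+\frac{\eta DF_3^{(t)}}{Kp(t)}\big(1-\frac{C_1(t)}{C_1^*(t)}\big)$. At $t=0$ we have $C_1(0)=0$, so the bound is trivial. For the inductive step there are two cases.

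\emph{Case 1: $C_1(t)\le C_1^*(t)$.} The recursion is a convex combination pushing $C_1$ toward $C_1^*(t)$, with step factor $\eta DF_1^{(t)}\le 1$ because $\eta\le\cO(D^{-5/2})$. Hence $C_1(t+1)\le C_1^*(t)$. It then suffices to compare $C_1^*(t)$ with $C_1^*(t+1)$.

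\emph{Case 2: $C_1(t)>C_1^*(t)$.} Then $C_1(t+1)<C_1(t)$, and the bound transfers from $t$ to $t+1$ provided the right-hand side of \eqref{eq:upperbound_C_1} is non-increasing in $t$, or decreases by less than the contraction of $C_1$.

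So the key task is to understand how $C_1^*$ and the multiplier depend on $p$, using that $p(t)$ is monotonically increasing (Lemma~\ref{lemma:para_pattern}). From the explicit formulas in Lemmas~\ref{lemma:gaussian_expecation_relu_I} and~\ref{lemma:gaussian_expecation_relu_V}, I would write
\[
F_3^{(t)}=A(t)+B(t)\quad\text{(up to the normalization)},\qquad F_1^{(t)}=c_\sigma\Big(Kp^2+\tfrac{(1-Kp)^2}{D-K}\Big).
\]
This gives $C_1^*$ as an explicit function $h(p)$. In the identity case, $h(p)=\frac{(D-K)p}{DKp^2-2Kp+1}$. Differentiating shows $h$ increases for $p\le 1/\sqrt{DK}$ and decreases afterward. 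This explains the two regimes in the lemma: on the increasing side (after checking the ReLU/Leaky ReLU analogue up to $p\le \frac{1}{2\sqrt{\pi DK}}$), $C_1^*(t)\le C_1^*(t+1)$. Case 1 then yields $C_1(t+1)\le C_1^*(t+1)$, and Case 2 cannot arise, giving the sharper claim $C_1\le C_1^*$.

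On the decreasing side, $C_1^*(t+1)<C_1^*(t)$, so $C_1$ may lag above $C_1^*$. I would bound the lag by one step of drift of the target,
\[
C_1^*(t)-C_1^*(t+1)\le |h'(p)|\,\big(p(t+1)-p(t)\big).
\]
Here $p(t+1)-p(t)$ is controlled through the $C_2+C_3$ updates and the smallness of $\eta$. I would compare this with the restoring rate $\eta DF_1^{(t)}\big(C_1(t)-C_1^*(t)\big)$. Equilibrating these two quantities should yield a relative overshoot of order $\frac{1-Kp}{Kp(Dp-1)}$, which arises as $-h'(p)/h(p)$ times a factor. The factor $\frac{4A}{5(A+B)}$ comes from the share of $F_3$ that depends on $p$ through $Kp^2$.

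The main obstacle is this last comparison. It requires an upper bound on the increment $p(t+1)-p(t)$ in terms of $C_1(t)-C_1^*(t)$ and $\eta$, which is only available through the $C_2,C_3$ update rules. Those rules themselves use the bound on $C_1$, so the inductions must be interleaved carefully. I would first try to show directly that the quantity
\[
R(t)=\frac{C_1(t)}{C_1^*(t)}-1
\]
satisfies $R(t+1)\le \max\{R(t)(1-\eta DF_1^{(t)}/2),\,\text{target bound at } t+1\}$, and that the target bound is monotone in $p$. The constant $4/5$ would then absorb the $\cO(\eta)$ slack.
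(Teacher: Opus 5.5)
Your treatment of the regime $p(t)\le\frac{1}{2\sqrt{\pi DK}}$ matches the paper's. There the update is a convex combination toward $C_1^*(t)$ and $C_1^*$ is increasing in $p$, so $C_1\le C_1^*$ propagates. For $p(t)\ge\frac{1}{2\sqrt{\pi DK}}$, however, the step you flag as the main obstacle is left open, and the mechanism you sketch would not close it.

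In your Case~2 the decrease of $C_1$ is $\eta DF_1^{(t)}\big(C_1(t)-C_1^*(t)\big)$. This tends to $0$ as $C_1(t)\downarrow C_1^*(t)$. Meanwhile the right-hand side of \eqref{eq:upperbound_C_1} drops by an amount of order $\Delta p(t)$ that is unrelated to $C_1(t)-C_1^*(t)$. Nor is there monotonicity to lean on. The multiplier $\frac{A}{A+B}\cdot\frac{1-Kp}{Kp(Dp-1)}$ decreases in $p$, and $C_1^*$ decreases for $p\ge\frac{2}{\sqrt{DK}}$, so the bound moves down along the trajectory. Hence the ``target monotone in $p$'' step of your $\max$-recursion fails.

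The paper closes this gap by splitting at an intermediate curve with factor $\frac25$.
\begin{itemize}
\item Below that curve, the slack $\frac{2A}{5(A+B)}\frac{1-Kp}{Kp(Dp-1)}C_1^*(t)$ is not proportional to $\eta$, so it absorbs the one-step drop of the bound. That drop is controlled by Lemmas~\ref{lemma:C_1_star_t+1_lower} and~\ref{lemma:A/(A+B)_t+1_lower}, by $\Delta p\le\frac{D^2p(1-Kp)}{\sqrt D(D^2-1)}(\Delta C_2+\Delta C_3)$, and by Lemma~\ref{lemma:delta_s}.
\item Above it, $F_3^{(t)}=A(t)+B(t)$ and $A(t)\ge\frac{Kp(t)^2}{4}$ force $C_1$ to fall by at least $\frac{\eta D(1-Kp)}{10K(Dp-1)}$. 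The drop of the bound is at most $\frac{\eta D(1-Kp)}{K(Dp-1)}\cdot\frac{60\pi M}{DK^2}$; here the paper bounds $\Delta C_2,\Delta C_3$ through Lemma~\ref{lemma:delta_c_2c_3} with $1-\alpha\le\frac35$. This comparison is where $D\ge\Omega(M)$ enters.
\end{itemize}

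Your explanation of where the curve comes from is also incorrect, and the correct explanation is exactly the interleaving you allude to. In the identity case $-h'(p)/h(p)=\frac{DKp^2-1}{p(DKp^2-2Kp+1)}$, which tends to $K$ as $p\to 1/K$, whereas $\frac{1-Kp}{Kp(Dp-1)}\to 0$. Moreover, balancing target drift against the restoring force gives an overshoot far below the stated curve.

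The curve is instead the zero set of the attention gradients. By Lemma~\ref{lemma:delta_c_2c_3}, if $C_1=\big(1+\alpha\frac{A}{A+B}\frac{1-Kp}{Kp(Dp-1)}\big)C_1^*$, then the increments of $C_2$ and $C_3$ are both proportional to $(1-\alpha)A(t)$. So the factor $\frac45<1$ is what guarantees $\Delta C_2(t),\Delta C_3(t)\ge 0$, i.e.\ $p(t+1)\ge p(t)$. Your first-regime argument uses this monotonicity, so at each step it must be derived from the inductive hypothesis at time $t$ via Lemma~\ref{lemma:delta_c_2c_3}. It should not be treated as an external input from Lemma~\ref{lemma:para_pattern}, whose monotonicity claim itself rests on the present lemma.
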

\begin{remark}
    In fact, by checking the definition of $F_3^{(t)}$ in Lemma~\ref{lemma:para_pattern} and its calculated value in Lemma~\ref{lemma:gaussian_expecation_relu_V}, we can conclude that $F_3^{(t)} = A(t) + B(t)$.
\end{remark}

In addition, we also have the following lemma, which provides further calculation results when the conclusion of Lemma~\ref{lemma:upperbound_C_1} holds. This result will be utilized in the proof of Lemma~\ref{lemma:para_pattern_KQ}.

\begin{lemma}\label{lemma:delta_c_2c_3}
Suppose $C_1(t)$, $C_1^*(t)$ as defined in Lemma~\ref{lemma:para_pattern_v}, and satisfying that
\begin{align*}
     C_1(t)= \bigg(1+\alpha\frac{A(t)}{A(t) + B(t)}\frac{1-Kp(t)}{Kp(t)\big(Dp(t)-1\big)}\bigg)C_1^*(t)
\end{align*}
for some scalar $\alpha< 1$, then it holds that 
\begin{align*}
&\frac{F_4^{(t)}}{p(t)}-F_3^{(t)}-KC_1(t)\Big(F_{2, 1}^{(t)} + p(t) F_1^{(t)}\Big)=\frac{\big(1-Kp(t)\big)^2}{Kp(t)\big(DKp(t)^2\!-\!2Kp(t)\!+\!1\big)}(1-\alpha)A(t);\\
&\frac{F_3^{(t)}}{Kp(t)} \!- \!\frac{(D-K)F_5^{(t)}}{Kp(t)\big(1-Kp(t)\big)}-C_1(t)\bigg(F_1^{(t)} - \frac{(D\!-\!K)F_{2, 2}^{(t)}}{1\!-\!Kp(t)}\bigg) = \frac{1-Kp(t)}{Kp(t)\big(DKp(t)^2\!-\!2Kp(t)\!+\!1\big)}(1-\alpha)A(t).
\end{align*}
\end{lemma}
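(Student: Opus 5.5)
We will prove Lemma~\ref{lemma:delta_c_2c_3} by direct computation, writing every Gaussian expectation in closed form and substituting the assumed form of $C_1(t)$. Throughout, fix $t$ and write $p=p(t)$, $A=A(t)$, $B=B(t)$, and $Q=DKp^2-2Kp+1$.

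The first step is to express all the quantities in terms of $A$, $B$ and $Q$, using the explicit evaluations of $F_1,\dots,F_5$ from Section~\ref{section:gaussian_expectations} (Lemmas~\ref{lemma:gaussian_expecation_relu_I}--\ref{lemma:gaussian_expecation_relu_V} and their companions for $F_2,F_4,F_5$). We already know $F_3^{(t)}=A+B$ (see the Remark after Lemma~\ref{lemma:upperbound_C_1}). For $F_1^{(t)}$, the computation behind Lemma~\ref{lemma:ratio_F_1_F_3} gives $F_1^{(t)}=c_\sigma Q/(D-K)$ with the constant $c_\sigma$ of Lemma~\ref{lemma:para_pattern}. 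Consequently
\begin{align*}
C_1^*(t)=\frac{(A+B)(D-K)}{c_\sigma KpQ}.
\end{align*}

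The key structural fact to establish is a Stein-type identity. Take $x_1,x_2,x_3$ independent centered Gaussians with $s=x_1+x_2+x_3$, as in the definitions~\eqref{eq:F_1(a)}--\eqref{eq:F_5(a,b,c)}. Since $\sigma(x)\sigma'(x)$ is positively homogeneous, one has $\EE[x_j\,g(s)]=\frac{\mathrm{Var}(x_j)}{\mathrm{Var}(s)}\EE[s\,g(s)]$. This yields
\begin{align*}
F_2(a,b)=\frac{a}{a+b}F_1(a+b).
\end{align*}
Hence $F_{2,1}^{(t)}+pF_1^{(t)}$ and $F_1^{(t)}-\frac{(D-K)F_{2,2}^{(t)}}{1-Kp}$ become explicit rational multiples of $F_1^{(t)}$. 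Concretely, the variance ratio for $F_{2,1}^{(t)}$ is $p^2(D-K)/Q$, and the one for $F_{2,2}^{(t)}$ is $(1-Kp)^2/\big((D-K)Q\big)$.

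For $F_4$ and $F_5$, both $\sigma$ and $\sigma'$ appear with different arguments, so I will decompose $x_1$, respectively $x_2$, into its projection on the pair of arguments and an independent residual. This expresses $F_4^{(t)}$ and $F_5^{(t)}$ as linear combinations of $F_3$-type quantities with coefficients given by variance ratios. The expected outcome is that the ``$A$-part'' and ``$B$-part'' separate. Specifically, I expect $F_4^{(t)}/p - F_3^{(t)}$ and $\frac{F_3^{(t)}}{Kp}-\frac{(D-K)F_5^{(t)}}{Kp(1-Kp)}$ to equal explicit multiples of $A$ plus multiples of $(A+B)$. The multiples of $(A+B)$ should be exactly what is cancelled by $C_1^*(t)$ times the $F_1$-combinations from the previous step.

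Next, substitute $C_1=C_1^*+\alpha\frac{A}{A+B}\frac{1-Kp}{Kp(Dp-1)}C_1^*$. The $C_1^*$ term should cancel the $(A+B)$ contributions, leaving $\frac{(1-Kp)^2}{KpQ}A$ in the first identity and $\frac{1-Kp}{KpQ}A$ in the second; this is the $\alpha=0$ case. The $\alpha$ term produces
\begin{align*}
\alpha\,\frac{A(1-Kp)}{Kp(Dp-1)}\cdot\frac{D-K}{c_\sigma KpQ}\cdot(\text{$F_1$-combination}).
\end{align*}
Here the $F_1$-combination equals $c_\sigma Q/(D-K)$ times a simple factor: $K\big(p^2(D-K)/Q+p\big)$ for the first identity and $1-(1-Kp)^2/Q$ for the second. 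Using $Q-(1-Kp)^2=Kp(Dp-1)(1-Kp)\cdot\frac{1}{1-Kp}\cdot\ldots$, I will check that the factor $(Dp-1)$ cancels. That identity is exactly $Q-(1-Kp)^2=Kp\big((D-K)p\big)$, and $Dp-1$ should arise in the same way from the first combination. The result should be precisely $-\alpha$ times the $\alpha=0$ expression.

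The main obstacle is the $F_4/F_5$ reduction, particularly for ReLU and Leaky ReLU. There the $\arctan$ term of $A$ and the $B$ term arise from correlated-Gaussian orthant integrals, and their coefficients must line up exactly. I will first verify the identity-map case, where $B=0$ and everything is polynomial in $p$. Then, for ReLU, I will use the closed forms from Section~\ref{section:gaussian_expectations} directly, and track the $\arctan$ and square-root pieces separately to confirm the cancellation.
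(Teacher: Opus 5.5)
Your route is the paper's own. You plug in the closed forms of $F_1,\dots,F_5$, rewrite everything in terms of $A$, $B$ and $Q=DKp^2-2Kp+1$, and substitute $C_1=(1+\alpha r)C_1^*$ with $r=\frac{A}{A+B}\frac{1-Kp}{Kp(Dp-1)}$. Your conditional-expectation reductions ($F_2(a,b)=\frac{a}{a+b}F_1(a+b)$, and the projections for $F_4,F_5$) are a fine way to obtain those closed forms.

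The gap is in the only step where something has to cancel: the two $F_1$-combinations that must produce the factor $Dp-1$. Both of yours are wrong.

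\textbf{Second identity.} You lost the division by $1-Kp$. Since $F_{2,2}^{(t)}=\frac{(1-Kp)^2}{(D-K)Q}F_1^{(t)}$, one has $\frac{(D-K)F_{2,2}^{(t)}}{1-Kp}=\frac{1-Kp}{Q}F_1^{(t)}$. The factor is therefore $1-\frac{1-Kp}{Q}=\frac{Kp(Dp-1)}{Q}$, not $1-\frac{(1-Kp)^2}{Q}=\frac{Kp^2(D-K)}{Q}$. With your factor no $Dp-1$ appears, and both the $\alpha=0$ term and the $\alpha$-term come out wrong.

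\textbf{First identity.} Here the problem is the statement itself: with the bracket literally $F_{2,1}^{(t)}+pF_1^{(t)}$, the identity is false. Take the identity map with $K=1$, $D=10$, $p=\frac12$, $\alpha=0$. Then $C_1^*=\frac95$, the left side is $\frac14-\frac95\big(\frac14+\frac{5}{36}\big)=-\frac{9}{20}$, and the right side is $\frac{1}{20}$. The bracket the $\Wb_{KQ}$-gradient actually produces is $I_5-I_6$, i.e.\ $F_{2,1}^{(t)}-pF_1^{(t)}$. The ``$+$'' (also in the $C_2$ update) is a sign typo, and the paper's proof silently uses ``$-$''. Its value $\frac{(1-Kp)(Dp-1)}{Q}A$ for the identity map is correct only with ``$-$'', which is also the sign for which the bracket vanishes at $C_1=1$, $p\to\frac1K$. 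So your remark that $Dp-1$ ``should arise in the same way'' from $K\big(\frac{p^2(D-K)}{Q}+p\big)$ cannot be made to work.

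\textbf{The corrected computation.} With the sign fixed, everything closes. Use $KpC_1^*F_1^{(t)}=F_3^{(t)}=A+B$, $F_{2,1}^{(t)}=\frac{(D-K)p^2}{Q}F_1^{(t)}$ and $(D-K)p-Q=(1-Kp)(Dp-1)$. This gives $KC_1^*\big(F_{2,1}^{(t)}-pF_1^{(t)}\big)=\frac{(1-Kp)(Dp-1)}{Q}(A+B)$ and $C_1^*\big(F_1^{(t)}-\frac{(D-K)F_{2,2}^{(t)}}{1-Kp}\big)=\frac{Dp-1}{Q}(A+B)$.

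On the other side, $F_4^{(t)}=\frac{A}{K}+\frac{(D-K)p^2}{Q}B$ and $F_5^{(t)}=\frac{(1-Kp)^2}{(D-K)Q}B$. Hence $\frac{F_4^{(t)}}{p}-F_3^{(t)}=\frac{(1-Kp)^2}{KpQ}A+\frac{(1-Kp)(Dp-1)}{Q}(A+B)$ and $\frac{F_3^{(t)}}{Kp}-\frac{(D-K)F_5^{(t)}}{Kp(1-Kp)}=\frac{1-Kp}{KpQ}A+\frac{Dp-1}{Q}(A+B)$. The only algebraic identity needed here is $Q-Kp(Dp-1)=1-Kp$, not $Q-(1-Kp)^2=Kp^2(D-K)$.

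The $C_1^*$-part then cancels the $(A+B)$-terms exactly. Since $r(A+B)(Dp-1)=\frac{(1-Kp)A}{Kp}$, the $\alpha rC_1^*$-part contributes precisely $-\alpha$ times the surviving $A$-term in each identity.
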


\begin{proof}[Proof of Lemma~\ref{lemma:delta_c_2c_3}]
We prove this lemma when $\sigma(\cdot)$ is the identity map, ReLU activation function, and Leaky ReLU activation function, respectively. 
When $\sigma(\cdot)$ is the identity map, 
    utilizing Lemma~\ref{lemma:gaussian_expecation_relu_I}, Lemma~\ref{lemma:gaussian_expecation_relu_II},  Lemma~\ref{lemma:gaussian_expecation_relu_V}, Lemma~\ref{lemma:gaussian_expecation_relu_VI}, and Lemma~\ref{lemma:gaussian_expecation_relu_VII}, we can obtain that 
    \begin{align}\label{eq:F_def_idmap}
        &F_1^{(t)} = Kp(t)^2 + \frac{\big(1-Kp(t)\big)^2}{D-K};\quad F_{2, 1}^{(t)} = p(t)^2; \quad F_{2, 2}^{(t)} = \frac{\big(1-Kp(t)\big)^2}{(D-K)^2};\notag\\
        &F_3^{(t)} = Kp(t)^2;\quad F_4^{(t)} =  p(t)^2;\quad F_5^{(t)} = 0.
    \end{align}
Then combined with the definition of $A(t)$, $B(t)$ in Lemma~\ref{lemma:upperbound_C_1}, we can derive that
\begin{align*}
    &\frac{F_4^{(t)}}{p(t)}-F_3^{(t)}-KC_1(t)\Big(F_{2, 1}^{(t)} + p(t) F_1^{(t)}\Big) \\
    =& \frac{F_4^{(t)}}{p(t)}-F_3^{(t)} - \bigg(1+ \alpha\frac{1-Kp(t)}{Kp(t)\big(Dp(t)-1\big)}\bigg)KC_1^*(t)\Big(F_{2, 1}^{(t)} + p(t) F_1^{(t)}\Big)\\
    =& \frac{1-Kp(t)}{Kp(t)}A(t)  - \bigg(1+ \frac{\alpha A(t)}{A(t) + B(t)}\frac{1-Kp(t)}{Kp(t)\big(Dp(t)-1\big)}\bigg)\frac{\big(1-Kp(t)\big)\big(Dp(t)-1\big)}{DKp(t)^2-2Kp(t)+1}A(t)\\
    =&\frac{\big(1-Kp(t)\big)^2}{Kp(t)\big(DKp(t)^2\!-\!2Kp(t)\!+\!1\big)}(1-\alpha)A(t).
\end{align*}
where the first inequality holds by applying $C_1(t) =\big(1+\alpha\frac{A(t)}{A(t) + B(t)}\frac{1-Kp(t)}{Kp(t)(Dp(t)-1)}\big)C_1^*(t)$ and $B(t) = 0$, the second inequality holds by applying the definition of $C_1^*(t)$ and the calculation results illustrated in~\eqref{eq:F_def_idmap}. Similarly, we can also derive that
\begin{align*}
    &\frac{F_3^{(t)}}{Kp(t)} \!- \!\frac{(D-K)F_5^{(t)}}{Kp(t)\big(1-Kp(t)\big)}-C_1(t)\bigg(F_1^{(t)} - \frac{(D\!-\!K)F_{2, 2}^{(t)}}{1\!-\!Kp(t)}\bigg)\\
    =&\frac{A(t)}{Kp(t)} - \bigg(1+ \alpha\frac{1-Kp(t)}{Kp(t)\big(Dp(t)-1\big)}\bigg)\frac{Dp(t)-1}{DKp(t)^2-2Kp(t)+1}A(t) \\
    =&\frac{1-Kp(t)}{Kp(t)\big(DKp(t)^2-2Kp(t) +1\big)}(1-\alpha)A(t).
\end{align*}
This finishes the proof when $\sigma(\cdot)$ is identity map. When $\sigma(\cdot)$ is the ReLU activation function, utilizing Lemma~\ref{lemma:gaussian_expecation_relu_I}, Lemma~\ref{lemma:gaussian_expecation_relu_II},  Lemma~\ref{lemma:gaussian_expecation_relu_V}, Lemma~\ref{lemma:gaussian_expecation_relu_VI}, and Lemma~\ref{lemma:gaussian_expecation_relu_VII}, we can obtain that 
    \begin{align}\label{eq:F_def_ReLU}
        &F_1^{(t)} = \frac{Kp(t)^2}{2}+ \frac{\big(1-Kp(t)\big)^2}{2(D-K)};\quad F_{2, 1}^{(t)} = \frac{p(t)^2}{2}; \quad F_{2, 2}^{(t)} = \frac{\big(1-Kp(t)\big)^2}{2(D-K)^2};\notag\\
        &F_3^{(t)} = \frac{Kp(t)^2}{4} + \frac{Kp(t)^2}{2\pi}\arctan\bigg(\frac{\sqrt{K(D-K)}p(t)}{1-Kp(t)}\bigg) + \frac{1}{2\pi}\sqrt{\frac{K}{D-K}}p(t)\big(1-Kp(t)\big) = A(t)+ B(t);\notag\\
        &F_4^{(t)} =  \frac{p(t)^2}{4} + \frac{p(t)^2}{2\pi}\arctan\bigg(\frac{\sqrt{K(D-K)}p(t)}{1-Kp(t)}\bigg) + \frac{\sqrt{K(D-K)}p(t)^3\big(1-Kp(t)\big)}{2\pi(DKp(t)^2-2Kp(t)+1)}\notag\\
    &\qquad=A(t)+ \frac{(D-K)p(t)^2}{DKp(t)^2-2Kp(t)+1}B(t);\notag\\
    &F_5^{(t)} = \frac{p(t)\big(1-Kp(t)\big)^3}{2\pi(D-K)\big(DKp(t)^2-2Kp(t)+1\big)}\sqrt{\frac{K}{D-K}}.
    \end{align}
Then combined with the definition of $A(t)$, $B(t)$ in Lemma~\ref{lemma:upperbound_C_1}, we can derive that
\begin{align*}
    &\frac{F_4^{(t)}}{p(t)}-F_3^{(t)}-KC_1(t)\Big(F_{2, 1}^{(t)} + p(t) F_1^{(t)}\Big)\\
    =& \frac{F_4^{(t)}}{p(t)}-F_3^{(t)}-\bigg(1+ \frac{\alpha A(t)}{A(t) + B(t)}\frac{1-Kp(t)}{Kp(t)\big(Dp(t)-1\big)}\bigg)\frac{C_1^*(t)Kp(t)\big(1-Kp(t)\big)\big(Dp(t)-1\big)}{2(D-K)}\\
    =&\frac{F_4^{(t)}}{p(t)}-F_3^{(t)} - \bigg(1+\frac{\alpha A(t)}{A(t) + B(t)}\frac{1-Kp(t)}{Kp(t)\big(Dp(t)-1\big)}\bigg)\frac{F_3^{(t)}\big(1-Kp(t)\big)\big(Dp(t)-1\big)}{DKp(t)^2-2Kp(t)+1}\\
    =&\frac{1-Kp(t)}{Kp(t)}A(t)  - \bigg(1+ \frac{\alpha A(t)}{A(t) + B(t)}\frac{1-Kp(t)}{Kp(t)\big(Dp(t)-1\big)}\bigg)\frac{\big(1-Kp(t)\big)\big(Dp(t)-1\big)}{DKp(t)^2-2Kp(t)+1}A(t)\\
    &+ \frac{(1\!-\!Kp(t))\big(Dp(t)\!-\!1\big)}{DKp(t)^2\!-\!2Kp(t)\!+\!1}B(t) \!-\! \bigg(1\!+\! \frac{\alpha A(t)}{A(t)\! +\! B(t)}\frac{1-Kp(t)}{Kp(t)\big(Dp(t)\!-\!1\big)}\bigg)\frac{(1\!-\!Kp(t))\big(Dp(t)\!-\!1\big)}{DKp(t)^2\!-\!2Kp(t)\!+\!1}B(t)\\
    =&\frac{\big(1-Kp(t)\big)^2}{Kp(t)\big(DKp(t)^2\!-\!2Kp(t)\!+\!1\big)}\Bigg(\bigg(1-\frac{\alpha A(t)}{A(t)\! +\! B(t)}\bigg)A(t) - \frac{\alpha A(t)B(t)}{A(t)\! +\! B(t)}\Bigg)\\
    =&\frac{\big(1-Kp(t)\big)^2}{Kp(t)\big(DKp(t)^2\!-\!2Kp(t)\!+\!1\big)}(1-\alpha)A(t).
\end{align*}
 Similarly, we also have
\begin{align*}
    & \frac{F_3^{(t)}}{Kp(t)} \!- \!\frac{(D-K)F_5^{(t)}}{Kp(t)\big(1-Kp(t)\big)}-C_1(t)\bigg(F_1^{(t)} - \frac{(D\!-\!K)F_{2, 2}^{(t)}}{1\!-\!Kp(t)}\bigg)\\
    =& \frac{F_3^{(t)}}{Kp(t)} -\frac{1-Kp(t)}{Kp(t)\big(DKp(t)^2\!-\!2Kp(t)\!+\!1\big)}B(t)\\
    & -\bigg(1+ \frac{\alpha A(t)}{A(t)\! + \!B(t)}\frac{1-Kp(t)}{Kp(t)\big(Dp(t)\!-\!1\big)}\bigg)\frac{F_1^{(t)}\big(Dp(t)-1\big)}{DKp(t)^2\!-\!2Kp(t)\!+\!1}\\
    =& \Bigg(\frac{1}{Kp(t)}-\bigg(1+ \frac{\alpha A(t)}{A(t) + B(t)}\frac{1-Kp(t)}{Kp(t)\big(Dp(t)-1\big)}\bigg)\frac{Dp(t)-1}{DKp(t)^2-2Kp(t)+1}\Bigg)A(t)\\
    &+ \Bigg[\frac{1}{Kp(t)}\!-\!\frac{1-Kp(t)}{Kp(t)\big(DKp(t)^2\!-\!2Kp(t)\!+\!1\big)} \!\\
    &-\!\bigg(1\!+\! \frac{\alpha A(t)}{A(t)\!+\!B(t)}\frac{1-Kp(t)}{Kp(t)\big(Dp(t)\!-\!1\big)}\bigg)\frac{Dp(t)-1}{DKp(t)^2\!-\!2Kp(t)\!+\!1}\Bigg]B(t)\\
    =&\frac{1-Kp(t)}{Kp(t)\big(DKp(t)^2\!-\!2Kp(t)\!+\!1\big)}\Bigg(\bigg(1-\frac{\alpha A(t)}{A(t)\! +\! B(t)}\bigg)A(t) - \frac{\alpha A(t)B(t)}{A(t)\! +\! B(t)}\Bigg)\\
    =&\frac{1-Kp(t)}{Kp(t)\big(DKp(t)^2\!-\!2Kp(t)\!+\!1\big)}(1-\alpha)A(t).
\end{align*}
This completes the proof of the scenario that $\sigma(\cdot)$ is ReLU activation function. For the case that $\sigma(\cdot)$ is the Leaky ReLU activation function, utilizing Lemma~\ref{lemma:gaussian_expecation_relu_I}, Lemma~\ref{lemma:gaussian_expecation_relu_II},  Lemma~\ref{lemma:gaussian_expecation_relu_V}, Lemma~\ref{lemma:gaussian_expecation_relu_VI}, and Lemma~\ref{lemma:gaussian_expecation_relu_VII}, we can obtain that 
    \begin{align}\label{eq:F_def_Leaky_ReLU}
        &F_1^{(t)} = \frac{(1+\kappa^2)Kp(t)^2}{2}+ \frac{(1+\kappa^2)\big(1-Kp(t)\big)^2}{2(D-K)};\notag\\
        &F_{2, 1}^{(t)} = \frac{(1+\kappa^2)p(t)^2}{2}; \quad F_{2, 2}^{(t)} = \frac{(1+\kappa^2)\big(1-Kp(t)\big)^2}{2(D-K)^2};\notag\\
        &F_3^{(t)} = \frac{(1+\kappa)^2Kp(t)^2}{4} + \frac{(1-\kappa)^2Kp(t)^2}{2\pi}\arctan\bigg(\frac{\sqrt{K(D-K)}p(t)}{1-Kp(t)}\bigg) \notag\\
        &\quad + \frac{(1-\kappa)^2}{2\pi}\sqrt{\frac{K}{D-K}}p(t)\big(1-Kp(t)\big) = A(t)+ B(t);\notag\\
        &F_4^{(t)} =  \frac{(1+\kappa)^2p(t)^2}{4} + \frac{(1-\kappa)^2p(t)^2}{2\pi}\arctan\bigg(\frac{\sqrt{K(D-K)}p(t)}{1-Kp(t)}\bigg) \notag\\
        &\quad + \frac{(1-\kappa)^2\sqrt{K(D-K)}p(t)^3\big(1-Kp(t)\big)}{2\pi(DKp(t)^2-2Kp(t)+1)}=A(t)+ \frac{(D-K)p(t)^2}{DKp(t)^2-2Kp(t)+1}B(t);\notag\\
    &F_5^{(t)} = \frac{(1-\kappa)^2p(t)\big(1-Kp(t)\big)^3}{2\pi(D-K)\big(DKp(t)^2-2Kp(t)+1\big)}\sqrt{\frac{K}{D-K}}.
    \end{align}
Then the remaining proof is entirely identical to that of the ReLU activation function, when replacing the values of these terms demonstrated in~\eqref{eq:F_def_Leaky_ReLU}.
\end{proof}

Based on the conclusion of Lemma~\ref{lemma:upperbound_C_1} and Lemma~\ref{lemma:delta_c_2c_3}, we are now prepared to prove Lemma~\ref{lemma:para_pattern_KQ}. We will address the proof of Lemma~\ref{lemma:upperbound_C_1} after completing the proof of Lemma~\ref{lemma:para_pattern_KQ}.

\begin{lemma}\label{lemma:para_pattern_KQ}
    Under the same conditions of Theorem~\ref{thm:main_result}, there exist time dependent non-negative, monotonically increasing scalars $C_2(t)$ and $C_3(t)$, such that 
    \begin{align}\label{eq:wkqt_formula}
    \Wb_{KQ}^{(t)}=C_{2}(t)\sum_{i=1}^D\sum_{i_1\in G^i} \pb_{i_1} \pb_{i}^\top - C_3(t)\sum_{i=1}^D\sum_{i_1\notin G^i} \pb_{i_1} \pb_{i}^\top.  
    \end{align}
    Due to the specific pattern of $\Wb_{KQ}^{(t)}$ demonstrated in~\eqref{eq:wkqt_formula}, there exist a time dependent scalar $p(t)$, such that $\Sbb_{i_1, i}^{(t)} = p(t)$ for all $i\in [D]$ and $i_1\in G^{i}$. Otherwise, $\Sbb_{i_1, i}^{(t)} = \frac{1-Kp(t)}{D-K}$. Additionally, $\frac{1}{D}\leq p(t)\leq \frac{1}{K}$ and $p(t)$ is monotonically increasing.
    Based on the definition of $p(t)$, $C_2(t)$ and $C_3(t)$ have the following iterative rules
    \begin{align}
    C_2(t+1) =& C_2(t) + \eta \frac{C_1(t)M}{\sqrt{D}}\Bigg(\frac{1}{K}\bigg(\frac{F_4^{(t)}}{p(t)}-F_3^{(t)}\bigg)-C_1(t)\Big(F_{2, 1}^{(t)} + p(t) F_1^{(t)}\Big)\Bigg);\label{eq:C_2_updating}\\
    C_3(t+1)=& C_3(t)\!-\!\eta \frac{C_1(t)M(1\!-\!Kp(t))}{\sqrt D(D-K)}\!\Bigg(\!\!\bigg(\frac{F_3^{(t)}}{Kp(t)} \!- \!\frac{(D-K)F_5^{(t)}}{Kp(t)\big(1-Kp(t)\big)}\bigg)-C_1(t)\bigg(F_1^{(t)} - \frac{(D\!-\!K)F_{2, 2}^{(t)}}{1\!-\!Kp(t)}\bigg)\!\Bigg).\label{eq:C_3_updating}
    \end{align}
    In addition, based on all these definitions, the coefficients $C_1(t)$, $C_2(t)$, and $C_3(t)$ are essentially minimizing the following loss function by gradient descent 
    \begin{align*}
        &\tilde \cL(C_1, C_2, C_3) 
        =  \frac{c_\sigma D\|\Vb^*\|_F^2}{2(D-K)}\bigg[K(D-K)\bigg(\frac{1}{K}-C_1p\bigg)^2 + C_1^2\Big(1-Kp\Big)^2\bigg] - D\|\Vb^*\|_F^2 F_6(C_1, p).
    \end{align*} 
where $c_\sigma$ is an absolute constant such that $c_\sigma=\mathbbm{1}_{\{\sigma(\cdot)\text{ is identity map}\}} +\frac{1}{2}\mathbbm{1}_{\{\sigma(\cdot)\text{ is ReLU}\}} +\frac{1+\kappa^2}{2}\mathbbm{1}_{\{\sigma(\cdot)\text{ is Leaky ReLU}\}}  $. In addition,  $F_6(C_1, p)$ is defined as 
\begin{align*}
    F_6 = \begin{cases}
        0;& \text{ If }\sigma(\cdot)\text{is identity map}\\
        pC_1^2\Bigg(Kp\bigg(\frac{1}{\pi}\arctan\bigg(\frac{p\sqrt{K(D-K)}}{1-Kp}\bigg)-\frac{1}{2}\bigg) + \frac{(1-Kp)\sqrt{K}}{\pi\sqrt{D-K}}\Bigg);& \text{ If }\sigma(\cdot)\text{is ReLU activation}\\
        (1-\kappa)^2pC_1^2\Bigg(Kp\bigg(\frac{1}{\pi}\arctan\bigg(\frac{p\sqrt{K(D-K)}}{1-Kp}\bigg)-\frac{1}{2}\bigg) + \frac{(1-Kp)\sqrt{K}}{\pi\sqrt{D-K}}\Bigg).& \text{ If }\sigma(\cdot)\text{is Leaky ReLU activation}
        \end{cases}
\end{align*}
\end{lemma}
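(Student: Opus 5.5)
We argue by induction on $t$, in parallel with Lemma~\ref{lemma:para_pattern_v}: at $t=0$ we have $\Wb_{KQ}^{(0)}=\mathbf 0$, which is \eqref{eq:wkqt_formula} with $C_2(0)=C_3(0)=0$ and $p(0)=1/D$. Assume \eqref{eq:wkqt_formula}, \eqref{eq:wvt_formula} and the bound of Lemma~\ref{lemma:upperbound_C_1} hold at step $t$. Since $\Pb$ is orthogonal, $\pb_{i_1}^\top\Wb_{KQ}^{(t)}\pb_i$ equals $C_2(t)$ if $i_1\in G^i$ and $-C_3(t)$ otherwise. The softmax column $\Sbb^{(t)}_{\cdot,i}$ therefore takes only two values, and using $|G^i|=K$ we get $p(t)=1/(K+(D-K)e^{-(C_2+C_3)/\sqrt D})$. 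Hence $p(t)\in[1/D,1/K]$ whenever $C_2+C_3\ge 0$, and $p$ is monotone in $C_2+C_3$.

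The main step is to show that the gradient in \eqref{eq:wkqt_update_II} again has the two-coefficient form. Fix $m$ and $i$, and substitute $\wb_{V,m}^{(t)}=C_1(t)\vb_m^*$. Write $u_{i_1}=\la\vb_m^*,\xb_{i_1}\ra$; these are i.i.d.\ $\cN(0,1)$. The $i$-th summand becomes
\[
C_1\sum_{i_1,i_2}\EE\big[(\Yb_{m,i}-\sigma(C_1 s_i))\sigma'(s_i)u_{i_1}\big]\Sbb_{i_1,i}\Sbb_{i_2,i}(\pb_{i_1}-\pb_{i_2})\pb_i^\top,
\qquad s_i=\sum_j u_j\Sbb_{j,i}.
\]
The noise term drops because $\cE$ has mean zero and is independent of $\Xb$. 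The scalar $\EE[\cdots u_{i_1}]$ depends only on whether $i_1\in G^i$. By exchangeability within $G^i$ and within its complement, we split $u$ into three independent Gaussian pieces: the coordinate $u_{i_1}$ itself, the rest of $G^i$, and the complement. In the case $i_1\in G^i$, this gives expectations of the form $F_4$ (target term) and $F_{2,1},F_1$ (model term) with variances $p^2,(K-1)p^2,(1-Kp)^2/(D-K)$. In the case $i_1\notin G^i$ it gives $F_5$ and $F_{2,2}$. Using $\sum_{i_2}\Sbb_{i_2,i}=1$, the direction $\sum_{i_1}c_{i_1}\Sbb_{i_1,i}(\pb_{i_1}-\sum_{i_2}\Sbb_{i_2,i}\pb_{i_2})$ collapses to a combination $a\sum_{i_1\in G^i}\pb_{i_1}+b\sum_{i_1\notin G^i}\pb_{i_1}$, multiplied by $\pb_i^\top$. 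The coefficients $a,b$ do not depend on $i$ or $m$. Summing over $m$ gives the factor $M$, using the normalization $\|\vb_m^*\|=1$; summing over $i$ yields \eqref{eq:wkqt_formula} at $t+1$. Collecting the coefficients should give exactly \eqref{eq:C_2_updating} and \eqref{eq:C_3_updating}, after rewriting $\EE[\sigma(u_{G})\sigma'(s)s]$-type terms via $F_3$ as in the proof of Lemma~\ref{lemma:para_pattern_v}. This bookkeeping is routine but long; the Gaussian integrals are taken from Section~\ref{section:gaussian_expectations}.

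Monotonicity of $C_2$ and $C_3$, and hence of $p$, is where Lemmas~\ref{lemma:upperbound_C_1} and \ref{lemma:delta_c_2c_3} enter, and I expect this to be the main obstacle. By Lemma~\ref{lemma:upperbound_C_1}, $C_1(t)$ can be written as in Lemma~\ref{lemma:delta_c_2c_3} with some $\alpha\le 4/5<1$. Here $\alpha$ may be negative, and $\alpha\le0$ in the regime $p\le 1/(2\sqrt{\pi DK})$. Lemma~\ref{lemma:delta_c_2c_3} then says that the bracket in \eqref{eq:C_2_updating}, after multiplying by $K$, equals a nonnegative multiple of $(1-\alpha)A(t)\ge0$. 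The bracket in \eqref{eq:C_3_updating} equals $-(1-\alpha)A(t)\cdot(\text{positive})$ times the prefactor. Together with $C_1(t)\ge0$ from Lemma~\ref{lemma:para_pattern_v} and $1-Kp\ge0$, this shows $C_2(t+1)\ge C_2(t)$ and $C_3(t+1)\ge C_3(t)$. The delicate point is that Lemma~\ref{lemma:upperbound_C_1} must itself be available at step $t$; its proof is deferred, so the induction has to carry it jointly, as the paper indicates.

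Finally, for the loss representation, I plug the structure into $\cL$ and compute $\tfrac12\EE\|f^*(\Xb)-\sigma(C_1\Vb^*\Xb\Sbb)\|_F^2$ column by column. The cross term $\EE[\sigma(u_G/K)\sigma(C_1 s)]$ produces the arctan expressions, and $c_\sigma$ comes from $\EE\sigma(z)^2$. The result is $\tilde\cL$ up to the constant $\cL_{\mathbf{opt}}$ and terms independent of $(C_1,C_2,C_3)$. I then check that $\partial_{C_1}\tilde\cL$, $\partial_{C_2}\tilde\cL$ and $\partial_{C_3}\tilde\cL$, computed through $p(C_2+C_3)$, reproduce the increments above up to the chain-rule scalings induced by the parametrization. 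Those scalings are the factor $D$ from $\|\sum_{i}\sum_{i_1\in G^i}\pb_{i_1}\pb_i^\top\|_F^2=DK$ and the corresponding $D(D-K)$ for the complement.
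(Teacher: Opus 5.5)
Your overall route is the paper's. It goes by induction from $\Wb_{KQ}^{(0)}=\mathbf 0$, uses the two-valued softmax column, and splits $u$ into three independent Gaussian blocks to produce $F_1,\dots,F_5$. Nonnegativity of the increments then comes from Lemmas~\ref{lemma:upperbound_C_1} and~\ref{lemma:delta_c_2c_3}, carried jointly through the induction.

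\textbf{The step that fails is the monotonicity of $C_3$.} Lemma~\ref{lemma:delta_c_2c_3} does not make the bracket in \eqref{eq:C_3_updating} negative. It says that bracket equals $\frac{1-Kp(t)}{Kp(t)(DKp(t)^2-2Kp(t)+1)}(1-\alpha)A(t)$, which is \emph{nonnegative} for $\alpha<1$. Combined with the leading minus sign of \eqref{eq:C_3_updating} as displayed, this gives $C_3(t+1)\le C_3(t)$. So the argument as you wrote it proves the opposite of what is needed, and then $p(t)$ need not be monotone.

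The repair is the sign bookkeeping you deferred as routine and asserted would reproduce the displayed rules ``exactly.'' Write $\Pb_{\mathrm{out}}:=\sum_{i}\sum_{i_1\notin G^i}\pb_{i_1}\pb_i^\top$. In the paper's decomposition $\sqrt D\,\nabla_{\Wb_{KQ}}\cL=-I_3+I_4+I_5-I_6$, the coefficient of $\Pb_{\mathrm{out}}$ in $\nabla_{\Wb_{KQ}}\cL$ is $+c_3$, where $c_3=\frac{C_1(t)M(1-Kp(t))}{\sqrt D(D-K)}\cdot[\text{bracket}]\ge 0$. Since $\Wb_{KQ}^{(t)}$ carries $-C_3(t)\Pb_{\mathrm{out}}$, the step $\Wb_{KQ}^{(t+1)}=\Wb_{KQ}^{(t)}-\eta\nabla_{\Wb_{KQ}}\cL$ gives $C_3(t+1)=C_3(t)+\eta c_3$.

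That is how the paper's proof actually concludes: it proves $c_3\ge 0$, and later lemmas use $\Delta C_3$ with a plus sign. The minus in the displayed rule is a sign slip. The same bookkeeping shows that the model contribution $+I_5-I_6$ to the $C_2$ rule is $F_{2,1}^{(t)}-p(t)F_1^{(t)}$, not $F_{2,1}^{(t)}+p(t)F_1^{(t)}$. The former is what the proof of Lemma~\ref{lemma:delta_c_2c_3} actually evaluates; for the identity map, $K(F_{2,1}^{(t)}-p(t)F_1^{(t)})=\frac{Kp(t)(1-Kp(t))(Dp(t)-1)}{D-K}$. So the sign of $\Delta C_2$ is unaffected.

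A short structural check pins the sign down without Gaussian integrals and gives $C_3$'s monotonicity directly from $C_2$'s.
\begin{itemize}
\item Let $\mathbf J:=\sum_{i}\sum_{i_1}\pb_{i_1}\pb_i^\top$. Every entry of $\Pb^\top\mathbf J\Pb$ equals $1$, so replacing $\Wb_{KQ}$ by $\Wb_{KQ}+\epsilon\mathbf J$ shifts each softmax column by a constant and leaves $\cL$ unchanged. Hence $\la\nabla_{\Wb_{KQ}}\cL,\mathbf J\ra_F=0$.
\item Write $\nabla_{\Wb_{KQ}}\cL=-c_2\Pb_{\mathrm{in}}+c_3\Pb_{\mathrm{out}}$ with $\Pb_{\mathrm{in}}:=\sum_i\sum_{i_1\in G^i}\pb_{i_1}\pb_i^\top$. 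Using $\|\Pb_{\mathrm{in}}\|_F^2=DK$, $\|\Pb_{\mathrm{out}}\|_F^2=D(D-K)$ and $\la\Pb_{\mathrm{in}},\Pb_{\mathrm{out}}\ra_F=0$, this forces $(D-K)c_3=Kc_2$.
\end{itemize}
So $C_2$ and $C_3$ always move in the same direction. Your planned chain-rule check would also show this, since $\tilde\cL$ depends on $C_2,C_3$ only through $C_2+C_3$. Any derivation in which their increments have opposite signs contains an error.

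In the same spirit, carry out rather than assert the loss identity. Let $u_G:=\sum_{j\in G^i}u_j$ and $s=s_i$. By positive homogeneity and Lemma~\ref{lemma:gaussian_expecation_relu_VIII}, the cross term is $\EE[\sigma(u_G/K)\sigma(C_1s)]=\frac{C_1}{Kp}F_3^{(t)}$. The non-quadratic remainder is therefore $-D\|\Vb^*\|_F^2\,C_1\big(\frac{F_3^{(t)}}{Kp}-c_\sigma p\big)$. For (leaky) ReLU this differs from the displayed $D\|\Vb^*\|_F^2F_6$ by the factor $2Kp\,C_1$, so you would not land on $\tilde\cL$ verbatim.
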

\begin{proof}[Proof of Lemma~\ref{lemma:para_pattern_KQ}]
Similarly, it can be easily verified that the initialization $\Wb_{KQ}^{(0)} =\mathbf{0}_{D\times D}$ satisfies~\eqref{eq:wkqt_formula}. Assuming it holds at the $t$-th iteration, we aim to prove that it continues to hold at the $t+1$-th iteration. To do this, it suffices to show that
\begin{align}\label{eq:wkqt_formula_II}
    \nabla_{\Wb_{KQ}}\cL(\Wb_{V}^{(t)};\Wb_{KQ}^{(t)}) = -c_2(t)\sum_{i=1}^D\sum_{i_1\in G^i} \pb_{i_1} \pb_{i}^\top +c_3(t)\sum_{i=1}^D\sum_{i_1\notin G^i} \pb_{i_1} \pb_{i}^\top,
\end{align}
where $c_2(t)$ and $c_3(t)$ are two time-dependent non-positive scalars. 
By Lemma~\ref{lemma:gd_rule_details}, we have
\begin{align}\label{eq:wkqt_formula_III}
&\sqrt{D}\nabla_{\Wb_{KQ}}\cL(\Wb_{V}^{(t)};\Wb_{KQ}^{(t)})\notag\\=& -\!\!\sum_{m=1}^M\!\sum_{i=1}^D \EE\Bigg[ \bigg[\big[f^*(\Xb)\big]_{m, i} \!- \!\sigma\bigg(\!\sum_{i_1=1}^D\la\wb_{V, m}^{(t)}, \xb_{i_1}\ra\Sbb^{(t)}_{i_1, i}\bigg)\bigg]\sigma'\bigg(\!\sum_{i_1=1}^D\la\wb_{V, m}^{(t)}, \xb_{i_1}\ra\Sbb^{(t)}_{i_1, i}\bigg)\notag\\
&\qquad \cdot \sum_{i_1=1}^D \!\sum_{i_2= 1}^D\la\wb_{V, m}^{(t)},\xb_{i_1}\ra\Sbb_{i_1, i}^{(t)}\Sbb_{i_2, i}^{(t)}(\pb_{i_1}\!- \!\pb_{i_2})\pb_i^\top\Bigg]\notag\\
    =&-\underbrace{\sum_{m=1}^M\sum_{i=1}^D \EE\Bigg[ \big[f^*(\Xb)\big]_{m, i}\sigma'\bigg(\sum_{i_1=1}^D\la\wb_{V, m}^{(t)}, \xb_{i_1}\ra\Sbb^{(t)}_{i_1, i}\bigg) \sum_{i_1=1}^D \sum_{i_2=1}^D\la\wb_{V, m}^{(t)},\xb_{i_1}\ra\Sbb_{i_1, i}^{(t)}\Sbb_{i_2, i}^{(t)}\pb_{i_1}\pb_i^\top\Bigg]}_{I_3}\notag\\
    &+\underbrace{\sum_{m=1}^M\sum_{i=1}^D \EE\Bigg[ \big[f^*(\Xb)\big]_{m, i}\sigma'\bigg(\sum_{i_1=1}^D\la\wb_{V, m}^{(t)}, \xb_{i_1}\ra\Sbb^{(t)}_{i_1, i}\bigg) \sum_{i_1=1}^D \sum_{i_2=1}^D\la\wb_{V, m}^{(t)},\xb_{i_1}\ra\Sbb_{i_1, i}^{(t)}\Sbb_{i_2, i}^{(t)}\pb_{i_2}\pb_i^\top\Bigg]}_{I_4}\notag\\
    &+\underbrace{\sum_{m=1}^M\sum_{i=1}^D \EE\Bigg[ \sigma\bigg(\sum_{i_1=1}^D\la\wb_{V, m}^{(t)}, \xb_{i_1}\ra\Sbb^{(t)}_{i_1, i}\bigg)\sigma'\bigg(\sum_{i_1=1}^D\la\wb_{V, m}^{(t)}, \xb_{i_1}\ra\Sbb^{(t)}_{i_1, i}\bigg) \sum_{i_1=1}^D \sum_{i_2=1}^D\la\wb_{V, m}^{(t)},\xb_{i_1}\ra\Sbb_{i_1, i}^{(t)}\Sbb_{i_2, i}^{(t)}\pb_{i_1}\pb_i^\top\Bigg]}_{I_5}\notag\\
    &-\underbrace{\sum_{m=1}^M\sum_{i=1}^D \EE\Bigg[ \sigma\bigg(\sum_{i_1=1}^D\la\wb_{V, m}^{(t)}, \xb_{i_1}\ra\Sbb^{(t)}_{i_1, i}\bigg)\sigma'\bigg(\sum_{i_1=1}^D\la\wb_{V, m}^{(t)}, \xb_{i_1}\ra\Sbb^{(t)}_{i_1, i}\bigg) \sum_{i_1=1}^D \sum_{i_2=1}^D\la\wb_{V, m}^{(t)},\xb_{i_1}\ra\Sbb_{i_1, i}^{(t)}\Sbb_{i_2, i}^{(t)}\pb_{i_2}\pb_i^\top\Bigg]}_{I_6}.
\end{align}
In the next, we discuss the value of $I_3$, $I_4$, $I_5$, and $I_6$ respectively. For $I_3$, it can be calculated as

\begin{align*}
    I_3 =& \frac{1}{K}\sum_{m=1}^M\sum_{i=1}^D \EE\Bigg[ \sigma\bigg(\sum_{i_1\in G^i}\la\vb^*_{m}, \xb_{i_1}\ra\bigg)\sigma'\bigg(\sum_{i_1=1}^D\la\wb_{V, m}^{(t)}, \xb_{i_1}\ra\Sbb^{(t)}_{i_1, i}\bigg) \sum_{i'=1}^D \la\wb_{V, m}^{(t)},\xb_{i'}\ra\Sbb_{i', i}^{(t)}\pb_{i'}\pb_i^\top\sum_{i_2=1}^D\Sbb_{i_2, i}^{(t)}\Bigg] \\
    =&\frac{C_1(t)}{Kp(t)}\sum_{m=1}^M\sum_{i=1}^D\sum_{i'\in G^i}\EE\Bigg[\sigma\bigg(\la\vb^*_{m}, \xb_{i'}\ra p(t) + \sum_{i_1\in G^i, i_1\neq i'}\la\vb^*_{m}, \xb_{i_1}\ra p(t)\bigg)\\
    &\cdot\sigma'\bigg(\la\vb^*_{m}, \xb_{i'}\ra p(t) + \sum_{i_1\in G^i, i_1\neq i'}\la\vb^*_{m}, \xb_{i_1}\ra p(t) + \sum_{i_1\notin G^i}\la\vb^*_m, \xb_{i_1}\ra\frac{1-Kp(t)}{D-K}\bigg) \la\vb^*_{m}, \xb_{i'}\ra p(t)\Bigg]\pb_{i'}\pb_i^\top\\
    &+\frac{C_1(t)}{Kp(t)}\sum_{m=1}^M\sum_{i=1}^D\sum_{i'\notin G^i}\EE\Bigg[\sigma\bigg(\sum_{i_1\in G^i}\la\vb^*_{m}, \xb_{i_1}\ra p(t)\bigg)\\
    &\cdot\sigma'\bigg(\la\vb^*_{m}, \xb_{i'}\ra\frac{1-Kp(t)}{D-K} + \sum_{i_1\in G^i}\la\vb^*_{m}, \xb_{i_1}\ra p(t) + \sum_{i_1\notin G^i, i_1\neq i'}\la\vb^*_m, \xb_{i_1}\ra\frac{1-Kp(t)}{D-K}\bigg) \la\vb^*_{m}, \xb_{i'}\ra p(t)\Bigg]\pb_{i'}\pb_i^\top\\
    =& \frac{C_1(t)M}{Kp(t)}F_4\bigg(p(t)^2, (K-1)p(t)^2, \frac{\big(1-Kp(t)\big)^2}{D-K}\bigg)\sum_{i=1}^D\sum_{i'\in G^i} \pb_{i'}\pb_i^\top \\
    & + \frac{C_1(t)M}{Kp(t)}F_5\bigg(Kp(t)^2, \frac{\big(1-Kp(t)\big)^2}{(D-K)^2}, \frac{(D-K-1)\big(1-Kp(t)\big)^2}{(D-K)^2}\bigg)\sum_{i=1}^D\sum_{i'\notin G^i} \pb_{i'}\pb_i^\top\\
    =& \frac{C_1(t)MF_4^{(t)}}{Kp(t)}\sum_{i=1}^D\sum_{i'\in G^i} \pb_{i'}\pb_i^\top  + \frac{C_1(t)MF_5^{(t)}}{Kp(t)}\sum_{i=1}^D\sum_{i'\notin G^i} \pb_{i'}\pb_i^\top
\end{align*}
Notice that $\la\vb^*_{m}, \xb_{i'}\ra p(t)\sim \cN(0, p(t)^2)$, $\sum_{i_1\in G^i, i_1\neq i'}\la\vb^*_{m}, \xb_{i_1}\ra p(t)\sim \cN(0, (K-1)p(t)^2)$, and \\$\sum_{i_1\notin G^i}\la\vb^*_m, \xb_{i_1}\ra\frac{1-Kp(t)}{D-K}\sim \cN(0, \frac{(1-Kp(t))^2}{D-K})$ are three independent Gaussian random variables. Consequently, the first term in the penultimate equality is derived by the definition of $F_4(a, b, c)$ in~\eqref{eq:F_4(a,b,c)}. Similarly, $\sum_{i_1\in G^i}\la\vb^*_{m}, \xb_{i_1}\ra p(t)\sim \cN(0, Kp(t)^2)$, $\la\vb^*_{m}, \xb_{i'}\ra\frac{1-Kp(t)}{D-K}\sim\cN(0, \frac{(1-Kp(t))^2}{(D-K)^2})$, and $\sum_{i_1\notin G^i, i_1\neq i'}\la\vb^*_m, \xb_{i_1}\ra\frac{1-Kp(t)}{D-K}\sim \cN(0, \frac{(D-K-1)(1-Kp(t))^2}{(D-K)^2})$ are three independent Gaussian random variables. Therefore, the second term in the penultimate equality is derived by the definition of $F_5(a, b, c)$ in~\eqref{eq:F_5(a,b,c)}. Similarly, for $I_4$, we can calculate it as 
\begin{align*}
    I_4= & \frac{1}{K}\sum_{m=1}^M\sum_{i=1}^D \EE\Bigg[ \sigma\bigg(\sum_{i_1\in G^i}\la\vb^*_{m}, \xb_{i_1}\ra\bigg)\sigma'\bigg(\sum_{i_1=1}^D\la\wb_{V, m}^{(t)}, \xb_{i_1}\ra\Sbb^{(t)}_{i_1, i}\bigg) \bigg(\sum_{i_1=1}^D \la\wb_{V, m}^{(t)},\xb_{i_1}\ra\Sbb_{i_1, i}^{(t)}\bigg)\Bigg] \sum_{i_2=1}^D\Sbb_{i_2, i}^{(t)}\pb_{i_2}\pb_i^\top\\
    =&\frac{C_1(t)}{Kp(t)}\sum_{m=1}^M\sum_{i=1}^D \EE\Bigg[ \sigma\bigg(\sum_{i_1\in G^i}\la\vb^*_{m}, \xb_{i_1}\ra p(t)\bigg)\sigma'\bigg(\sum_{i_1\in G^i}\la\vb^*_{m}, \xb_{i_1}\ra p(t) + \sum_{i_1\notin G^i}\la\vb^*_{m}, \xb_{i_1}\ra\frac{1-Kp(t)}{D-K}\bigg) \\
    &\cdot\bigg(\sum_{i_1\in G^i}\la\vb^*_{m}, \xb_{i_1}\ra p(t) + \sum_{i_1\notin G^i}\la\vb^*_{m}, \xb_{i_1}\ra\frac{1-Kp(t)}{D-K}\bigg)\Bigg] \sum_{i_2=1}^D\Sbb_{i_2, i}^{(t)}\pb_{i_2}\pb_i^\top\\
    =&\frac{C_1(t)M}{K}F_3\bigg(\!Kp(t)^2\!\!, \frac{\big(1-Kp(t)\big)^2}{D-K}\!\bigg)\!\sum_{i=1}^D\!\!\sum_{i_2\in G^i} \!\pb_{i_2}\pb_i^\top\!\!\\
    &+\!\frac{C_1(t)M\big(1\!-\!Kp(t)\big)}{(D\!-\!K)Kp(t)}F_3\bigg(\!Kp(t)^2\!\!, \frac{\big(1\!-\!Kp(t)\big)^2}{D\!-\!K}\!\bigg)\!\!\sum_{i=1}^D\!\!\sum_{i_2\notin G^i}\! \pb_{i_2}\pb_i^\top\\
    =&\frac{C_1(t)MF_3^{(t)}}{K}\sum_{i=1}^D\sum_{i_2\in G^i} \pb_{i_2}\pb_i^\top +\frac{C_1(t)M\big(1-Kp(t)\big)F_3^{(t)}}{(D-K)Kp(t)}\sum_{i=1}^D\sum_{i_2\notin G^i} \pb_{i_2}\pb_i^\top
\end{align*}
The penultimate equality holds by the definition of $F_3(a, b)$ in~\eqref{eq:F_3(a,b)}, as $\sum_{i_1\in G^i}\la\vb^*_{m}, \xb_{i_1}\ra p(t)\sim\cN(0, Kp(t)^2)$, and $\sum_{i_1\notin G^i}\la\vb^*_{m}, \xb_{i_1}\ra\frac{1-Kp(t)}{D-K}\sim \cN\big(0, \frac{(1-Kp(t))^2}{D-K}\big)$ are two independent Gaussian random variables.  
Additionally, $I_5$ can be calculated as
\begin{align*}
    I_5 &= C_1(t)^2\sum_{m=1}^M\sum_{i=1}^D  \sum_{i'=1}^D \EE\Bigg[ \sigma\bigg(\sum_{i_1=1}^D\la\vb^*_m, \xb_{i_1}\ra\Sbb^{(t)}_{i_1, i}\bigg) \sigma'\bigg(\sum_{i_1=1}^D\la\vb^*_m, \xb_{i_1}\ra\Sbb^{(t)}_{i_1, i}\bigg) \la\vb^*_m,\xb_{i'}\ra\Sbb_{i', i}^{(t)}\Bigg]\pb_{i'}\pb_i^\top\sum_{i_2=1}^D\Sbb_{i_2, i}^{(t)}\\
    &=C_1(t)^2\!\sum_{m=1}^M\!\sum_{i=1}^D\!\sum_{i'\in G^i} \!\EE\Bigg[ \sigma\bigg(\sum_{i_1=1}^D\la\vb^*_m, \xb_{i_1}\ra\Sbb^{(t)}_{i_1, i}\bigg) \sigma'\bigg(\sum_{i_1=1}^D\la\vb^*_m, \xb_{i_1}\ra\Sbb^{(t)}_{i_1, i}\bigg)\la\vb^*_m, \xb_{i'}\ra p(t)\Bigg]\pb_{i'}\pb_i^\top\\
    & + C_1(t)^2\!\sum_{m=1}^M\!\sum_{i=1}^D\!\sum_{i'\in G^i}\! \EE\Bigg[ \sigma\bigg(\sum_{i_1=1}^D\la\vb^*_m, \xb_{i_1}\ra\Sbb^{(t)}_{i_1, i}\bigg) \sigma'\bigg(\sum_{i_1=1}^D\la\vb^*_m, \xb_{i_1}\ra\Sbb^{(t)}_{i_1, i}\bigg)\la\vb^*_m, \xb_{i'}\ra \frac{1-Kp(t)}{D-K}\Bigg]\pb_{i'}\pb_i^\top\\
    & =MC_1(t)^2F_2\bigg(p(t)^2, (K-1)p(t)^2 + \frac{\big(1-Kp(t)\big)^2}{D-K}\bigg)\sum_{i=1}^D\sum_{i'\in G^i} \pb_{i'}\pb_i^\top\\
    &\quad + MC_1(t)^2F_2\bigg(\frac{\big(1-Kp(t)\big)^2}{(D-K)^2}, Kp(t)^2 + \frac{(D-K-1)\big(1-Kp(t)\big)^2}{(D-K)^2}\bigg)\sum_{i=1}^D\sum_{i'\notin G^i} \pb_{i'}\pb_i^\top\\
    &=MC_1(t)^2F_{2, 1}^{(t)}\sum_{i=1}^D\sum_{i'\in G^i} \pb_{i'}\pb_i^\top + MC_1(t)^2F_{2, 2}^{(t)}\sum_{i=1}^D\sum_{i'\notin G^i} \pb_{i'}\pb_i^\top,
\end{align*}
where the penultimate equality utilize the definition of $F_2(a, b)$ in~\eqref{eq:F_2(a,b)}. Similarly, for $I_6$, we have 
\begin{align*}
    I_6 &= C_1(t)^2\sum_{m=1}^M\sum_{i=1}^D \EE\Bigg[ \sigma\bigg(\sum_{i_1=1}^D\la\vb^*_m, \xb_{i_1}\ra\Sbb^{(t)}_{i_1, i}\bigg)\sigma'\bigg(\sum_{i_1=1}^D\la\vb^*_m, \xb_{i_1}\ra\Sbb^{(t)}_{i_1, i}\bigg)\bigg(\sum_{i_1=1}^D\la\vb^*_m, \xb_{i_1}\ra\Sbb^{(t)}_{i_1, i}\bigg)\Bigg]\sum_{i_2=1}^D\Sbb_{i_2, i}^{(t)}\pb_{i_2}\pb_i^\top\\
    & = MC_1(t)^2p(t)F_1^{(t)}\sum_{i=1}^D\sum_{i_2\in G^i} \pb_{i_2}\pb_i^\top  + MC_1(t)^2F_1^{(t)}\frac{1-Kp(t)}{D-K}\sum_{i=1}^D\sum_{i_2\notin G^i} \pb_{i_2}\pb_i^\top.
\end{align*}
Combining all these results of $I_3$, $I_4$, $I_5$, and $I_6$, and plugging them into~\eqref{eq:wkqt_formula_III}, we obtain that

\begin{align*}
&\nabla_{\Wb_{KQ}}\cL(\Wb_{V}^{(t)};\Wb_{KQ}^{(t)})\\
=&-\frac{C_1(t)M}{\sqrt{D}}\Bigg(\frac{1}{K}\bigg(\frac{F_4^{(t)}}{p(t)}-F_3^{(t)}\bigg)-C_1(t)\Big(F_{2, 1}^{(t)} + p(t) F_1^{(t)}\Big)\Bigg)\sum_{i=1}^D\sum_{i'\in G^i} \pb_{i'}\pb_i^\top\\
     +& \frac{C_1(t)M\big(1\!-\!Kp(t)\big)}{(D-K)\sqrt D}\!\Bigg(\!\!\bigg(\frac{F_3^{(t)}}{Kp(t)} \!- \!\frac{(D-K)F_5^{(t)}}{Kp(t)\big(1-Kp(t)\big)}\bigg)\!-\!C_1(t)\bigg(F_1^{(t)} - \frac{(D\!-\!K)F_{2, 2}^{(t)}}{1\!-\!Kp(t)}\bigg)\!\Bigg)\!\!\sum_{i=1}^D\!\!\sum_{i'\notin G^i}\!\pb_{i'}\pb_i^\top\\
     =&-c_2(t)\sum_{i=1}^D\sum_{i'\in G^i} \pb_{i'}\pb_i^\top + c_3(t)\sum_{i=1}^D\sum_{i'\notin G^i}\pb_{i'}\pb_i^\top.
\end{align*}
It remains to show that $c_2(t)$ and $c_3(t)$ are always non-negative. Notice that Lemma~\ref{lemma:upperbound_C_1} guarantee the assumption of Lemma~\ref{lemma:delta_c_2c_3}. By carefully compare the formulas and applying Lemma~\ref{lemma:delta_c_2c_3}, we can obtain that 
\begin{align*}
    \frac{K\sqrt Dc_2(t)}{MC_1(t)} \geq \frac{\big(1-Kp(t)\big)^2}{5Kp(t)\big(DKp(t)^2\!-\!2Kp(t)\!+\!1\big)}A(t)\geq 0.
\end{align*}
Since we have proved that $C_1(t)$ is always non-negative in Lemma~\ref{lemma:para_pattern_v}, this result implies that $c_2(t)\geq 0$. Similarly, for $c_3(t)$, we have
\begin{align*}
    \frac{\sqrt D(D-K)c_3(t)}{MC_1(t)(1\!-\!Kp(t))} \geq \frac{1-Kp(t)}{5Kp(t)\big(DKp(t)^2\!-\!2Kp(t)\!+\!1\big)}A(t)\geq 0.
\end{align*}
This proves that $c_3(t)\geq 0$, and we conclude that 
\begin{align*}
    C_2(t+1) =& C_2(t) + \eta \frac{C_1(t)M}{\sqrt{D}}\Bigg(\frac{1}{K}\bigg(\frac{F_4^{(t)}}{p(t)}-F_3^{(t)}\bigg)-C_1(t)\Big(F_{2, 1}^{(t)} + p(t) F_1^{(t)}\Big)\Bigg);\\
    C_3(t+1)=& C_3(t)\!-\!\eta \frac{C_1(t)M(1\!-\!Kp(t))}{\sqrt D(D-K)}\!\Bigg(\!\!\bigg(\frac{F_3^{(t)}}{Kp(t)} \!- \!\frac{(D-K)F_5^{(t)}}{Kp(t)\big(1-Kp(t)\big)}\bigg)-C_1(t)\bigg(F_1^{(t)} - \frac{(D\!-\!K)F_{2, 2}^{(t)}}{1\!-\!Kp(t)}\bigg)\!\Bigg),
\end{align*}
which completes the proof of~\eqref{eq:wkqt_formula},~\eqref{eq:C_2_updating} and~\eqref{eq:C_3_updating}. It remains to prove the conclusions regarding $\Sbb_{i_1, i}^{(t)}$ and $p(t)$. By the orthogonality among the positional encodings $\pb_i$'s, it is straightforward that for all $i, i_1\in [D]$, 
\begin{align*}
    \pb_{i_1}^\top\Wb_{KQ}^{(t)} \pb_{i}= \begin{cases}
    C_2(t) & \text{if } i_1\in G^i; \\
    -C_3(t) & \text{if } i_1\notin G^i.
\end{cases}
\end{align*}
Then by the definition of $\Sbb^{(t)}$, when $i_1\in G^i$
\begin{align*}
    \Sbb_{i_1, i}^{(t)} &= \frac{\exp\Big({\frac{\pb_{i_1}^\top\Wb_{KQ}^{(t)} \pb_{i}}{\sqrt D}}\big)}{\sum_{i_2=1}^D \exp\Big({\frac{\pb_{i_2}^\top\Wb_{KQ}^{(t)} \pb_{i}}{\sqrt D}}\Big)} = \frac{\exp\Big({\frac{C_2(t)}{\sqrt{D}}}\Big)}{K\exp\Big({\frac{C_2(t)}{\sqrt{D}}}\Big) +(D-K)\exp\Big(-{\frac{C_3(t)}{\sqrt{D}}}\Big)} \\
    &= \frac{1}{K+(D-K)\exp\Big(-{\frac{C_2(t) + C_3(t)}{\sqrt{D}}}\Big)}=p(t).
\end{align*}
Since $C_2(t)$ and $C_3(t)$ are non-negative and monotonically increasing scalars, we immediately conclude that $\frac{1}{D} \leq p(t)\leq \frac{1}{K}$, and $p(t)$ is also monotonically increasing. 
Lastly, it remains to formulate excess loss into an expression of excess loss. By the parameter forms in Lemma~\ref{lemma:para_pattern}, we have
\begin{align*}
    &\tilde \cL(C_1, C_2, C_3) = \cL(\Wb_V, \Wb_{KQ}) - \cL_{\mathrm{opt}}\\ =&\EE\Bigg[\sum_{m=1}^M\sum_{i=1}^D\Bigg(\sigma\bigg(\frac{\sum_{i'\in G^i}\la\vb_m^*, \xb_{i'}\ra}{K}\bigg) - \sigma\bigg(C_1p\sum_{i'\in G^i}\la\vb_m^*, \xb_{i'}\ra + \frac{C_1(1-Kp)}{D-K}\sum_{i'\notin G^i}\la\vb_m^*, \xb_{i'}\ra\bigg)\Bigg)^2\Bigg]\\
    =&\sum_{m=1}^M\sum_{i=1}^D \underbrace{\EE\Bigg[\sigma\bigg(\frac{\sum_{i'\in G^i}\la\vb_m^*, \xb_{i'}\ra}{K}\bigg)^2\Bigg]}_{I_7}+ \sum_{m=1}^M\sum_{i=1}^D \underbrace{\EE\Bigg[\sigma\bigg(C_1p\sum_{i'\in G^i}\la\vb_m^*, \xb_{i'}\ra + \frac{C_1(1-Kp)}{D-K}\sum_{i'\notin G^i}\la\vb_m^*, \xb_{i'}\ra\bigg)^2\Bigg]}_{I_8}\\ 
    &-2\sum_{m=1}^M\sum_{i=1}^D \underbrace{\EE\Bigg[\sigma\bigg(\frac{\sum_{i'\in G^i}\la\vb_m^*, \xb_{i'}\ra}{K}\bigg)\sigma\bigg(C_1p\sum_{i'\in G^i}\la\vb_m^*, \xb_{i'}\ra + \frac{C_1(1-Kp)}{D-K}\sum_{i'\notin G^i}\la\vb_m^*, \xb_{i'}\ra\bigg)\Bigg]}_{I_9}.
\end{align*}
For the term $I_7$ and $I_8$, by the fact that $\EE[\sigma(x)^2] = c_\sigma a$ when $x\sim \cN(0, a)$, we can directly calculate that 
\begin{align*}
    &I_7 = \frac{c_\sigma\|\vb_m^*\|_2^2}{K};\\
    &I_8 = c_\sigma Kp^2C_1^2\|\vb_m^*\|_2^2 + \frac{c_\sigma C_1^2(1-Kp)^2\|\vb_m^*\|_2^2}{D-K}.
\end{align*}
In addition, by utilizing the conclusions in Lemma~\ref{lemma:gaussian_expecation_relu_VIII}, we can conclude that
\begin{align*}
    I_9 = c_\sigma C_1p\|\vb_m^*\|_2^2 - \frac{D\|\vb_m^*\|_2^2F_6(C_1, p)}{2}.
\end{align*}
Plugging all these results, we completes the proof that
\begin{align*}
   \tilde \cL(C_1, C_2, C_3) = \frac{c_\sigma D\|\Vb^*\|_F^2}{2(D-K)}\bigg[K(D-K)\bigg(\frac{1}{K}-C_1p\bigg)^2 + C_1^2\Big(1-Kp\Big)^2\bigg] -D\|\Vb^*\|_F^2 F_6(C_1, p) 
\end{align*}

Now, we successfully prove all the conclusions of Lemma~\ref{lemma:para_pattern_KQ}.
\end{proof}

Lastly, before we prove Lemma~\ref{lemma:upperbound_C_1}, we first introduce and prove the following Lemma~\ref{lemma:delta_s}, Lemma~\ref{lemma:monotonity_C_1*}, Lemma~\ref{lemma:C_1_star_t+1_lower}, and Lemma~\ref{lemma:A/(A+B)_t+1_lower}, which will be utilized for proof of Lemma~\ref{lemma:upperbound_C_1}.

\begin{lemma}\label{lemma:delta_s}
    Under the same conditions as Theorem~\ref{thm:main_result} and $p(t)$ as defined in Lemma~\ref{lemma:para_pattern}, it holds that 
    \begin{align*}
        & \frac{p(t)\big(1-Kp(t)\big)}{2\sqrt{D}}\big(\Delta C_2(t)+\Delta C_3(t)\big)\leq \Delta p(t) \leq \frac{D^2p(t)\big(1-Kp(t)\big)}{\sqrt{D}(D^2-1)}\big(\Delta C_2(t)+\Delta C_3(t)\big);\\
        &\Delta C_2(t) + \Delta C_3(t)\leq \eta\frac{MD}{K^2(D-K)}\sqrt{\frac{D+1}{K}}\leq \frac{1}{D^2},
    \end{align*}
    where $\Delta p(t) = p(t+1)-p(t)$, $\Delta C_2(t) = C_2(t+1)-C_2(t)$, and $\Delta C_3(t) = C_3(t+1)-C_3(t)$.
\end{lemma}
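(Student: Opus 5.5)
Both claims of Lemma~\ref{lemma:delta_s} follow from a one-variable mean value argument applied to the map $s\mapsto g(s)=\frac{1}{K+(D-K)e^{-s/\sqrt D}}$, combined with a crude bound on the update rules \eqref{eq:C_2_updating}--\eqref{eq:C_3_updating}.

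\textbf{Step 1: derivative of $g$.} By Lemma~\ref{lemma:para_pattern_KQ} we have $p(t)=g(s(t))$ with $s(t)=C_2(t)+C_3(t)$, and $s(t)$ is nondecreasing. A direct computation gives
\begin{align*}
g'(s)=\frac{(D-K)e^{-s/\sqrt D}}{\sqrt D\,(K+(D-K)e^{-s/\sqrt D})^2}=\frac{g(s)\big(1-Kg(s)\big)}{\sqrt D}.
\end{align*}
By the mean value theorem, $\Delta p(t)=\frac{p_\theta(1-Kp_\theta)}{\sqrt D}\big(\Delta C_2(t)+\Delta C_3(t)\big)$ for some intermediate value $p_\theta\in[p(t),p(t+1)]$.

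\textbf{Step 2: comparing $p_\theta$ with $p(t)$.} It remains to compare $p_\theta(1-Kp_\theta)$ with $p(t)(1-Kp(t))$, and this needs the increment to be tiny. Since $s\mapsto e^{-s/\sqrt D}$ changes by a factor at most $e^{-\delta/\sqrt D}$ with $\delta=\Delta C_2+\Delta C_3\le D^{-2}$, the quantities $p$ and $1-Kp=(D-K)e^{-s/\sqrt D}p$ change by multiplicative factors in $[1-O(D^{-5/2}),\,1+O(D^{-5/2})]$.
\begin{itemize}
\item For the lower bound, $p_\theta\ge p(t)$ and $1-Kp_\theta\ge (1-Kp(t))/2$, which yields the factor $\tfrac12$.
\item For the upper bound, $p_\theta\le p(t)(1+\cdot)$ and $1-Kp_\theta\le 1-Kp(t)$, and the multiplicative slack is absorbed into $\frac{D^2}{D^2-1}$.
\end{itemize}
So the first claim reduces to the second.

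\textbf{Step 3: bounding the increments.} By \eqref{eq:C_2_updating}, \eqref{eq:C_3_updating} and Lemma~\ref{lemma:delta_c_2c_3} (applicable because of Lemma~\ref{lemma:upperbound_C_1}, with $\alpha\ge0$ or else handled by the identity $C_1\le C_1^*$), both increments are nonnegative and bounded by $\eta\frac{MC_1(t)}{\sqrt D}$ times
\begin{align*}
\frac{(1-Kp)^2A(t)}{K^2p(DKp^2-2Kp+1)}\quad\text{and}\quad \frac{(1-Kp)^2A(t)}{(D-K)Kp(DKp^2-2Kp+1)}
\end{align*}
respectively, using $1-\alpha\le1$ at worst. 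If $\alpha$ can be negative, I would instead bound the bracket directly by the positive terms $F_4/p$ and $F_3/(Kp)$.

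Next use $A(t)\le Kp^2$ (valid in all three cases since $\arctan\le\pi/2$), $DKp^2-2Kp+1\ge (D-K)/D$ (its minimum over $p$), $1-Kp\le1$, and $C_1(t)\le\sqrt{(D+1)/K}$ from Lemma~\ref{lemma:para_pattern_v}. The sum then becomes
\begin{align*}
\eta\frac{M}{\sqrt D}\sqrt{\frac{D+1}{K}}\cdot\frac{Dp}{K(D-K)}\cdot\Big(1+\frac{K}{D-K}\Big)\lesssim \eta\frac{MD}{K^2(D-K)}\sqrt{\frac{D+1}{K}},
\end{align*}
using $p\le 1/K$. The extra $1/\sqrt D$ gives room for constants. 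Finally, $\eta\le O(M^{-1}D^{-5/2})$ gives the bound $\le O(D^{-2})\le D^{-2}$.

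\textbf{Main obstacle.} The delicate point is the sign and size of the bracketed terms in the $C_2,C_3$ updates when $C_1$ exceeds $C_1^*$. I would rely on Lemma~\ref{lemma:delta_c_2c_3} with the Lemma~\ref{lemma:upperbound_C_1} bound ($\alpha\le 4/5$) to keep $1-\alpha\in(0,1]$, and possibly $(1-\alpha)\le 1+|\alpha|$ otherwise.
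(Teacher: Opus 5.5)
Your treatment of the first claim is sound, and the route differs from the paper's. You write $\Delta p(t)=\frac{p_\theta(1-Kp_\theta)}{\sqrt D}\big(\Delta C_2(t)+\Delta C_3(t)\big)$ by the mean value theorem, then use that $p$ and $1-Kp$ move only by factors $e^{\pm\delta/\sqrt D}$ with $\delta\le D^{-2}$. The paper instead sandwiches $e^{-\delta/\sqrt D}$ between $1-\delta/\sqrt D$ and $1-\delta/(2\sqrt D)$ and subtracts the two reciprocals directly, which gives the prefactor $\frac{1}{\sqrt D-\delta}\le\frac{D^2}{\sqrt D(D^2-1)}$. Both reduce the first claim to the second in the same way.

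The main line of your Step~3, however, does not work.
\begin{itemize}
\item Lemma~\ref{lemma:upperbound_C_1} gives $\alpha\le 4/5$, i.e.\ $1-\alpha\ge 1/5$. That is a \emph{lower} bound, and the paper uses it only to get $c_2,c_3\ge 0$ in Lemma~\ref{lemma:para_pattern_KQ}.
\item The upper bound $1-\alpha\le 1$ needs $\alpha\ge0$, i.e.\ $C_1(t)\ge C_1^*(t)$. In the case you propose to ``handle by $C_1\le C_1^*$'' you have $\alpha\le 0$, so $1-\alpha\ge 1$ and the inequality goes the wrong way.
\item That case is not exceptional. It covers all of Phase~I (starting from $C_1(0)=0$) and every $t$ with $p(t)\le 1/(2\sqrt{\pi DK})$.
\item At $p=1/D$ the $\alpha$-parametrization is not even defined, since $Dp-1=0$. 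This occurs, for example, at $t=1$, where $p(1)=1/D$ but $C_1(1)>0$.
\item Controlling $1+|\alpha|$ instead would need a lower bound on $C_1/C_1^*$ from Lemmas~\ref{lemma:phaseI} and~\ref{lemma:lowerbound_C_1}. Their proofs use the present lemma, so you would be forced into a joint induction.
\end{itemize}

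The argument that works is your fallback, which is exactly the paper's proof: since $C_1(t)\ge 0$, discard the nonpositive parts of both brackets. You must actually check that the discarded parts are nonpositive; all three checks use $p\ge 1/D$:
\begin{itemize}
\item $F_5^{(t)}\ge 0$;
\item $F_1^{(t)}-\frac{(D-K)F_{2,2}^{(t)}}{1-Kp}=c_\sigma\frac{Kp(Dp-1)}{D-K}\ge 0$;
\item the $C_1$-term of the $C_2$ bracket is nonnegative. Computed from $I_5-I_6$ it is $C_1\big(F_{2,1}^{(t)}-pF_1^{(t)}\big)=c_\sigma C_1\frac{p(1-Kp)(Dp-1)}{D-K}\ge 0$.
\end{itemize}
The paper then bounds $\frac{F_4^{(t)}}{p}-F_3^{(t)}\le\frac{1-Kp}{Kp}F_3^{(t)}$ and $F_3^{(t)}\le Kp^2+\frac{1}{2\pi}\sqrt{\frac{K}{D-K}}\,p(1-Kp)$. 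Together with $C_1\le\sqrt{(D+1)/K}$ this gives $\Delta C_2\le\eta\frac{M}{K^2}\sqrt{\frac{D+1}{DK}}$ and $\Delta C_3\le\eta\frac{M}{K(D-K)}\sqrt{\frac{D+1}{DK}}$. Their sum is the claimed bound with a spare factor of $\sqrt D$.
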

\begin{proof}[Proof of Lemma~\ref{lemma:delta_s}]
By the definition of $p(t)$ in Lemma~\ref{lemma:para_pattern}, it can be derived that
\begin{align*}
    \Delta p(t) &= p(t+1) - p(t) = \frac{1}{K+(D-K)\exp\Big(-{\frac{C_2(t+1) + C_3(t+1)}{\sqrt{D}}}\Big)} - \frac{1}{K+(D-K)\exp\Big(-{\frac{C_2(t) + C_3(t)}{\sqrt{D}}}\Big)}\\
    &\leq \frac{1}{K+(D-K)\exp\Big(-{\frac{C_2(t) + C_3(t)}{\sqrt{D}}}\Big)\Big(1-\frac{\Delta C_2(t) +\Delta C_3(t)}{\sqrt{D}}\Big)} - \frac{1}{K+(D-K)\exp\Big(-{\frac{C_2(t) + C_3(t)}{\sqrt{D}}}\Big)}\\
    &= \frac{\big(\Delta C_2(t) +\Delta C_3(t)\big)(D-K)\exp\Big(-{\frac{C_2(t) + C_3(t)}{\sqrt{D}}}\Big)}{\sqrt{D}\Big[K+(D-K)\exp\Big(-{\frac{C_2(t) + C_3(t)}{\sqrt{D}}}\Big)\Big(1-\frac{\Delta C_2(t) +\Delta C_3(t)}{\sqrt{D}}\Big)\Big]\Big[K+(D-K)\exp\Big(-{\frac{C_2(t) + C_3(t)}{\sqrt{D}}}\Big)\Big]}\\ 
    &\leq \frac{\Delta C_2(t)+\Delta C_3(t)}{\sqrt{D}-\Delta C_2(t) - \Delta C_3(t)}\frac{(D-K)\exp\Big(-{\frac{C_2(t) + C_3(t)}{\sqrt{D}}}\Big)}{\Big[K+(D-K)\exp\Big(-{\frac{C_2(t) + C_3(t)}{\sqrt{D}}}\Big)\Big]^2}\\
    &=\frac{\Delta C_2(t)+\Delta C_3(t)}{\sqrt{D}-\Delta C_2(t) - \Delta C_3(t)}p(t)\big(1-Kp(t)\big)
\end{align*}
Additionally, applying the update rules for $C_2(t)$ and $C_3(t)$ derived in Lemma~\ref{lemma:para_pattern_KQ}, along with a similar calculation to the one used in the proof of Lemma~\ref{lemma:delta_c_2c_3}, we obtain that 
    \begin{align*}
    \Delta C_2(t)\leq & \eta \frac{MC_1(t)}{K\sqrt{D}}\bigg(\frac{F_4^{(t)}}{p(t)}-F_3^{(t)}\bigg)\\
    =&\eta \frac{MC_1(t)}{K\sqrt{D}}\Bigg(\bigg(\frac{1}{Kp(t)}-1\bigg)A(t) + \bigg(\frac{(D-K)p(t)}{DKp(t)^2-2Kp(t)+1}-1\bigg)B(t)\Bigg)\\
    =&\eta \frac{MC_1(t)}{K\sqrt{D}}\bigg(\frac{1-Kp(t)}{Kp(t)}A(t) + \frac{\big(1-Kp(t)\big)\big(Dp(t)-1\big)}{Kp(t)\big(Dp(t)-1\big) + 1-Kp(t)}B(t)\bigg)\\
    \leq& \eta \frac{MC_1(t)}{K\sqrt{D}}\frac{1-Kp(t)}{Kp(t)}F_3^{(t)} \leq \eta \frac{MC_1(t)}{K\sqrt{D}}\frac{1-Kp(t)}{Kp(t)}\bigg(Kp(t)^2+\frac{1}{2\pi}\sqrt{\frac{K}{D\!-\! K}}p(t)\big(1-Kp(t)\big)\bigg) \\
    \leq &\eta \frac{M}{ K^2}\sqrt{\frac{D+1}{DK}}.
    \end{align*}
    Here, the penultimate inequality holds as $C_1(t)\leq \sqrt{\frac{D+1}{K}}$ and $\frac{1}{D}\leq p(t)\leq \frac{1}{K}$. Similarly, we can also derive that 
    \begin{align*}
        \Delta C_3(t)\leq \eta \frac{MC_1(t)\big(1-Kp(t)\big)}{\sqrt{D}(D-K)}\frac{F_3^{(t)}}{Kp(t)}\leq \eta \frac{M}{K(D-K)}\sqrt{\frac{D+1}{DK}}.
    \end{align*}
    Combining these results, we have 
    \begin{align*}
        \Delta C_2(t) +\Delta C_3(t) \leq \eta\frac{MD}{K^2(D-K)}\sqrt{\frac{D+1}{K}}\leq \frac{1}{D^2},
    \end{align*}
    where the last inequality holds by the condition that $\eta\leq \cO(M^{-1}D^{-5/2})$ in Theorem~\ref{thm:main_result}.
    Replacing these results, we finally prove that 
    \begin{align*}
        \Delta p(t) \leq \frac{D^2p(t)\big(1-Kp(t)\big)}{\sqrt{D}(D^2-1)}\big(\Delta C_2(t)+\Delta C_3(t)\big).
    \end{align*} 
    On the other hand, since $\Delta C_2(t) +\Delta C_3(t)$ is sufficiently small, we can also have 
    \begin{align*}
        \Delta p(t)
    &\geq \frac{1}{K+(D-K)\exp\Big(-{\frac{C_2(t) + C_3(t)}{\sqrt{D}}}\Big)\Big(1-\frac{\Delta C_2(t) +\Delta C_3(t)}{2\sqrt{D}}\Big)} \\
    &\quad- \frac{1}{K+(D-K)\exp\Big(-{\frac{C_2(t) + C_3(t)}{\sqrt{D}}}\Big)}\\
    &\geq \frac{\Delta C_2(t)+\Delta C_3(t)}{2\sqrt{D}}\frac{(D-K)\exp\Big(-{\frac{C_2(t) + C_3(t)}{\sqrt{D}}}\Big)}{\Big[K+(D-K)\exp\Big(-{\frac{C_2(t) + C_3(t)}{\sqrt{D}}}\Big)\Big]^2}\\
    &=\frac{p(t)\big(1-Kp(t)\big)}{2\sqrt{D}}\big(\Delta C_2(t)+\Delta C_3(t)\big).
    \end{align*}
    This completes the proof. 
\end{proof}

\begin{lemma}\label{lemma:monotonity_C_1*}
    For $C_1^*(t)$ defined in Lemma~\ref{lemma:para_pattern}, it hols that $C_1^*(t)$ is monotonically increasing w.r.t $t$ when $p(t)\leq \frac{1}{2\sqrt{\pi DK}}$.
\end{lemma}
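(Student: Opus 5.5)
Since $p(t)$ is monotonically increasing in $t$ (Lemma~\ref{lemma:para_pattern_KQ}), it suffices to regard $C_1^*$ as a function of the scalar $p\in[\frac{1}{D},\frac{1}{K}]$ through $C_1^*(p)=\frac{F_3(p)}{KpF_1(p)}$, and to show $\frac{d}{dp}C_1^*(p)\ge 0$ on $[\frac1D,\frac{1}{2\sqrt{\pi DK}}]$. I will use the closed forms recorded in \eqref{eq:F_def_idmap}, \eqref{eq:F_def_ReLU} and \eqref{eq:F_def_Leaky_ReLU}, namely $F_3=A+B$ and $F_1=c_\sigma\frac{DKp^2-2Kp+1}{D-K}$. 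Writing $Q(p)=DKp^2-2Kp+1$, this gives
\begin{align*}
C_1^*(p)=\frac{(D-K)\big(A(p)+B(p)\big)}{c_\sigma Kp\,Q(p)}.
\end{align*}
I will treat the three activations in turn.

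For the identity map, $B=0$, $A=Kp^2$ and $c_\sigma=1$, so $C_1^*=\frac{(D-K)p}{Q(p)}$. Its derivative has the sign of $Q-pQ'=1-DKp^2$, which is nonnegative whenever $p\le 1/\sqrt{DK}$. The threshold $\frac{1}{2\sqrt{\pi DK}}$ is smaller, so this case is immediate.

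For ReLU (and, identically up to constants, Leaky ReLU), I split
\begin{align*}
C_1^*=\frac{D-K}{c_\sigma K}\bigg(\frac{A}{pQ}+\frac{B}{pQ}\bigg).
\end{align*}
Since $A/p=p\,g(p)$ with $g(p)=\frac{1}{4}+\frac{1}{2\pi}\arctan\big(\frac{\sqrt{K(D-K)}p}{1-Kp}\big)$ nondecreasing, the first part behaves like $\frac{p\,g(p)}{Q(p)}$, which is increasing by the identity-case computation combined with the monotonicity of $g$. The second part is
\begin{align*}
\frac{B}{pQ}=\frac{1}{2\pi}\sqrt{\frac{K}{D-K}}\,\frac{1-Kp}{Q(p)},
\end{align*}
which is \emph{decreasing} in $p$, as already noted in the proof of Lemma~\ref{lemma:ratio_F_1_F_3}. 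The heart of the argument is therefore to show that the increase of the $A$-part dominates the decrease of the $B$-part.

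Concretely, I will differentiate both parts and compare. The derivative of the $A$-part is at least $g(p)\frac{1-DKp^2}{Q^2}\ge\frac{1}{4}\cdot\frac{1-DKp^2}{Q^2}$. The derivative of the $B$-part equals $-\frac{1}{2\pi}\sqrt{\frac{K}{D-K}}\cdot\frac{KQ+(1-Kp)Q'}{Q^2}$. Using $Q'=2K(Dp-1)$, $Q\le 1$ and $p\le 1/\sqrt{DK}$, its magnitude is bounded by roughly $\frac{1}{2\pi}\sqrt{\frac{K}{D-K}}\cdot\frac{K+2KDp}{Q^2}$. On the range $p\le\frac{1}{2\sqrt{\pi DK}}$ the dominant term is $2KDp\le\sqrt{DK/\pi}$, so the $B$-derivative is at most about $\frac{1}{2\pi}\cdot\frac{K}{\sqrt{\pi(D-K)/D}}\cdot\frac{1}{Q^2}$. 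Meanwhile $1-DKp^2\ge 1-\frac{1}{4\pi}$.

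This step is where the proof is most delicate, and I expect it to be the main obstacle. A crude comparison leaves a factor $K$ that is not obviously absorbed. If it is not, I would first retain the growth of the $\arctan$ term in $g'$, which contributes $\frac{1}{2\pi}\cdot\frac{\sqrt{K(D-K)}}{(1-Kp)^2}\cdot\frac{p}{Q}$ and exactly offsets terms of order $K\sqrt{K/D}$ against $Dp$. Failing that, I would reorganize the whole derivative numerator as an explicit polynomial-plus-$\arctan$ expression in $u=\sqrt{DK}\,p\in[\sqrt{K/D},\frac{1}{2\sqrt\pi}]$. I would then verify positivity using $\arctan x\ge x-x^3/3$ and $D\ge\poly(K)$; the constant $\frac{1}{2\sqrt{\pi}}$ strongly suggests it was chosen precisely so that such a comparison closes.

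The Leaky ReLU case follows from the ReLU computation with $A,B$ scaled by $(1\pm\kappa)^2$. For $\kappa\in[0,1)$ the ratio $(1-\kappa)^2/(1+\kappa)^2\le 1$ only helps, so the same inequality holds. Finally, monotonicity in $p$ plus the monotonicity of $p(t)$ yields the claimed monotonicity in $t$.
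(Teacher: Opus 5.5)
Your split into an $A$-part and a $B$-part is a workable route. As written, however, the decisive comparison is left unproved, and the obstacle you flag comes from an algebra slip, not from the problem itself.

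\textbf{The slip.} Since $A=Kp^2g(p)$, one has $A/(pQ)=Kp\,g(p)/Q$, not $p\,g(p)/Q$. So the derivative of the $A$-part is
\[
\frac{K}{Q^2}\big[g(1-DKp^2)+p\,g'Q\big]\;\ge\;\frac{K}{4}\cdot\frac{1-DKp^2}{Q^2}.
\]
This carries the same single factor $K$ as the derivative of the $B$-part. The ``unabsorbed $K$'' therefore cancels, and the crude comparison closes at once:
\[
\frac{d}{dp}\,\frac{A+B}{pQ}\;\ge\;\frac{K}{Q^2}\Big[\frac{1-DKp^2}{4}-\frac{1}{2\pi}\sqrt{\frac{K}{D-K}}\,\big(Q+2(Dp-1)(1-Kp)\big)\Big].
\]

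\textbf{Closing the estimate.} For $p\le\frac{1}{2\sqrt{\pi DK}}$ you have:
\begin{itemize}
\item $DKp^2\le\frac{1}{4\pi}$;
\item $Q\le1+\frac{1}{4\pi}$ (your $Q\le1$ fails once $p>2/D$, but this is harmless);
\item $2Dp\sqrt{K/(D-K)}\le\pi^{-1/2}\sqrt{D/(D-K)}$.
\end{itemize}
So the bracket is at least
\[
\frac14\Big(1-\frac{1}{4\pi}\Big)-\frac{1}{2\pi^{3/2}}\sqrt{\frac{D}{D-K}}-O\big(\sqrt{K/D}\big)\;\approx\;0.23-0.09\;>\;0
\]
once $D$ is a large multiple of $K$. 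No use of $g'$, no $\arctan$ expansion and no substitution $u=\sqrt{DK}\,p$ is needed. Without this correction, though, your proposal does not establish the inequality.

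The Leaky ReLU case then goes through as you say. The leading $A$-coefficient becomes $(1+\kappa)^2/4$, and the $B$-part picks up $(1-\kappa)^2\le(1+\kappa)^2$.

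\textbf{Comparison with the paper.} The paper does not split $C_1^*$. It applies the quotient rule to the whole expression $N(p)/(2\pi Q)$, where
\[
N(p)=\pi(D-K)p+2(D-K)p\arctan(\cdot)+2\sqrt{(D-K)/K}\,(1-Kp).
\]
It uses $N'\ge(\pi-1)(D-K)$ and the crude bound $\arctan\le\pi/2$ inside $N$. This is much lossier than your $g\ge\frac14$. It only just stays positive at $\sqrt{DK}\,p=\frac{1}{2\sqrt\pi}$, which is where the threshold comes from.

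The paper's last displayed simplification, with constant $3\pi$, actually overstates the bound near $p=1/D$. There the preceding expression is only about $2\pi(\pi-1)(D-K)/(4\pi^2Q^2)$. Positivity of that preceding expression does still hold on the stated range.

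Your corrected split has considerable slack at the threshold. The identity-map case is the same in both arguments.
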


\begin{proof}[Proof of Lemma~\ref{lemma:monotonity_C_1*}]
    As Lemma~\ref{lemma:para_pattern_KQ} demonstrates that $p(t)$ is always monotonically increasing. Consequently, it suffices to show that $C_1^*(t)$ is monotonically increasing w.r.t. $p(t)$ when $p(t)\leq \frac{1}{2\sqrt{\pi DK}}$. In the following, we discuss the three scenarios where $\sigma(\cdot)$ is the identity map, ReLU activation function, and Leaky ReLU activation function, respectively. When $\sigma(\cdot)$ is the identity map, 
    \begin{align*}
        C_1^*(t) = \frac{(D-K)p(t)}{DKp(t)^2 -2Kp(t) + 1} =\frac{D-K}{DKp(t)+\frac{1}{p(t)} -2K}. 
    \end{align*}
   It is straightforward that $C_1^*(t)$ is monotonically increasing when $p(t)\leq \frac{1}{\sqrt{DK}}$, as the denominator is decreasing. When $\sigma(\cdot)$ is the ReLU activation function, we have
   \begin{align*}
        C_1^*(t)=\frac{\pi (D-K)p(t) + 2(D-K)p(t)\arctan\Big(\sqrt{K(D-K)}\frac{p(t)}{1-Kp(t)}\Big)+2(D-K)\frac{1-Kp(t)}{\sqrt{K(D-K)}}}{2\pi(DKp(t)^2-2Kp(t)+1)}.
    \end{align*}
    By applying basic calculus, we can derive that
    \begin{align*}
        \frac{\mathrm{d}C_1^*(t)}{\mathrm{d}p(t)} \geq& \frac{2\pi(\pi-1) (D-K) \big(DKp(t)^2-2Kp(t)+1\big)}{4\pi^2\big(DKp(t)^2-2Kp(t)+1\big)^2}\\
        &-\frac{2(D-K)\Big(\pi p(t)+\frac{1}{\sqrt{K(D-K)}}\Big)4\pi K\big(Dp(t)-1\big)}{4\pi^2\big(DKp(t)^2-2Kp(t)+1\big)^2}\\
        \geq& \frac{3\pi(D-K)\big(1- 4\pi DKp(t)^2\big)}{4\pi^2\big(DKp(t)^2-2Kp(t)+1\big)^2}+\frac{3\pi(D-K)\big(1- 2\sqrt{\pi DK} p(t)\big)}{4\pi^2\big(DKp(t)^2-2Kp(t)+1\big)^2},
    \end{align*}
    which is positive when $p(t)\leq \frac{1}{2\sqrt{\pi DK}}$. Therefore, we can conclude that when $p(t)\leq \frac{1}{2\sqrt{\pi DK}}$, $C_1^*(t)$ is monotonically increasing w.r.t. $p(t)$. Similarly, when $\sigma(\cdot)$ is the Leaky ReLU activation function, we also have, 
    \begin{align*}
        C_1^*(t)=&\frac{(1\!+\!\kappa)^2\pi (D\!-\!K)p(t) + 2(1\!-\!\kappa)^2(D\!-\!K)p(t)\arctan\Big(\sqrt{K(D\!-\!K)}\frac{p(t)}{1\!-\!Kp(t)}\Big)}{2(1+\kappa^2)\pi(DKp(t)^2-2Kp(t)+1)}\\
        &+\frac{(1\!-\!\kappa)^2(D\!-\!K)(1\!-\!Kp(t))}{\sqrt{K(D\!-\!K)}(1+\kappa^2)\pi(DKp(t)^2-2Kp(t)+1)},
    \end{align*}
    and 
    \begin{align*}
        \frac{\mathrm{d}C_1^*(t)}{\mathrm{d}p(t)} \geq& \frac{2\pi(\pi-1) (D-K) \big(DKp(t)^2-2Kp(t)+1\big)}{4\pi^2\big(DKp(t)^2-2Kp(t)+1\big)^2}\\
        &=\frac{2(D-K)\Big(\pi p(t)+\frac{1}{\sqrt{K(D-K)}}\Big)K\big(Dp(t)-1\big)}{\pi\big(DKp(t)^2-2Kp(t)+1\big)^2}\\
        \geq& \frac{3\pi(D-K)\big(1- 4\pi DKp(t)^2\big)}{4\pi^2\big(DKp(t)^2-2Kp(t)+1\big)^2}+\frac{3\pi(D-K)\big(1- 2\sqrt{\pi DK} p(t)\big)}{4\pi^2\big(DKp(t)^2-2Kp(t)+1\big)^2},
    \end{align*}
    which proves that $C_1^*(t)$ is monotonically increasing w.r.t. $p(t)$ when $p(t)\leq \frac{1}{2\sqrt{\pi DK}}$.
\end{proof}

\begin{lemma}\label{lemma:C_1_star_t+1_lower}
    For $C_1^*(t)$ defined in Lemma~\ref{lemma:para_pattern}, it holds that 
    \begin{align}\label{eq:C_1_star_t+1_lower}
        C_1^*(t+1) 
        &\geq C_1^*(t) - \frac{3DKp(t)\Delta p(t)}{DKp(t)^2-2Kp(t)+1}C_1^*(t),
    \end{align}
\end{lemma}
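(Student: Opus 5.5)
We regard $C_1^*$ as a function of $p$ alone, $C_1^*(t)=h(p(t))$ with $h(p)=\frac{F_3(p)}{Kp\,F_1(p)}$. Using the explicit values in \eqref{eq:F_def_idmap}, \eqref{eq:F_def_ReLU} and \eqref{eq:F_def_Leaky_ReLU}, we can write
\begin{align*}
h(p)=\frac{(D-K)\,F_3(p)}{c_\sigma Kp\,Q(p)},\qquad Q(p)=DKp^2-2Kp+1,
\end{align*}
where $c_\sigma$ is the constant from Lemma~\ref{lemma:para_pattern_KQ}. This holds because $F_1^{(t)}=c_\sigma Q(p(t))/(D-K)$ in all three cases. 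Since $p(t+1)=p(t)+\Delta p(t)$ with $\Delta p(t)\ge 0$ by Lemma~\ref{lemma:para_pattern_KQ}, it suffices to show
\begin{align*}
\log h(p+\Delta p)-\log h(p)\ \ge\ -\frac{3DKp\,\Delta p}{Q(p)}.
\end{align*}
Indeed, this gives $h(p+\Delta p)\ge h(p)\,e^{-3DKp\Delta p/Q(p)}\ge h(p)\bigl(1-\frac{3DKp\Delta p}{Q(p)}\bigr)$, which is \eqref{eq:C_1_star_t+1_lower}.

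The key step is a lower bound on the logarithmic derivative of $h$ on $[\tfrac1D,\tfrac1K]$. We have
\begin{align*}
\frac{h'(p)}{h(p)}=\frac{F_3'(p)}{F_3(p)}-\frac1p-\frac{Q'(p)}{Q(p)},\qquad Q'(p)=2K(Dp-1)\le 2DKp.
\end{align*}
The last term is therefore at least $-2DKp/Q(p)$. For the first two terms we show that $p\mapsto F_3(p)/p$ is nondecreasing, or at least that its log-derivative is not too negative. In the identity case $F_3/p=Kp$ is increasing. In the ReLU and Leaky ReLU cases,
\begin{align*}
\frac{F_3(p)}{p}=\frac{(1+\kappa)^2Kp}{4}+\frac{(1-\kappa)^2Kp}{2\pi}\arctan\!\Bigl(\tfrac{\sqrt{K(D-K)}p}{1-Kp}\Bigr)+\frac{(1-\kappa)^2}{2\pi}\sqrt{\tfrac{K}{D-K}}(1-Kp).
\end{align*}
The first two summands are increasing in $p$. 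The derivative of the third is $-\frac{(1-\kappa)^2K}{2\pi}\sqrt{\frac{K}{D-K}}$. This is more than compensated by the derivative of the arctan summand, whose inner derivative is $\frac{\sqrt{K(D-K)}}{(1-Kp)^2\,(1+\cdot^2)}$. Combining the two gives a term proportional to $K\sqrt{K(D-K)}\,p/(Q(p))$ minus $K\sqrt{K/(D-K)}$. If that comparison fails, the deficit divided by $F_3/p\gtrsim \sqrt{K/(D-K)}(1-Kp)$ is at most of order $K/(1-Kp)$.

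We then need this to be absorbed into $-DKp/Q(p)$, giving the total constant $3$. This is the main obstacle. We would handle it by checking directly that, after simplifying $1+\bigl(\frac{\sqrt{K(D-K)}p}{1-Kp}\bigr)^2=\frac{(D-K)Q(p)}{(D-K)(1-Kp)^2}$, the arctan derivative term equals $\frac{(1-\kappa)^2K}{2\pi}\sqrt{\frac{K}{D-K}}\cdot\frac{(D-K)p}{Q(p)}$. The net derivative of $F_3/p$ is then at least
\begin{align*}
\frac{(1-\kappa)^2K}{2\pi}\sqrt{\tfrac{K}{D-K}}\Bigl(\tfrac{(D-K)p}{Q(p)}-1\Bigr)=\frac{(1-\kappa)^2K}{2\pi}\sqrt{\tfrac{K}{D-K}}\cdot\frac{(1-Kp)(Dp-1)}{Q(p)}\cdot\frac{1}{1},
\end{align*}
which is nonnegative for $p\ge 1/D$. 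Hence $F_3/p$ is nondecreasing in every case, and $h'/h\ge -2DKp/Q(p)$.

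Finally we integrate over $[p,p+\Delta p]$. Here we need $\sup_{s\in[p,p+\Delta p]} \frac{s}{Q(s)}\le \frac32\cdot\frac{p}{Q(p)}$. This follows from the smallness of $\Delta p$ given by Lemma~\ref{lemma:delta_s}: $\Delta p\le \frac{D^2p(1-Kp)}{\sqrt D(D^2-1)}(\Delta C_2+\Delta C_3)\le p/D^{2}$, while $Q(s)\ge Q(p)-2DKp\,\Delta p\ge Q(p)(1-o(1))$, using $Q(p)\ge 1-K/D$ bounded below. This yields the bound with constant $3$ and completes the plan.
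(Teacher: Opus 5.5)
Your proposal is correct and is essentially the paper's argument in infinitesimal form. Both rest on two facts: the numerator of $C_1^*$, which is proportional to $F_3/p$, is nondecreasing in $p$; and $Q(p)=DKp^2-2Kp+1$ grows by at most about $2DKp\,\Delta p$ over one step, with $\Delta p$ small by Lemma~\ref{lemma:delta_s}. The paper does this in one discrete step via $\frac{c}{a+b}\ge\frac{c}{a}-\frac{bc}{a^2}$ and $\Delta p\le p$, whereas you integrate $(\log h)'$ and use $e^{-x}\ge 1-x$.

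Your explicit check that the arctan slope offsets the negative slope of the $(1-Kp)$ term, leaving a nonnegative multiple of $\frac{(Dp-1)(1-Kp)}{Q(p)}$, supplies the monotonicity of the numerator that the paper only asserts for ReLU and Leaky ReLU. Also, since $Q$ is increasing on $[\frac1D,\frac1K]$, your $\sup_s s/Q(s)$ step needs only $\Delta p\le p/2$.
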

\begin{proof}[Proof of Lemma~\ref{lemma:C_1_star_t+1_lower}]
    We prove~\eqref{eq:C_1_star_t+1_lower} for $\sigma(\cdot)$ is identity map, ReLU activation function, and Leaky ReLU activation function, respectively. When $\sigma(\cdot)$ is identity map, 
    \begin{align*}
        C_1^*(t+1) &= \frac{(D-K)p(t+1)}{DKp(t+1)^2-2Kp(t+1)+1}\geq \frac{(D-K)p(t)}{DKp(t)^2-2Kp(t)+1 +  DK\Delta p(t)(2p(t)+\Delta p(t))}\notag\\
        &\geq C_1^*(t) - \frac{DK\Delta p(t)(2p(t)+\Delta p(t))}{DKp(t)^2-2Kp(t)+1}C_1^*(t) \geq C_1^*(t) - \frac{3DKp(t)\Delta p(t)}{DKp(t)^2-2Kp(t)+1}C_1^*(t),
    \end{align*}
    where the second inequality holds by Lemma~\ref{lemma:arith_ine_I}, and the last inequality holds by $\Delta p(t)\leq p(t)$ implied by Lemma~\ref{lemma:delta_s}. When $\sigma(\cdot)$ is ReLU activation function, 
    \begin{align*}
        C_1^*(t+1) &= \frac{\pi (D-K)p(t+1) + 2(D-K)p(t+1)\arctan\big(\sqrt{K(D-K)}\frac{p(t+1)}{1-Kp(t+1)}\big)}{2\pi(DKp(t+1)^2-2Kp(t+1)+1)}\notag \\
        & \quad + \frac{(D-K)(1-Kp(t+1))}{\sqrt{K(D-K)}\pi(DKp(t+1)^2-2Kp(t+1)+1)}\notag\\
        &\geq \frac{\pi (D-K)p(t) + 2(D-K)p(t)\arctan\big(\sqrt{K(D-K)}\frac{p(t)}{1-Kp(t)}\big)+2(D-K)\frac{1-Kp(t)}{\sqrt{K(D-K)}}}{2\pi(DKp(t)^2-2Kp(t)+1) + 2\pi DK\Delta p(t)(2p(t)+\Delta p(t))}\notag\\
        &\geq C_1^*(t) - \frac{3DKp(t)\Delta p(t)}{DKp(t)^2-2Kp(t)+1}C_1^*(t),
    \end{align*}
    where the first inequality holds as the numerator is a monotonically increasing function w.r.t. $p(t)$. Furthermore, the second inequality holds by Lemma~\ref{lemma:arith_ine_I}, and $\Delta p(t)\leq p(t)$ implied by Lemma~\ref{lemma:delta_s}. Similarly, when $\sigma(\cdot)$ is the Leaky ReLU activation function, 
    \begin{align*}
        C_1^*(t+1) &= \frac{\frac{(1\!+\!\kappa)^2\pi (D\!-\!K)}{(1\!-\!\kappa)^2}p(t\!+\!1) + 2(D\!-\!K)p(t\!+\!1)\arctan\big(\sqrt{K(D\!-\!K)}\frac{p(t+1)}{1\!-\!Kp(t+1)}\big)+2(D\!-\!K)\frac{1\!-\!Kp(t+1)}{\sqrt{K(D\!-\!K)}}}{\frac{2\pi(1+\kappa^2)}{(1-\kappa)^2}(DKp(t+1)^2-2Kp(t+1)+1)}\notag \\
        &\geq \frac{\frac{(1\!+\!\kappa)^2\pi (D\!-\!K)}{(1\!-\!\kappa)^2}p(t) + 2(D-K)p(t)\arctan\big(\sqrt{K(D-K)}\frac{p(t)}{1-Kp(t)}\big)+2(D-K)\frac{1-Kp(t)}{\sqrt{K(D-K)}}}{\frac{2\pi(1+\kappa^2)}{(1-\kappa)^2}(DKp(t)^2-2Kp(t)+1) + \frac{2\pi(1+\kappa^2)}{(1-\kappa)^2} DK\Delta p(t)(2p(t)+\Delta p(t))}\notag\\
        &\geq C_1^*(t) - \frac{3DKp(t)\Delta p(t)}{DKp(t)^2-2Kp(t)+1}C_1^*(t).
    \end{align*}
    This completes the proof 
\end{proof}

\begin{lemma}\label{lemma:A/(A+B)_t+1_lower}
    For $A(t)$, $B(t)$ defined in Lemma~\ref{lemma:upperbound_C_1}, it holds that 
    \begin{align}\label{eq:A/(A+B)_t+1_lower}
        &\frac{A(t+1)}{A(t+1) + B(t+1)}\frac{1-Kp(t+1)}{Kp(t+1)\big(Dp(t+1)-1\big)} \notag\\
        \geq &\frac{A(t)}{A(t) + B(t)}\frac{1-Kp(t)}{Kp(t)\big(Dp(t)-1\big)} - \frac{A(t)}{A(t) + B(t)}\frac{2DKp(t)\Delta p(t)}{K^2p(t)^2\big(Dp(t)-1\big)^2}.
    \end{align}
\end{lemma}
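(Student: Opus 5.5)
We split the left-hand side into the two factors $R(t)=\frac{A(t)}{A(t)+B(t)}$ and $Q(t)=\frac{1-Kp(t)}{Kp(t)(Dp(t)-1)}$. We show that $R$ is nondecreasing in $p$ and that $Q$ decreases at a controlled rate. Since $p(t)$ is monotonically increasing by Lemma~\ref{lemma:para_pattern_KQ}, we have $\Delta p(t)\ge 0$. Then
\[
R(t+1)Q(t+1)\ge R(t)Q(t+1)=R(t)Q(t)-R(t)\big(Q(t)-Q(t+1)\big),
\]
and it remains to bound $Q(t)-Q(t+1)\le \frac{2DKp(t)\Delta p(t)}{K^2p(t)^2(Dp(t)-1)^2}$.

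\textbf{Step 1: monotonicity of $R$.} For the identity map $B\equiv 0$, so $R\equiv 1$ and there is nothing to prove. For ReLU, write $R=\big(1+B/A\big)^{-1}$ and compute
\[
\frac{B}{A}=\frac{\frac{1}{2\pi}\sqrt{\frac{K}{D-K}}\,(1-Kp)}{Kp\big(\frac14+\frac{1}{2\pi}\arctan\big(\frac{\sqrt{K(D-K)}p}{1-Kp}\big)\big)}.
\]
As $p$ increases, the numerator decreases, since $1-Kp$ decreases. The denominator increases, since both $p$ and the $\arctan$ argument increase. Hence $B/A$ is decreasing in $p$ and $R$ is increasing. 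The Leaky ReLU case is identical, with the positive constants $(1\pm\kappa)^2$ inserted. This gives $R(t+1)\ge R(t)$.

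\textbf{Step 2: rate for $Q$.} Note $Q(p)=\frac{1-Kp}{Kp(Dp-1)}$ is decreasing on $(1/D,1/K]$. Its derivative is
\[
Q'(p)=\frac{-K\cdot Kp(Dp-1)-(1-Kp)\,K(2Dp-1)}{K^2p^2(Dp-1)^2}.
\]
The numerator has absolute value $K\big(Kp(Dp-1)+(1-Kp)(2Dp-1)\big)\le K\big(DKp^2+2Dp\big)$, and a cruder form of this bound is $\le 2DKp$ after using $Kp\le 1$. By the mean value theorem, $Q(t)-Q(t+1)=|Q'(\tilde p)|\,\Delta p(t)$ for some $\tilde p\in[p(t),p(t+1)]$. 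To evaluate at $p(t)$ rather than $\tilde p$, we use Lemma~\ref{lemma:delta_s}: it gives $\Delta p(t)\le \frac{D^2p(1-Kp)}{\sqrt D(D^2-1)}\cdot D^{-2}$, which is tiny relative to $p(t)$ and to $Dp(t)-1$. Thus the ratio $\frac{|Q'(\tilde p)|}{|Q'(p(t))|}$ is $1+o(1)$, and that factor is absorbed into the slack in the constant.

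\textbf{Main obstacle.} The delicate point is Step 2 near $p(t)=1/D$, that is, at $t=0$, where $Dp-1$ vanishes and $Q$ blows up. There the bound is vacuous or needs separate treatment. We would handle this by noting that the inequality's right-hand side carries the same singular factor $(Dp-1)^{-2}$. Alternatively, we compare directly using the exact difference
\[
Q(t)-Q(t+1)=\frac{\Delta p\,\big[K(Dp p'-\ldots)\big]}{Kpp'(Dp-1)(Dp'-1)}
\]
with $p'=p(t+1)\ge p(t)$. Since $Dp'-1\ge Dp-1$ and $p'\ge p$, replacing $p'$ by $p$ in the denominator only enlarges the bound. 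Only the numerator then needs the crude bound $\le 2DKp$. This exact-difference route avoids the mean value theorem entirely, so it is the approach we would try first.
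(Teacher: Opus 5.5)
Your plan has the same structure as the paper's: $B/A$ is nonincreasing in $p$, so $R(t+1)Q(t+1)\ge R(t)Q(t+1)$ (this also uses $Q(t+1)\ge 0$), and then you control the decrease of $Q$ to first order in $\Delta p(t)$. Step 1 is exactly the fact the paper invokes. The mean-value version of Step 2, however, contains a step that fails as written. This matters because the lemma has the explicit constant $2$, so nothing can be ``absorbed into the slack'' unless you exhibit the slack.

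\textbf{The numerator bound.} The numerator of $Q'(p)$ simplifies exactly to $-K(2Dp-1-DKp^2)$. Your bound $K(DKp^2+2Dp)=DK^2p^2+2DKp$ gives only $\le 3DKp$ under $Kp\le 1$, not $\le 2DKp$. With that bound there is no slack at all.

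\textbf{The ratio claim.} The claim that $|Q'(\tilde p)|/|Q'(p(t))|=1+o(1)$ is not uniform. The relevant comparison is $D\,\Delta p(t)$ against $Dp(t)-1$, and this ratio is not small when $p(t)$ is near $1/D$; for example, $Dp(2)-1=D\,\Delta p(1)$. You only need the one-sided bound, and it is immediate. The denominator $K^2\tilde p^2(D\tilde p-1)^2$ is increasing in $\tilde p$, so you may evaluate it at $p(t)$. The exact numerator satisfies $K(2D\tilde p-1-DK\tilde p^2)\le K\big(2Dp(t)+2D\Delta p(t)-1\big)\le 2DKp(t)$ as soon as $2D\Delta p(t)\le 1$, which Lemma~\ref{lemma:delta_s} gives with plenty of room.

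\textbf{The exact-difference route.} The route you say you would try first is correct once the numerator is actually computed. With $p=p(t)$ and $p'=p(t+1)$,
\[
Q(p)-Q(p')=\frac{\Delta p\,\big(D(p+p')-1-DKpp'\big)}{Kpp'(Dp-1)(Dp'-1)}\le\frac{\Delta p\,\big(D(p+p')-1-DKpp'\big)}{Kp^2(Dp-1)^2}.
\]
Here the bracket is positive and $p'\ge p$. The inequality $D(p+p')-1-DKpp'\le 2Dp$ is equivalent to $D\,\Delta p\le 1+DKpp'$, which follows from $\Delta p(t)\le 1/D$.

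\textbf{Comparison with the paper.} The paper uses the same condition but organizes the computation differently:
\begin{itemize}
\item it writes $Q=\frac{1}{Kp(Dp-1)}-\frac{1}{Dp-1}$ and discards the favorable change in the second term;
\item it applies Lemma~\ref{lemma:arith_ine_I} to the first term, using the identity $Kp'(Dp'-1)=Kp(Dp-1)+\Delta p\,(2DKp+DK\Delta p-K)$;
\item it drops $DK\Delta p-K\le 0$.
\end{itemize}

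\textbf{The point $p(t)=1/D$.} You do not need to ``handle'' this case. Both sides are undefined there, and this occurs at $t=0$ and also at $t=1$, since $C_1(0)=0$ forces $\Delta p(0)=0$. The lemma should be read for $p(t)>1/D$, and it is only ever invoked when $p(t)\ge \frac{1}{2\sqrt{\pi DK}}$.
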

\begin{proof}[Proof of Lemma~\ref{lemma:A/(A+B)_t+1_lower}]
Notice that $\frac{B(t)}{A(t)}$ is a non-increasing function w.r.t. $p(t)$. Therefore, we can derive that
    \begin{align*}
        &\frac{A(t+1)}{A(t+1) + B(t+1)}\frac{1-Kp(t+1)}{Kp(t+1)\big(Dp(t+1)-1\big)} \notag\\
        =& \frac{A(t+1)}{A(t+1) + B(t+1)}\bigg(\frac{1}{Kp(t+1)\big(Dp(t+1)-1\big)} - \frac{1}{Dp(t+1)-1}\bigg)\notag\\
        \geq &\frac{A(t)}{A(t) + B(t)}\bigg(\frac{1}{Kp(t)\big(Dp(t)-1\big) + \Delta p(t)\big(2DKp(t) +DK\Delta p(t) -K\big)} - \frac{1}{Dp(t)-1}\bigg)\notag\\
        \geq &\frac{A(t)}{A(t) + B(t)}\frac{1-Kp(t)}{Kp(t)\big(Dp(t)-1\big)} - \frac{A(t)}{A(t) + B(t)}\frac{\Delta p(t)\big(2DKp(t) +DK\Delta p(t) -K\big)}{K^2p(t)^2\big(Dp(t)-1\big)^2}\\
        \geq &\frac{A(t)}{A(t) + B(t)}\frac{1-Kp(t)}{Kp(t)\big(Dp(t)-1\big)} - \frac{A(t)}{A(t) + B(t)}\frac{2DKp(t)\Delta p(t)}{K^2p(t)^2\big(Dp(t)-1\big)^2},
    \end{align*}
    where the last inequality holds $\delta p(t)\leq \frac{1}{D}$ implied by Lemma~\ref{lemma:delta_s}. This completes the proof. 
\end{proof}
Now, we are ready to prove Lemma~\ref{lemma:upperbound_C_1}.
\begin{proof}[Proof of Lemma~\ref{lemma:upperbound_C_1}]
As Lemma~\ref{lemma:monotonity_C_1*} guarantees that $C_1^*(t)$ is monotonically increasing when $p(t)\leq \frac{1}{2\sqrt{\pi DK}}$. Consequently, when $p(t)\leq \frac{1}{2\sqrt{\pi DK}}$,
    \begin{align*}
        C_1^*(t+1)-C_1(t+1)&\geq C_1^*(t)-C_1(t+1) = \bigg(1-\frac{\eta DF_3^{(t)}}{Kp(t)C_1^*(t)}\bigg)\Big(C_1^*(t)-C_1(t)\Big)\\
        &\geq \bigg(1-\frac{\eta D}{K}\bigg)\Big(C_1^*(t)-C_1(t)\Big)\geq \bigg(1-\frac{\eta D}{K}\bigg)^{t+1}\Big(C_1^*(0)-C_1(0)\Big)\geq 0.
    \end{align*}
    The second inequality holds by $\frac{F_3^{(t)}}{Kp(t)}\leq \frac{1}{K}$, and $C_1^*(t)\geq 1$. The last inequality holds by the assumption of $\eta$ in Theorem~\ref{thm:main_result}, and $C_1(0) = 0$. In the next, we prove that~\eqref{eq:upperbound_C_1} holds when $p(t)\geq \frac{1}{2\sqrt{\pi DK}}$ by induction. We assume~\eqref{eq:upperbound_C_1} holds at $t$-th iteration and examine the $t+1$-th iteration. Inspired by the separating strategy in~\citet{wangtransformers24}, we consider the following two cases: (i). when $C_1(t)\leq \big(1+\frac{2A(t)}{5(A(t) + B(t))}\frac{1-Kp(t)}{Kp(t)(Dp(t)-1)}\big)C_1^*(t)$ and (ii). when $\big(1+ \frac{2A(t)}{5(A(t) + B(t))}\frac{1-Kp(t)}{Kp(t)(Dp(t)-1)}\big)C_1^*(t)\leq C_1(t)\leq \big(1+ \frac{4A(t)}{5(A(t) + B(t))}\frac{1-Kp(t)}{Kp(t)(Dp(t)-1)}\big)C_1^*(t)$. For the first case,  it suffices to show that 
    \begin{align}\label{eq:suffice_I}
        &\bigg(1\!+\! \frac{4A(t+1)}{5\big(A(t\!+\!1) \!+\! B(t\!+\!1)\big)}\frac{1-Kp(t+1)}{Kp(t\!+\!1)(Dp(t\!+\!1)\!-\!1)}\bigg)C_1^*(t\!+\!1)\notag\\
        &\quad \geq \bigg(1\!+\! \frac{2A(t)}{5(A(t)\! + \!B(t))}\frac{1-Kp(t)}{Kp(t)\big(Dp(t)\!-\!1\big)}\bigg)C_1^*(t).
    \end{align}
    This is because if $C_1(t)\leq C_1^*(t)$, we have 
    \begin{align*}
        C_1^*(t)-C_1(t+1) &= \bigg(1-\frac{\eta DF_3^{(t)}}{Kp(t)C_1^*(t)}\bigg)\Big(C_1^*(t)-C_1(t)\Big)\geq 0,
    \end{align*}
    which implies that 
    \begin{align*}
        C_1(t+1)\leq C_1^*(t)\leq \bigg(1\!+\! \frac{4A(t+1)}{5\big(A(t\!+\!1) \!+\! B(t\!+\!1)\big)}\frac{1-Kp(t+1)}{Kp(t\!+\!1)(Dp(t\!+\!1)\!-\!1)}\bigg)C_1^*(t\!+\!1).
    \end{align*}
    The last inequality is guaranteed by~\eqref{eq:suffice_I}. On the other hand, if $C_1^*(t)<C_1(t)\leq \big(1+ \frac{2A(t)}{5(A(t) + B(t))}\frac{1-Kp(t)}{Kp(t)(Dp(t)-1)}\big)C_1^*(t)$, then we can also obtain that 
    \begin{align*}
        C_1(t+1)&\leq C_1(t) \leq \bigg(1\!+\! \frac{2A(t)}{5(A(t)\! + \!B(t))}\frac{1-Kp(t)}{Kp(t)\big(Dp(t)\!-\!1\big)}\bigg)C_1^*(t)\\
        &\leq \bigg(1\!+\! \frac{4A(t+1)}{5\big(A(t\!+\!1) \!+\! B(t\!+\!1)\big)}\frac{1-Kp(t+1)}{Kp(t\!+\!1)(Dp(t\!+\!1)\!-\!1)}\bigg)C_1^*(t\!+\!1).
    \end{align*}
    In the next, we show that~\eqref{eq:suffice_I} holds. By applying the lower bounds derived in Lemma~\ref{lemma:C_1_star_t+1_lower} and Lemma~\ref{lemma:A/(A+B)_t+1_lower}, we can derive that 
    \begin{align*}
        &\bigg(1\!+\!\frac{4A(t+1)}{5\big(A(t\!+\!1) \!+\! B(t\!+\!1)\big)}\frac{1-Kp(t+1)}{Kp(t\!+\!1)(Dp(t\!+\!1)\!-\!1)}\bigg)C_1^*(t\!+\!1)\\
        \geq & \bigg(1\!+\!\frac{2A(t)}{5(A(t)\! + \!B(t))}\frac{1-Kp(t)}{Kp(t)\big(Dp(t)\!-\!1\big)}\bigg)C_1^*(t) +  \frac{2A(t)}{5(A(t)\! + \!B(t))}\frac{1-Kp(t)}{Kp(t)\big(Dp(t)\!-\!1\big)}C_1^*(t)\\
        &-\bigg(1\!+\!\frac{4A(t+1)}{5\big(A(t\!+\!1) \!+\! B(t\!+\!1)\big)}\frac{1-Kp(t+1)}{Kp(t\!+\!1)(Dp(t\!+\!1)\!-\!1)}\bigg)\frac{3DKp(t)\Delta p(t)}{DKp(t)^2-2Kp(t)+1}C_1^*(t)\\
        &-\frac{A(t)}{A(t) + B(t)}\frac{2DKp(t)\Delta p(t)}{K^2p(t)^2\big(Dp(t)-1\big)^2}C_1^*(t)\\
        \geq & \bigg(1\!+\!\frac{2A(t)}{5(A(t)\! + \!B(t))}\frac{1-Kp(t)}{Kp(t)\big(Dp(t)\!-\!1\big)}\bigg)C_1^*(t) + \frac{2\big(1-Kp(t)\big)}{15Kp(t)\big(Dp(t)\!-\!1\big)}C_1^*(t)\\
        &-\frac{3(4\pi+1)DKp(t)\Delta p(t)}{Kp(t)\big(Dp(t)\!-\!1\big) + 1-Kp(t)}C_1^*(t)-\frac{10\pi DK p(t)\Delta p(t)}{Kp(t)\big(Dp(t)-1\big)}C_1^*(t)\\
        \geq & \bigg(1\!+\!\frac{2A(t)}{5(A(t)\! + \!B(t))}\frac{1-Kp(t)}{Kp(t)\big(Dp(t)\!-\!1\big)}\bigg)C_1^*(t) + \frac{1-Kp(t)}{Kp(t)\big(Dp(t)\!-\!1\big)}\bigg(\frac{2}{15}-\frac{(22\pi+3)Kp(t)^2}{D}\bigg)C_1^*(t)\\
        \geq& \bigg(1\!+\!\frac{2A(t)}{5(A(t)\! + \!B(t))}\frac{1-Kp(t)}{Kp(t)\big(Dp(t)\!-\!1\big)}\bigg)C_1^*(t),
    \end{align*}
    which finishes the proof of~\eqref{eq:suffice_I}. In the derivation above, the second inequality holds as $\frac{A(t)}{A(t)+B(t)}\geq \frac{1}{3}$ and $\frac{1}{Kp(t)(Dp(t)-1)}\leq 5\pi$ when $p(t)\geq \frac{1}{2\sqrt{\pi DK}}$. The penultimate inequality is derived by Lemma~\ref{lemma:delta_s}. As we demonstrated previously,~\eqref{eq:suffice_I} implies that~\eqref{eq:upperbound_C_1} holds at the $t+1$-th iteration for the first case. In the following, we consider the second case, where $\big(1+ \frac{2A(t)}{5(A(t) + B(t))}\frac{1-Kp(t)}{Kp(t)(Dp(t)-1)}\big)C_1^*(t)\leq C_1(t)\leq \big(1+ \frac{4A(t)}{5(A(t) + B(t))}\frac{1-Kp(t)}{Kp(t)(Dp(t)-1)}\big)C_1^*(t)$. For this case, it suffices to show that
    \begin{align}\label{eq:suffice_II}
        &\bigg(1\!+\!\frac{4A(t)}{5(A(t)\! + \!B(t))}\frac{1-Kp(t)}{Kp(t)\big(Dp(t)\!-\!1\big)}\bigg)C_1^*(t) - \frac{\eta D\big(1\!-\!Kp(t)\big)}{10K\big(Dp(t)\!-\!1\big)}\notag\\
        \leq& \bigg(1\!+\!\frac{4A(t+1)}{5\big(A(t\!+\!1) \!+\! B(t\!+\!1)\big)}\frac{1-Kp(t+1)}{Kp(t\!+\!1)(Dp(t\!+\!1)\!-\!1)}\bigg)C_1^*(t\!+\!1)
    \end{align}
    This is because 
    \begin{align*}
        C_1(t+1)= & C_1(t) + \frac{\eta DF_3^{(t)}}{Kp(t)}\bigg(1-\frac{C_1(t)}{C_1^*(t)}\bigg) \\
        \leq & \bigg(1\!+\!\frac{4A(t)}{5(A(t)\! + \!B(t))}\frac{1-Kp(t)}{Kp(t)\big(Dp(t)\!-\!1\big)}\bigg)C_1^*(t)-\frac{\eta DF_3^{(t)}}{Kp(t)} \frac{2A(t)}{5(A(t) + B(t))}\frac{1-Kp(t)}{Kp(t)(Dp(t)-1)}\\
        \leq & \bigg(1\!+\!\frac{4A(t)}{5(A(t)\! + \!B(t))}\frac{1-Kp(t)}{Kp(t)\big(Dp(t)\!-\!1\big)}\bigg)C_1^*(t)-\frac{\eta D\big(1-Kp(t)\big)}{10K\big(Dp(t)-1\big)} \\
        \leq &\bigg(1\!+\!\frac{4A(t+1)}{5\big(A(t\!+\!1) \!+\! B(t\!+\!1)\big)}\frac{1-Kp(t+1)}{Kp(t\!+\!1)(Dp(t\!+\!1)\!-\!1)}\bigg)C_1^*(t\!+\!1),
    \end{align*}
    where the penultimate inequality is derived by $F_3^{(t)} = A(t) +B(t)$ and $A(t)\geq \frac{Kp(t)^2}{4}$, and the last inequality is guaranteed by~\eqref{eq:suffice_II}. To show~\eqref{eq:suffice_II} holds, by applying Lemma~\ref{lemma:delta_c_2c_3}, we derive an refined upper bound for $\Delta C_2(t)$ and $\Delta C_3(t)$ as follows:
    \begin{align*}
        \Delta C_2(t) &= \frac{\eta MC_1(t)}{K\sqrt{D}}\bigg(\frac{F_4^{(t)}}{p(t)}-F_3^{(t)}-KC_1(t)\Big(F_{2, 1}^{(t)} + p(t) F_1^{(t)}\Big)\bigg)\\
        &\leq \frac{3\eta MC_1(t)}{5K\sqrt{D}}\frac{\big(1-Kp(t)\big)^2}{Kp(t)\big(DKp(t)^2\!-\!2Kp(t)\!+\!1\big)}A(t)\leq \frac{3(4\pi+1)\eta Mp(t)\big(1-Kp(t)\big)^2}{5K\sqrt{D}\big(DKp(t)^2\!-\!2Kp(t)\!+\!1\big)}C_1^*(t),
    \end{align*}
    and 
    \begin{align*}
        \Delta C_3(t) &=\frac{\eta MC_1(t)\big(1\!-\!Kp(t)\big)}{\sqrt D(D-K)}\Bigg(\frac{F_3^{(t)}}{Kp(t)} \!- \!\frac{(D-K)F_5^{(t)}}{Kp(t)\big(1-Kp(t)\big)}-C_1(t)\bigg(F_1^{(t)} - \frac{(D\!-\!K)F_{2, 2}^{(t)}}{1\!-\!Kp(t)}\bigg)\Bigg)\\
        &\leq \frac{3\eta MC_1(t)\big(1\!-\!Kp(t)\big)}{5\sqrt D(D-K)}\frac{1-Kp(t)}{Kp(t)\big(DKp(t)^2\!-\!2Kp(t)\!+\!1\big)}A(t)\\ &\leq \frac{3(4\pi+1)\eta Mp(t)\big(1-Kp(t)\big)^2}{5(D-K)\sqrt{D}\big(DKp(t)^2\!-\!2Kp(t)\!+\!1\big)}C_1^*(t).
    \end{align*}
    Based on these refined upper bounds for $\Delta C_2(t), \Delta C_3(t)$, and the lower bounds obtained previously, we can derive that 
    \begin{align*}
        &\bigg(1\!+\!\frac{4A(t)}{5(A(t)\! + \!B(t))}\frac{1-Kp(t)}{Kp(t)\big(Dp(t)\!-\!1\big)}\bigg)C_1^*(t) \\
        &\quad -  \bigg(1\!+\!\frac{4A(t+1)}{5\big(A(t\!+\!1) \!+\! B(t\!+\!1)\big)}\frac{1-Kp(t+1)}{Kp(t\!+\!1)(Dp(t\!+\!1)\!-\!1)}\bigg)C_1^*(t\!+\!1) \\
        \leq& \bigg(1\!+\!\frac{4A(t)}{5(A(t)\! + \!B(t))}\frac{1-Kp(t)}{Kp(t)\big(Dp(t)\!-\!1\big)}\bigg)C_1^*(t) - \bigg(1\!+\!\frac{4A(t)}{5(A(t)\! + \!B(t))}\frac{1-Kp(t)}{Kp(t)\big(Dp(t)\!-\!1\big)}\bigg)C_1^*(t)\\
        &+ \frac{(22\pi + 3)D\Delta p(t)}{Dp(t)-1}C_1^*(t) \\
        \leq& \frac{(22\pi + 3)D}{Dp(t)-1}\frac{D^2p(t)\big(1-Kp(t)\big)}{\sqrt{D}(D^2-1)}\big(\Delta C_2(t)+\Delta C_3(t)\big)C_1^*(t)\\
        \leq &\frac{3(22\pi + 3)(4\pi+1)}{5}\frac{\eta D\big(1-Kp(t)\big)}{K\big(Dp(t)\!-\!1\big)}\frac{D^3Mp(t)^2\big(1-Kp(t)\big)^2}{(D^2-1)D(D-K)\big(DKp(t)^2\!-\!2Kp(t)\!+\!1\big)}C_1^*(t)^2\\
        \leq & \frac{\eta D\big(1-Kp(t)\big)}{K\big(Dp(t)\!-\!1\big)}\frac{60\pi M}{DK^2}
        \leq \frac{\eta D\big(1\!-\!Kp(t)\big)}{10K\big(Dp(t)\!-\!1\big)}.
    \end{align*}
    Here, the first inequality is derived by applying the upper bound of $\big(1+\frac{4A(t+1)}{5(A(t+1) + B(t+1))}\frac{1-Kp(t+1)}{Kp(t+1)(Dp(t+1)-1)}\big)C_1^*(t+1)$ obtained previously. The second inequality holds by applying Lemma~\ref{lemma:delta_s}. The third inequality is derived by replacing the refined upper bound of $\Delta C_2(t)$ and $\Delta C_3(t)$. The penultimate inequality holds as $C_1^*(t)\leq \frac{1}{Kp(t)}$, and the last inequality is guaranteed by $D\geq \Omega(M)$ in the condition of Theorem~\ref{thm:main_result}. This demonstrates that~\eqref{eq:suffice_II} holds in the second case, which completes the proof of~\eqref{eq:upperbound_C_1}.
\end{proof}

\subsection{Three phases training}
In the previous section, Lemma~\ref{lemma:para_pattern} accurately characterizes the training dynamics of $\Wb_V^{(t)}$ and $\Wb_{KQ}^{(t)}$. Specifically, it demonstrates that $\Wb_V^{(t)} = C_1(t) \Vb^*$, where $C_1(t)$ is always upper bounded by $\big(1+\frac{4A(t)}{5(A(t) + B(t))}\frac{1-Kp(t)}{Kp(t)(Dp(t)-1)}\big)C_1^*(t)$. Next, we will show that the update pattern of $C_1(t)$ differs across three distinct phases. In the first phase, $C_1(t)$ monotonically increases, approaching $C_1^*(t)$ while $p(t)$ remains close to $\frac{1}{D}$. 
In the second phase, $C_1(t)$ remains in a neighborhood of $C_1^*(t)$, while $p(t)$ monotonically increases. This increase exhibits modes characteristic of a tensor power progression, continuing until $p(t)$ reaches $\frac{1}{2K}$. In the third phase, the $p(t)$ will eventually converges to $\frac{1}{K}$, and $C_1^*(t)$ converges to 1, leading the loss also to converge. The formal proof is provided as follows.

\begin{lemma}\label{lemma:phaseI}
Under the same conditions with Theorem~\ref{thm:main_result}, there exist $t_1 = \Theta(\eta^{-1})$, such that $C_1(t_1) \geq 0.95\cdot C_1^*(t_1)$, and $p(t)\leq \frac{1+D^{-1/4}}{D}$ for all $t\leq t_1$.
\end{lemma}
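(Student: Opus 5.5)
The argument will compare two time scales. The gap $C_1^*(t)-C_1(t)$ contracts geometrically at rate $\Theta(\eta D/K)$ per step, while $p(t)$ moves very slowly away from $1/D$. I would define $t_1$ as the first time with $C_1(t)\geq 0.95\,C_1^*(t)$. Two things must then be checked: that $t_1=\Theta(\eta^{-1})$, with the implicit factors involving $D,K$ treated as fixed polynomial quantities absorbed by the conditions of Theorem~\ref{thm:main_result}; and that on $[0,t_1]$ the growth of $p(t)$ is at most $D^{-5/4}$.

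\textbf{Step 1 (bound on $p$).} By Lemma~\ref{lemma:delta_s},
\[
\Delta p(t)\leq \frac{D^2p(t)(1-Kp(t))}{\sqrt D(D^2-1)}\big(\Delta C_2(t)+\Delta C_3(t)\big)
\lesssim \frac{p(t)}{\sqrt D}\,\eta\frac{MD}{K^2(D-K)}\sqrt{\frac{D+1}{K}}.
\]
While $p(t)\leq 2/D$, this gives $\Delta p(t)=\cO(\eta M D^{-1}K^{-5/2}D^{-1/2}\cdot D^{1/2})$. More carefully, the sharper bounds on $\Delta C_2,\Delta C_3$ from the proof of Lemma~\ref{lemma:delta_s} yield $\Delta C_2+\Delta C_3\lesssim \eta M C_1(t)/(K\sqrt D)\cdot\frac{1-Kp}{Kp}F_3^{(t)}$. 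At $p\approx 1/D$ we have $F_3^{(t)}=\Theta(K/D^2+\sqrt{K}/D^{3/2})$ and $C_1(t)\leq C_1^*(t)=\cO(\sqrt{D/K})$, so $\Delta p(t)\lesssim \eta M\,\mathrm{poly}(K)D^{-5/2}$. Summing over $t\leq t_1=\cO(\eta^{-1}\cdot K/D)$ gives a total increase of $\cO(M\,\mathrm{poly}(K)D^{-7/2})\leq D^{-5/4}\cdot D^{-1}$, using $D\geq\mathrm{poly}(M,K)$. A bootstrap/induction on $t$ keeps $p(t)\leq (1+D^{-1/4})/D$, and in particular $p(t)\leq\frac{1}{2\sqrt{\pi DK}}$, throughout.

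\textbf{Step 2 (contraction of $C_1$).} In this regime Lemma~\ref{lemma:upperbound_C_1} gives $C_1(t)\leq C_1^*(t)$, and Lemma~\ref{lemma:monotonity_C_1*} says $C_1^*$ is nondecreasing. From the recursion of Lemma~\ref{lemma:para_pattern_v},
\[
C_1^*(t+1)-C_1(t+1)\leq \Big(1-\tfrac{\eta DF_3^{(t)}}{Kp(t)C_1^*(t)}\Big)\big(C_1^*(t)-C_1(t)\big)+\big(C_1^*(t+1)-C_1^*(t)\big).
\]
By definition $\frac{DF_3^{(t)}}{Kp(t)C_1^*(t)}=DF_1^{(t)}$, and near $p=1/D$ we have $DF_1^{(t)}=\Theta(1)$, since $F_1^{(t)}\asymp \frac{DKp^2-2Kp+1}{D-K}\asymp 1/D$. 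Hence the contraction factor is $1-\Theta(\eta)$. Summed over the phase, the drift term $C_1^*(t+1)-C_1^*(t)$ is at most $C_1^*(t_1)-C_1^*(0)$. Because $p$ moves by only a relative $D^{-1/4}$, a Lipschitz estimate of $C_1^*$ in $p$ (as in Lemma~\ref{lemma:C_1_star_t+1_lower}) shows this is $o(C_1^*)$. Starting from $C_1(0)=0$, after $\Theta(\eta^{-1})$ steps the gap drops below $0.05\,C_1^*(t)$, which gives the upper bound on $t_1$.

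For the lower bound, $C_1(t+1)-C_1(t)\leq \eta DF_3^{(t)}/(Kp(t))=\cO(\eta C_1^*(t)\cdot DF_1^{(t)})$. So reaching $0.95\,C_1^*$ needs $\Omega(\eta^{-1})$ steps, and $t_1=\Theta(\eta^{-1})$.

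\textbf{Main obstacle.} The hard part is Step 1: keeping the increase of $p$ below $D^{-5/4}$ before $C_1$ has caught up. This requires the refined, $p$-dependent bound on $\Delta C_2+\Delta C_3$ at $p\approx 1/D$, rather than the crude bound $\leq D^{-2}$ from Lemma~\ref{lemma:delta_s}, which only controls $\Delta p$ per step. I would first try to use Lemma~\ref{lemma:delta_c_2c_3} with $\alpha\leq 0$: this bounds each increment by $\eta M C_1 (1-Kp)^2A(t)/(K^2\sqrt D\,p(\cdots))$, with $A(t)=\Theta(Kp^2)$ small. I would then check that the accumulated increment is polynomially small in $D$.
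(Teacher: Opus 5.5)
Your argument is correct in substance, but it reaches $t_1=\Theta(\eta^{-1})$ by a different route from the paper.

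\textbf{How the paper estimates $t_1$.} It never tracks the gap $C_1^*(t)-C_1(t)$ multiplicatively. Instead it counts increments: while $C_1(t)<0.95\,C_1^*(t)$, each step raises $C_1$ by between $\frac{\eta DF_3^{(t)}}{40Kp(t)}$ and $\frac{\eta DF_3^{(t)}}{Kp(t)}$. It invokes Lemma~\ref{lemma:0.9_0.95bound} to say $C_1^*$ is essentially frozen on $[0,t_1]$. It then computes $F_3^{(t)}/(Kp(t))$ and $C_1^*(t_1)$ activation by activation: $\Theta(1/D)$ and $\Theta(1)$ for the identity, $\Theta(1/\sqrt{DK})$ and $\Theta(\sqrt{D/K})$ for (Leaky) ReLU.

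\textbf{What your route buys.} Your identity $\frac{DF_3^{(t)}}{Kp(t)C_1^*(t)}=DF_1^{(t)}$, together with $c_\sigma\le DF_1^{(t)}\le c_\sigma(1+o(1))$ for $p(t)\in[\frac1D,\frac{1+D^{-1/4}}{D}]$, gives an exact recursion $g(t+1)=(1-\eta DF_1^{(t)})g(t)+\delta(t)$ for $g=C_1^*-C_1$. The contraction rate is $\Theta(\eta)$ uniformly across activations, so both directions of $t_1=\Theta(\eta^{-1})$ follow without the case analysis. For the drift $\sum_t\delta(t)$ you need an \emph{upper} bound on $C_1^*(t+1)-C_1^*(t)$, so cite Lemma~\ref{lemma:C_1_star_t+1_upper}, not Lemma~\ref{lemma:C_1_star_t+1_lower}. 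It gives a total drift of $\cO(D\cdot D^{-5/4})=\cO(D^{-1/4})$, negligible since $C_1^*\ge 1$.

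\textbf{The bound on $p$.} The paper uses only the crude estimate. The proof of Lemma~\ref{lemma:delta_s} actually yields $\Delta C_2+\Delta C_3\le \eta\frac{MD}{K^2(D-K)}\sqrt{\frac{D+1}{DK}}$; the lemma's statement drops a factor $1/\sqrt D$. Hence $\Delta p(t)\le 3M\eta/\sqrt{K^5D^3}$, and the total increase over $\Theta(\eta^{-1})$ steps is $\cO(MK^{-5/2}D^{-3/2})\le D^{-5/4}$. Your refinement via Lemma~\ref{lemma:delta_c_2c_3} is therefore unnecessary. If you do use it, note that $\alpha\le 0$, so you must bound $(1-\alpha)A(t)=A(t)+\big(1-\tfrac{C_1}{C_1^*}\big)\big(A(t)+B(t)\big)\frac{Kp(t)(Dp(t)-1)}{1-Kp(t)}$ rather than $A(t)$. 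This works because $Dp(t)-1\le D^{-1/4}$.

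\textbf{Two bookkeeping slips.}
\begin{enumerate}
\item Your opening sentence and Step 1 use the time scale $t_1=\cO(\eta^{-1}K/D)$. This contradicts your own Step 2, so the bootstrap in Step 1 must run over $\Theta(\eta^{-1})$ steps.
  \begin{itemize}
  \item With the per-step bound $\cO(\eta MK^{-5/2}D^{-1})$ read off the \emph{statement} of Lemma~\ref{lemma:delta_s}, the sum would exceed $D^{-5/4}$ and the bootstrap would fail.
  \item With the bound the lemma's proof actually gives, or with your refined bound, it closes.
  \end{itemize}
\item For (Leaky) ReLU, the intermediate estimate via $\frac{1-Kp}{Kp}F_3^{(t)}$ gives only $\Delta p(t)=\cO(\eta MK^{-2}D^{-2})$, not $D^{-5/2}$. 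The reason is that the $B(t)$-part of $F_3^{(t)}$, of order $\sqrt K D^{-3/2}$, dominates. The $D^{-5/2}$ rate appears only once one sees that the $B(t)$-contribution is proportional to $Dp(t)-1$, as in Lemma~\ref{lemma:delta_c_2c_3}.
\end{enumerate}
Neither slip affects the conclusion.
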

\begin{proof}[Proof of Lemma~\ref{lemma:phaseI}]
    Notice that when $C_1(t)\leq C_1^*(t)$, $C_1(t)$ is monotonically increasing. Let $t_1$ be the first time such that $C_1(t)\geq 0.95\cdot C_1^*(t)$. For the conclusion regarding $p(t)$ with $t\leq t_1$, we first assume it holds and utilize it to demonstrate other conclusions, and lastly prove it by induction. Since $p(t)$ almost remain unchanged for all $t\leq t_1$, we can obtain that $0.975\cdot C_1^*(t')\geq 0.95\cdot C_1^*(t'')$ for all $t', t''\leq t_1$ (This conclusion is proved in following Lemma~\ref{lemma:0.9_0.95bound}). Therefore for all $t< t_1$
    \begin{align*}
        C_1(t+1) - C_1(t) = \frac{\eta DF_3^{(t)}}{Kp(t)}\bigg(1-\frac{C_1(t)}{C_1^*(t)}\bigg)\geq \frac{\eta DF_3^{(t)}}{Kp(t)}\bigg(1-\frac{C_1(t_1-1)}{\frac{0.95}{0.975}C_1^*(t_1-1)}\bigg)\geq \frac{\eta DF_3^{(t)}}{40Kp(t)},
    \end{align*}
    where the last inequality holds by $\frac{C_1(t_1-1)}{C_1^*(t_1-1)}\leq 0.95$. On the other hand, it is straightforward that $C(t+1) - C(t)\leq \frac{\eta DF_3^{(t)}}{Kp(t)}$. Additionally, when $\frac{1}{D}\leq p(t), p(t_1)\leq \frac{1+D^{-1/4}}{D}$, we can obtain that 
    \begin{itemize}[leftmargin=*]
        \item If $\sigma(\cdot)$ is the identity map, then
        \begin{align*}
            &\frac{F_3^{(t)}}{Kp(t)} = p(t) = \Theta\bigg(\frac{1}{D}\bigg);\\
            &C_1^*(t_1) = \frac{F_3^{(t_1)}}{Kp(t_1)F_1^{(t_1)}} =\frac{\Theta(1)}{Kp(t_1)(Dp(t_1)-1) + 1-Kp(t_1)} = \Theta(1).
        \end{align*}
        \item If $\sigma(\cdot)$ is the ReLU activation function, then
        \begin{align*}
            &\frac{F_3^{(t)}}{Kp(t)} = \frac{p(t)}{4} + \frac{p(t)}{2\pi}\arctan\Big(\frac{\sqrt{K(D-K)}p(t)}{1-Kp(t)}\Big) + \frac{1}{2\pi\sqrt{K(D-K)}}\big(1-Kp(t)\big) = \Theta\bigg(\frac{1}{\sqrt{DK}}\bigg);\\
            &C_1^*(t_1) = \frac{F_3^{(t_1)}}{Kp(t_1)F_1^{(t_1)}} =\frac{\Theta\big(\sqrt{\frac{D}{K}}\big)}{Kp(t_1)(Dp(t_1)-1) + 1-Kp(t_1)} = \Theta\Big(\sqrt{\frac{D}{K}}\Big).
        \end{align*}
        \item If $\sigma(\cdot)$ is the Leaky ReLU activation function, then
        \begin{align*}
           \frac{F_3^{(t)}}{Kp(t)}  &= \frac{(1+\kappa)^2p(t)}{4} + \frac{(1-\kappa)^2p(t)}{2\pi}\arctan\Big(\frac{\sqrt{K(D-K)}p(t)}{1-Kp(t)}\Big) + \frac{(1-\kappa)^2}{2\pi\sqrt{K(D-K)}}\big(1-Kp(t)\big) \\
            &= \Theta\bigg(\frac{1}{\sqrt{DK}}\bigg);\\
            C_1^*(t_1) &= \frac{F_3^{(t_1)}}{Kp(t_1)F_1^{(t_1)}} =\frac{\Theta\big(\sqrt{\frac{D}{K}}\big)}{Kp(t_1)(Dp(t_1)-1) + 1-Kp(t_1)} 
            = \Theta\Big(\sqrt{\frac{D}{K}}\Big).
        \end{align*}
    \end{itemize}
    Therefore, we conclude that \begin{align*}
        t_1 = \frac{0.95\cdot C_1^*(t_1)}{\frac{1}{t_1}\sum_{t=0}^{t_1-1}\Delta C_1(t)} =\begin{cases}
            \frac{\Theta(1)}{\Theta(\eta)} = \Theta(\eta^{-1}),&\ \text{if } \sigma(\cdot) \text{is identity map};\\
            \frac{\Theta\big(\sqrt{\frac{D}{K}}\big)}{\Theta\big(\eta\sqrt{\frac{D}{K}}\big)} = \Theta(\eta^{-1}),&\ \text{if } \sigma(\cdot) \text{is ReLU activation function};\\
            \frac{\Theta\big(\sqrt{\frac{D}{K}}\big)}{\Theta\big(\eta\sqrt{\frac{D}{K}}\big)} = \Theta(\eta^{-1}),&\ \text{if } \sigma(\cdot) \text{is Leaky ReLU activation function}.
        \end{cases}
    \end{align*}
    Next we prove that $p(t_1)\leq \frac{1+D^{-1/4}}{D}$ by induction. Assume it holds at $t$-th iteration, then by Lemma~\ref{lemma:delta_s} we can derive that 
    \begin{align*}
        \Delta C_2(t) + \Delta C_3(t) \leq \eta\frac{MD}{K^2(D-K)}\sqrt{\frac{D+1}{DK}},
    \end{align*}
    and consequently
    \begin{align*}
        \Delta p(t) \leq \frac{D^2p(t)\big(1-Kp(t)\big)}{\sqrt{D}(D^2-1)}\big(\Delta C_2(t)+\Delta C_3(t)\big) \leq \frac{3M\eta}{\sqrt{K^5 D^3}}.
    \end{align*}
    Therefore, we can eventually conclude that 
    \begin{align*}
        p(t_1)\leq p(0) + \sum_{t=0}^{t_1-1}\Delta p(t)\leq \frac{1}{D} + \Theta\bigg(\frac{M}{\sqrt{K^5 D^3}}\bigg) \leq \frac{1+D^{-1/4}}{D},
    \end{align*}
    where the last inequality is derived by our condition that $D=\Omega\big(\poly(M)\big)$ in Theorem~\ref{thm:main_result}. This completes the proof.
\end{proof}

\begin{lemma}\label{lemma:0.9_0.95bound}
    For all $t', t'' \leq t_1$, where $t_1$ is defined in Lemma~\ref{lemma:phaseI}, it holds that $0.975\cdot C_1^*(t')\geq 0.95\cdot  C_1^*(t'')$.
\end{lemma}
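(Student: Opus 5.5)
Throughout $[0,t_1]$ the attention probability stays pinned near its initial value: Lemma~\ref{lemma:phaseI} (whose induction on $p(t)$ runs simultaneously with this lemma) gives $\frac{1}{D}\leq p(t)\leq \frac{1+D^{-1/4}}{D}$ for all $t\leq t_1$. The plan is to regard $C_1^*$ as a function of $p$ alone, $C_1^*(t)=\phi(p(t))$ with $\phi(p)=\frac{F_3(p)}{KpF_1(p)}$, and to show that $\phi$ varies by a relative amount $o(1)$ on the interval $I=[\frac{1}{D},\frac{1+D^{-1/4}}{D}]$. Once $\max_I\phi/\min_I\phi\leq 1+o(1)\leq 0.975/0.95$, the claim $0.975\, C_1^*(t')\geq 0.95\, C_1^*(t'')$ follows for any $t',t''\leq t_1$.

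Next I would handle monotonicity. On $I$ we have $p\leq \frac{2}{D}\leq\frac{1}{2\sqrt{\pi DK}}$ once $D\geq \Omega(K)$. Lemma~\ref{lemma:monotonity_C_1*} therefore makes $\phi$ increasing on $I$, so it suffices to bound the ratio $\phi(p_+)/\phi(1/D)$ with $p_+=\frac{1+D^{-1/4}}{D}$. I would do this separately for each activation, using the closed forms recorded in the proof of Lemma~\ref{lemma:monotonity_C_1*}.

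\textbf{Identity map.} Here $\phi(p)=\frac{(D-K)p}{DKp^2-2Kp+1}$. The numerator grows by the factor $1+D^{-1/4}$. The denominator is $1-\frac{K}{D}$ at $p=\frac1D$ and at least that at $p_+$, so the ratio is at most $1+D^{-1/4}$.

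\textbf{ReLU and Leaky ReLU.} The numerator of $\phi$ is
\begin{align*}
\pi(D-K)p+2(D-K)p\arctan(\cdot)+\frac{2(D-K)(1-Kp)}{\sqrt{K(D-K)}}.
\end{align*}
It is dominated by the last term, of order $\sqrt{D/K}$; the first two terms are $O(1)$. Its variation over $I$ is at most $O(D\cdot D^{-5/4})=O(D^{-1/4})$ in absolute size, which is negligible relative to $\sqrt{D/K}$. The denominator $DKp^2-2Kp+1$ changes by at most $O(KD^{-5/4})$ over $I$ and stays $\Theta(1)$. Hence $\phi(p_+)/\phi(1/D)\leq 1+O(D^{-1/4})$ again, and the Leaky ReLU case is identical up to the $\kappa$-dependent constants.

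The final step is to take $D$ large enough (absorbed into $D\geq\Omega(\poly(M,K))$) that $1+O(D^{-1/4})\leq 0.975/0.95\approx 1.026$. The main obstacle is bookkeeping rather than concept: one must make sure the absolute constants in the $O(D^{-1/4})$ bounds are explicit and do not depend on $K$ badly. In the ReLU case the $\arctan$ term could in principle shift the numerator, but it is bounded by $\pi/2$ times $O(1)$, so it is harmless. Finally, I would note that using $p(t)\in I$ is legitimate inside the joint induction of Lemma~\ref{lemma:phaseI}.
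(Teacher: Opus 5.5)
Your argument is correct and is essentially the paper's. Both rest on $p(t)\in\big[\frac{1}{D},\frac{1+D^{-1/4}}{D}\big]$ for $t\le t_1$, which forces $C_1^*$ to vary by only a relative $o(1)$ amount: the paper writes explicit additive sandwich bounds (e.g.\ $1\le C_1^*\le 1+D^{-1/4}$ for the identity map and $\frac{1}{\pi}\sqrt{\frac{D-K}{K}}-1\le C_1^*\le \frac{1}{\pi}\sqrt{\frac{D-K}{K}}+2$ for ReLU), whereas you use monotonicity plus a variation bound on the numerator and denominator, and your uniform variation bounds alone already control $\max\phi/\min\phi$ over the interval, so the monotonicity lemma is convenient but not needed.
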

\begin{proof}[Proof of Lemma~\ref{lemma:0.9_0.95bound}]
    Notice that by the definition of $C_1^*(t)$, it is entirely determined by $p(t)$. And for all $t', t''\leq t_1$, we all have $\frac{1}{D}\leq p(t'), p(t'')\leq  \frac{1+D^{-1/4}}{D}$. With Lemma~\ref{lemma:gaussian_expecation_relu_I} and Lemma~\ref{lemma:gaussian_expecation_relu_V}, we can further derive that 
\begin{itemize}[leftmargin=*]
        \item If $\sigma(\cdot)$ is the identity map, then
        \begin{align*}
            C_1^*(t') &= \frac{(D-K)p(t')}{DKp(t')^2-2Kp(t')+1} \leq \frac{D-K}{\frac{D}{1+D^{-1/4}}-K(1-D^{-1/4})} \leq 1+D^{-\frac{1}{4}}\\
            C_1^*(t') &= \frac{(D-K)p(t')}{DKp(t')^2-2Kp(t')+1} \geq \frac{D-K}{D-K} =1.
        \end{align*}
        It immediately concludes that
        \begin{align*}
         0.975\cdot C_1^*(t'') - 0.95 \cdot C_1^*(t')\geq \frac{1}{40} -D^{-\frac{1}{4}}\geq 0,
     \end{align*}
     as $D\geq \Omega(1)$.
    \item If $\sigma(\cdot)$ is the ReLU activation function, then
        \begin{align*}
        C_1^*(t') &= \frac{2(D-K)F_3^{(t')}}{Kp(t')\big(DKp(t')^2-2Kp(t')+1\big)} \leq \frac{\frac{1}{\pi}\sqrt{\frac{D-K}{K}}+\frac{(D-K)p(t')}{2}}{1-Kp(t')} \\
        &\leq \frac{1}{\pi}\sqrt{\frac{D-K}{K}} + \frac{(D-K)p(t')}{2} + \frac{Kp(t')\big(\frac{1}{\pi}\sqrt{\frac{D-K}{K}}+\frac{(D-K)p(t')}{2}\big)}{\big(1-Kp(t')\big)^2}\\
        &\leq \frac{1}{\pi}\sqrt{\frac{D-K}{K}}  + 1 +\frac{K}{D} + \frac{2}{\pi}\sqrt{\frac{K}{D}} \leq \frac{1}{\pi}\sqrt{\frac{D-K}{K}}  + 2,
    \end{align*}
    where the penultimate and last inequalities hold by utilizing Lemma~\ref{lemma:arith_ine_II}, and $\frac{1}{D}\leq p(t')\leq \frac{1+D^{-1/4}}{D}$, $D\geq \Omega\big(\poly(K)\big)$ in the conditions of Theorem~\ref{thm:main_result}. Similarly, we can also obtain that 
    \begin{align*}
        C_1^*(t'') &= \frac{2(D-K)F_3^{(t'')}}{Kp(t'')\big(DKp(t'')^2-2Kp(t'')+1\big)}\geq \frac{\frac{1}{\pi}\sqrt{\frac{D-K}{K}}}{1+ DKp(t'')^2}\\
        &\geq \frac{1}{\pi}\sqrt{\frac{D-K}{K}} - \frac{1}{\pi}\sqrt{\frac{D-K}{K}}DKp(t'')^2\geq \frac{1}{\pi}\sqrt{\frac{D-K}{K}} - 1,
    \end{align*}
     where the penultimate and last inequalities hold by utilizing Lemma~\ref{lemma:arith_ine_I}, and $\frac{1}{D}\leq p(t')\leq \frac{1+D^{-1/4}}{D}$, $D\geq \Omega\big(\poly(K)\big)$ in the conditions of Theorem~\ref{thm:main_result}. Based on these two results, it is straightforward that 
     \begin{align*}
         0.975\cdot C_1^*(t'') - 0.95 \cdot C_1^*(t')\geq \frac{1}{40\pi}\sqrt{\frac{D-K}{K}} -3\geq 0,
     \end{align*}
     as $D\geq \Omega\big(\poly(K)\big)$.
        \item If $\sigma(\cdot)$ is the ReLU activation function, then
       \begin{align*}
        C_1^*(t') &= \frac{2(D-K)F_3^{(t')}}{(1+\kappa^2)Kp(t')\big(DKp(t')^2-2Kp(t')+1\big)} \leq \frac{\frac{(1-\kappa)^2}{\pi}\sqrt{\frac{D-K}{K}}+\frac{(1+\kappa^2)(D-K)p(t')}{2}}{(1+\kappa^2)\big(1-Kp(t')\big)} \\
        &\leq \frac{(1-\kappa)^2}{(1+\kappa^2)\pi}\sqrt{\frac{D-K}{K}} + 2;\\
        C_1^*(t'') &= \frac{2(D-K)F_3^{(t')}}{(1+\kappa^2)Kp(t')\big(DKp(t')^2-2Kp(t')+1\big)}\geq \frac{\frac{(1-\kappa)^2}{\pi}\sqrt{\frac{D-K}{K}}}{(1+\kappa)^2\big(1+ DKp(t'')^2\big)}\\
        &\geq \frac{(1-\kappa)^2}{(1+\kappa^2)\pi}\sqrt{\frac{D-K}{K}} - 1.
    \end{align*}
Combining these results directly leads to
     \begin{align*}
         0.975\cdot C_1^*(t'') - 0.95 \cdot C_1^*(t')\geq \frac{(1-\kappa)^2}{40(1+\kappa^2)\pi}\sqrt{\frac{D-K}{K}} -3\geq 0,
     \end{align*}
     as $D\geq \Omega\big(\poly(K)\big)$. 
    \end{itemize}
This completes the proof. 
\end{proof}

Lemma~\ref{lemma:phaseI} successfully demonstrate that at the initial phase of training, $C_1(t)$ will monotonically increases until $0.95\cdot C_1^*(t)$, while $p(t)$ remains smaller that $\frac{1+D^{-1/4}}{D}$. Furthermore, once $C_1(t)$ reaches $0.95\cdot C_1^*(t)$, it never falls below this threshold again.
Combined with the conclusion demonstrated in Lemma~\ref{lemma:upperbound_C_1} that $C_1(t)$ is always upper bounded by $\big(1+\frac{4A(t)}{5(A(t) + B(t))}\frac{1-Kp(t)}{Kp(t)(Dp(t)-1)}\big)C_1^*(t)$, we can claim that $C_1(t)$ will always remain inner a neighborhood around $C_1^*(t)$. The following lemma provides a formal illustration.
\begin{lemma}\label{lemma:lowerbound_C_1}
   Under the same conditions as Theorem~\ref{thm:main_result} and with $t_1$ as defined in Lemma~\ref{lemma:phaseI}, for all $t\geq t_1$, the following holds:
\begin{align}\label{eq:lowerbound_C_1}
C_1(t)\geq \Bigg[0.95 \vee\bigg(1-\frac{4A(t)}{5(A(t) + B(t))}\frac{1-Kp(t)}{Kp(t)\big(Dp(t)-1\big)}\bigg)\Bigg]C_1^*(t),
\end{align}
where $A(t)$ and $B(t)$ are defined same as in Lemma~\ref{lemma:upperbound_C_1}.
\end{lemma}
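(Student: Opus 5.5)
We will prove \eqref{eq:lowerbound_C_1} by induction on $t\geq t_1$, mirroring the two-case argument used for the upper bound in Lemma~\ref{lemma:upperbound_C_1}. Write $\beta(t)=\frac{A(t)}{A(t)+B(t)}\frac{1-Kp(t)}{Kp(t)(Dp(t)-1)}$ and $L(t)=\big[0.95\vee(1-\frac45\beta(t))\big]C_1^*(t)$. The base case $t=t_1$ follows from Lemma~\ref{lemma:phaseI}, which gives $C_1(t_1)\geq 0.95\,C_1^*(t_1)$. For the inductive step, assume $C_1(t)\geq L(t)$ and use the update rule \eqref{eq:C_1_updating}, $C_1(t+1)=C_1(t)+\frac{\eta DF_3^{(t)}}{Kp(t)}\big(1-\frac{C_1(t)}{C_1^*(t)}\big)$. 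The map $x\mapsto x+\frac{\eta DF_3^{(t)}}{Kp(t)}(1-x/C_1^*(t))$ is increasing with slope $1-\frac{\eta DF_3^{(t)}}{Kp(t)C_1^*(t)}\geq 1-\eta D/K>0$, so $C_1(t+1)$ is at least the image of the lower endpoint. It therefore suffices to show that $L(t+1)$ lies below that image, or below $C_1(t)$ in the case $C_1(t)\geq C_1^*(t)$.

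\textbf{Case (i): $C_1(t)\geq C_1^*(t)$.} Here $C_1(t+1)\geq C_1^*(t)$ by monotonicity of the map. We then bound $L(t+1)\leq C_1^*(t)$, which requires controlling $C_1^*(t+1)$ from above. For $p(t)\leq \frac{1}{2\sqrt{\pi DK}}$ I would derive an analogue of Lemma~\ref{lemma:C_1_star_t+1_lower} with the opposite sign, namely $C_1^*(t+1)\leq C_1^*(t)\big(1+O(\frac{DKp\Delta p}{DKp^2-2Kp+1})\big)$. Combined with Lemma~\ref{lemma:delta_s} ($\Delta p\lesssim \eta M/D^{3/2}$), this increment is far smaller than the margin $\min\{0.05,\frac45\beta\}C_1^*$.

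\textbf{Case (ii): $L(t)\leq C_1(t)<C_1^*(t)$.} Split again at $(1-\frac25\beta(t))C_1^*(t)$, or at $0.975\,C_1^*$ when the $0.95$ branch is active.
\begin{itemize}
\item If $C_1(t)$ is above the split point, the change in one step is at most $\frac{\eta DF_3}{Kp}\cdot\frac25\beta$. This is tiny, so $C_1(t+1)\geq (1-\frac25\beta(t))C_1^*(t)-O(\eta D/K)$. It remains to show this exceeds $L(t+1)$, using an upper bound on $C_1^*(t+1)(1-\frac45\beta(t+1))$ from the perturbation estimates for $C_1^*$ and $\beta$. The latter is the reverse of Lemma~\ref{lemma:A/(A+B)_t+1_lower}: since $B/A$ is non-increasing in $p$, the factor $A/(A+B)$ only grows. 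The margin $\frac25\beta C_1^*$ dominates the perturbation, which is $O(D\Delta p/(Dp-1))\,C_1^*$.
\item If $C_1(t)$ is in the lower band, the gradient pushes up by at least $\frac{\eta DF_3}{Kp}\cdot\frac25\beta\geq \frac{\eta D(1-Kp)}{10K(Dp-1)}$, using $F_3=A+B$ and $A\geq Kp^2/4$. We must then show this exceeds $L(t+1)-L(t)$. This mirrors \eqref{eq:suffice_II}: use the refined bounds on $\Delta C_2,\Delta C_3$ from Lemma~\ref{lemma:delta_c_2c_3} with $\alpha$ now possibly negative, where $(1-\alpha)\leq 2$ still gives an $O(\eta M p(1-Kp)^2/\sqrt D\cdots)$ bound. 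Lemma~\ref{lemma:delta_s} then converts this into a bound on $\Delta p$, and $D\geq\Omega(M)$ closes the comparison.
\end{itemize}

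The main obstacle is the perturbation step: bounding $C_1^*(t+1)-C_1^*(t)$ from above uniformly in $p$. In the regime $p>\frac{1}{2\sqrt{\pi DK}}$, $C_1^*$ is no longer monotone and actually decreases toward $1$, so the sign must be handled carefully. I would first try to bound $|dC_1^*/dp|\lesssim \frac{DKp}{DKp^2-2Kp+1}C_1^*$ directly from the explicit formulas for the three activations, using the same calculus as in Lemma~\ref{lemma:monotonity_C_1*}. Where $\beta(t)$ is small (near $p\to 1/K$) and the margin shrinks, the $\frac{1-Kp}{Dp-1}$ factor in both the margin and $\Delta p\propto p(1-Kp)$ should cancel consistently, exactly as happened in the proof of Lemma~\ref{lemma:upperbound_C_1}.
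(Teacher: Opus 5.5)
Your skeleton is the same as the paper's: induction on the envelope $L(t)$, a split at half the margin, and ``push versus drift of the envelope'' in the lower band. The paper proves the $0.95$ bound first (split at $0.975\,C_1^*$) and then the $\beta$ bound (split at $(1-\frac25\beta)C_1^*$). The one genuine difference is your case $C_1(t)\ge C_1^*(t)$. The update is affine with slope $1-\eta DF_1^{(t)}>0$ and fixed point $C_1^*(t)$, so you get $C_1(t+1)\ge C_1^*(t)$ for free. The paper instead bounds the one-step decrease through the upper envelope of Lemma~\ref{lemma:upperbound_C_1}. Your observation is a real simplification and removes that use of Lemma~\ref{lemma:upperbound_C_1} from this step.

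However, the estimate you single out as the main obstacle is false as stated.
\begin{itemize}
\item \emph{Counterexample.} For the identity map, $C_1^*=(D-K)p/Q$ with $Q=DKp^2-2Kp+1$. Then $\frac{dC_1^*}{dp}=(D-K)\frac{1-DKp^2}{Q^2}$, while your bound is $\frac{DKp}{Q}C_1^*=(D-K)\frac{DKp^2}{Q^2}$. The ratio is $(1-DKp^2)/(DKp^2)$, which is about $D/K$ at $p=1/D$. For ReLU the bound fails there by a factor of order $\sqrt{D/K}$.
\item \emph{Why the proposed analogue fails.} An analogue of Lemma~\ref{lemma:C_1_star_t+1_lower} ``with the opposite sign'' does not hold. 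That lemma only tracks growth of the denominator, while the numerator grows at relative rate $\Delta p/p$.
\item \emph{What is true.} You have the additive bound $C_1^*(t+1)-C_1^*(t)\le\frac{3(D-K+KC_1^*(t))}{2(Q-K\Delta p(t))}\Delta p(t)=O(D\,\Delta p(t))$ from Lemma~\ref{lemma:C_1_star_t+1_upper}. You also have monotone decrease of $C_1^*$ for $p\ge 2/\sqrt{DK}$.
\item \emph{This still closes, but needs checking.} In the $0.95$ branch, since $\Delta p\lesssim p(\Delta C_2+\Delta C_3)/\sqrt D$, the drift is $O(\sqrt D\,p\,(\Delta C_2+\Delta C_3))$. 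The push is of order $\eta Dp$, so you need $\Delta C_2+\Delta C_3\ll\eta\sqrt D$. The crude bound stated in Lemma~\ref{lemma:delta_s}, of order $\eta M\sqrt D/K^{5/2}$, does not give this. The refined bounds from Lemma~\ref{lemma:delta_c_2c_3} (valid since $\alpha\in[-\frac45,0)$ in the lower band) give order $\eta M/(K^2\sqrt D)$, which suffices.
\end{itemize}
Your sign worry is also backwards. The $\beta$ branch is active only when $p>2/\sqrt{DK}$, because below that $\frac45\beta\ge 0.05$. There a decreasing $C_1^*$ only lowers $L(t+1)$. The regime where $C_1^*$ may increase lies inside the $0.95$ branch, where the additive bound suffices as just explained.

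Two smaller corrections:
\begin{itemize}
\item In the upper band of case (ii), drop the $-O(\eta D/K)$. Since $C_1(t)<C_1^*(t)$, you have $C_1(t+1)\ge C_1(t)$. As written, that slack is not dominated by the margin: near convergence $\beta\approx K(1-Kp)/D\to 0$, so $\frac25\beta C_1^*$ eventually falls below any fixed multiple of $\eta D/K$.
\item The lower bound on $\beta(t+1)$ you need there is Lemma~\ref{lemma:A/(A+B)_t+1_lower} itself, not a reversed version.
\end{itemize}
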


Before we prove Lemma~\ref{lemma:lowerbound_C_1}, we first introduce the following lemma, which will be utilized in the proof of Lemma~\ref{lemma:lowerbound_C_1}.

\begin{lemma}\label{lemma:C_1_star_t+1_upper}
    For $C_1^*(t)$ defined in Lemma~\ref{lemma:para_pattern}, it always holds that
    \begin{align}\label{eq:C_1_star_t+1_upper}
        C_1^*(t+1) 
        &\leq C_1^*(t) + \frac{3\big(D-K + KC_1^*(t)\big)}{2\big(DKp(t)^2-2Kp(t)+1 -K\Delta p(t)\big)}\Delta p(t).
    \end{align}
    In addition, $C_1^*(t)$ is monotonically decreasing when $p(t)\geq \frac{2}{\sqrt{DK}}$.
\end{lemma}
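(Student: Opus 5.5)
Since $C_1^*(t)$ depends only on $p(t)$, I will write $C_1^*(t)=h(p(t))$ with $h(p)=N(p)/Q(p)$. Here $Q(p)=DKp^2-2Kp+1$ (up to the constant $c_\sigma$-type factor), and $N(p)$ is read off from the explicit forms of $F_3^{(t)}=A(t)+B(t)$ and $F_1^{(t)}$ in \eqref{eq:F_def_idmap}, \eqref{eq:F_def_ReLU}, \eqref{eq:F_def_Leaky_ReLU}. For example, $N(p)=(D-K)p$ for the identity map. For ReLU,
\[
N(p)=\tfrac{1}{2\pi}\Big(\pi(D-K)p+2(D-K)p\arctan\big(\tfrac{\sqrt{K(D-K)}p}{1-Kp}\big)+\tfrac{2(D-K)(1-Kp)}{\sqrt{K(D-K)}}\Big),
\]
and Leaky ReLU is analogous. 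Writing $p=p(t)$, $p'=p(t+1)=p+\Delta p$, with $\Delta p\ge 0$ by Lemma~\ref{lemma:para_pattern_KQ}, I split
\[
h(p')-h(p)=\frac{N(p')-N(p)}{Q(p')}+N(p)\Big(\frac{1}{Q(p')}-\frac{1}{Q(p)}\Big).
\]
The second term is where I expect a bound in terms of $KC_1^*(t)$, and the first is where I expect the $D-K$ term of \eqref{eq:C_1_star_t+1_upper}.

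For the denominators, $Q(p')-Q(p)=K\Delta p(2Dp+D\Delta p-2)\ge -2K\Delta p$. Hence $Q(p')\ge Q(p)-2K\Delta p$, and the target $Q(p)-K\Delta p$ should follow because $Dp\ge1$ gives $2Dp+D\Delta p-2\ge -?$; more precisely, $2Dp-2\ge0$ already gives $Q(p')\ge Q(p)$. So every denominator can be lower bounded by $Q(p)-K\Delta p$, using $\Delta p\le1/D$ from Lemma~\ref{lemma:delta_s}. The second term is then nonpositive, whenever $Q$ is nondecreasing, and can be dropped. This seems to make the $KC_1^*$ contribution unnecessary. However, $Q$ is increasing only for $p\ge 1/D$, which holds, so the $KC_1^*(t)$ slack is simply a safe overestimate. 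I will still keep a bound of the form $N(p)\,K\Delta p/(Q(p)(Q(p)-K\Delta p))\le KC_1^*(t)\Delta p/(Q(p)-K\Delta p)$, to cover the case where I want to avoid using $p\ge 1/D$ sharply.

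For the numerator I use the mean value theorem, $N(p')-N(p)\le \sup_{[p,p']}N'\cdot\Delta p$. For the identity map $N'=D-K$. In the ReLU case, differentiating gives $(D-K)\big[\tfrac12+\tfrac1\pi\arctan(\cdot)\big]$ plus $(D-K)p\cdot\tfrac{d}{dp}\arctan(\cdot)/\pi$, minus $\tfrac{(D-K)K}{\pi\sqrt{K(D-K)}}$. The first bracket is at most $D-K$. For the arctan derivative, with $u=\sqrt{K(D-K)}p/(1-Kp)$ one has $u'=\sqrt{K(D-K)}/(1-Kp)^2$, so
\[
p\,\tfrac{u'}{1+u^2}=\frac{\sqrt{K(D-K)}\,p}{(1-Kp)^2+K(D-K)p^2}\le \tfrac12,
\]
by AM--GM. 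This gives $N'\le \tfrac32(D-K)$, which explains the factor $3/2$. Leaky ReLU is the same after dividing by $(1+\kappa^2)$, using $(1+\kappa)^2+(1-\kappa)^2=2(1+\kappa^2)$. Combining the two terms over the common lower bound $Q(p)-K\Delta p$ yields \eqref{eq:C_1_star_t+1_upper}.

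For the monotonic decrease when $p\ge 2/\sqrt{DK}$, I show $h'(p)\le0$, i.e.\ $N'(p)Q(p)\le N(p)Q'(p)$, with $Q'(p)=2K(Dp-1)$. For the identity map this is $(D-K)(1-DKp^2)\le0$, which holds once $p\ge1/\sqrt{DK}$. For ReLU, $N'\le\tfrac32(D-K)$, and $N(p)\ge (D-K)p\cdot\tfrac12$ since the arctan term is nonnegative. So it suffices that $\tfrac32 Q(p)\le pK(Dp-1)$, i.e.\ $\tfrac12 DKp^2-2Kp+\tfrac32\ge0$ in the reversed direction; I expect this to reduce to $DKp^2\gtrsim 3$ up to lower-order terms, which is fine for $p\ge 2/\sqrt{DK}$ with $D\ge\poly(K)$. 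This sign verification, keeping the constants exactly right, is the step I expect to be the main obstacle. If the crude bound $N'\le\tfrac32(D-K)$ is too lossy there, I will instead use the exact $N'$, where the arctan is near $\pi/2$ when $p\ge2/\sqrt{DK}$.
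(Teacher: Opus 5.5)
Your proof of the one-step inequality is correct, and a bit cleaner than the paper's. Since $p(t)\ge 1/D$, you have $Q'(p)=2K(Dp-1)\ge 0$, so $Q(p(t+1))\ge Q(p(t))$. The second term of your split is then nonpositive and the $KC_1^*(t)$ slack is never used; the paper instead works with lower bounds of the form $Q(p(t))-cK\Delta p(t)$. There is one slip. AM--GM only gives $\frac{\sqrt{K(D-K)}\,p}{(1-Kp)^2+K(D-K)p^2}\le\frac{1}{2(1-Kp)}$, not $\frac12$. In fact that denominator is exactly $Q(p)$, and $\sqrt{K(D-K)}\,p/Q(p)$ peaks at $p=1/\sqrt{DK}$ with value $\frac12\sqrt{(1+s)/(1-s)}$, where $s=\sqrt{K/D}$, which is slightly above $\frac12$. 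Because of the $1/\pi$ prefactor, $N'\le\frac32(D-K)$ still holds, so this does no harm.

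The monotonicity claim, however, is not established by your plan, and the crude route cannot work. Your sufficient condition $\frac32Q(p)\le Kp(Dp-1)$, i.e.\ $\frac12DKp^2-2Kp+\frac32\le0$, fails for every $p>0$ once $D>\frac43K$, because $\frac12DKp^2+\frac32\ge\sqrt{3DK}\,p>2Kp$. The leading terms are $\frac32DKp^2$ against $DKp^2$, so larger $p$makes the condition worse, not better. The loss is structural. Bounding $N'$ above and $N$ below separately decouples the factor $\alpha(p)=\frac12+\frac1\pi\arctan u$, which multiplies both $N'$ and $N/p$, and that common factor is what produces the cancellation. Your fallback premise is also false: at $p=2/\sqrt{DK}$ one has $u=\frac{2\sqrt{1-K/D}}{1-2\sqrt{K/D}}\approx2$ for $D\gg K$, so $\arctan u\approx1.11$, well below $\pi/2$.

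What is needed is the exact computation. Write $N=(D-K)p\,\alpha(p)+\beta(p)$ with $\beta(p)=\frac{(D-K)(1-Kp)}{\pi\sqrt{K(D-K)}}$, and use $\alpha'(p)=\frac{\sqrt{K(D-K)}}{\pi Q(p)}$ and $Q-pQ'=1-DKp^2$. Then
\[
N'Q-NQ'=(D-K)\Big[\alpha(p)\big(1-DKp^2\big)+\tfrac{1}{\pi}\sqrt{K(D-K)}\,p\Big]+\beta'Q-\beta Q',
\]
where $\beta'Q-\beta Q'\le0$ because $\beta'<0$, $\beta\ge0$ and $Q'\ge0$. Set $x=\sqrt{DK}\,p\ge2$ and use $\alpha\ge\frac12$; the bracket is then at most $\frac12(1-x^2)+\frac{x}{\pi}<0$. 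Leaky ReLU is the same with $\alpha$ replaced by $\frac{(1+\kappa)^2}{2(1+\kappa^2)}+\frac{(1-\kappa)^2}{\pi(1+\kappa^2)}\arctan u$, together with $(1-\kappa)^2\le(1+\kappa)^2$.

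This is in substance the paper's argument. The paper splits $C_1^*$ into summands proportional to $p/Q$, $p\arctan(u)/Q$ and a $(1-Kp)$ term over $Q$, and asserts each is decreasing. The arctan summand is the product of an increasing $\arctan u$ and a decreasing $p/Q$, and it is exactly where this derivative computation is required.
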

\begin{proof}[Proof of Lemma~\ref{lemma:C_1_star_t+1_upper}]
We prove this lemma by considering $\sigma(\cdot)$ as the identity map, ReLU activation function, and Leaky ReLU activation function, respectively. 
\begin{itemize}[leftmargin=*]
        \item If $\sigma(\cdot)$ is the identity map, then
        \begin{align*}
            C_1^*(t+1) =& \frac{(D-K)p(t+1)}{DKp(t+1)^2-2Kp(t+1)+1} \leq \frac{(D-K)p(t) + (D-K)\Delta p(t)}{DKp(t)^2-2Kp(t)+1 -2K\Delta p(t)}\\
            \leq &C_1^*(t) + \frac{D-K + KC_1^*(t)}{DKp(t)^2-2Kp(t)+1 -K\Delta p(t)}\Delta p(t).
        \end{align*}
        In addition, 
        \begin{align*}
            C_1^*(t) = \frac{D-K}{DKp(t)+\frac{1}{p(t)} -2K},
        \end{align*}
        which is obviously decreasing when $p(t)\geq \frac{2}{\sqrt{DK}}$.
        \item If $\sigma(\cdot)$ is the ReLU activation function, then
        \begin{align*}
        C_1^*(t+1) &= \frac{\pi (D\!-\!K)p(t\!+\!1) + 2(D\!-\!K)p(t\!+\!1)\arctan\Big(\frac{\sqrt{K(D\!-\!K)}p(t+1)}{1\!-\!Kp(t+1)}\Big)+2(D\!-\!K)\frac{1\!-\!Kp(t+1)}{\sqrt{K(D\!-\!K)}}}{2\pi(DKp(t+1)^2-2Kp(t+1)+1)}\notag \\
        &\leq \frac{\pi (D\!-\!K)p(t)\! + \!2(D\!-\!K)p(t)\arctan\Big(\frac{\sqrt{K(D\!-\!K)}p(t)}{1\!-\!Kp(t)}\Big)}{2\pi(DKp(t)^2-2Kp(t)+1)- 2\pi K\Delta p(t)}\notag\\
        &\quad + \frac{2(D\!-\!K)\frac{1-Kp(t)}{\sqrt{K(D-K)}} \!+\! 3\pi(D\!-\!K)\Delta p(t)}{2\pi(DKp(t)^2-2Kp(t)+1)- 2\pi K\Delta p(t)}\notag\\
        &\leq C_1^*(t) + \frac{3\big(D-K + KC_1^*(t)\big)}{2\big(DKp(t)^2-2Kp(t)+1 -K\Delta p(t)\big)}\Delta p(t).
    \end{align*}
    In addition,
\begin{align*}
        C_1^*(t)=\frac{D-K}{2\big(DKp(t)\!+\!\frac{1}{p(t)}\!-\!2K\big)} + \frac{(D\!-\!K)\arctan\Big(\frac{\sqrt{K(D\!-\!K)}p(t)}{1\!-\!Kp(t)}\Big)}{\pi\big(DKp(t)\!+\!\frac{1}{p(t)}\!-\!2K\big)} + \frac{\sqrt{\frac{D-K}{K}}\big(1\!-\!Kp(t)\big)}{\pi\big(DKp(t)\!+\!\frac{1}{p(t)}\!-\!2K\big)},
    \end{align*}
    where all these three terms are monotonically decreasing w.r.t. $p(t)$, when $p(t)\geq \frac{2}{\sqrt{DK}}$. This demonstrates that $C_1^*(t)$ is monotonically decreasing when $p(t)\geq \frac{2}{\sqrt{DK}}$. 
    \item If $\sigma(\cdot)$ is the Leaky ReLU activation function, then by a similar calculation process, 
    \begin{align*}
        &C_1^*(t+1)\\
        =& \frac{\pi(1\!+\!\kappa)^2 (D\!-\!K)p(t\!+\!1) + 2(1\!-\!\kappa)^2(D\!-\!K)p(t\!+\!1)\arctan\Big(\frac{\sqrt{K(D\!-\!K)}p(t+1)}{1\!-\!Kp(t+1)}\Big)}{2(1\!+\!\kappa^2)\pi(DKp(t+1)^2-2Kp(t+1)+1)}\notag \\
        &+\frac{(1\!-\!\kappa)^2(D\!-\!K)(1\!-\!Kp(t+1))}{(1\!+\!\kappa^2)\pi(DKp(t+1)^2-2Kp(t+1)+1)\sqrt{K(D\!-\!K)}}\notag \\
        \leq& \frac{\pi(1\!+\!\kappa)^2 (D\!-\!K)p(t)\! + \!2(1\!-\!\kappa)^2(D\!-\!K)p(t)\arctan\Big(\frac{\sqrt{K(D\!-\!K)}p(t)}{1\!-\!Kp(t)}\Big)}{2(1\!+\!\kappa^2)\pi(DKp(t)^2-2Kp(t)+1)- 2\pi(1\!+\!\kappa^2) K\Delta p(t)}\notag\\
        &+\frac{2(1\!-\!\kappa)^2(D\!-\!K)\frac{1-Kp(t)}{\sqrt{K(D-K)}} + 3\pi(1\!+\!\kappa)^2(D\!-\!K)\Delta p(t)}{2(1\!+\!\kappa^2)\pi(DKp(t)^2-2Kp(t)+1)- 2\pi(1\!+\!\kappa^2) K\Delta p(t)}\\
        \leq& C_1^*(t) + \frac{3\big(D-K + KC_1^*(t)\big)}{2\big(DKp(t)^2-2Kp(t)+1 -K\Delta p(t)\big)}\Delta p(t).
    \end{align*}
    In addition,
\begin{align*}
        C_1^*(t)
        &=\frac{(1\!+\!\kappa)^2(D-K)}{2(1\!+\!\kappa^2)\big(DKp(t)\!+\!\frac{1}{p(t)}\!-\!2K\big)} \!+\! \frac{(1\!-\!\kappa)^2(D\!-\!K)\arctan\Big(\!\frac{\sqrt{K(D\!-\!K)}p(t)}{1\!-\!Kp(t)}\!\Big)}{\pi(1\!+\!\kappa^2)\big(DKp(t)\!+\!\frac{1}{p(t)}\!-\!2K\big)} \\
        &\quad \!+\! \frac{(1\!-\!\kappa)^2\sqrt{\frac{D-K}{K}}\big(1\!-\!Kp(t)\big)}{\pi(1\!+\!\kappa^2)\big(DKp(t)\!+\!\frac{1}{p(t)}\!-\!2K\big)},
    \end{align*}
    where all these three terms are monotonically decreasing w.r.t. $p(t)$, when $p(t)\geq \frac{2}{\sqrt{DK}}$. This demonstrates that $C_1^*(t)$ is monotonically decreasing when $p(t)\geq \frac{2}{\sqrt{DK}}$. 
\end{itemize}
This completes the proof.
\end{proof}

Now, we are ready to prove Lemma~\ref{lemma:lowerbound_C_1}
\begin{proof}[Proof of Lemma~\ref{lemma:lowerbound_C_1}]
We first prove the first part of~\eqref{eq:lowerbound_C_1}, i.e. $C_1(t)\geq 0.95\cdot C_1^*(t)$ for all $t>t_1$, by induction. 
    To establish the conclusion, we consider two cases at the $t$-th iteration: (i). when $C_1(t)\geq 0.975 \cdot C_1^*(t)$. (ii). when $0.95 \cdot C_1^*(t)\leq C_1(t)< 0.975 \cdot C_1^*(t)$. For the first case, when $p(t)\leq \frac{1}{2\sqrt{\pi DK}}$, Lemma~\ref{lemma:upperbound_C_1} shows that $C_1(t)\leq C_1^*(t)$, implying that $C_1(t+1)\geq C_1(t)$. Then we can derive that
    \begin{align*}
        C_1(t+1)\geq& C_1(t)\geq 0.975 \cdot C_1^*(t)\geq 0.975 \cdot C_1^*(t+1) - \frac{3\big(D-K + KC_1^*(t)\big)}{2\big(DKp(t)^2-2Kp(t)+1 -K\Delta p(t)\big)}\Delta p(t)\\
        \geq &0.975 \cdot C_1^*(t+1) - \frac{3(D-K)}{D^2}\geq 0.975 \cdot C_1^*(t+1) - 0.025\geq 0.95 \cdot C_1^*(t+1),
    \end{align*}
    where the third inequality holds by applying the lower bound of $C_1^*(t)$ demonstrated in Lemma~\ref{lemma:C_1_star_t+1_upper}. The forth inequality holds as $C_1^*(t)\leq \sqrt{\frac{D}{K}}$, and $\Delta p(t)\leq \frac{1}{D^2}$ guaranteed by Lemma~\ref{lemma:delta_s}. The penultimate inequality holds as $D\geq \Omega\big(\poly(K)\big)$, and the last inequality holds as $C_1^*(t)\geq 1$. When $p(t)\geq \frac{1}{2\sqrt{\pi DK}}$, the upper bound of $C_1(t)$ established in Lemma~\ref{lemma:upperbound_C_1} can help to derive that 
    \begin{align*}
        C_1(t+1)\geq& C_1(t)-\frac{4\eta DA(t)(1-Kp(t))}{5K^2p(t)^2\big(Dp(t)-1\big)}\\
        \geq &0.975 \cdot C_1^*(t+1) - \frac{3(D-K)}{D^2} - \frac{4 \eta DA(t)}{5K^2p(t)^2\big(Dp(t)-1\big)}\\
        \geq &0.975 \cdot C_1^*(t+1) - \frac{3(D-K)}{D^2} - \eta\sqrt{\frac{D}{K^3}}\geq 0.975 \cdot C_1^*(t+1) - 0.025\geq 0.95 \cdot C_1^*(t+1).
    \end{align*}
    Here, the second inequality applies the previously obtained lower bound for $C_1(t)$. The third inequality holds as $A(t)\leq Kp(t)^2$, and $p(t)\geq \frac{1}{2\sqrt{\pi DK}}$. The penultimate inequality is derived by $D\geq \Omega\big(\poly(K)\big)$ and $\eta \leq \cO(MD^{-5/2})$ in the condition of Theorem~\ref{thm:main_result}. These results demonstrate that under the first case, $C_1(t+1)\geq 0.95\cdot C_1^*(t+1)$. Let's consider the second case, where $0.95\cdot C_1^*(t)\leq C_1(t)< 0.975\cdot C_1^*(t)$. Under this case, it is obvious that $C_1(t+1)$ would be larger than $C_1(t)$, and by the updating rule, we have
    \begin{align*}
        C_1(t+1)\geq & C_1(t) + \frac{\eta DF_3^{(t)}}{40Kp(t)}\geq 0.95\cdot C_1^*(t) + \frac{\eta DF_3^{(t)}}{40Kp(t)} - \frac{3\big(D-K + KC_1^*(t)\big)}{2\big(DKp(t)^2-2Kp(t)+1 -K\Delta p(t)\big)}\Delta p(t)\\
        \geq & 0.95\cdot C_1^*(t) + \eta p(t)\bigg(\frac{D}{80} - \frac{3MD^3(1-Kp(t))}{\sqrt{D}(D^2-1)(D-K)K^2}\sqrt{\frac{D+1}{K}}\bigg)\\
        \geq & 0.95\cdot C_1^*(t) + \eta p(t)\bigg(\frac{D}{80} - \frac{4M}{K^{\frac{5}{2}}}\bigg)\geq 0.95\cdot C_1^*(t).
    \end{align*}
    Here, the second inequality holds by~\eqref{eq:C_1_star_t+1_upper}, the third inequality holds since $F_3^{(t)}\geq \frac{Kp(t)^2}{2}$ by Lemma~\ref{lemma:gaussian_expecation_relu_V}, $\frac{D-K + KC_1^*(t)}{DKp(t)^2-2Kp(t)+1 -K\Delta p(t)}\leq 2D$, and applying the conclusion of upper bound of $\Delta p(t)$ demonstrated in Lemma~\ref{lemma:delta_s}. Besides, the last two inequalities is guaranteed by $D\geq \Omega\big(\poly(M, K)\big)$. This finishes the proof of $C_1(t)\geq 0.95\cdot C_1^*(t)$ for all $t\geq t_1$. In the next, we prove the second part of~\eqref{eq:lowerbound_C_1}, i.e. $C_1(t)\geq \big(1- \frac{4A(t)}{5(A(t) + B(t))}\frac{1-Kp(t)}{Kp(t)(Dp(t)-1)}\big)C_1^*(t)$. In fact, we only need to consider the scenario where $p(t)\geq \frac{2}{\sqrt{DK}}$. This is because when $p(t)\leq\frac{2}{\sqrt{DK}}$,
    \begin{align*}
        \frac{4A(t)}{5(A(t) + B(t))}\frac{1-Kp(t)}{Kp(t)(Dp(t)-1)} &\geq \frac{4 
        (1-Kp(t))}{5\Big(Kp(t) + \frac{2}{\pi}\sqrt{\frac{K}{D-K}}(1-Kp(t))\Big)(Dp(t)-1)}\\
        &\geq \frac{1}{10 \sqrt{DK}p(t)}\geq 0.05.
    \end{align*}
    Therefore, $C_1(t)\geq 0.95\cdot C_1^*(t)$ guarantee that $C_1(t)\geq \big(1-\frac{4A(t)}{5(A(t) + B(t))}\frac{1-Kp(t)}{Kp(t)(Dp(t)-1)}\big)C_1^*(t)$ holds when $p(t)\leq \frac{2}{\sqrt{DK}}$. When $p(t)\geq \frac{2}{\sqrt{DK}}$, we also consider two cases: (i). when $C_1(t)>\big(1- \frac{2A(t)}{5(A(t) + B(t))}\frac{1-Kp(t)}{Kp(t)(Dp(t)-1)}\big)C_1^*(t)$. (ii). when $\big(1-\frac{4A(t)}{5(A(t) + B(t))}\frac{1-Kp(t)}{Kp(t)(Dp(t)-1)}\big)C_1^*(t)\leq C_1(t)\leq \big(1-\frac{2A(t)}{5(A(t) + B(t))}\frac{1-Kp(t)}{Kp(t)(Dp(t)-1)}\big)C_1^*(t)$.  Then, for the first case, at the $t+1$-th iteration, we have
    \begin{align*}
        C_1(t+1)\geq& C_1(t) -\frac{4\eta DA(t)(1-Kp(t))}{5K^2p(t)^2\big(Dp(t)-1\big)} \\
        \geq& \bigg(1-\frac{2A(t)}{5(A(t) + B(t))}\frac{1-Kp(t)}{Kp(t)(Dp(t)-1)}\bigg)C_1^*(t) -\frac{4 \eta DA(t)(1-Kp(t))}{5K^2p(t)^2\big(Dp(t)-1\big)}\\
        \geq& \bigg(1-\frac{4A(t+1)}{5\big(A(t+1) + B(t+1)\big)}\frac{1-Kp(t+1)}{Kp(t+1)(Dp(t+1)-1)}\bigg)C_1^*(t+1) \\
        & + \frac{2A(t)}{5(A(t) \!+ \!B(t))}\frac{1-Kp(t)}{Kp(t)(Dp(t)\!-\!1)}C_1^*(t) - \frac{4\eta DA(t)(1-Kp(t))}{5K^2p(t)^2\big(Dp(t)-1\big)}\\
        & - \frac{A(t)}{A(t) \!+ \!B(t)}\frac{\Delta p(t)\big(2DKp(t) \!+ \! DK\Delta p(t) \!- \!K\big)}{K^2p(t)^2\big(Dp(t)-1\big)^2}C_1^*{(t)}\\
        \geq& \bigg(1-\frac{4A(t+1)}{5\big(A(t+1) + B(t+1)\big)}\frac{1-Kp(t+1)}{Kp(t+1)(Dp(t+1)-1)}\bigg)C_1^*(t+1) \\
        & +\frac{1-Kp(t)}{Kp(t)\big(Dp(t)-1\big)}\bigg(\frac{C_1^*(t)}{5}-\eta\frac{4D}{5}-\frac{DC_1^*(t)\Delta p(t)}{Dp(t)-1}\bigg)\\
        \geq &\bigg(1-\frac{4A(t+1)}{5\big(A(t+1) + B(t+1)\big)}\frac{1-Kp(t+1)}{Kp(t+1)(Dp(t+1)-1)}\bigg)C_1^*(t+1) \\
        & +\frac{1-Kp(t)}{Kp(t)\big(Dp(t)-1\big)}\bigg(\frac{1}{5}-\frac{4}{5\sqrt{D^3}}-\frac{2}{\sqrt{D^3}}\bigg)\\
        \geq & \bigg(1-\frac{4A(t+1)}{5\big(A(t+1) + B(t+1)\big)}\frac{1-Kp(t+1)}{Kp(t+1)(Dp(t+1)-1)}\bigg)C_1^*(t+1). 
    \end{align*}
    In particular, the third inequality is obtained by replacing the the lower bound of $\frac{A(t)}{A(t) + B(t)}\frac{1-Kp(t)}{Kp(t)(Dp(t)-1)}$ in Lemma~\ref{lemma:A/(A+B)_t+1_lower}, and utilizing $C_1^*(t)\geq C_1^*(t+1)$ when $p(t)\geq \frac{2}{\sqrt{DK}}$, which is demonstrated in Lemma~\ref{lemma:C_1_star_t+1_upper}. The forth inequality is derived by the facts $\frac{A(t)}{A(t) + B(t)}\geq \frac{1}{2}$ when $p(t)\geq \frac{2}{\sqrt{DK}}$, $A(t)\leq Kp(t)^2$, and utilizing the upper bound of $\Delta p(t)$ in Lemma~\ref{lemma:delta_s}. Lastly, the penultimate inequality is derived as $1\leq C_1^*(t)\leq \sqrt{\frac{D}{K}}$, $\Delta p(t)\leq \frac{1}{D^{5/2}}$, and $\eta \leq \cO(D^{-5/2})$. This demonstrates that the second part of ~\eqref{eq:lowerbound_C_1} holds at $t+1$-th iteration for the first case. On the other hand, for the second case, $C_1(t+1)$ would be strictly larger than $C_1(t)$, and it can be demonstrated that 
    \begin{align*}
        C_1(t+1)\geq &C_1(t) +\frac{2 \eta DA(t)(1-Kp(t))}{5K^2p(t)^2\big(Dp(t)-1\big)}\\
        \geq& \bigg(1-\frac{4A(t)}{5\big(A(t) + B(t)\big)}\frac{1-Kp(t)}{Kp(t)(Dp(t)-1)}\bigg)C_1^*(t) +\frac{2 \eta DA(t)(1-Kp(t))}{5K^2p(t)^2\big(Dp(t)-1\big)}\\
        \geq& \bigg(1-\frac{4A(t+1)}{5\big(A(t+1) + B(t+1)\big)}\frac{1-Kp(t+1)}{Kp(t+1)(Dp(t+1)-1)}\bigg)C_1^*(t+1) \\
        & + \frac{2\eta DA(t)(1-Kp(t))}{5K^2p(t)^2\big(Dp(t)-1\big)}  - \frac{A(t)}{A(t) \!+ \!B(t)}\frac{\Delta p(t)\big(2DKp(t) \!+ \! DK\Delta p(t) \!- \!K\big)}{K^2p(t)^2\big(Dp(t)-1\big)^2}C_1^*{(t)}\\
        \geq& \bigg(1-\frac{4A(t+1)}{5\big(A(t+1) + B(t+1)\big)}\frac{1-Kp(t+1)}{Kp(t+1)(Dp(t+1)-1)}\bigg)C_1^*(t+1) \\
        & +\frac{\eta\big(1-Kp(t)\big)}{Kp(t)\big(Dp(t)-1\big)}\bigg(\frac{D}{5}-\frac{2DM\big(1-Kp(t)\big)}{\sqrt{K^7}\big(Dp(t)-1\big)}\bigg)\\
        \geq &\bigg(1-\frac{4A(t+1)}{5\big(A(t+1) + B(t+1)\big)}\frac{1-Kp(t+1)}{Kp(t+1)(Dp(t+1)-1)}\bigg)C_1^*(t+1),
    \end{align*}
    where the last inequality holds as $\frac{2DM\big(1-Kp(t)\big)}{\sqrt{K^7}\big(Dp(t)-1\big)}\leq \cO\Big(\frac{\sqrt{D}M}{K^3}\Big) \leq \cO(D)$. This demonstrates that under the second case, we still have 
    \begin{align*}
        C_1(t+1) \geq \bigg(1-\frac{4A(t+1)}{5\big(A(t+1) + B(t+1)\big)}\frac{1-Kp(t+1)}{Kp(t+1)(Dp(t+1)-1)}\bigg)C_1^*(t+1),
    \end{align*}
    which finishes the proof of~\eqref{eq:lowerbound_C_1}.
\end{proof}

Lemmas \ref{lemma:lowerbound_C_1} and \ref{lemma:upperbound_C_1} together establish matching lower and upper bounds for $C_1(t)$ after $t_1$. Based on these bounds, we can derive a precise training time at which $p(t)$ achieves $\frac{1}{2K}$. This result is formally presented in the following lemma.

\begin{lemma}\label{lemma:phase_II}
 Under the same conditions as Theorem~\ref{thm:main_result}, there exists $T^*=\Theta\big(\frac{KD^2}{\eta\sum_{m=1}^M\|\vb_m^*\|_2^2}\big)$, such that $p(T^*)\geq \frac{1}{2}$.
\end{lemma}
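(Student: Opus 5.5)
The plan is to reduce the statement to a scalar recursion for $p(t)$ and then sum its increments. By Lemma~\ref{lemma:para_pattern}, $p(t)\le 1/K$ for all $t$. So the threshold $p(T^*)\ge\frac12$ as worded can only be attained when $K=1$ (the group sparse linear predictor case), and there it coincides with $\frac{1}{2K}$. I will therefore run the argument for the level $\frac12$ when $K=1$. For $K\ge 2$ the same computation reaches the largest attainable constant fraction of $1/K$, and no more is possible. With the normalization $\|\vb_m^*\|_2=1$ we have $\sum_m\|\vb_m^*\|_2^2=M$, so the target is $T^*=\Theta(KD^2/(\eta M))$.

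\emph{Step 1 (two-sided control of $C_1$).} For $t\ge t_1$, I combine Lemma~\ref{lemma:upperbound_C_1} with Lemma~\ref{lemma:lowerbound_C_1}. Together they give
\[
C_1(t)=\Big(1+\alpha(t)\frac{A(t)}{A(t)+B(t)}\frac{1-Kp(t)}{Kp(t)(Dp(t)-1)}\Big)C_1^*(t)
\]
with $\alpha(t)\in[-\tfrac45,\tfrac45]$, and also $C_1(t)\ge 0.95\,C_1^*(t)$. Feeding this into Lemma~\ref{lemma:delta_c_2c_3} makes the factor $(1-\alpha)$ equal to $\Theta(1)$. Substituting into \eqref{eq:C_2_updating}--\eqref{eq:C_3_updating} then yields
\[
\Delta C_2(t)+\Delta C_3(t)=\Theta\Big(\frac{\eta M C_1^*(t)(1-Kp)^2A(t)}{K\sqrt D\,Kp\,(DKp^2-2Kp+1)}\Big),
\]
where the $C_3$-contribution is of lower order because of its extra factor $1/(D-K)$. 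Lemma~\ref{lemma:delta_s} converts this into
\[
\Delta p(t)=\Theta\big(p(1-Kp)/\sqrt D\big)\cdot(\Delta C_2+\Delta C_3).
\]
This gives $\Delta p=\Theta(\eta\,R(p))$ for an explicit rate $R$. The explicit forms of $A$, $B$ and $C_1^*$ are supplied by Lemmas~\ref{lemma:ratio_F_1_F_3} and~\ref{lemma:monotonity_C_1*} and by the closed forms \eqref{eq:F_def_idmap}--\eqref{eq:F_def_Leaky_ReLU}.

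\emph{Step 2 (summing increments).} I will treat the recursion as a discretized ODE. Since $\Delta p\le D^{-2}$ by Lemma~\ref{lemma:delta_s}, Riemann-sum comparison is valid and gives
\[
T^*-t_1=\Theta\Big(\eta^{-1}\int_{p(t_1)}^{1/2}\frac{dp}{R(p)}\Big).
\]
I will split the range of $p$ into three regimes:
\begin{itemize}[leftmargin=*]
\item $p\le \frac{1}{2\sqrt{\pi DK}}$, where $B$ dominates for ReLU/Leaky ReLU and $C_1^*$ is increasing;
\item $\frac{1}{2\sqrt{\pi DK}}\le p\le \frac{2}{\sqrt{DK}}$;
\item $p\ge \frac{2}{\sqrt{DK}}$, where $A/(A+B)\ge\frac12$ and $C_1^*$ is decreasing toward $\Theta(1)$.
\end{itemize}
In each regime I will bound $R(p)$ above and below by simple monomials in $p$, using $C_1^*\in[1,\sqrt{D/K}]$ and the explicit forms. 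From the earliest regime near $p\approx 1/D$, where growth is tensor-power-like ($\Delta p\propto p^2$ times constants), the integral contributes $\Theta(KD^2/M)$. The later regimes contribute at most the same order. Adding $t_1=\Theta(\eta^{-1})$ from Lemma~\ref{lemma:phaseI}, which is negligible, gives $T^*=\Theta(KD^2/(\eta M))$. The lower bound on $T^*$ comes from the same integral, using the upper bound on $R$.

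\emph{Main obstacle.} The hard part is obtaining uniform $\Theta$-control of $R(p)$ across the regime boundaries, in particular showing that the early regime really dominates and produces exactly $KD^2/M$ without stray $\sqrt{D/K}$ factors from $C_1^*$ and $B$. I would first try the identity case, where $R$ is a rational function, then transfer the result to ReLU/Leaky ReLU via the bounds $\frac{A}{A+B}\in[\frac13,1]$ and $C_1^*=\Theta$ of its identity-map analogue up to constants.
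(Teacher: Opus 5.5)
Your Step~1 matches the paper's. It also sandwiches $C_1(t)$ between $\big(1\pm\frac{4A}{5(A+B)}\frac{1-Kp}{Kp(Dp-1)}\big)C_1^*(t)$ via Lemmas~\ref{lemma:upperbound_C_1} and~\ref{lemma:lowerbound_C_1}, then passes this through Lemma~\ref{lemma:delta_c_2c_3} and Lemma~\ref{lemma:delta_s}. As you inferred, its proof actually targets the threshold $\frac{1}{2K}$.

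Step~2 is a genuinely different route. The paper compresses everything into one power law, $\Delta p\asymp\eta\frac{M}{K\sqrt{DK}}p^2$ on all of $[t_1,T^*]$, and applies Lemma~\ref{lemma:sequence_time} with $q=2$. You instead keep the $p$-dependence of
\[
R(p)\asymp\frac{M\,C_1^*(p)\,(1-Kp)^3A(p)}{K^2D\,(DKp^2-2Kp+1)}
\]
and integrate $1/R$. Yours is the sounder approach, for two reasons.
\begin{itemize}
\item The paper's lower bound $\Delta p\ge\eta\frac{M}{50K\sqrt{DK}}p^2$ needs $C_1^*/(DKp^2-2Kp+1)$ to be of order $\sqrt{D/K}$. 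This fails for the identity map near $p=\frac1D$, where $C_1^*\approx 1$. It also fails for every activation near $p=\frac{1}{2K}$, where that ratio is $\Theta(K/D)$.
\item Even granting that bound, Lemma~\ref{lemma:sequence_time} with $x_0\asymp\frac1D$ gives $K^{3/2}D^{3/2}/(\eta M)$, not $KD^2/(\eta M)$.
\end{itemize}

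That said, your regime bookkeeping is wrong, and the plan for your ``main obstacle'' cannot succeed. Consider your first regime, $p\le\frac{1}{2\sqrt{\pi DK}}$.
\begin{itemize}
\item For the identity map, $C_1^*\asymp Dp$, so $R\asymp Mp^3/K$. This is cubic, not quadratic, and this regime does yield $\Theta(KD^2/M)$.
\item For ReLU/Leaky ReLU, $B$ dominates $A$, $C_1^*\asymp\sqrt{D/K}$, and $R\asymp\frac{Mp^2}{K\sqrt{DK}}$. So this regime contributes only $\Theta(K^{3/2}D^{3/2}/M)$, which is $o(KD^2/M)$ when $D\gg K$. The stray $\sqrt{D/K}$ is real, and the early regime does not dominate.
\end{itemize}
Your proposed reduction to the identity case also fails. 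The ratio $\frac{A}{A+B}$ is bounded below by a constant only once $p$ is of order $1/\sqrt{DK}$; at $p=\frac1D$ it is $\Theta(\sqrt{K/D})$. In addition, at $p=\frac1D$ the ReLU $C_1^*$ exceeds the identity one by a factor $\Theta(\sqrt{D/K})$.

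The fix is to take the lower bound from the \emph{late} regime $p\in[\frac{2}{\sqrt{DK}},\frac{1}{2K}]$, where all three activations behave alike. There $A\asymp Kp^2$, $\frac{A}{A+B}\ge\frac12$, $C_1^*\asymp\frac{1}{Kp}$, and $DKp^2-2Kp+1\asymp DKp^2$. Hence $R\asymp\frac{M}{K^3D^2p}$ and
\[
\int\frac{dp}{R}\asymp\frac{K^3D^2}{M}\int_{2/\sqrt{DK}}^{1/(2K)}p\,dp\asymp\frac{KD^2}{M}
\]
once $D\ge cK$. The middle regime contributes $O(K^2D/M)$ and the early one $O(KD^2/M)$. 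Summing gives $T^*-t_1=O(KD^2/(\eta M))$. The late regime alone, using the upper bound on $R$, gives the matching $\Omega(KD^2/(\eta M))$.
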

\begin{proof}[Proof of Lemma~\ref{lemma:phase_II}]
    Notice that Lemma~\ref{lemma:lowerbound_C_1} and Lemma~\ref{lemma:upperbound_C_1} guarantee that
    \begin{align*}
       0.95\cdot C_1^*(t)\leq  C_1(t) \leq (4\pi+1)\cdot C_1^*(t)
    \end{align*}
    for all $t\geq t_1$. The left hand side inequality is straightforward, and the right hand side holds because: when $p(t)\leq \frac{1}{2\sqrt{\pi DK}}$, $C_1(t)\leq C_1^*(t)< (4\pi+1)\cdot C_1^*(t)$; when $p(t) \geq \frac{1}{2\sqrt{\pi DK}}$,
    \begin{align*}
        C_1(t) \leq \bigg(1+\frac{4A(t)}{5\big(A(t) + B(t)\big)}\frac{1-Kp(t)}{Kp(t)\big(Dp(t)-1\big)}\bigg)C_1^*(t) \leq (4\pi+1)\cdot C_1^*(t).
    \end{align*}
    On the other hand, Lemma~\ref{lemma:lowerbound_C_1} and Lemma~\ref{lemma:upperbound_C_1} also guarantee that
    \begin{align}\label{eq:lower_upper_C_1}
         &C_1(t) \leq \bigg(1+\frac{4A(t)}{5\big(A(t) + B(t)\big)}\frac{1-Kp(t)}{Kp(t)\big(Dp(t)-1\big)}\bigg)C_1^*(t);\notag\\
        &C_1(t)\geq \bigg(1-\frac{4A(t)}{5\big(A(t) + B(t)\big)}\frac{1-Kp(t)}{Kp(t)\big(Dp(t)-1\big)}\bigg)C_1^*(t)
    \end{align}
    These two lower and upper bounds of $C_1(t)$ allow us to apply Lemma~\ref{lemma:delta_c_2c_3} to derive lower and upper bounds for $\Delta C_2(t)+\Delta C_3(t)$ as
    \begin{align}\label{eq:upper_lower_delta_C_2_C_3}
        \Delta C_2(t)+ \Delta C_3(t) 
        &\leq \eta \frac{9(4\pi+1)Dp(t)\big(1-Kp(t)\big)^2\sum_{m=1}^M\|\vb_m^*\|_2^2}{10K(D-K)\sqrt{D}\big(DKp(t)^2\!-\!2Kp(t)\!+\!1\big)}C_1^*(t);\notag\\
        \Delta C_2(t)+ \Delta C_3(t) 
        &\geq \eta \frac{19Dp(t)\big(1-Kp(t)\big)^2\sum_{m=1}^M\|\vb_m^*\|_2^2}{200K(D-K)\sqrt{D}\big(DKp(t)^2\!-\!2Kp(t)\!+\!1\big)}C_1^*(t),
    \end{align}
    where we replacing $M$ with $\sum_{m=1}^M\|\vb_m^*\|_2^2$ to match the presentation in our Theorem~\ref{thm:main_result}. 
    With these bounds in hand, we denote $T^*$ as the first time such that $p(t)\geq \frac{1}{2K}$. Then for all $t_1\leq t\leq T^*$, 
    by applying Lemma~\ref{lemma:delta_s} and the upper and lower bounds of $\Delta C_2(t)+ \Delta C_3(t)$ obtained in~\eqref{eq:upper_lower_delta_C_2_C_3}, it can be derived that
    \begin{align*}
        \Delta p(t) &\leq \frac{D^2p(t)\big(1-Kp(t)\big)}{\sqrt{D}(D^2-1)}\big(\Delta C_2(t)+\Delta C_3(t)\big)\leq \eta\frac{(8\pi+2)\sum_{m=1}^M\|\vb_m^*\|_2^2}{K\sqrt{DK}}p(t)^2; \\
       \Delta p(t) &\geq \frac{p(t)\big(1-Kp(t)\big)}{2\sqrt{D}}\big(\Delta C_2(t)+\Delta C_3(t)\big)\geq \eta\frac{\sum_{m=1}^M\|\vb_m^*\|_2^2}{50K\sqrt{DK}}p(t)^2.
    \end{align*}
Notice that the iterative rules for $p(t)$ satisfying the assumptions in Lemma~\ref{lemma:sequence_time}. By applying Lemma~\ref{lemma:sequence_time} with the initialization that $\frac{1}{D}\leq p(t_1)\leq \frac{2}{D}$, we can obtained that 
\begin{align*}
    &T^*-t_1\leq \frac{50D^2K}{\eta \sum_{m=1}^M\|\vb_m^*\|_2^2} + 100(8\pi+2)\big(\log D-\log K\big) \leq \Theta\Bigg(\frac{D^2K}{\eta \sum_{m=1}^M\|\vb_m^*\|_2^2}\Bigg)\\
    &T^*-t_1\geq \frac{D^2K}{35\pi\eta \sum_{m=1}^M\|\vb_m^*\|_2^2}- \big(\log D-\log K\big) \geq \Theta\Bigg(\frac{D^2K}{\eta \sum_{m=1}^M\|\vb_m^*\|_2^2}\Bigg).
\end{align*}
This results demonstrates that $T^* = t_1 + \Theta\big(\frac{D^2K}{\eta \sum_{m=1}^M\|\vb_m^*\|_2^2}\big) = \Theta\big(\frac{D^2K}{\eta \sum_{m=1}^M\|\vb_m^*\|_2^2}\big)$. This finishes the proof.
\end{proof}

In the next, we provide the analysis for the last stage that $p(t)$ eventually converges to $\frac{1}{K}$. This result is formally presented in the following lemma.

\begin{lemma}\label{lemma:convergence_p}
    Under the same conditions as Theorem~\ref{thm:main_result}, for any $T\geq T^*$, where $T^*=\Theta\big(\frac{D^2K}{\eta \sum_{m=1}^M\|\vb_m^*\|_2^2}\big)$ as defined in Lemma~\ref{lemma:phase_II}, it holds that 
    \begin{align}\label{eq:lower_upper_pt}
        \frac{1}{K} - \frac{20D(D-K)}{\sqrt{\eta K \sum_{m=1}^M\|\vb_m^*\|_2^2}(T-T^*)}\leq  p(T)\leq \frac{1}{K} - \frac{D(D-K)}{2e\sqrt{\eta K \sum_{m=1}^M\|\vb_m^*\|_2^2}(T-T^*)}.
    \end{align}

In addition, it holds that 
    \begin{align}\label{eq:upper_upper_ctpt}
        \Big|p(T)C_1(T)-\frac{1}{K}\Big| &\leq \frac{2}{D}\big(1-Kp(T)\big) + \frac{1}{K}\big(1-Kp(T)\big)^2 \notag\\
        &\leq \frac{40K(D-K)}{\sqrt{\eta K \sum_{m=1}^M\|\vb_m^*\|_2^2}(T-T^*)} + \frac{400D^2(D-K)^2}{\eta \sum_{m=1}^M\|\vb_m^*\|_2^2(T-T^*)}
    \end{align}
\end{lemma}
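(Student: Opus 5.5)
The plan is to track the gap $q(t) := 1-Kp(t)$ for $t\ge T^*$ and to show that it satisfies a quadratic decay recursion $\Delta q(t)\asymp -c\,q(t)^2$. I would then apply the same sequence lemma used in Lemma~\ref{lemma:phase_II} (Lemma~\ref{lemma:sequence_time}, or its decreasing analogue in Lemma~\ref{lemma:exp_sequence}) to get $q(T)=\Theta\big(1/(c(T-T^*))\big)$.

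The first step is to bound $\Delta C_2(t)+\Delta C_3(t)$ on $p(t)\ge\frac{1}{2K}$. Lemmas~\ref{lemma:upperbound_C_1} and~\ref{lemma:lowerbound_C_1} give $C_1(t)=(1+\alpha)C_1^*(t)$ with $|\alpha|\le\frac45\frac{A}{A+B}\frac{1-Kp}{Kp(Dp-1)}<1$. Lemma~\ref{lemma:delta_c_2c_3} then yields, up to constants,
\[
\Delta C_2+\Delta C_3 \asymp \eta\,\frac{\|\Vb^*\|_F^2}{\sqrt D}\,C_1(t)\,\frac{q^2A(t)}{Kp\,(DKp^2-2Kp+1)}\Big(\frac1K+\frac{1}{D-K}\Big).
\]
In this regime $p\in[\frac1{2K},\frac1K]$, so $A(t)=\Theta(Kp^2)=\Theta(1/K)$, because the arctan is near $\pi/2$ and $B\ll A$. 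We also have $DKp^2-2Kp+1=\Theta(D/K)$, and $C_1^*(t)=\Theta(1)$ with $C_1^*\to1$. Hence $\Delta C_2+\Delta C_3=\Theta\big(\eta\|\Vb^*\|_F^2 q^2/(K D^{3/2})\big)$. Lemma~\ref{lemma:delta_s} gives $\Delta p\asymp \frac{p q}{\sqrt D}(\Delta C_2+\Delta C_3)$, which leads to
\[
-\Delta q(t)=K\Delta p(t)\asymp \eta\frac{\|\Vb^*\|_F^2}{K D^{2}}\,q(t)^3 .
\]

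This is cubic, not quadratic. A cubic recursion gives $q\sim (T/\text{const})^{-1/2}$, and that matches the $\sqrt{\eta}$ in the claimed bound. So I would use the cubic recursion: solving $1/q^2$ increasing linearly, $1/q(T)^2-1/q(T^*)^2 = \Theta(\eta\|\Vb^*\|_F^2 (T-T^*)/(KD^2))$. For the constant bookkeeping of the stated bounds, I would handle the $1/q^2$ telescoping via the mean value theorem. This works because $\Delta q\le q/D^2$ by Lemma~\ref{lemma:delta_s}, so consecutive $q$ values agree within a factor $1+o(1)$. Upper and lower constants come from the explicit constants in \eqref{eq:upper_lower_delta_C_2_C_3} and Lemma~\ref{lemma:delta_s}.

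For \eqref{eq:upper_upper_ctpt}, I would write $pC_1-\frac1K=p(C_1-C_1^*)+(pC_1^*-\frac1K)$. The first term is bounded by $p|\alpha|C_1^*\lesssim \frac{q}{Kp(Dp-1)}\cdot p\lesssim \frac{2}{D}q$. For the second term, the explicit form of $C_1^*$ (identity case: $pC_1^*=\frac{(D-K)p^2}{DKp^2-2Kp+1}$, with analogous expansions for ReLU and Leaky ReLU) expanded around $p=1/K$ gives $|pC_1^*-\frac1K|\le \frac{1}{K}q^2+O(q/D)$. Plugging in the rate for $q$ gives the second line.

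The main obstacle is getting uniform, explicit constants in the ReLU and Leaky ReLU cases near $p=1/K$. There, the arctan terms and $B(t)$ must be controlled so that $A/(A+B)$ stays bounded below and $C_1^*-1=O(q)$. I would first verify these facts via monotonicity in $p$ on $[\frac1{2K},\frac1K]$, using the formulas \eqref{eq:F_def_ReLU} and \eqref{eq:F_def_Leaky_ReLU}.
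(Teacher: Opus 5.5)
Your change of variables is sound, and it is the paper's argument in other coordinates. Since $1-Kp=(D-K)\,p\,e^{-(C_2+C_3)/\sqrt D}$, your cubic recursion for $q=1-Kp$ is equivalent to the paper's recursion $\Delta(C_2+C_3)\propto e^{-2(C_2+C_3)/\sqrt D}$, which the paper solves with Lemma~\ref{lemma:exp_sequence}. Your scalings $\Delta C_2+\Delta C_3=\Theta\big(\eta\|\Vb^*\|_F^2q^2/(KD^{3/2})\big)$ and $-\Delta q=\Theta\big(\eta\|\Vb^*\|_F^2q^3/(KD^2)\big)$ are correct.

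The gap is the last step, where you assert that constant bookkeeping reproduces the displayed bounds. You matched the exponents of $T$ and $\eta$ but never checked the power of $D$. Integrating your recursion from $q(T^*)=\frac12$ gives $q(T)^{-2}=4+\Theta\big(\eta\|\Vb^*\|_F^2(T-T^*)/(KD^2)\big)$. Hence $\frac1K-p(T)=\Theta\big(D/\sqrt{K\eta\|\Vb^*\|_F^2(T-T^*)}\big)$ once $T-T^*$ exceeds a constant multiple of $KD^2/(\eta\|\Vb^*\|_F^2)$.

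That yields the lower bound on $p(T)$ in \eqref{eq:lower_upper_pt}, with a factor $\Theta(D)$ to spare, reading $T-T^*$ under the square root as the paper's proof does. Combined with your decomposition, it also yields \eqref{eq:upper_upper_ctpt}. The upper bound on $p(T)$, however, demands $\frac1K-p(T)\ge D(D-K)/\big(2e\sqrt{\eta K\|\Vb^*\|_F^2(T-T^*)}\big)$. That is $\Theta(D)$ times more than your recursion permits, so your own computation contradicts that half rather than proving it. That half also fails trivially right after $T^*$: its right-hand side is below $\frac{1}{2K}\le p(T^*)\le p(T)$ for all $T-T^*<KD^2(D-K)^2/(e^2\eta\|\Vb^*\|_F^2)$, a window far longer than $T^*$ itself.

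The paper reaches the displayed form only through an algebra slip. It rewrites \eqref{eq:upper_lower_delta_C_2_C_3} using $\frac{1-Kp}{p}=e^{-(C_2+C_3)/\sqrt D}$, whereas the definition of $p$ gives $\frac{1-Kp}{p}=(D-K)e^{-(C_2+C_3)/\sqrt D}$. For instance, in the identity case at $C_1=C_1^*$, Lemma~\ref{lemma:delta_c_2c_3} gives exactly $\Delta C_2+\Delta C_3=\eta\sqrt D\,\|\Vb^*\|_F^2(C_1^*)^2q^2/\big(K(D-K)^2\big)=\eta\sqrt D\,\|\Vb^*\|_F^2(C_1^*)^2p^2e^{-2(C_2+C_3)/\sqrt D}/K$. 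Near $p=\frac1K$ this is about $D^2/(8\pi)$ times the paper's claimed upper bound $8\pi\eta\|\Vb^*\|_F^2e^{-2(C_2+C_3)/\sqrt D}/(K^3D^{3/2})$. As a result, the paper's conclusion makes $e^{-(C_2+C_3)/\sqrt D}$, and hence the gap $\frac1K-p$, too large by $\Theta(D)$; it also drops the initial term.

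What your method can honestly establish is the two-sided bound with numerator $\Theta(D-K)$ in place of $D(D-K)$, valid for $T-T^*$ beyond a multiple of $KD^2/(\eta\|\Vb^*\|_F^2)$, together with \eqref{eq:upper_upper_ctpt}. State that, rather than claiming the displayed constants.
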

\begin{proof}[Proof of Lemma~\ref{lemma:convergence_p}]

With the bounds established in Lemma~\ref{lemma:phase_II} and the fact that $\frac{1-Kp(t)}{p(t)} = \exp\big(-\frac{C_2(t) + C_3(t)}{\sqrt{D}}\big)$,  the upper and lower bounds of $\Delta C_2(t)+ \Delta C_3(t)$ obtained in~\eqref{eq:upper_lower_delta_C_2_C_3} can be rewritten as
\begin{align*}
    \Delta C_2(t)+ \Delta C_3(t) 
        &\leq \eta \frac{8\pi \sum_{m=1}^M\|\vb_m^*\|_2^2}{K^3\sqrt{D^3}}e^{-\frac{2}{\sqrt{D}}\big(C_2(t)+ C_3(t)\big)};\notag\\
        \Delta C_2(t)+ \Delta C_3(t) 
        &\geq \eta \frac{\sum_{m=1}^M\|\vb_m^*\|_2^2}{200K^3\sqrt{D^3}}e^{-\frac{2}{\sqrt{D}}\big(C_2(t)+ C_3(t)\big)}.
\end{align*}
The upper and lower bounds of $\Delta C_2(t)+ \Delta C_3(t)$ match the assumptions of Lemma~\ref{lemma:exp_sequence}. By applying the lemma, we can obtain that for all $T\geq T^*$, 
\begin{align*}
&C_2(T) + C_3(T)\geq \frac{\sqrt D}{2}\log\bigg(\frac{\eta \sum_{m=1}^M\|\vb_m^*\|_2^2}{200K^3D^2}(T-T^*) + e^{\frac{2}{K\sqrt{D}}}\bigg);\\
& C_2(T) + C_3(T) \leq \eta \frac{8\pi \sum_{m=1}^M\|\vb_m^*\|_2^2}{K^3\sqrt{D^3}} + \frac{\sqrt D}{2}\log\bigg(\frac{8\pi \eta \sum_{m=1}^M\|\vb_m^*\|_2^2}{K^3D^2}(T-T^*) + e^{\frac{2}{K\sqrt{D}}}\bigg).
\end{align*}
Replacing this result into the formula of $p(T)$, we have 
\begin{align*}
    p(T) &= \frac{1}{K+(D-K)\exp\Big(-\frac{C_2(T) +C_3(T)}{\sqrt{D}}\Big)}\geq \frac{1}{K+(D-K)\exp\big(-\frac{1}{2}\log\big(\frac{\eta \sum_{m=1}^M\|\vb_m^*\|_2^2}{200K^3D^2}(T-T^*)\big)\big)}\\
    &\geq \frac{1}{K} - \frac{20D(D-K)}{\sqrt{\eta K \sum_{m=1}^M\|\vb_m^*\|_2^2(T-T^*)}}
\end{align*}
On the other hand, we can also derive that 
\begin{align*}
    p(T^*) &\leq \frac{1}{K+(D-K)\exp\Big(-\frac{1}{2}\log\big(\frac{8\pi \eta \sum_{m=1}^M\|\vb_m^*\|_2^2}{K^3D^2}(T-T^*)+e^{\frac{2}{K\sqrt{D}}}\big)-\frac{\eta 4\pi \sum_{m=1}^M\|\vb_m^*\|_2^2}{K^3D^3}\Big)}\\
    &\leq \frac{1}{K} - \frac{D(D-K)}{2e\sqrt{\eta K \sum_{m=1}^M\|\vb_m^*\|_2^2(T-T^*)}}.
\end{align*}
This finishes the proof of~\eqref{eq:lower_upper_pt}. With this condition holds, by checking the definition of $C_1^*(T^*)$, we can obtain that 
\begin{align*}
    1 + \big(1-Kp(T)\big)\bigg(\frac{1}{Kp(T)} - \frac{1-Kp(T)}{(D-K)Kp(T)^2}\bigg) \leq C_1^*(T) \leq 1 + \frac{1-Kp(T)}{Kp(T)}.
\end{align*}
Plugging this result into~\eqref{eq:lower_upper_C_1}, we derive that 
\begin{align*}
    1+\big(1-Kp(T)\big)\bigg(\frac{1}{Kp(T)}-\frac{2K}{D}\bigg) \leq C_1(T) \leq 1+\big(1-Kp(T)\big)\bigg(\frac{1}{Kp(T)}+\frac{2K}{D}\bigg),
\end{align*}
which immediately leads to the final conclusion of~\eqref{eq:upper_upper_ctpt}.
\end{proof}

Now, we are ready to prove Theorem~\ref{thm:main_result}.
\begin{proof}[Proof of Theorem~\ref{thm:main_result}]
We first prove the first conclusion. 
\begin{align*}
    \Big\|\Sbb^{(T)}-\Sbb^*\Big\|_F &= \sqrt{\sum_{i_1=1}^D\sum_{i=1}^D\Big(\Sbb_{i_1, i}^{(T)}-\Sbb^*_{i_1, i}\Big)^2}=\sqrt{D K \bigg(\frac{1}{K}-p(T)\bigg)^2 + D(D-K)\frac{\big(1-Kp(T)\big)^2}{(D-K)^2}}\\
    &=\frac{D}{\sqrt{K(D-K)}}\big(1-Kp(T)\big) = \Theta\Bigg(\frac{D^{\frac{5}{2}}}{\sqrt{\eta\sum_{m=1}^M\|\vb_m^*\|_2^2}(T-T^*)}\Bigg), 
\end{align*}
where the last inequality holds by applying the upper and lower bounds of $p(T)$ derived in Lemma~\ref{lemma:convergence_p}. This finishes the first conclusion of Theorem~\ref{thm:main_result}. Notice that in Lemma~\ref{lemma:convergence_p}, we have derived that $|C_1(T) - 1 | = \Theta(1-Kp(T))$, which directly imply that
\begin{align*}
    \big\|\Wb_{V}^{(T)}- \Vb^*\big\|_F= |C_1(T) - 1 |\|\Vb^*\|_F =\Theta\Bigg(D^2\sqrt{\frac{K}{\eta\sum_{m=1}^M\|\vb_m^*\|_2^2} (T-T^*)}\Bigg)\cdot \|\Vb^*\|_F,
\end{align*}
where the last inequality holds by applying the upper and lower bounds of $p(T)$ derived in Lemma~\ref{lemma:convergence_p}. This finishes the second conclusion of Theorem~\ref{thm:main_result}. For the third conclusion, notice that
\begin{align*}
    \cL(\Wb_V^{(T)}; \Wb_{KQ}^{(T)}) &= \frac{1}{2}\sum_{m=1}^M\sum_{i=1}^D \EE\Bigg[\bigg(\Yb_{m, i} - \sigma\bigg(\sum_{i_1=1}^D\la\wb_{V, m}^{(T)}, \xb_{i_1}\ra\Sbb^{(T)}_{i_1, i}\bigg)\bigg)^2\Bigg]\\
    &= \frac{1}{2}\sum_{m=1}^M\sum_{i=1}^D \EE\Bigg[\bigg(\big[f^*(\Xb)\big]_{m, i} - \sigma\bigg(\sum_{i_1=1}^D\la\wb_{V, m}^{(T)}, \xb_{i_1}\ra\Sbb^{(T)}_{i_1, i}\bigg)\bigg)^2\Bigg] + \frac{1}{2}\EE\big[\|\cE\|_F^2\big],
\end{align*}
where the last term is essential $\cL_{\mathbf{opt}}$, and the last inequality holds by the independence between $\Xb$ and $\cE$ and the fact that $\cE$ is zero-mean. Since this equation holds, in the next, we directly deal with $\cL(\Wb_V^{(T)}; \Wb_{KQ}^{(T)})- \cL_{\mathbf{opt}}$. We first prove the upper bound. By utilizing the fact that $|\sigma(x) - \sigma(y)|\leq |x-y|$ for all $x, y\in \RR$, we can derive that 
    \begin{align*}
        &\cL(\Wb_V^{(T)}; \Wb_{KQ}^{(T)}) - \cL_{\mathbf{opt}}= \frac{1}{2}\sum_{m=1}^M\sum_{i=1}^D \EE\Bigg[\bigg(\big[f^*(\Xb)\big]_{m, i} - \sigma\bigg(\sum_{i_1=1}^D\la\wb_{V, m}^{(T)}, \xb_{i_1}\ra\Sbb^{(T)}_{i_1, i}\bigg)\bigg)^2\Bigg]\\
    \leq &\frac{1}{2}\sum_{m=1}^M\sum_{i=1}^D\EE\Bigg[\bigg(\underbrace{\Big(\frac{1}{K}-C_1(T)p(T)\Big)\sum_{i_1\in G^i}\la\vb^*_m, \xb_{i_1}\ra}_{Z_{1, i, m}^{(T)}}+\underbrace{\frac{C_1(T)\big(1-Kp(T)\big)}{D-K}\sum_{i_1\notin G^i}\la\vb^*_m, \xb_{i_1}\ra}_{Z_{2, i, m}^{(T)}}\bigg)^2\Bigg].
    \end{align*}
    Notice that $Z_{1, i, m}^{(T)}\sim\cN(0, \sigma_{1, m}^2)$, where $\sigma_{1, m}^2=K\|\vb_m^*\|_2^2\big(\frac{1}{K}-C_1(T)p(T)\big)^2$, and $Z_{2, i, m}^{(T)}\sim\cN(0, \sigma_{2, m}^2)$, where $\sigma_{2, m}^2=\frac{\|\vb_m^*\|_2^2C_1(T)^2(1-Kp(T))^2}{D-K}$, and they are independent.  Based on the upper bounds derived in Lemma~\ref{lemma:convergence_p}, we can finally derive that
    \begin{align*}
        \cL(\Wb_V^{(T)}; \Wb_{KQ}^{(T)}) - \cL_{\mathbf{opt}}
        \leq&\frac{1}{2}\sum_{m=1}^M\sum_{i=1}^D\EE\bigg[\Big(Z_{1, i, m}^{(T)} + Z_{2, i, m}^{(T)}\Big)^2\bigg] = \frac{D}{2}\sum_{m=1}^M\sigma_{1, m}^2 + \frac{D}{2}\sum_{m=1}^M\sigma_{2, m}^2\\
        =& \frac{DK}{2}\bigg(\frac{1}{K}-C_1(T)p(T)\bigg)^2\sum_{m=1}^M\|\vb_m\|_2^2 + \frac{DC_1(T)^2\big(1-Kp(T)\big)^2}{2(D-K)}\sum_{m=1}^M\|\vb_m\|_2^2\\
        \leq& \bar c\frac{KD^4}{\eta(T-T^*)}.
    \end{align*}
    where the last inequality holds by applying the upper bounds for $\big(\frac{1}{K}-C_1(T)p(T)\big)^2$, and $p(T)$ derived in Lemma~\ref{lemma:convergence_p}
    This completes the proof for upper bound. On the other hand, denote $Z_{3, m, i} = \sum_{i_1\in G^i}\la\vb^*_m, \xb_i\ra\sim\cN(0, K\|\vb_m\|_2^2)$ and $Z_{4, m, i} = \sum_{i_1\notin G^i}\la\vb^*_m, \xb_i\ra\sim\cN\big(0, (D-K)\|\vb_m\|_2^2\big)$, and $Z_{5, m, i}^{(T)} = p(T)Z_{3, m, i}+\frac{1-Kp(T)}{D-K}Z_{4, m, i}$. Then, by utilizing the fact that $|\sigma(x)-\sigma(y)| \geq |x-y|\cdot \mathbbm{1}_{\{x\geq 0, y\geq 0\}}$, we can further derive that 
    \begin{align*}
        &\cL(\Wb_V^{(T)}; \Wb_{KQ}^{(T)}) - \cL_{\mathbf{opt}}\\
        =& \frac{1}{2}\sum_{m=1}^M\sum_{i=1}^D \EE\Bigg[\bigg(\big[f^*(\Xb)\big]_{m, i} - \sigma\bigg(\sum_{i_1=1}^D\la\wb_{V, m}^{(T)}, \xb_{i_1}\ra\Sbb^{(T)}_{i_1, i}\bigg)\bigg)^2\Bigg]\\
        =& \frac{1}{2}\sum_{m=1}^M\sum_{i=1}^D \EE\Bigg[\bigg(\sigma\bigg(\frac{Z_{3, m, i}}{K}\bigg) - \sigma\Big(C_1(T)Z_{5, m, i}^{(T)}\Big)\bigg)^2\Bigg]\\
        \geq&\frac{1}{2} \sum_{m=1}^M\sum_{i=1}^D \EE\Bigg[\bigg(\Big(\frac{1}{K}-C_1(T)p(T)\Big)Z_{3, m, i} - \frac{C_1(T)(1\!-\!Kp(T))}{D-K}Z_{4, m, i}\bigg)^2\mathbbm{1}_{\{Z_{3, m, i}\geq 0\}}\mathbbm{1}_{\{Z_{5, m, i}^{(T)}\geq 0\}}\Bigg]\\
        \geq&\frac{1}{2} \sum_{m=1}^M\sum_{i=1}^D \EE\Bigg[\bigg(\Big(\frac{1}{K}-C_1(T)p(T)\Big)Z_{3, m, i} - \frac{C_1(T)(1\!-\!Kp(T))}{D-K}Z_{4, m, i}\bigg)^2\mathbbm{1}_{\{Z_{3, m, i}\geq 0\}}\mathbbm{1}_{\{Z_{4, m, i}\geq 0\}}\mathbbm{1}_{\{Z_{5, m, i}^{(T)}\geq 0\}}\Bigg]\\
        =&\frac{1}{2}\sum_{m=1}^M\sum_{i=1}^D \EE\Bigg[\bigg(\Big(\frac{1}{K}-C_1(T)p(T)\Big)Z_{3, m, i} - \frac{C_1(T)(1\!-\!Kp(T))}{D-K}Z_{4, m, i}\bigg)^2\mathbbm{1}_{\{Z_{3, m, i}\geq 0\}}\mathbbm{1}_{\{Z_{4, m, i}\geq 0\}}\Bigg]\\
        =&\frac{1}{2}\sum_{m=1}^M\sum_{i=1}^D \frac{\big(\frac{1}{K}-C_1(T)p(T)\big)^2}{4}\EE[Z_{3, m, i}^2] + \frac{1}{2}\sum_{m=1}^M\sum_{i=1}^D \frac{C_1(T)^2(1-Kp(T))^2}{4(D-K)^2}\EE[Z_{4, m, i}^2] \\
        &-\sum_{m=1}^M\sum_{i=1}^D \frac{C_1(T)\big|\frac{1}{K}-C_1(T)p(T)\big|(1-Kp(T))}{D-K}\EE\big[Z_{3, m, i}\mathbbm{1}_{\{Z_{3, m, i}\geq 0\}}\big]\EE\big[Z_{4, m, i}\mathbbm{1}_{\{Z_{4, m, i}\geq 0\}}\big] \\
        \geq &\frac{\sum_{m=1}^M\|\vb_m\|_2^2DC_1(T)^2(1-Kp(T))^2}{8(D-K)} \\
        &-\frac{\sum_{m=1}^M\|\vb_m\|_2^2DC_1(T)\big|\frac{1}{K}-C_1(T)p(T)\big|(1-Kp(T))\sqrt{K(D-K)}}{2\pi(D-K)}\\
        \geq &\frac{\sum_{m=1}^M\|\vb_m\|_2^2DC_1(T)(1-Kp(T))^2}{2(D-K)}\bigg(\frac{C_1(T)}{4} - \frac{2\sqrt{K(D-K)}}{\pi D}\bigg)\\
        \geq& \frac{\sum_{m=1}^M\|\vb_m\|_2^2D(1-Kp(T))^2}{16(D-K)} \geq \underline c\frac{KD^4}{\eta(T-T^*)},
    \end{align*}
    where the last inequality holds by applying the lower bound of  $1-Kp(T)$ demonstrated in Lemma~\ref{lemma:convergence_p}. This completes the proof.
\end{proof}

\section{Proof of Theorem~\ref{thm:ood_test} and discussion of the worst case example}\label{proof:ood_theorem}\label{section:ood_proof}

In this section, we provide a complete proof for Theorem~\ref{thm:ood_test}, and a worst-case example can attain the upper bound in Theorem~\ref{thm:ood_test}. We first prove Theorem~\ref{thm:ood_test} in the following.
\begin{proof}[Proof of Theorem~\ref{thm:ood_test}]
    We first upper bound the OOD loss by the sum of three terms as
\begin{align*}
    &\cL_{\mathbf{OOD}}(\Wb_V^{(T)}; \Wb_{KQ}^{(T)}) =\frac{1}{2}\EE\big[\|\tilde\Yb -\mathrm{TF}(\tilde\Zb;\Wb_V^{(T)};\Wb_{KQ}^{(T)})\|_F^2\big] \\
    =&\frac{1}{2}\EE\big[\|\tilde\Yb -f^*(\tilde\Xb) + f^*(\tilde\Xb) - \mathrm{TF}(\tilde\Zb;\Wb_V^{(T)};\Wb_{KQ}^{(T)})\|_F^2\big]\\
    =&\frac{1}{2}\EE\big[\|\tilde\Yb -f^*(\tilde\Xb)\|_F^2\big] + \frac{1}{2}\EE\big[\|f^*(\tilde\Xb) - \mathrm{TF}(\tilde\Zb;\Wb_V^{(T)};\Wb_{KQ}^{(T)})\|_F^2\big]\\
    &\quad + \EE\big[\la \tilde\Yb - f^*(\tilde\Xb), f^*(\tilde\Xb) - \mathrm{TF}(\tilde\Zb;\Wb_V^{(T)};\Wb_{KQ}^{(T)})\ra\big]\\
    \leq &\frac{1}{2}\EE\big[\|\tilde\Yb -f^*(\tilde\Xb)\|_F^2\big] + \frac{1}{2}\EE\big[\|f^*(\tilde\Xb) - \mathrm{TF}(\tilde\Zb;\Wb_V^{(T)};\Wb_{KQ}^{(T)})\|_F^2\big] \\ 
    &\quad + \sqrt{\EE\big[\|\tilde\Yb -f^*(\tilde\Xb)\|_F^2\big] \EE\big[\|f^*(\tilde\Xb) - \mathrm{TF}(\tilde\Zb;\Wb_V^{(T)};\Wb_{KQ}^{(T)})\|_F^2\big]}, 
\end{align*}
where the last inequality holds by Cauchy-Schwarz inequality. Based on this decomposition, it is critical to derive the upper bound for $\EE\big[\|\tilde\Yb -f^*(\tilde\Xb)\|_F^2\big]$ and $\EE\big[\|f^*(\tilde\Xb) - \mathrm{TF}(\tilde\Zb;\Wb_V^{(T)};\Wb_{KQ}^{(T)})\|_F^2\big]$. For the first term $\EE\big[\|\tilde\Yb -f^*(\tilde\Xb)\|_F^2\big]$, we have 
\begin{align*}
    \EE\big[\|\tilde\Yb -f^*(\tilde\Xb)\|_F^2\big] \leq 2\EE\big[\|\tilde\Yb \|_F^2\big]+ 2 \EE\big[\|f^*(\tilde\Xb)\|_F^2\big].
\end{align*}
By the assumption that each column of $\tilde \Yb$ satisfying that $\EE[\|\tilde \yb_m\|_2^2]\leq \xi$, it is straightforward that $\EE\big[\|\tilde\Yb \|_F^2\big]\leq D\xi$. On the other hand, we have 
\begin{align*}
    \EE\big[\|f^*(\tilde\Xb)\|_F^2\big] =& \sum_{i=1}^D\sum_{m=1}^M\EE\Big[\big[f^*(\tilde\Xb)\big]_{m, i}^2\Big] \leq \sum_{i=1}^D\sum_{m=1}^M\sum_{i'\in G^i}\frac{\|\vb_m^*\|_2^2}{K} \EE[\la\vb_m^*/\|\vb_m^*\|_2, \tilde\xb_{i'}\ra^2]\\
    \leq &\sum_{i=1}^D\sum_{m=1}^M\sum_{i'\in G^i}\frac{\EE[\|\tilde\xb_{i'}\|_2^2]\|\vb_m^*\|_2^2}{K} \leq D\xi\sum_{m=1}^M\|\vb_m^*\|_2^2.
\end{align*}
For the second term $\EE\big[\|f^*(\tilde\Xb) - \mathrm{TF}(\tilde\Zb;\Wb_V^{(T)};\Wb_{KQ}^{(T)})\|_F^2\big]$, we can derive that
\begin{align*}
    &\EE\big[\|f^*(\tilde\Xb) - \mathrm{TF}(\tilde\Zb;\Wb_V^{(T)};\Wb_{KQ}^{(T)})\|_F^2\big] \\
\leq&\sum_{m=1}^M\sum_{i=1}^D\EE\Bigg[\bigg(\Big(\frac{1}{K}-C_1(T)p(T)\Big)\sum_{i_1\in G^i}\la\vb^*_m, \tilde\xb_{i_1}\ra+\frac{C_1(T)\big(1-Kp(T)\big)}{D-K}\sum_{i_1\notin G^i}\la\vb^*_m, \tilde\xb_{i_1}\ra\bigg)^2\Bigg]\\
\leq & D\sum_{m=1}^M\sum_{i=1}^D\|\vb^*_m\|_2^2\bigg(\frac{1}{K}-C_1(T)p(T)\bigg)^2\sum_{i_1\in G^i}\EE[\|\tilde\xb_{i_1}\|_2^2] \\
&\quad+ D\sum_{m=1}^M\sum_{i=1}^D\|\vb^*_m\|_2^2\frac{C_1(T)^2\big(1-Kp(T)\big)^2}{(D-K)^2}\sum_{i_1\notin G^i}\EE[\|\tilde\xb_{i_1}\|_2^2] \\\leq &\cO\bigg(\frac{KD^5\xi}{\eta(T-T^*)}\bigg).
\end{align*}
Here the first inequality holds by $|\sigma(x)-\sigma(y)| \leq |x-y|$. The second inequality is established by the fact $(\sum_{i=1}^D a_i)^2 \leq D\sum_{i=1}^D a_i^2$ for all scalar $a_i$'s and $\la\vb^*_m, \tilde\xb_{i_1}\ra^2 \leq \|\vb^*_m\|^2\|\tilde\xb_{i_1}
\|_2^2$. Lastly, the third inequality is derived by replacing the conclusions in Lemma~\ref{lemma:convergence_p}.
Combining all these derived terms into the three terms derived as the upper bound for OOD loss, we have, 
\begin{align*}
    \cL_{\mathbf{OOD}}(\Wb_V^{(T)}; \Wb_{KQ}^{(T)}) - \frac{1}{2}\EE\big[\|\tilde\Yb -f^*(\tilde\Xb)\|_F^2\big] \leq \cO\bigg(D^3\xi\sqrt{\frac{K\sum_{m=1}^M\|\vb_m^*\|_2^2}{\eta(T-T^*)}}+\frac{KD^5\xi}{\eta(T-T^*)}\bigg).
\end{align*}
Let the upper bound derived above smaller than $\epsilon$, we can derive that 
\begin{align*}
    T_\epsilon= T^* + \cO\bigg(\frac{KD^6\xi^2\sum_{m=1}^M\|\vb_m^*\|_2^2}{\eta\epsilon^2}\bigg) = \cO\bigg(\frac{KD^6\xi^2\sum_{m=1}^M\|\vb_m^*\|_2^2}{\eta\epsilon^2}\bigg).
\end{align*}
This completes the proof. 
\end{proof}

In the next, we discuss the construction of the worst case $\tilde \Yb$, such that $\cL_{\mathbf{OOD}}(\Wb_V^{(T_\epsilon)}; \Wb_{KQ}^{(T_\epsilon)}) - \frac{1}{2}\EE\big[\|\tilde\Yb -f^*(\tilde\Xb)\|_F^2\big]\geq \epsilon$ for some $T_\epsilon = \Theta\big(\frac{MKD^6}{\eta\epsilon^2}\big)$ (assuming $\|\vb_m\|_2 = 1$ and $\xi =\Theta(1)$ for simplicity). In fact, this $T_\epsilon$ can be different with the $T_\epsilon$ defined in Theorem~\ref{thm:ood_test}, but at the same order w.r.t. $\epsilon$, hence a matching result.

By the conclusions in Lemma~\ref{lemma:convergence_p}, we know that $\frac{1}{K} - p(T) = \Theta(\frac{D^2}{\sqrt{\eta K M T}})$ and $\big|p(T)C_1(T)-\frac{1}{K}\big|\leq \cO(\frac{D}{\sqrt{\eta K M T}})$. Therefore, there exists an absolute constant $c'$ such that $\frac{1 - Kp(T)}{|p(T)C_1(T)-\frac{1}{K}|}\geq c'D$. In addition, we let $A_{m, i}$ to denote the event such that $|\sum_{i_1\notin G^i}\la\vb^*_m, \tilde\xb_{i_1}\ra| \geq \max\{\frac{2}{c'}|\sum_{i_1\in G^i}\la\vb^*_m, \tilde\xb_{i_1}\ra|, 1\}$. We can assume the probability of $A_{m, i}$ is larger than an absolute constant. In fact, such an assumption can be easily verified on many specific distributions like Gaussian distributions. With these notations in hand, we can design $\tilde \Yb$ such that its $(m, i)$-th entry is generates as $\tilde \Yb_{m, i} = \sign(\sum_{i_1\notin G^i}\la\vb^*_m, \tilde\xb_{i_1}\ra)\cdot\mathbbm{1}_{\{A_{m, i}\}} + f^*(\tilde \Xb)_{m, i}$ . Given this construction, we can deduce that
\begin{align*}
    &\EE\big[\la \tilde\Yb - f^*(\tilde\Xb), f^*(\tilde\Xb) - \mathrm{TF}(\tilde\Zb;\Wb_V^{(T)};\Wb_{KQ}^{(T)})\ra\big]\\
    =&\sum_{m=1}^M\sum_{i=1}^D\EE\Bigg[\Big(\tilde \Yb_{m, i}- f^*(\tilde \Xb)_{m, i}\Big)\bigg(\Big(\frac{1}{K}-C_1(T)p(T)\Big)\sum_{i_1\in G^i}\la\vb^*_m, \tilde\xb_{i_1}\ra\\
    &+\frac{C_1(T)\big(1-Kp(T)\big)}{D-K}\sum_{i_1\notin G^i}\la\vb^*_m, \tilde\xb_{i_1}\ra\bigg)\Bigg]\\
    \geq& \frac{1}{2}\sum_{m=1}^M\sum_{i=1}^D\EE\Bigg[\frac{C_1(T)\big(1-Kp(T)\big)}{D-K}\Big|\sum_{i_1\notin G^i}\la\vb^*_m, \tilde\xb_{i_1}\ra\Big|\mathbbm{1}_{\{A_{m, i}\}}\Bigg]\\
    \geq& \frac{D^3}{2}\sqrt{\frac{MK}{\eta T}}\EE[\mathbbm{1}_{\{A_{m, i}\}}]= \Theta\bigg(D^3\sqrt{\frac{MK}{\eta T}}\bigg).
\end{align*}
Replacing the $T$ with $T_\epsilon=\Theta\big(\frac{MKD^6}{\eta\epsilon^2}\big)$, we can finally conclude that 
\begin{align*}
    &\cL_{\mathbf{OOD}}(\Wb_V^{(T_\epsilon)}; \Wb_{KQ}^{(T_\epsilon)}) -  \frac{1}{2}\EE\big[\|\tilde\Yb -f^*(\tilde\Xb)\|_F^2\big]\\
    \geq& \EE\big[\la \tilde\Yb - f^*(\tilde\Xb), f^*(\tilde\Xb) - \mathrm{TF}(\tilde\Zb;\Wb_V^{(T_\epsilon)};\Wb_{KQ}^{(T_\epsilon)})\ra\big]
    \geq\Theta\bigg(D^3\sqrt{\frac{MK}{\eta}}\frac{\epsilon}{D^3}\sqrt{\frac{\eta}{MK}}\bigg) = \Theta(\epsilon).
\end{align*}
This validates that the upper bound is indeed attained under our construction.

\section{Technical lemmas}
In this section, we present and prove the technical lemmas we used in the proof of the previous sections. 

\subsection{Calculation details of expectations}\label{section:gaussian_expectations}
We introduce the details regarding 

\begin{lemma}[Calculation of $F_1(a)$ defined in~\eqref{eq:F_1(a)}]\label{lemma:gaussian_expecation_relu_I}
Let $x\sim\cN(0, a)$, then it holds that 
\begin{itemize}[leftmargin = *]
    \item If $\sigma(\cdot)$ is the identity map, then 
    \begin{align*}
        \EE[x\sigma(x)\sigma'(x)] = a.
    \end{align*}
    \item If $\sigma(\cdot)$ is ReLU activation function, then 
    \begin{align*}
        \EE[x\sigma(x)\sigma'(x)] = \frac{a}{2}.
    \end{align*}
    \item If $\sigma(\cdot)$ is Leaky ReLU activation function, then 
    \begin{align*}
        \EE[x\sigma(x)\sigma'(x)] = \frac{(1+\kappa^2)a}{2}.
    \end{align*}
    Here, $\kappa$ is the coefficient of the Leaky ReLU activation function when the input is smaller than 0.
\end{itemize} 
\end{lemma}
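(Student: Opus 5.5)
We compute $\EE[x\sigma(x)\sigma'(x)]$ separately for each of the three activations. The key observation is that in every case $\sigma(x)\sigma'(x)$ is a piecewise-linear function of $x$. Hence the expectation reduces to second moments of a centered Gaussian restricted to half-lines. Throughout we use two facts. First, $x$ is symmetric about zero. Second, $\EE[x^2]=a$, and by symmetry $\EE[x^2\mathbbm{1}_{\{x\geq 0\}}]=\EE[x^2\mathbbm{1}_{\{x<0\}}]=\frac{a}{2}$.

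\textbf{Identity map.} Here $\sigma(x)=x$ and $\sigma'(x)=1$. The integrand is $x^2$, so the expectation is $\EE[x^2]=a$.

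\textbf{ReLU.} For $x>0$ we have $\sigma(x)\sigma'(x)=x$, and for $x<0$ we have $\sigma(x)\sigma'(x)=0$. The value of $\sigma'$ at the single point $0$ can be assigned arbitrarily, because $\{x=0\}$ has probability zero whenever $a>0$; if $a=0$ every case gives $0$ trivially. The integrand is therefore $x^2\mathbbm{1}_{\{x\geq 0\}}$, and the expectation is $\frac{a}{2}$.

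\textbf{Leaky ReLU.} With $\sigma(x)=x\mathbbm{1}_{\{x\geq 0\}}+\kappa x\mathbbm{1}_{\{x<0\}}$, we get $\sigma'(x)=\mathbbm{1}_{\{x\geq 0\}}+\kappa\mathbbm{1}_{\{x<0\}}$ almost everywhere. The product $\sigma(x)\sigma'(x)$ equals $x$ on $\{x\geq 0\}$ and $\kappa^2x$ on $\{x<0\}$. So $x\sigma(x)\sigma'(x)=x^2\mathbbm{1}_{\{x\geq 0\}}+\kappa^2x^2\mathbbm{1}_{\{x<0\}}$. Taking expectations gives $\frac{a}{2}+\frac{\kappa^2a}{2}=\frac{(1+\kappa^2)a}{2}$.

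There is no real obstacle in this argument. The only point needing care is the non-differentiability of $\sigma$ at $0$, and that is handled by the measure-zero observation above.
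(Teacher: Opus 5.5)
Your proof is correct and takes essentially the same approach as the paper: rewrite $x\sigma(x)\sigma'(x)$ as $x^2\mathbbm{1}_{\{x\geq 0\}}+\kappa^2x^2\mathbbm{1}_{\{x<0\}}$ (with $\kappa=0$ for ReLU and the plain $x^2$ for the identity), then use symmetry to get half of $\EE[x^2]=a$ on each half-line. Your explicit handling of the measure-zero point $x=0$ is a small addition the paper leaves implicit. Your Leaky ReLU computation also yields the stated value $(1+\kappa^2)a/2$ directly, whereas the paper's intermediate display writes $(1+\kappa)^2$, which is a typo.
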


\begin{proof}[Proof of Lemma~\ref{lemma:gaussian_expecation_relu_I}]
The first conclusion for the identity map is straightforward. When $\sigma(\cdot)$ is  the ReLU activation function, we can rewrite that $x\sigma(x)\sigma'(x) = x\cdot x\mathbbm{1}_{\{x\geq 0\}}\cdot \mathbbm{1}_{\{x\geq 0\}} = x^2\mathbbm{1}_{\{x\geq 0\}}$. Therefore, we have,
    \begin{align*}
        \EE[x\sigma(x)\sigma'(x)] = \EE[x^2\mathbbm{1}_{\{x\geq 0\}}]=\frac{\EE[x^2]}{2}=\frac{a}{2}.
    \end{align*}
    Besides, when $\sigma(\cdot)$ is the Leaky ReLU activation function, we can rewrite that $x\sigma(x)\sigma'(x) = x^2\mathbbm{1}_{\{x\geq 0\}} + \kappa^2 x^2\mathbbm{1}_{\{x< 0\}}$. Therefore, we have,
    \begin{align*}
        \EE[x\sigma(x)\sigma'(x)] = \EE[x^2\mathbbm{1}_{\{x\geq 0\}}]+\kappa^2\EE[x^2\mathbbm{1}_{\{x< 0\}}]=\frac{(1+\kappa)^2\EE[x^2]}{2}=\frac{(1+\kappa)^2a}{2},
    \end{align*}
    which finishes the proof.
\end{proof}

\begin{lemma}[Calculation of $F_2(a, b)$ defined in~\eqref{eq:F_2(a,b)}]\label{lemma:gaussian_expecation_relu_II}
Let $x_1\sim\cN(0, a)$, $x_2\sim \cN(0, b)$ be two independent Gaussian random variables, then it holds that 
\begin{itemize}[leftmargin = *]
    \item If $\sigma(\cdot)$ is the identity map, then 
    \begin{align*}
    \EE[x_1\sigma(x_1+x_2)\sigma'(x_1+x_2)] = a.
    \end{align*}
    \item If $\sigma(\cdot)$ is ReLU activation function, then 
    \begin{align*}
    \EE[x_1\sigma(x_1+x_2)\sigma'(x_1+x_2)] = \frac{a}{2}.
    \end{align*}
    \item If $\sigma(\cdot)$ is Leaky ReLU activation function, then 
    \begin{align*}
    \EE[x_1\sigma(x_1+x_2)\sigma'(x_1+x_2)] = \frac{(1+\kappa^2)a}{2}.
    \end{align*}
    Here, $\kappa$ is the coefficient of the Leaky ReLU activation function when the input is smaller than 0.
\end{itemize} 
\end{lemma}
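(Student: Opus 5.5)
The plan is to treat the three activations separately, starting from the pointwise identity $\sigma(u)\sigma'(u)=u\,\mathbbm{1}_{\{u\geq 0\}}+\kappa^2 u\,\mathbbm{1}_{\{u<0\}}$ (with $\kappa=1$ for the identity and $\kappa=0$ for ReLU). Write $u=x_1+x_2\sim\cN(0,a+b)$. The key observation is that $x_1$ is a Gaussian correlated with $u$: $x_1$ decomposes as $x_1=\frac{a}{a+b}u+w$, where $w$ is a mean-zero Gaussian independent of $u$. This is the standard Gaussian regression (projection) decomposition, since $\mathrm{Cov}(x_1,u)=a$.

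Substituting, $\EE[x_1\sigma(u)\sigma'(u)]=\frac{a}{a+b}\EE[u\sigma(u)\sigma'(u)]+\EE[w]\,\EE[\sigma(u)\sigma'(u)]$. By independence and $\EE[w]=0$, the second term vanishes. The first expectation is exactly $F_1(a+b)$, already computed in Lemma~\ref{lemma:gaussian_expecation_relu_I}. That lemma gives $a+b$, $(a+b)/2$ and $(1+\kappa^2)(a+b)/2$ in the three cases, which yields $a$, $a/2$ and $(1+\kappa^2)a/2$ respectively.

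For the Leaky ReLU case I would directly use $\EE[u^2\mathbbm{1}_{\{u\geq0\}}]=\EE[u^2\mathbbm{1}_{\{u<0\}}]=(a+b)/2$ by symmetry. This is cleaner than quoting Lemma~\ref{lemma:gaussian_expecation_relu_I}, whose displayed proof writes $(1+\kappa)^2$ where $(1+\kappa^2)$ is meant.

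The only mildly delicate point is justifying the conditional decomposition, which amounts to checking $\mathrm{Cov}(w,u)=0$, and the measure-zero nondifferentiability of $\sigma$ at $0$, which is irrelevant under a Gaussian law. I expect no real obstacle. An alternative route is Stein's lemma, $\EE[x_1 g(x_1+x_2)]=a\,\EE[g'(u)]$ with $g=\sigma\sigma'$, followed by $\EE[g'(u)]=\EE[(\sigma')^2]$, which gives the same result.
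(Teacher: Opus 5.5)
Your proposal is correct and follows the paper's proof. The paper also uses $\mathrm{Cov}(x_1,x_1+x_2)=a$ to write $\EE[x_1\mid x_3]=\frac{a}{a+b}x_3$ with $x_3=x_1+x_2$, and applies the tower property to reduce to $\frac{a}{a+b}\EE[x_3^2\mathbbm{1}_{\{x_3\geq 0\}}]$, plus the $\kappa^2$ term for Leaky ReLU; this is your regression decomposition. Your remark about the $(1+\kappa)^2$ typo in the displayed proof of Lemma~\ref{lemma:gaussian_expecation_relu_I} is also accurate.
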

\begin{proof}[Proof of Lemma~\ref{lemma:gaussian_expecation_relu_II}]
The first conclusion for the identity map is straightforward. For the next two cases, we first introduce some definitions. 
Let $x_3 = x_1+x_2\sim\cN(0, a+b)$. Then we have $\mathrm{Cov}(x_1, x_3) = \EE[(x_1+x_2)x_1] = a$, and  $\EE[x_1|x_3] = \frac{a}{a+b}x_3$. Consequently, when $\sigma(\cdot)$ is  the ReLU activation function,
\begin{align*}
\EE[x_1\sigma(x_1+x_2)\sigma'(x_1+x_2)]  &= \EE[x_1x_3\mathbbm{1}_{\{x_3\geq 0\}}] = \EE\big[\EE[x_1x_3\mathbbm{1}_{\{x_3\geq 0\}}|x_3]\big]\\
&=\frac{a}{a+b}\EE[x_3^2\mathbbm{1}_{\{x_3\geq 0\}}]=\frac{a}{2(a+b)}\EE[x_3^2] = \frac{a}{2}. 
\end{align*}
In addition, when $\sigma(\cdot)$ is the Leaky ReLU activation function, 
\begin{align*}
\EE[x_1\sigma(x_1+x_2)\sigma'(x_1+x_2)]  &= \EE[x_1x_3\mathbbm{1}_{\{x_3\geq 0\}}]  + \kappa^2\EE[x_1x_3\mathbbm{1}_{\{x_3< 0\}}]\\
&= \EE\big[\EE[x_1x_3\mathbbm{1}_{\{x_3\geq 0\}}|x_3]\big] + \kappa^2\EE\big[\EE[x_1x_3\mathbbm{1}_{\{x_3<0\}}|x_3]\big]\\
&=\frac{a}{a+b}\EE[x_3^2\mathbbm{1}_{\{x_3\geq 0\}}] + \frac{\kappa^2a}{a+b}\EE[x_3^2\mathbbm{1}_{\{x_3< 0\}}]= \frac{(1+\kappa^2)a}{2}. 
\end{align*}
This completes the proof. 
\end{proof}

\begin{lemma}\label{lemma:gaussian_expecation_relu_III}
Let $x_1\sim\cN(0, a)$, $x_2\sim \cN(0, b)$ be two independent Gaussian random variables, then it holds that 
\begin{itemize}[leftmargin = *]
    \item If $\sigma(\cdot)$ is the identity map, then 
    \begin{align*}
    \EE[x_1\sigma(x_1)\sigma'(x_1+x_2)] = a.
    \end{align*}
    \item If $\sigma(\cdot)$ is ReLU activation function, then 
    \begin{align}\label{eq:lemma_gaussian_expecation_relu_III_eqI}
\EE[x_1\sigma(x_1)\sigma'(x_1+x_2)]  = \frac{a}{4} + \frac{a}{2\pi}\bigg(\arctan\bigg(\sqrt{\frac{a}{b}}\bigg) + \frac{\sqrt{ab}}{a+b}\bigg).
\end{align}
And there exist the following matching lower and upper bounds:
\begin{align}\label{eq:lemma_gaussian_expecation_relu_III_eqII}
    \left(\frac{a}{4} + \frac{a\sqrt{ab}}{2\pi(a+b)}\right)\vee\left(\frac{a}{2} - \frac{b\sqrt{ab}}{2\pi(a+b)}\right)\leq \EE[x_1\sigma(x_1)\sigma'(x_1+x_2)] \leq \frac{a}{2}.
\end{align}
    \item If $\sigma(\cdot)$ is Leaky ReLU activation function, then \begin{align}\label{eq:lemma_gaussian_expecation_relu_III_LeqI}
\EE[x_1\sigma(x_1)\sigma'(x_1+x_2)]  = \frac{(1+\kappa)^2a}{4} + \frac{(1-\kappa)^2a}{2\pi}\bigg(\arctan\bigg(\sqrt{\frac{a}{b}}\bigg) + \frac{\sqrt{ab}}{a+b}\bigg).
\end{align}
And there exist the following matching lower and upper bounds:
\begin{align}\label{eq:lemma_gaussian_expecation_relu_III_LeqII}
    &\EE[x_1\sigma(x_1)\sigma'(x_1+x_2)] \leq \frac{(1+\kappa^2)a}{2};\notag\\
    &\EE[x_1\sigma(x_1)\sigma'(x_1+x_2)] \geq \bigg(\frac{(1+\kappa)^2a}{4} + \frac{(1-\kappa)^2a\sqrt{ab}}{2\pi(a+b)}\bigg)\vee\bigg(\frac{(1+\kappa^2)a}{2} - \frac{(1-\kappa)^2b\sqrt{ab}}{2\pi(a+b)}\bigg).
\end{align}
    Here, $\kappa$ is the coefficient of the Leaky ReLU activation function when the input is smaller than 0.
\end{itemize} 
\end{lemma}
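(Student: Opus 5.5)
The identity case is immediate: since $\sigma'\equiv 1$, we get $\EE[x_1\sigma(x_1)\sigma'(x_1+x_2)]=\EE[x_1^2]=a$.

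For ReLU, the quantity is $\EE[x_1^2\mathbbm{1}_{\{x_1\ge 0\}}\mathbbm{1}_{\{x_1+x_2\ge 0\}}]$. I will standardize by writing $x_1=\sqrt a\,u$ and $x_2=\sqrt b\,v$ with $u,v$ i.i.d.\ standard normal, which turns it into $a\,\EE[u^2\mathbbm{1}_{\{u\ge0\}}\mathbbm{1}_{\{\sqrt a u+\sqrt b v\ge0\}}]$. Passing to polar coordinates $(u,v)=r(\cos\theta,\sin\theta)$ and using $\EE[r^2]=2$ under the radial density, this equals $\frac{a}{\pi}\int_{\Theta}\cos^2\theta\,d\theta$. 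The angular region is $\Theta=\{\theta\in[-\pi/2,\pi/2]:\ \sqrt a\cos\theta+\sqrt b\sin\theta\ge0\}$, i.e.\ $\theta\in[-\phi,\pi/2]$ with $\phi=\arctan\sqrt{a/b}$. Integrating $\cos^2\theta=\frac{1+\cos2\theta}{2}$ over $[-\phi,\pi/2]$ gives $\frac{\pi/2+\phi}{2}+\frac{\sin2\phi}{4}$. Since $\sin 2\phi=\frac{2\sqrt{ab}}{a+b}$, multiplying by $a/\pi$ produces exactly \eqref{eq:lemma_gaussian_expecation_relu_III_eqI}.

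The bounds in \eqref{eq:lemma_gaussian_expecation_relu_III_eqII} then follow from elementary properties of $\arctan$.
\begin{itemize}
\item Because $\arctan\ge0$, dropping that term gives the first lower bound.
\item Using $\arctan\sqrt{a/b}=\pi/2-\arctan\sqrt{b/a}$ rewrites the expression as $\frac a2-\frac{a}{2\pi}\big(\arctan\sqrt{b/a}-\frac{\sqrt{ab}}{a+b}\big)$. Here $\arctan y-\frac{y}{1+y^2}\ge0$ with $y=\sqrt{b/a}$, which gives the upper bound $a/2$. The upper bound is also clear directly, since dropping the indicator $\mathbbm{1}_{\{x_1+x_2\ge0\}}$ can only increase the expectation.
\item For the second lower bound I need $\arctan y-\frac{y}{1+y^2}\le\frac{y^3}{1+y^2}$. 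Multiplying by $a$ matches the claimed term, because $\frac{a y^3}{1+y^2}=\frac{b\sqrt{ab}}{a+b}$. The inequality is equivalent to $\arctan y\le y$, which holds for $y\ge0$.
\end{itemize}

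For Leaky ReLU, I will expand $\sigma(x_1)\sigma'(x_1+x_2)$ over the four sign regions of $(x_1,x_1+x_2)$. This gives
\[
\EE\big[x_1^2\big(\mathbbm{1}_{++}+\kappa\mathbbm{1}_{+-}+\kappa\mathbbm{1}_{-+}+\kappa^2\mathbbm{1}_{--}\big)\big].
\]
By the symmetry $(x_1,x_2)\mapsto-(x_1,x_2)$, the $++$ and $--$ regions carry equal mass, namely the ReLU value $R$, and the $+-$ and $-+$ regions each carry $a/2-R$. The total is therefore $(1+\kappa^2)R+2\kappa(a/2-R)=(1-\kappa)^2R+\kappa a$. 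Substituting the ReLU formula for $R$ and using $\frac{(1-\kappa)^2}{4}+\kappa=\frac{(1+\kappa)^2}{4}$ yields \eqref{eq:lemma_gaussian_expecation_relu_III_LeqI}.

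The Leaky ReLU bounds follow by inserting the ReLU bounds on $R$ into $(1-\kappa)^2R+\kappa a$. For the upper bound, $R\le a/2$ gives $\frac{(1-\kappa)^2a}{2}+\kappa a=\frac{(1+\kappa^2)a}{2}$. The lower bounds transfer in the same way, one from each ReLU lower bound. The main obstacle is getting the angular region in the polar computation right; the rest is bookkeeping.
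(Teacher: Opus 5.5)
Your proof is correct and complete. It differs from the paper mainly in how the ReLU closed form and its bounds are obtained.

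\textbf{ReLU closed form.} The paper standardizes to $I(\lambda)=\int_0^\infty z^2\Phi(\lambda z)\phi(z)\,dz$ with $\lambda=\sqrt{a/b}$. It differentiates under the integral sign to get $I'(\lambda)=\frac{1}{\pi(1+\lambda^2)^2}$ and integrates from $I(0)=\frac14$. You instead use the independence of radius and angle for a standard Gaussian pair. This turns the expectation into $\frac{a}{\pi}\int_{-\phi}^{\pi/2}\cos^2\theta\,d\theta$, which gives the formula in one step and makes the geometry (an angular sector of width $\frac{\pi}{2}+\phi$) explicit.

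\textbf{ReLU bounds.} The paper takes only the first lower bound from the closed form, by dropping the $\arctan$ term. It gets the upper bound from $\Phi\le 1$ and the second lower bound from the Mills-ratio inequality $1-\Phi(z)\le \phi(z)/z$, both applied to the integral representation. You read all three bounds off the closed form via $0\le \arctan y-\frac{y}{1+y^2}\le \frac{y^3}{1+y^2}$ with $y=\sqrt{b/a}$. Your second lower bound coincides exactly with the paper's Mills-ratio bound; given the closed form, that bound is precisely the inequality $\arctan y\le y$.

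\textbf{Leaky ReLU.} The paper also pairs sign regions by the symmetry $(x_1,x_2)\mapsto-(x_1,x_2)$. However, it computes the $\{x_1<0,\ x_1+x_2\ge 0\}$ integral separately, and only asserts that the Leaky ReLU bounds ``can be directly derived.'' Your complement identity $\EE[x_1^2\mathbbm{1}_{\{x_1\ge 0,\,x_1+x_2<0\}}]=\frac a2-R$ avoids that second integral. It yields the exact relation $(1-\kappa)^2R+\kappa a$, which is nondecreasing in $R$ for every $\kappa$, so the Leaky ReLU bounds are exactly the affine images of the ReLU bounds. This makes rigorous the step the paper leaves implicit.

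\textbf{Trade-off.} The paper's formulation through $I(\lambda)$ is reused verbatim for $F_4$ in Lemma~\ref{lemma:gaussian_expecation_relu_VI}, with $\lambda=\sqrt{(a+b)/c}$. Your polar computation evaluates the same $I(\lambda)$, so it would serve there equally well.
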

\begin{proof}[Proof of Lemma~\ref{lemma:gaussian_expecation_relu_III}]
The first conclusion for the identity map is straightforward. 
    When $\sigma(\cdot)$ is ReLU activation function, we can rewrite that $x_1\sigma(x_1)\sigma'(x_1+x_2) = x_1^2\mathbbm{1}_{\{x_1\geq 0\}}\mathbbm{1}_{\{x_1+x_2\geq 0\}}$. Let $z_1 = \frac{x_1}{\sqrt{a}}$ and $z_2 = \frac{x_2}{\sqrt{b}}$, then we have,
    \begin{align}\label{eq:lemma_gaussian_expecation_relu_III_eqIII}
        \EE[x_1\sigma(x_1)\sigma'(x_1+x_2)] = a\underbrace{\EE[z_1^2\mathbbm{1}_{\{z_1\geq 0\}}\mathbbm{1}_{\{\sqrt{a}z_1+\sqrt{b}z_2\geq 0\}}]}_{I}.
    \end{align}
    For $I$, by denoting $\lambda =\sqrt{\frac{a}{b}}$, we can obtain that 
    \begin{align*}
        I &= \int_{0}^\infty\int_{-\lambda z_1}^\infty z_1^2\phi(z_1)\phi(z_2)\mathrm{d}z_1\mathrm{d}z_2 =  \int_0^\infty z_1^2 \Phi\left( \lambda z_1 \right) \phi(z_1) \mathrm{d}z_1,
    \end{align*}
    where $\phi(\cdot)$ and $\Phi(\cdot)$ are the cumulative distribution function (c.d.f.) and probability density function (p.d.f.) for the standard Gaussian distribution respectively. We can denote that $I(\lambda) = \int_0^\infty z_1^2 \Phi\left( \lambda z_1 \right) \phi(z_1) \mathrm{d}z_1$. Then, by the Leibniz integral rule, we have 
    \begin{align*}
        \frac{\mathrm{d}I(\lambda)}{\mathrm{d}\lambda}=\int_{0}^\infty z_1^3\phi(\lambda z_1)\phi(z_1)\mathrm{d}z_1 = \frac{1}{2\pi(1+\lambda^2)^2}\int_{0}^\infty z^3e^{-\frac{z^2}{2}}\mathrm{d}z = \frac{1}{\pi(1+\lambda^2)^2}.
    \end{align*}
    Additionally, since $I(0) = \frac{1}{4}$, we can derive that 
    \begin{align}\label{eq:closed_form_intI}
        I = \frac{1}{4} + \frac{1}{2\pi}\bigg(\arctan \lambda + \frac{\lambda}{1+\lambda^2}\bigg) = \frac{1}{4} + \frac{1}{2\pi}\bigg(\arctan\bigg(\sqrt{\frac{a}{b}}\bigg) + \frac{\sqrt{ab}}{a+b}\bigg)
    \end{align}
    Applying the result of~\eqref{eq:closed_form_intI} into~\eqref{eq:lemma_gaussian_expecation_relu_III_eqIII}, we finishes the proof of~\eqref{eq:lemma_gaussian_expecation_relu_III_eqI}. In the next, we derive the upper and lower bound for $I_1$. By the property of c.d.f., we know that $\Phi(z) \leq 1$ for all $z\in \RR$, which implies that 
    \begin{align*}
        I \leq \int_0^\infty z_1^2 \phi(z_1) \mathrm{d}z_1 = \frac{1}{2}\EE[z_1^2]=\frac{1}{2}.
    \end{align*}
   Additionally, by Mills ratio, we further obtain $1 - \Phi(z) \leq \phi(z)/z$ for all $z> 0$. Based on this result, we can obtain that 
    \begin{align*}
        I \geq  \int_0^\infty z_1^2 \phi(z_1)\left(1-\frac{\phi(\lambda z_1)}{\lambda z_1}\right) \mathrm{d}z_1 = \frac{1}{2} - \frac{1}{\lambda}\int_0^\infty z_1\phi(z_1)\phi(\lambda z_1)\mathrm{d}z_1,
    \end{align*}
    where the second term can be calculated by 
    \begin{align*}
        \int_0^\infty z_1\phi(z_1)\phi(\lambda z_1)\mathrm{d}z_1 = \frac{1}{2\pi}\int_{0}^\infty z_1e^{-\frac{z_1^2(1+\lambda^2)}{2}}\mathrm{d}z_1 =  \frac{1}{2\pi(1+\lambda^2)}\int_{0}^\infty z_1e^{-\frac{z_1^2}{2}}\mathrm{d}z_1 = \frac{1}{2\pi(1+\lambda^2)}.
    \end{align*}
    Plugging this result into the preceding inequality, we can derive that 
    \begin{align*}
        I \geq  \frac{1}{2} - \frac{b^{\frac{3}{2}}}{2\pi\sqrt{a}(a+b)}.
    \end{align*}
    Combining all these results and~\eqref{eq:closed_form_intI}, we finally conclude that 
\begin{align}\label{eq:lemma_gaussian_expecation_relu_III_eqIV}
\bigg(\frac{1}{4}+\frac{\sqrt{ab}}{2\pi(a+b)}\bigg)\vee \bigg(\frac{1}{2} - \frac{b^{\frac{3}{2}}}{2\pi\sqrt{a}(a+b)}\bigg)\leq I \leq \frac{1}{2}.
\end{align}
Applying the result of~\eqref{eq:lemma_gaussian_expecation_relu_III_eqIV} into~\eqref{eq:lemma_gaussian_expecation_relu_III_eqIII}, we finishes the proof of~\eqref{eq:lemma_gaussian_expecation_relu_III_eqII}. In addition, when $\sigma(\cdot)$ is the Leaky ReLU activation function, we can similarly derive that 
\begin{align*}
    \EE[x_1\sigma(x_1)\sigma'(x_1+x_2)] &= a\EE[z_1^2\mathbbm{1}_{\{z_1\geq 0\}}\mathbbm{1}_{\{\sqrt{a}z_1+\sqrt{b}z_2\geq 0\}}] +a\kappa\EE[z_1^2\mathbbm{1}_{\{z_1< 0\}}\mathbbm{1}_{\{\sqrt{a}z_1+\sqrt{b}z_2\geq 0\}}]\\
    &\quad + a\kappa\EE[z_1^2\mathbbm{1}_{\{z_1\geq 0\}}\mathbbm{1}_{\{\sqrt{a}z_1+\sqrt{b}z_2< 0\}}] + a\kappa^2\EE[z_1^2\mathbbm{1}_{\{z_1< 0\}}\mathbbm{1}_{\{\sqrt{a}z_1+\sqrt{b}z_2< 0\}}] \\
    &=(1+\kappa^2)a\EE[z_1^2\mathbbm{1}_{\{z_1\geq 0\}}\mathbbm{1}_{\{\sqrt{a}z_1+\sqrt{b}z_2\geq 0\}}] + 2\kappa a\EE[z_1^2\mathbbm{1}_{\{z_1< 0\}}\mathbbm{1}_{\{\sqrt{a}z_1+\sqrt{b}z_2\geq 0\}}],
\end{align*}
where the last equality holds by the symmetry of $z_1$ and $z_2$. By applying a very similar calculation process, we can obtain that 
\begin{align*}
    \EE[z_1^2\mathbbm{1}_{\{z_1< 0\}}\mathbbm{1}_{\{\sqrt{a}z_1+\sqrt{b}z_2\geq 0\}}] =  \int_{-\infty}^0 z_1^2 \Phi\left( \lambda z_1 \right) \phi(z_1) \mathrm{d}z_1 = \frac{1}{4} - \frac{1}{2\pi}\bigg(\arctan\bigg(\sqrt{\frac{a}{b}}\bigg) + \frac{\sqrt{ab}}{a+b}\bigg).
\end{align*}
By replacing this result into the previous calculation, we can immediately prove~\eqref{eq:lemma_gaussian_expecation_relu_III_LeqI}. And~\eqref{eq:lemma_gaussian_expecation_relu_III_LeqII} can be directly derived from~\eqref{eq:lemma_gaussian_expecation_relu_III_eqII}.
\end{proof}

\begin{lemma}\label{lemma:gaussian_expecation_relu_IV}
Let $x_1\sim\cN(0, a)$, $x_2\sim \cN(0, b)$ be two independent Gaussian random variables, then it holds that 
\begin{itemize}[leftmargin = *]
    \item If $\sigma(\cdot)$ is the identity map, then 
    \begin{align*}
    \EE[x_2\sigma(x_1)\sigma'(x_1+x_2)]= 0.
    \end{align*}
    \item If $\sigma(\cdot)$ is ReLU activation function, then 
    \begin{align*}
    \EE[x_2\sigma(x_1)\sigma'(x_1+x_2)]= \frac{b\sqrt{ab}}{2\pi(a+b)}.
    \end{align*}
    \item If $\sigma(\cdot)$ is Leaky ReLU activation function, then 
    \begin{align*}
    \EE[x_2\sigma(x_1)\sigma'(x_1+x_2)]= \frac{(1-\kappa)^2b\sqrt{ab}}{2\pi(a+b)}.
    \end{align*}
    Here, $\kappa$ is the coefficient of the Leaky ReLU activation function when the input is smaller than 0.
\end{itemize} 
\end{lemma}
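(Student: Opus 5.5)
The plan is to compute $\EE[x_2\sigma(x_1)\sigma'(x_1+x_2)]$ directly by rescaling to standard Gaussians, in the same way as the proof of Lemma~\ref{lemma:gaussian_expecation_relu_III}.

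\textbf{Identity map.} Here $\sigma'\equiv 1$ and $\sigma(x_1)=x_1$, so the expectation equals $\EE[x_1x_2]=0$ by independence.

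\textbf{ReLU.} Write the integrand as $x_2x_1\mathbbm{1}_{\{x_1\geq0\}}\mathbbm{1}_{\{x_1+x_2\geq 0\}}$. Set $z_1=x_1/\sqrt a$, $z_2=x_2/\sqrt b$ and $\lambda=\sqrt{a/b}$. Then the expectation equals $\sqrt{ab}\,J$ with
\begin{align*}
J=\EE\big[z_1z_2\mathbbm{1}_{\{z_1\geq 0\}}\mathbbm{1}_{\{z_2\geq -\lambda z_1\}}\big]=\int_0^\infty z_1\phi(z_1)\bigg(\int_{-\lambda z_1}^\infty z_2\phi(z_2)\,\mathrm{d}z_2\bigg)\mathrm{d}z_1 .
\end{align*}
The inner integral is $\phi(-\lambda z_1)=\phi(\lambda z_1)$, because $\int_c^\infty z\phi(z)\,\mathrm{d}z=\phi(c)$. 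So $J=\int_0^\infty z_1\phi(z_1)\phi(\lambda z_1)\,\mathrm{d}z_1$, and this integral was already evaluated in the proof of Lemma~\ref{lemma:gaussian_expecation_relu_III} as $\frac{1}{2\pi(1+\lambda^2)}=\frac{b}{2\pi(a+b)}$. Multiplying by $\sqrt{ab}$ gives $\frac{b\sqrt{ab}}{2\pi(a+b)}$, as claimed.

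\textbf{Leaky ReLU.} Write $\sigma(x_1)=x_1(\mathbbm{1}_{\{x_1\geq0\}}+\kappa\mathbbm{1}_{\{x_1<0\}})$ and $\sigma'(u)=\mathbbm{1}_{\{u\geq0\}}+\kappa\mathbbm{1}_{\{u<0\}}$. Expanding the product gives four sign-region terms with weights $1,\kappa,\kappa,\kappa^2$. The joint map $(z_1,z_2)\mapsto(-z_1,-z_2)$ preserves the law and leaves $z_1z_2$ unchanged, so the region $\{z_1<0,\,z_1+\lambda^{-1}z_2<0\}$ gives the same value as $\{z_1\geq0,\,z_1+\lambda^{-1}z_2\geq0\}$, namely $J$. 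It likewise identifies the two mixed regions with each other; call their common value $J'=\EE[z_1z_2\mathbbm{1}_{\{z_1\geq0\}}\mathbbm{1}_{\{z_2<-\lambda z_1\}}]$. Because $\EE[z_1z_2\mathbbm{1}_{\{z_1\geq0\}}]=0$, we have $J'=-J$. The total is therefore $\sqrt{ab}\,\big[(1+\kappa^2)J-2\kappa J\big]=(1-\kappa)^2\sqrt{ab}\,J$, which yields the stated formula.

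The only step that needs care is the sign bookkeeping for the mixed terms in the Leaky ReLU case, which the identity $J'=-J$ settles. The rest is the same Gaussian integral already computed earlier.
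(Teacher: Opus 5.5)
Your proposal is correct and follows the paper's proof essentially step for step: the same rescaling to $z_1,z_2$, the same reduction of the inner integral so that the ReLU case becomes $\int_0^\infty z_1\phi(z_1)\phi(\lambda z_1)\,\mathrm{d}z_1=\frac{1}{2\pi(1+\lambda^2)}$, and the same four-term expansion with the reflection symmetry for Leaky ReLU. The only difference is cosmetic. You get the mixed term from $J'=-J$, using $\EE[z_1z_2\mathbbm{1}_{\{z_1\geq 0\}}]=0$, whereas the paper evaluates the integral over $z_1<0$ directly; both give $-\frac{b}{2\pi(a+b)}$.
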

\begin{proof}[Proof of Lemma~\ref{lemma:gaussian_expecation_relu_IV}]
   The first conclusion for the identity map is straightforward. 
    When $\sigma(\cdot)$ is ReLU activation function, we can rewrite that $x_2\sigma(x_1)\sigma'(x_1+x_2) = x_1x_2\mathbbm{1}_{\{x_1\geq 0\}}\mathbbm{1}_{\{x_1+x_2\geq 0\}}$. Let $z_1 = \frac{x_1}{\sqrt{a}}$ and $z_2 = \frac{x_2}{\sqrt{b}}$, then we have,
    \begin{align}\label{eq:lemma_gaussian_expecation_relu_IV_eqI}
        \EE[x_2\sigma(x_1)\sigma'(x_1+x_2)] =\sqrt{ab}\underbrace{\EE[z_1z_2\mathbbm{1}_{\{z_1\geq 0\}}\mathbbm{1}_{\{\sqrt{a}z_1+\sqrt{b}z_2\geq 0\}}]}_{I}.
    \end{align}
    For $I$, by denoting $\lambda =\sqrt{\frac{a}{b}}$,  it can be calculated by 
\begin{align}\label{eq:lemma_gaussian_expecation_relu_IV_eqII}
    I &= \int_{0}^\infty \int_{-\lambda z_1}^\infty z_1 z_2 \phi(z_1)\phi(z_2) \mathrm{d}z_1\mathrm{d}z_2=\int_{0}^\infty z_1 \phi(z_1)\left(\int_{-\lambda z_1}^\infty z_2 \phi(z_2)\mathrm{d}z_2\right)\mathrm{d}z_1\notag\\
    &= \int_{0}^\infty z_1 \phi(z_1)\left(\frac{1}{\sqrt{2\pi}}\int_{-\lambda z_1}^\infty z_2 e^{-\frac{z_2^2}{2}}\mathrm{d}z_2\right)\mathrm{d}z_1 = \int_{0}^\infty z_1 \phi(z_1)\left(\frac{1}{\sqrt{2\pi}}\int_{\frac{\lambda^2 z_1^2}{2}}^\infty e^{-z_2}\mathrm{d}z_2\right)\mathrm{d}z_1\notag\\
    &= \frac{1}{2\pi}\int_{0}^{\infty}z_1e^{-\frac{z_1^2(1+\lambda^2)}{2}}\mathrm{d}z_1 =\frac{1}{2\pi(1+\lambda^2)} = \frac{b}{2\pi(a+b)}.
\end{align}
Now applying the results of~\eqref{eq:lemma_gaussian_expecation_relu_IV_eqII} into ~\eqref{eq:lemma_gaussian_expecation_relu_IV_eqI}, we finish the proof when $\sigma(\cdot)$ is the ReLU activation function. In addition, when $\sigma(\cdot)$ is the Leaky ReLU activation function, we can derive that
\begin{align*}
    &\EE[x_2\sigma(x_1)\sigma'(x_1+x_2)] \\
    =&\sqrt{ab}\EE[z_1z_2\mathbbm{1}_{\{z_1\geq 0\}}\mathbbm{1}_{\{\sqrt{a}z_1+\sqrt{b}z_2\geq 0\}}] + \sqrt{ab}\kappa\EE[z_1z_2\mathbbm{1}_{\{z_1< 0\}}\mathbbm{1}_{\{\sqrt{a}z_1+\sqrt{b}z_2\geq 0\}}] \\
    &+ \kappa\sqrt{ab}\EE[z_1z_2\mathbbm{1}_{\{z_1\geq 0\}}\mathbbm{1}_{\{\sqrt{a}z_1+\sqrt{b}z_2< 0\}}] + \kappa^2\sqrt{ab}\EE[z_1z_2\mathbbm{1}_{\{z_1< 0\}}\mathbbm{1}_{\{\sqrt{a}z_1+\sqrt{b}z_2< 0\}}] \\
    =&(1+\kappa^2)\sqrt{ab}\EE[z_1z_2\mathbbm{1}_{\{z_1\geq 0\}}\mathbbm{1}_{\{\sqrt{a}z_1+\sqrt{b}z_2\geq 0\}}] + 2\kappa\sqrt{ab}\EE[z_1z_2\mathbbm{1}_{\{z_1< 0\}}\mathbbm{1}_{\{\sqrt{a}z_1+\sqrt{b}z_2\geq 0\}}],
\end{align*}
where the last equality holds by the symmetry of $z_1$ and $z_2$. In addition, by a similar calculation process, we can obtain that 
\begin{align*}
    \EE[z_1z_2\mathbbm{1}_{\{z_1< 0\}}\mathbbm{1}_{\{\sqrt{a}z_1+\sqrt{b}z_2\geq 0\}}] = \frac{1}{2\pi}\int_{-\infty}^0 z_1e^{-\frac{z_1^2(1+\lambda^2)}{2}}\mathrm{d}z_1 =-\frac{1}{2\pi(1+\lambda^2)} = -\frac{b}{2\pi(a+b)}.
\end{align*}
Consequently, we can finally obtain that
\begin{align*}
    \EE[x_2\sigma(x_1)\sigma'(x_1+x_2)] = \frac{(1-2\kappa + \kappa^2)b\sqrt{ab}}{2\pi(a+b)} = \frac{(1-\kappa)^2 b\sqrt{ab}}{2\pi(a+b)},
\end{align*}
which finishes the proof.
\end{proof}

Then, based on the conclusions of Lemma~\ref{lemma:gaussian_expecation_relu_III} and Lemma~\ref{lemma:gaussian_expecation_relu_IV}, we can immediately obtain the following lemma as a corollary. 

\begin{lemma}[Calculation of $F_3(a,b)$ defined in~\eqref{eq:F_3(a,b)}]\label{lemma:gaussian_expecation_relu_V}
Let $x_1\sim\cN(0, a)$, $x_2\sim \cN(0, b)$ be two independent Gaussian random variables, then it holds that
\begin{itemize}[leftmargin = *]
    \item If $\sigma(\cdot)$ is the identity map, then 
    \begin{align*}
    \EE[(x_1+x_2)\sigma(x_1)\sigma'(x_1+x_2)] = a.
    \end{align*}
    \item If $\sigma(\cdot)$ is ReLU activation function, then 
    \begin{align*}
\EE[(x_1+x_2)\sigma(x_1)\sigma'(x_1+x_2)]  = \frac{a}{4} + \frac{a}{2\pi}\arctan\bigg(\sqrt{\frac{a}{b}}\bigg) + 
\frac{\sqrt{ab}}{2\pi} .
\end{align*}
And there exist the following matching lower and upper bounds:
\begin{align*}
    \frac{a}{2}\vee\left(\frac{a}{4} + \frac{\sqrt{ab}}{2\pi}\right)\leq \EE[(x_1+x_2)\sigma(x_1)\sigma'(x_1+x_2)] \leq \frac{a}{2} + \frac{b\sqrt{ab}}{2\pi(a+b)}\leq \frac{a}{2} + \frac{b}{4\pi}.
\end{align*}
    \item If $\sigma(\cdot)$ is Leaky ReLU activation function, then \begin{align*}
\EE[(x_1+x_2)\sigma(x_1)\sigma'(x_1+x_2)]  = \frac{(1+\kappa)^2a}{4} + \frac{(1-\kappa)^2a}{2\pi}\arctan\bigg(\sqrt{\frac{a}{b}}\bigg) +\frac{(1-\kappa)^2\sqrt{ab}}{2\pi}.
\end{align*}
And there exist the following matching lower and upper bounds:
\begin{align*}
   &\EE[x_1\sigma(x_1)\sigma'(x_1+x_2)]\geq\frac{(1\!+\!\kappa)^2a}{2}\!\vee\!\bigg(\frac{(1\!+\!\kappa)^2a}{4} \!+\! \frac{(1\!-\!\kappa)^2\sqrt{ab}}{2\pi}\bigg);\\
  &\EE[x_1\sigma(x_1)\sigma'(x_1+x_2)] \leq \frac{(1\!+\!\kappa^2)a}{2} \!+\! \frac{(1\!-\!\kappa)^2b\sqrt{ab}}{2\pi(a\!+\!b)} \leq \frac{(1\!+\!\kappa^2)a}{2} + \frac{(1\!-\!\kappa)^2b}{4\pi}.
\end{align*}
    Here, $\kappa$ is the coefficient of the Leaky ReLU activation function when the input is smaller than 0.
\end{itemize} 
\end{lemma}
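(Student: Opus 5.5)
The statement is the corollary computing $F_3(a,b)=\EE[(x_1+x_2)\sigma(x_1)\sigma'(x_1+x_2)]$. The plan is to split it by linearity as
\begin{align*}
F_3(a,b)=\EE[x_1\sigma(x_1)\sigma'(x_1+x_2)]+\EE[x_2\sigma(x_1)\sigma'(x_1+x_2)]
\end{align*}
and then substitute the closed forms from Lemma~\ref{lemma:gaussian_expecation_relu_III} and Lemma~\ref{lemma:gaussian_expecation_relu_IV}.

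\textbf{Identity map.} The two pieces are $a$ and $0$, so $F_3(a,b)=a$ immediately.

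\textbf{ReLU.} The two pieces sum to
\begin{align*}
\frac{a}{4}+\frac{a}{2\pi}\arctan\Big(\sqrt{\tfrac{a}{b}}\Big)+\frac{a\sqrt{ab}}{2\pi(a+b)}+\frac{b\sqrt{ab}}{2\pi(a+b)} .
\end{align*}
The last two terms combine to $\frac{(a+b)\sqrt{ab}}{2\pi(a+b)}=\frac{\sqrt{ab}}{2\pi}$, which gives the stated closed form.

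For the bounds, keep the decomposition and bound each piece.
\begin{itemize}
\item \emph{Lower bound $a/2$.} Use the second branch of the lower bound~\eqref{eq:lemma_gaussian_expecation_relu_III_eqII}, namely $\frac{a}{2}-\frac{b\sqrt{ab}}{2\pi(a+b)}$. Adding the exact value $\frac{b\sqrt{ab}}{2\pi(a+b)}$ from Lemma~\ref{lemma:gaussian_expecation_relu_IV} cancels the correction term exactly.
\item \emph{Lower bound $\frac{a}{4}+\frac{\sqrt{ab}}{2\pi}$.} Drop the nonnegative arctan term from the closed form.
\item \emph{Upper bound.} Use the upper bound $a/2$ from Lemma~\ref{lemma:gaussian_expecation_relu_III} plus the exact term from Lemma~\ref{lemma:gaussian_expecation_relu_IV}. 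The final relaxation follows from AM--GM: $\sqrt{ab}\le (a+b)/2$, so $\frac{b\sqrt{ab}}{2\pi(a+b)}\le \frac{b}{4\pi}$.
\end{itemize}

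\textbf{Leaky ReLU.} The same steps apply. The pieces are~\eqref{eq:lemma_gaussian_expecation_relu_III_LeqI} and $\frac{(1-\kappa)^2b\sqrt{ab}}{2\pi(a+b)}$. Their two $\sqrt{ab}$ terms share the prefactor $(1-\kappa)^2$ and again combine to $\frac{(1-\kappa)^2\sqrt{ab}}{2\pi}$. The bounds follow from~\eqref{eq:lemma_gaussian_expecation_relu_III_LeqII} in the same way as for ReLU.

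\textbf{Main obstacle.} The only delicate point is the constant in the leaky lower bound. The statement writes $(1+\kappa)^2a/2$, whereas the cancellation argument naturally yields $(1+\kappa^2)a/2$. For $\kappa\in[0,1)$ we have $(1+\kappa)^2\ge 1+\kappa^2$, so the statement as written is stronger than what the argument gives.

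To settle this, I would check the closed form directly. Using $\arctan\le\pi/2$ bounds the expression only from above, so instead I would use $\arctan(\sqrt{a/b})+\frac{\sqrt{ab}}{a+b}\cdot\frac{a+b}{a}\ge$ the ReLU value. I expect the intended constant is $(1+\kappa^2)$, which is a typo that also appears in Lemma~\ref{lemma:gaussian_expecation_relu_I}. I would therefore prove the bound with $(1+\kappa^2)a/2$ and note that this is what the later uses require.
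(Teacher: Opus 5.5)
Your proposal is correct and follows the paper, which obtains this lemma as an immediate corollary by adding the closed forms and bounds of Lemma~\ref{lemma:gaussian_expecation_relu_III} and Lemma~\ref{lemma:gaussian_expecation_relu_IV}, exactly as you do. Your suspicion about the Leaky ReLU lower bound is justified, and the stated constant is in fact false rather than merely out of reach: with $\lambda=\sqrt{a/b}$ the closed form is $\frac{(1+\kappa)^2a}{4}+\frac{(1-\kappa)^2a}{2\pi}\big(\arctan\lambda+\frac{1}{\lambda}\big)$, and since $\arctan\lambda+\frac{1}{\lambda}$ decreases to $\frac{\pi}{2}$ as $\lambda\to\infty$, the expression tends to $\frac{(1+\kappa^2)a}{2}<\frac{(1+\kappa)^2a}{2}$ for $\kappa\in(0,1)$, so $(1+\kappa^2)a/2$ is the correct sharp constant (and the left-hand sides of those two Leaky ReLU bounds should read $\EE[(x_1+x_2)\sigma(x_1)\sigma'(x_1+x_2)]$).
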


\begin{lemma}[Calculation of $F_4(a, b, c)$ defined in~\eqref{eq:F_4(a,b,c)}]\label{lemma:gaussian_expecation_relu_VI}
    Let $x_1\sim\cN(0, a)$, $x_2\sim \cN(0, b)$, $x_3\sim\cN(0, c)$ be three independent Gaussian random variables, then it holds that
\begin{itemize}[leftmargin = *]
    \item If $\sigma(\cdot)$ is the identity map, then 
    \begin{align*}
\EE[x_1\sigma(x_1+x_2)\sigma'(x_1+x_2+x_3)] = a.
    \end{align*}
    \item If $\sigma(\cdot)$ is ReLU activation function, then 
   \begin{align*}
\EE[x_1\sigma(x_1+x_2)\sigma'(x_1+x_2+x_3)]  = \frac{a}{4} + \frac{a}{2\pi}\bigg(\arctan\bigg(\sqrt{\frac{a+b}{c}}\bigg) + \frac{\sqrt{(a+b)c}}{a+b+c}\bigg).
\end{align*}
And there exist the following matching lower and upper bounds:
\begin{align*}
    \bigg(\frac{a}{4}+\frac{a\sqrt{(a+b)c}}{2\pi(a+b+c)}\bigg)\vee \bigg(\frac{a}{2} - \frac{ac^{\frac{3}{2}}}{2\pi\sqrt{a+b}(a+b+c)}\bigg)\leq \EE[x_1\sigma(x_1+x_2)\sigma'(x_1+x_2+x_3)] \leq \frac{a}{2}.
\end{align*}
    \item If $\sigma(\cdot)$ is Leaky ReLU activation function, then 
    \begin{align*}
\EE[x_1\sigma(x_1+x_2)\sigma'(x_1+x_2+x_3)]  = \frac{(1+\kappa)^2a}{4} + \frac{(1-\kappa)^2a}{2\pi}\bigg(\arctan\bigg(\sqrt{\frac{a+b}{c}}\bigg) + \frac{\sqrt{(a+b)c}}{a+b+c}\bigg).
\end{align*}
And there exist the following matching lower and upper bounds:
\begin{align*}  &\EE[x_1\sigma(x_1+x_2)\sigma'(x_1+x_2+x_3)]\\
&\quad\geq \bigg(\frac{(1+\kappa)^2a}{4}+\frac{(1-\kappa)^2a\sqrt{(a+b)c}}{2\pi(a+b+c)}\bigg)\vee \bigg(\frac{(1+\kappa^2)a}{2} - \frac{(1-\kappa)^2ac^{\frac{3}{2}}}{2\pi\sqrt{a+b}(a+b+c)}\bigg);\\
&\EE[x_1\sigma(x_1+x_2)\sigma'(x_1+x_2+x_3)] \leq \frac{(1+\kappa^2)a}{2}.
\end{align*}
    Here, $\kappa$ is the coefficient of the Leaky ReLU activation function when the input is smaller than 0.
\end{itemize} 

\end{lemma}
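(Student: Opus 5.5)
The plan is to reduce the three-variable expectation to the two-variable computation already done in Lemma~\ref{lemma:gaussian_expecation_relu_III}, by combining independent Gaussians and conditioning. Set $u = x_1 + x_2 \sim \cN(0, a+b)$, which is independent of $x_3$. Then $\sigma'(x_1+x_2+x_3) = \sigma'(u + x_3)$ and $\sigma(x_1+x_2) = \sigma(u)$ depend on $(x_1,x_2)$ only through $u$. Conditioning on $(u, x_3)$ gives $\EE[x_1 \mid u, x_3] = \frac{a}{a+b}\,u$, by the usual Gaussian projection as in the proof of Lemma~\ref{lemma:gaussian_expecation_relu_II}. Therefore
\begin{align*}
\EE[x_1\sigma(x_1+x_2)\sigma'(x_1+x_2+x_3)] = \frac{a}{a+b}\,\EE[u\,\sigma(u)\,\sigma'(u+x_3)].
\end{align*}

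The right-hand expectation is exactly the quantity computed in Lemma~\ref{lemma:gaussian_expecation_relu_III}, with variances $(a+b, c)$ in place of $(a,b)$.
\begin{itemize}
\item \textbf{Identity map.} The right side is $\frac{a}{a+b}(a+b) = a$.
\item \textbf{ReLU.} Plugging \eqref{eq:lemma_gaussian_expecation_relu_III_eqI} gives $\frac{a}{a+b}(a+b)\big[\frac14 + \frac{1}{2\pi}\big(\arctan\sqrt{(a+b)/c} + \frac{\sqrt{(a+b)c}}{a+b+c}\big)\big]$. This is the claimed closed form.
\item \textbf{Leaky ReLU.} The same substitution into \eqref{eq:lemma_gaussian_expecation_relu_III_LeqI} gives the Leaky ReLU formula.
\end{itemize}

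The matching bounds follow by substituting $(a+b, c)$ into \eqref{eq:lemma_gaussian_expecation_relu_III_eqII} and \eqref{eq:lemma_gaussian_expecation_relu_III_LeqII} and multiplying by $\frac{a}{a+b}$. For example, the lower branch $\frac{a+b}{2} - \frac{c\sqrt{(a+b)c}}{2\pi(a+b+c)}$ becomes $\frac a2 - \frac{a c^{3/2}}{2\pi\sqrt{a+b}(a+b+c)}$, as stated. Equivalently, these bounds can be read off the closed form directly: it suffices to use $\arctan\lambda + \frac{\lambda}{1+\lambda^2}\le \frac{\pi}{2}$ for the upper bound, and the Mills-ratio argument (or $\arctan \ge 0$) for the two lower branches.

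The only point needing care is the conditioning step. We need $x_3$ to be independent of $(x_1,x_2)$, so that $\EE[x_1\mid u,x_3]=\EE[x_1\mid u]$. We also need $\sigma$, $\sigma'$ to be functions of $u$ and $u+x_3$ alone. Both hold by assumption. In the Leaky ReLU case, one should also check that the lower-bound constant in the statement, $(1+\kappa^2)/2$, is what the substitution produces; it matches \eqref{eq:lemma_gaussian_expecation_relu_III_LeqII}. There is no real obstacle; the argument is a direct corollary of Lemma~\ref{lemma:gaussian_expecation_relu_III}.
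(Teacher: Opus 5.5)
Your proposal is correct and follows essentially the paper's route. The paper also integrates out $x_3$ to get $\Phi\big((x_1+x_2)/\sqrt{c}\big)$ and then uses $\EE[x_1\mid x_1+x_2]=\frac{a}{a+b}(x_1+x_2)$, reducing everything to $a\int_0^\infty z^2\Phi(\lambda z)\phi(z)\,\mathrm{d}z$ with $\lambda=\sqrt{(a+b)/c}$, which is exactly the integral of Lemma~\ref{lemma:gaussian_expecation_relu_III}. The only difference is packaging: you invoke Lemma~\ref{lemma:gaussian_expecation_relu_III} as a black box with variances $(a+b,c)$ and rescale by $\frac{a}{a+b}$, which gives the Leaky ReLU formula and all the bounds at once, whereas the paper redoes the four-term sign decomposition for Leaky ReLU by hand.
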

\begin{proof}[Proof of Lemma~\ref{lemma:gaussian_expecation_relu_VI}]
   The first conclusion for the identity map is straightforward. 
    When $\sigma(\cdot)$ is ReLU activation function, we can rewrite that $x_1\sigma(x_1+x_2)\sigma'(x_1+x_2+x_3) = x_1(x_1+x_2)\mathbbm{1}_{\{x_1+x_2\geq 0\}}\mathbbm{1}_{\{x_1+x_2+x_3\geq 0\}}$. Additionally, let $x_4=x_1+x_2\sim\cN(0, a+b)$ and $z=\frac{x_4}{\sqrt{a+b}}\sim\cN(0,1)$.  Then we have $\mathrm{Cov}(x_1, x_4) = \EE[(x_1+x_2)x_1] = a$, and  $\EE[x_1|x_4] = \frac{a}{a+b}x_4$. Therefore, we have
    \begin{align*}
        &\EE[x_1\sigma(x_1+x_2)\sigma'(x_1+x_2+x_3)] \\=& \EE[x_1(x_1+x_2)\mathbbm{1}_{\{x_1+x_2\geq 0\}}\mathbbm{1}_{\{x_1+x_2+x_3\geq 0\}}]=\EE\big[\EE[x_1(x_1+x_2)\mathbbm{1}_{\{x_1+x_2\geq 0\}}\mathbbm{1}_{\{x_1+x_2+x_3\geq 0\}}|x_1, x_2]\big]\\
        =&\EE\bigg[x_1(x_1+x_2)\mathbbm{1}_{\{x_1+x_2\geq 0\}}\Phi\bigg(\frac{x_1+x_2}{\sqrt{c}}\bigg)\bigg]=\EE\Bigg[\EE\bigg[x_1x_4\mathbbm{1}_{\{x_4\geq 0\}}\Phi\bigg(\frac{x_4}{\sqrt{c}}\bigg)\bigg|x_4\bigg]\Bigg]\\
        =&\frac{a}{a+b}\EE\bigg[x_4^2\mathbbm{1}_{\{x_4\geq 0\}}\Phi\bigg(\frac{x_4}{\sqrt{c}}\bigg)\bigg] 
        =a \EE\big[z^2\mathbbm{1}_{\{z\geq 0\}}\Phi(\lambda z)\big] = a\underbrace{\int_{0}^\infty z^2\Phi(\lambda z)\phi(z)\mathrm{d}z}_I,
    \end{align*}
    where $\lambda =\sqrt{\frac{a+b}{c}}$. By the similar process in the proof of Lemma~\ref{lemma:gaussian_expecation_relu_III}, we can obtain that  
    \begin{align*}
        I = \frac{1}{4} + \frac{1}{2\pi}\bigg(\arctan \lambda + \frac{\lambda}{1+\lambda^2}\bigg) = \frac{1}{4} + \frac{1}{2\pi}\bigg(\arctan\bigg(\sqrt{\frac{a+b}{c}}\bigg) + \frac{\sqrt{(a+b)c}}{a+b+c}\bigg)
    \end{align*}
    and 
    \begin{align*}
        \bigg(\frac{1}{4}+\frac{\sqrt{(a+b)c}}{2\pi(a+b+c)}\bigg)\vee \bigg(\frac{1}{2} - \frac{c^{\frac{3}{2}}}{2\pi\sqrt{a+b}(a+b+c)}\bigg)\leq I \leq \frac{1}{2}.
    \end{align*}
    Plugging these results into the previous equation of expectation, we finish the proof when $\sigma(\cdot)$ is the ReLU activation function. In addition, when $\sigma(\cdot)$ is the Leaky ReLU activation function, we have
    \begin{align*}
    &\EE[x_1\sigma(x_1+x_2)\sigma'(x_1+x_2+x_3)]\\ =& \EE[x_1(x_1+x_2)\mathbbm{1}_{\{x_1+x_2\geq 0\}}\mathbbm{1}_{\{x_1+x_2+x_3\geq 0\}}]+ \kappa\EE[x_1(x_1+x_2)\mathbbm{1}_{\{x_1+x_2< 0\}}\mathbbm{1}_{\{x_1+x_2+x_3\geq 0\}}]\\
    &+\kappa\EE[x_1(x_1+x_2)\mathbbm{1}_{\{x_1+x_2\geq 0\}}\mathbbm{1}_{\{x_1+x_2+x_3< 0\}}]+ \kappa^2\EE[x_1(x_1+x_2)\mathbbm{1}_{\{x_1+x_2< 0\}}\mathbbm{1}_{\{x_1+x_2+x_3< 0\}}]\\
    =& (1+\kappa^2)\EE[x_1(x_1+x_2)\mathbbm{1}_{\{x_1+x_2\geq 0\}}\mathbbm{1}_{\{x_1+x_2+x_3\geq 0\}}] +2\kappa\EE[x_1(x_1+x_2)\mathbbm{1}_{\{x_1+x_2< 0\}}\mathbbm{1}_{\{x_1+x_2+x_3\geq 0\}}].
    \end{align*}
    By utilizing a similar calculation process, we have
    \begin{align*}
        \EE[x_1(x_1+x_2)\mathbbm{1}_{\{x_1+x_2< 0\}}\mathbbm{1}_{\{x_1+x_2+x_3\geq 0\}}] =& a\int_{-\infty}^0 z^2\Phi(\lambda z)\phi(z)\mathrm{d}z \\
        =& \frac{a}{4} - \frac{a}{2\pi}\bigg(\arctan\bigg(\sqrt{\frac{a+b}{c}}\bigg) + \frac{\sqrt{(a+b)c}}{a+b+c}\bigg).
    \end{align*}
   Plugging this result into the previous calculations, we finish the proof. And the upper and lower bounds for Leaky ReLU activation function can be directly derived by comparing the formulas.
\end{proof}

\begin{lemma}[Calculation of $F_5(a, b, c)$ defined in~\eqref{eq:F_5(a,b,c)}]\label{lemma:gaussian_expecation_relu_VII}
    Let $x_1\sim\cN(0, a)$, $x_2\sim \cN(0, b)$, $x_3\sim\cN(0, c)$ be three independent Gaussian random variables, then it holds that
\begin{itemize}[leftmargin = *]
    \item If $\sigma(\cdot)$ is the identity map, then 
    \begin{align*}
    \EE[x_2\sigma(x_1)\sigma'(x_1+x_2+x_3)]= 0.
    \end{align*}
    \item If $\sigma(\cdot)$ is ReLU activation function, then 
    \begin{align*}
    \EE[x_2\sigma(x_1)\sigma'(x_1+x_2+x_3)]  = \frac{b\sqrt{a(b+c)}}{2\pi(a+b+c)}.
    \end{align*}
    \item If $\sigma(\cdot)$ is Leaky ReLU activation function, then 
    \begin{align*}
\EE[x_2\sigma(x_1)\sigma'(x_1+x_2+x_3)]  = \frac{(1-\kappa)^2b\sqrt{a(b+c)}}{2\pi(a+b+c)}.
    \end{align*}
    Here, $\kappa$ is the coefficient of the Leaky ReLU activation function when the input is smaller than 0.
\end{itemize} 

\end{lemma}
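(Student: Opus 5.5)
The plan is to compute $\EE[x_2\sigma(x_1)\sigma'(x_1+x_2+x_3)]$ by conditioning on $(x_1,x_2)$ and integrating out $x_3$. This reduces the problem to a two-variable Gaussian integral of the type already evaluated in Lemma~\ref{lemma:gaussian_expecation_relu_IV}.

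For the identity map, $\sigma'\equiv 1$. The expectation is then $\EE[x_2x_1]=0$ by independence.

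For ReLU, write $x_2\sigma(x_1)\sigma'(x_1+x_2+x_3)=x_1x_2\mathbbm{1}_{\{x_1\geq0\}}\mathbbm{1}_{\{x_1+x_2+x_3\geq 0\}}$.
\begin{itemize}[leftmargin=*]
\item Introduce $w=x_2+x_3\sim\cN(0,b+c)$, which is independent of $x_1$.
\item Since $(x_2,w)$ is jointly Gaussian with $\mathrm{Cov}(x_2,w)=b$, we have $\EE[x_2\mid x_1,w]=\frac{b}{b+c}w$. Conditioning on $(x_1,w)$ gives
\begin{align*}
\EE[x_1x_2\mathbbm{1}_{\{x_1\geq 0\}}\mathbbm{1}_{\{x_1+w\geq 0\}}]=\frac{b}{b+c}\EE[x_1w\mathbbm{1}_{\{x_1\geq0\}}\mathbbm{1}_{\{x_1+w\geq0\}}].
\end{align*}
\item The remaining expectation is exactly the quantity computed in Lemma~\ref{lemma:gaussian_expecation_relu_IV}, with $b$ replaced by $b+c$. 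It equals $\frac{(b+c)\sqrt{a(b+c)}}{2\pi(a+b+c)}$.
\item Multiplying by $\frac{b}{b+c}$ yields $\frac{b\sqrt{a(b+c)}}{2\pi(a+b+c)}$, as claimed.
\end{itemize}

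For Leaky ReLU, use the same substitution $w=x_2+x_3$ and the same conditional-expectation step, which pulls out the factor $\frac{b}{b+c}$. What remains is $\EE[w\sigma(x_1)\sigma'(x_1+w)]$, and the Leaky ReLU case of Lemma~\ref{lemma:gaussian_expecation_relu_IV} evaluates it as $\frac{(1-\kappa)^2(b+c)\sqrt{a(b+c)}}{2\pi(a+b+c)}$. The factor $\frac{b}{b+c}$ then gives the stated result.

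The only point needing care is justifying the conditioning step. The indicator $\mathbbm{1}_{\{x_1+w\geq 0\}}$ and $\sigma(x_1)$ are functions of $(x_1,w)$ alone. Also, $x_2-\frac{b}{b+c}w$ is Gaussian, uncorrelated with $w$, and independent of $x_1$, so it is independent of $(x_1,w)$. Hence that residual has zero conditional mean. Everything else is a direct reuse of Lemma~\ref{lemma:gaussian_expecation_relu_IV}, so I expect no real obstacle.
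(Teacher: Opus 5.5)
Your proof is correct, and it takes a different and shorter route than the paper.

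\textbf{The paper's route.} It conditions on $(x_1,x_2)$ to integrate out $x_3$, giving $\EE\big[x_1x_2\mathbbm{1}_{\{x_1\geq 0\}}\Phi\big((x_1+x_2)/\sqrt{c}\big)\big]$. It then evaluates the inner $x_2$-integral by parts (a Stein-type step) to obtain $\frac{b}{\sqrt{2\pi(b+c)}}e^{-x_1^2/(2(b+c))}$, and finishes with an elementary integral over $x_1\geq 0$. For Leaky ReLU it redoes the four-indicator expansion and the sign-flip symmetry from scratch. It also computes the cross term $\EE[x_1x_2\mathbbm{1}_{\{x_1<0\}}\mathbbm{1}_{\{x_1+x_2+x_3\geq 0\}}]$ separately.

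\textbf{Your route.} You merge $x_2+x_3$ into a single Gaussian $w$ and use the regression identity $\EE[x_2\mid x_1,w]=\frac{b}{b+c}w$. The paper itself uses this same device in Lemmas~\ref{lemma:gaussian_expecation_relu_II} and~\ref{lemma:gaussian_expecation_relu_VI}. It lands you directly on Lemma~\ref{lemma:gaussian_expecation_relu_IV} with $b$ replaced by $b+c$.

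\textbf{What your route buys.}
\begin{itemize}
\item It requires no new integrals.
\item The reduction $F_5(a,b,c)=\frac{b}{b+c}\EE[w\sigma(x_1)\sigma'(x_1+w)]$ holds for every activation at once. So the identity, ReLU, and Leaky ReLU cases all follow from Lemma~\ref{lemma:gaussian_expecation_relu_IV} without repeating the symmetry argument.
\item It explains structurally why the answer is Lemma~\ref{lemma:gaussian_expecation_relu_IV}'s value rescaled by $\frac{b}{b+c}$.
\end{itemize}

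\textbf{What the paper's route buys.} Its computation is self-contained and does not depend on Lemma~\ref{lemma:gaussian_expecation_relu_IV}. Here nothing is lost by relying on that lemma, since its values are correct. Your independence justification for the residual $x_2-\frac{b}{b+c}w$ is exactly what the conditioning step needs.
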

\begin{proof}[Proof of Lemma~\ref{lemma:gaussian_expecation_relu_VII}]
    The first conclusion for the identity map is straightforward. 
    When $\sigma(\cdot)$ is ReLU activation function, we can rewrite that $x_2\sigma(x_1)\sigma'(x_1+x_2+x_3) = x_1x_2\mathbbm{1}_{\{x_1\geq 0\}}\mathbbm{1}_{\{x_1+x_2+x_3\geq 0\}}$. Then we have
    \begin{align*}
        \EE[x_2\sigma(x_1)\sigma'(x_1+x_2+x_3)] &= \EE[x_1x_2\mathbbm{1}_{\{x_1\geq 0\}}\mathbbm{1}_{\{x_1+x_2+x_3\geq 0\}}] \\
        &= \EE\big[x_1x_2\mathbbm{1}_{\{x_1\geq 0\}}\mathbbm{1}_{\{x_1+x_2+x_3\geq 0\}}|x_1, x_2\big]\\
        &= \EE\bigg[x_1x_2\mathbbm{1}_{\{x_1\geq 0\}}\Phi\bigg(\frac{x_1+x_2}{\sqrt{c}}\bigg)\bigg]\\
        &=\int_0^\infty \int_{-\infty}^\infty x_1x_2 \Phi\bigg(\frac{x_1+x_2}{\sqrt{c}}\bigg)\phi(x_1)\phi(x_2)\mathrm{d}x_1\mathrm{d}x_2\\
        &=\int_0^\infty x_1 \frac{1}{\sqrt{2\pi a}} e^{-\frac{x_1^2}{2a}} \underbrace{\left( \int_{-\infty}^\infty x_2 \Phi\bigg(\frac{x_1+x_2}{\sqrt{c}}\bigg) \frac{1}{\sqrt{2\pi b}} e^{-\frac{x_2^2}{2b}} \mathrm{d}x_2 \right)}_{I} \mathrm{d}x_1
    \end{align*}
    We can utilize the integral by parts to derive that  
    \begin{align*}
        I =& -\sqrt{\frac{b}{2\pi}}\int_{-\infty}^\infty \Phi\bigg(\frac{x_1+x_2}{\sqrt{c}}\bigg)\mathrm{d}e^{-\frac{x_2^2}{2b}}-\sqrt{\frac{b}{2\pi}}\Phi\bigg(\frac{x_1+x_2}{\sqrt{c}}\bigg)e^{-\frac{x_2^2}{2b}}\bigg|_{-\infty}^\infty + \sqrt{\frac{b}{2\pi}}\int_{-\infty}^\infty e^{-\frac{x_2^2}{2b}}\mathrm{d}\Phi\bigg(\frac{x_1+x_2}{\sqrt{c}}\bigg)\\
        =&\frac{1}{2\pi}\sqrt{\frac{b}{c}}\int_{-\infty}^\infty e^{-\frac{x_2^2}{2b} - \frac{(x_1+x_2)^2}{2c}}\mathrm{d}x_2=\frac{1}{2\pi}\sqrt{\frac{b}{c}}\int_{-\infty}^\infty e^{-\frac{(x_2+\frac{b}{b+c}x_1)^2}{2\frac{bc}{b+c}} - \frac{x_1^2}{2(b+c)}}\mathrm{d}x_2 = \frac{b}{\sqrt{2\pi(b+c)}}e^{-\frac{x_1^2}{2(b+c)}}
    \end{align*}
Now substitute this result of $I$ back into the outer integral for the calculation for expectation, then we have 
\begin{align*}
    \EE[x_2\sigma(x_1)\sigma'(x_1+x_2+x_3)]  &= \frac{b}{2\pi \sqrt{a(b+c)}}\int_0^\infty x_1 e^{-\frac{x_1^2}{2a} - \frac{x_1^2}{2(b+c)}}\mathrm{d}x_1 \\
    &= \frac{b}{2\pi \sqrt{a(b+c)}}\frac{a(b+c)}{a+b+c}\int_0^\infty e^{-\frac{(a+b+c)x_1^2}{2a(b+c)}}\mathrm{d}\frac{(a+b+c)x_1^2}{2a(b+c)}\\
    &= \frac{b\sqrt{a(b+c)}}{2\pi(a+b+c)}.
\end{align*}
This finish the proof when $\sigma(\cdot)$ is ReLU activation function. In addition, when $\sigma(\cdot)$ is Leaky ReLU activation function, we can derive that
\begin{align*}
    &\EE[x_2\sigma(x_1)\sigma'(x_1+x_2+x_3)] \\
    =& \EE[x_1x_2\mathbbm{1}_{\{x_1\geq 0\}}\mathbbm{1}_{\{x_1+x_2+x_3\geq 0\}}] + \kappa\EE[x_1x_2\mathbbm{1}_{\{x_1< 0\}}\mathbbm{1}_{\{x_1+x_2+x_3\geq 0\}}]\\
        &+ \kappa\EE[x_1x_2\mathbbm{1}_{\{x_1\geq 0\}}\mathbbm{1}_{\{x_1+x_2+x_3< 0\}}]+ \kappa^2\EE[x_1x_2\mathbbm{1}_{\{x_1< 0\}}\mathbbm{1}_{\{x_1+x_2+x_3< 0\}}]\\
        =& (1+\kappa^2)\EE[x_1x_2\mathbbm{1}_{\{x_1\geq 0\}}\mathbbm{1}_{\{x_1+x_2+x_3\geq 0\}}] + 2\kappa\EE[x_1x_2\mathbbm{1}_{\{x_1< 0\}}\mathbbm{1}_{\{x_1+x_2+x_3\geq 0\}}].
\end{align*}
By applying a similar calculation process, we can derive that
\begin{align*}
    \EE[x_1x_2\mathbbm{1}_{\{x_1< 0\}}\mathbbm{1}_{\{x_1+x_2+x_3\geq 0\}}] = \frac{b}{2\pi \sqrt{a(b+c)}}\int_{-\infty}^0 x_1 e^{-\frac{x_1^2}{2a} - \frac{x_1^2}{2(b+c)}}\mathrm{d}x_1 =-\frac{b\sqrt{a(b+c)}}{2\pi(a+b+c)}.
\end{align*}
Applying this result, we finish the proof. 
\end{proof}

\begin{lemma}\label{lemma:gaussian_expecation_relu_VIII}
Let $x_1\sim\cN(0, a)$, $x_2\sim \cN(0, b)$ be two independent Gaussian random variables, then it holds that 
\begin{itemize}[leftmargin = *]
    \item If $\sigma(\cdot)$ is the identity map, then 
    \begin{align*}
    \EE[\sigma(x_1)\sigma(x_1+x_2)] = a.
    \end{align*}
    \item If $\sigma(\cdot)$ is ReLU activation function, then 
    \begin{align*}
\EE[\sigma(x_1)\sigma(x_1+x_2)] = \frac{a}{4} + \frac{a}{2\pi}\arctan\bigg(\sqrt{\frac{a}{b}}\bigg) + 
\frac{\sqrt{ab}}{2\pi} 
\end{align*}
    \item If $\sigma(\cdot)$ is Leaky ReLU activation function, then \begin{align}\label{eq:lemma_gaussian_expecation_relu_VIII_eqII}
\EE[\sigma(x_1)\sigma(x_1+x_2)] = \frac{(1+\kappa)^2a}{4} + \frac{(1-\kappa)^2a}{2\pi}\arctan\bigg(\sqrt{\frac{a}{b}}\bigg) +\frac{(1-\kappa)^2\sqrt{ab}}{2\pi}.
\end{align}
    Here, $\kappa$ is the coefficient of the Leaky ReLU activation function when the input is smaller than 0.
\end{itemize} 
\end{lemma}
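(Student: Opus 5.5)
The identity case is immediate, since $\sigma(x_1)\sigma(x_1+x_2)=x_1^2+x_1x_2$ and independence gives expectation $a$. For ReLU and Leaky ReLU I would not compute this from scratch. I would reduce it to Lemma~\ref{lemma:gaussian_expecation_relu_V}, which already gives $\EE[(x_1+x_2)\sigma(x_1)\sigma'(x_1+x_2)]$ in exactly the claimed closed form. The key observation is that for these activations $\sigma$ is positively homogeneous of degree one, so $\sigma(y)=y\,\sigma'(y)$ for every $y\neq 0$, with $\sigma'(y)=\mathbbm{1}_{\{y\geq 0\}}+\kappa\mathbbm{1}_{\{y<0\}}$ and $\kappa=0$ for ReLU.

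Applying this identity with $y=x_1+x_2$ gives $\sigma(x_1)\sigma(x_1+x_2)=(x_1+x_2)\sigma(x_1)\sigma'(x_1+x_2)$ almost surely. The exceptional set $\{x_1+x_2=0\}$ has probability zero because $a+b>0$. Taking expectations turns the target quantity into $F_3(a,b)$.

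For ReLU the formula $\frac{a}{4}+\frac{a}{2\pi}\arctan\big(\sqrt{a/b}\big)+\frac{\sqrt{ab}}{2\pi}$ then comes from Lemma~\ref{lemma:gaussian_expecation_relu_V}. That lemma adds Lemma~\ref{lemma:gaussian_expecation_relu_III} and Lemma~\ref{lemma:gaussian_expecation_relu_IV}, and the terms $\frac{a\sqrt{ab}}{2\pi(a+b)}+\frac{b\sqrt{ab}}{2\pi(a+b)}$ combine to $\frac{\sqrt{ab}}{2\pi}$. The Leaky ReLU case is identical, with coefficients $(1+\kappa)^2$ and $(1-\kappa)^2$ carried over directly.

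The only step needing care is verifying Lemma~\ref{lemma:gaussian_expecation_relu_V} for Leaky ReLU, since the paper's derivation there is terse. As a safeguard I would check it independently. I would expand $\sigma(x_1)\sigma(x_3)$, with $x_3=x_1+x_2$, over the four sign quadrants of $(x_1,x_3)$. Central symmetry $(x_1,x_3)\mapsto(-x_1,-x_3)$ then collapses the expansion to $(1+\kappa^2)\EE[x_1x_3\mathbbm{1}_{\{x_1\ge0,x_3\ge0\}}]+2\kappa\,\EE[x_1x_3\mathbbm{1}_{\{x_1\ge0,x_3<0\}}]$.

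Next I would write $x_1x_3=x_1^2+x_1x_2$. Then I would use the values $a\big(\frac14\pm\frac{1}{2\pi}(\arctan\sqrt{a/b}+\frac{\sqrt{ab}}{a+b})\big)$ and $\pm\frac{b\sqrt{ab}}{2\pi(a+b)}$ already computed in the proofs of Lemmas~\ref{lemma:gaussian_expecation_relu_III} and~\ref{lemma:gaussian_expecation_relu_IV}.

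Collecting terms gives $(1+\kappa^2)\big(\frac a4+\frac{a}{2\pi}\arctan\sqrt{a/b}+\frac{\sqrt{ab}}{2\pi}\big)+2\kappa\big(\frac a4-\frac{a}{2\pi}\arctan\sqrt{a/b}-\frac{\sqrt{ab}}{2\pi}\big)$. Since $1+\kappa^2+2\kappa=(1+\kappa)^2$ and $1+\kappa^2-2\kappa=(1-\kappa)^2$, this equals the stated expression.
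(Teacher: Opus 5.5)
Your proposal is correct and follows the paper's proof: the paper likewise uses $x\sigma'(x)=\sigma(x)$ to rewrite $\EE[\sigma(x_1)\sigma(x_1+x_2)]$ as $\EE[(x_1+x_2)\sigma(x_1)\sigma'(x_1+x_2)]$, then reads the value off Lemma~\ref{lemma:gaussian_expecation_relu_V}. Your quadrant-by-quadrant recomputation of the Leaky ReLU case is not in the paper but is correct, and it confirms the $(1+\kappa)^2$ and $(1-\kappa)^2$ coefficients.
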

\begin{proof}[Proof of Lemma~\ref{lemma:gaussian_expecation_relu_VIII}]
The first conclusion for the identity map is straightforward. 
    When $\sigma(\cdot)$ is ReLU activation function or leaky ReLU activation function, we can utilize the fact that $x\sigma'(x) = \sigma(x)$ to re-write that 
    \begin{align*}
        \EE[\sigma(x_1)\sigma(x_1+x_2)] = \EE[(x_1+x_2)\sigma(x_1)\sigma'(x_1+x_2)] = \EE[(x_1+x_2)\sigma(x_1)\sigma'(x_1+x_2)].
    \end{align*}
    And this term has already been calculated in Lemma~\ref{lemma:gaussian_expecation_relu_V}. Hence we finish the proof. 
\end{proof}

\subsection{Arithmetic inequalities}\label{section:arithmetic_inequalities}
\begin{lemma}\label{lemma:arith_ine_I}
    Let $a, b, c$ be three positive scalars, it holds that
    \begin{align*}
        \frac{c}{a+b}\geq \frac{c}{a} - \frac{bc}{a^2}
    \end{align*}
\end{lemma}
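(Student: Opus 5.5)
Lemma~\ref{lemma:arith_ine_I} is elementary, and I will prove it by clearing denominators. Since $a,b,c>0$, all of $a$, $a+b$, $a^2$ are positive, so multiplying the claimed inequality by the positive quantity $a^2(a+b)/c$ gives the equivalent statement
\begin{align*}
    a^2 \geq (a-b)(a+b) = a^2 - b^2 .
\end{align*}
This holds trivially because $b^2\geq 0$. Each step is reversible, so the original inequality follows.

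Alternatively, I can compute the difference directly:
\begin{align*}
    \frac{c}{a+b} - \frac{c}{a} + \frac{bc}{a^2} = c\cdot\frac{a^2 - a(a+b) + b(a+b)}{a^2(a+b)} = \frac{b^2 c}{a^2(a+b)} \geq 0.
\end{align*}
This version also exhibits the slack exactly as $\frac{b^2c}{a^2(a+b)}$. That is consistent with how the lemma is used earlier (for example in Lemma~\ref{lemma:C_1_star_t+1_lower}), where $b$ is a small increment such as $DK\Delta p(t)(2p(t)+\Delta p(t))$. In that setting the bound is a first-order Taylor lower bound for $c/(a+b)$, with a second-order error.

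There is no real obstacle. The only point to take care over is that every quantity multiplied or divided through is strictly positive, which the hypothesis $a,b,c>0$ guarantees; positivity of $b$ is not even needed beyond $a+b>0$.
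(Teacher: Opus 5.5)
Your proof is correct and is essentially the paper's argument. The paper writes $\frac{c}{a+b}-\frac{c}{a}=-\frac{bc}{a(a+b)}$ and then uses $a(a+b)\geq a^2$, whereas you compute the exact slack $\frac{b^2c}{a^2(a+b)}$; both are the same one-line algebraic comparison, and yours additionally shows the inequality holds whenever $a+b>0$.
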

\begin{proof}[Proof of Lemma~\ref{lemma:arith_ine_I}]
\begin{align*}
     \frac{c}{a+b}- \frac{c}{a} =-\frac{bc}{(a+b)a}\geq  -\frac{bc}{a^2}.
\end{align*}
   This completes the proof.
\end{proof}

\begin{lemma}\label{lemma:arith_ine_II}
    Let $a, b, c$ be three positive scalars, it holds that
    \begin{align*}
        \frac{c}{a-b}\leq \frac{c}{a} + \frac{bc}{(a-b)^2}
    \end{align*}
\end{lemma}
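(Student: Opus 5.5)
The plan is to mirror the proof of Lemma~\ref{lemma:arith_ine_I}: compute the difference between the two sides exactly and then bound it by a single monotonicity comparison. Implicitly the statement requires $a>b$, since otherwise $\frac{c}{a-b}$ is undefined or negative. I will assume $a>b>0$ and $c>0$, matching every place the lemma is invoked (for instance in Lemma~\ref{lemma:0.9_0.95bound}, where $a=1$ and $b=Kp(t')<1$).

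The first step is the exact identity
\begin{align*}
    \frac{c}{a-b}-\frac{c}{a} = \frac{c\,a-c\,(a-b)}{a(a-b)} = \frac{bc}{a(a-b)}.
\end{align*}
This reduces the claim to showing $\frac{bc}{a(a-b)}\leq \frac{bc}{(a-b)^2}$.

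The second step uses $a>a-b>0$, which gives $a(a-b)\geq (a-b)^2>0$. Since $bc>0$, taking reciprocals yields $\frac{bc}{a(a-b)}\leq \frac{bc}{(a-b)^2}$. Adding $\frac{c}{a}$ to both sides gives the stated inequality.

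There is no real obstacle here. The only point needing care is stating the positivity condition $a-b>0$ explicitly, so that both the division and the reversal of the inequality under reciprocals are legitimate.
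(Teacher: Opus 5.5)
Your proof is correct and is essentially the paper's own argument: the identity $\frac{c}{a-b}-\frac{c}{a}=\frac{bc}{a(a-b)}$ followed by the comparison $a(a-b)\geq (a-b)^2$. Stating the implicit condition explicitly is a fair addition, since the paper leaves it unstated. Strictly, only $a\neq b$ is needed: for $a<b$ the left side is negative, so the inequality holds trivially.
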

\begin{proof}[Proof of Lemma~\ref{lemma:arith_ine_II}]
\begin{align*}
     \frac{c}{a-b}- \frac{c}{a} =\frac{bc}{(a-b)a}\leq  \frac{bc}{(a-b)^2}.
\end{align*}
   This completes the proof.
\end{proof}

\subsection{Sequence iteration bound}
The following lemmas characterize the increase of a positive sequence with matching lower and upper bounds. Similar conclusions and proofs can be found in \citet{jelassi2022vision, cao2023implicit, mengbenign2024, zhang2024gradient, zhang2025transformer}. We include the proof here for completeness.

\begin{lemma}\label{lemma:sequence_time}
Consider a positive sequence $\{x_t\}_{t=0}^\infty$ satisfying the following iterative rules:
\begin{align*}
&x_{t+1} \geq x_{t} + \eta \cdot c_1 \cdot{x_t}^{q};\\
&x_{t+1} \leq x_{t} + \eta \cdot c_2 \cdot {x_t}^{q},\\
\end{align*}
where $c_2 \geq c_1 >0$ are positive constants. For any $v > x_0$, let $T_v$ denote the first index $t$ such that $x_t \geq v$. Then, for any constant $\zeta > 0$, the following bounds on $T_v$ hold:
\begin{align}\label{eq:seq_Tu}
    T_v \leq \frac{1+\zeta}{\eta c_1 x_0^{q-1}} + \frac{(1+\zeta)^{q}c_2 \log (\frac{v}{x_0}) }{c_1},
\end{align} and 
\begin{align}\label{eq:seq_Tl}
    T_v \geq \frac{1}{(1+\zeta)^{q}\eta c_2 x_0^{q-1}} - \frac{ \log (\frac{v}{x_0})}{(1+\zeta)^{q-1}}.
\end{align}

\end{lemma}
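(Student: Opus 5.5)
The plan is a comparison with an ODE, applied in geometric ``doubling'' blocks. Fix $\zeta>0$ and split the range $[x_0,v]$ into the stages $[x_0(1+\zeta)^k, x_0(1+\zeta)^{k+1})$ for $k=0,1,\ldots,N-1$, where $N=\lceil \log(v/x_0)/\log(1+\zeta)\rceil$. Write $t_k$ for the first time with $x_t\ge x_0(1+\zeta)^k$. Both hypotheses force $x_t$ to be nondecreasing, so every stage is entered once and never left. I will bound the time $t_{k+1}-t_k$ spent in each stage above and below, and then sum over $k$.

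\textbf{Upper bound \eqref{eq:seq_Tu}.} During stage $k$, each increment satisfies $x_{t+1}-x_t\ge \eta c_1 x_t^{q}\ge \eta c_1 (x_0(1+\zeta)^k)^q$. The quantity still to be gained at the start of the stage is at most $x_0(1+\zeta)^{k+1}-x_{t_k}\le \zeta x_0(1+\zeta)^k$. Hence
\[
t_{k+1}-t_k\le \frac{\zeta x_0(1+\zeta)^k}{\eta c_1 x_0^q (1+\zeta)^{kq}}+1 .
\]
The $+1$ accounts for integer rounding and the possible overshoot at the final step. Summing the geometric series in $(1+\zeta)^{-k(q-1)}$ for $q>1$ gives at most $\frac{\zeta}{1-(1+\zeta)^{-(q-1)}}\cdot \frac{1}{\eta c_1x_0^{q-1}}$. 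This is $O\big((1+\zeta)/(\eta c_1x_0^{q-1})\big)$ up to a $\zeta$-dependent constant that I would adjust to match the stated form. The $N$ rounding terms add up to a logarithmic term. I would bound that term by $(1+\zeta)^q (c_2/c_1)\log(v/x_0)$ using $c_2\ge c_1$. The factor $(1+\zeta)^q$ absorbs the overshoot within a single step. That overshoot is controlled by the upper recursion, since $x_{t+1}\le x_t(1+\eta c_2 x_t^{q-1})$.

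\textbf{Lower bound \eqref{eq:seq_Tl}.} I argue symmetrically using the upper recursion. Within stage $k$ the values satisfy $x_t<x_0(1+\zeta)^{k+1}$, so each increment is at most $\eta c_2 (x_0(1+\zeta)^{k+1})^q$. The increase needed across the stage is at least $\zeta x_0(1+\zeta)^k$ minus a single-step overshoot. This gives
\[
t_{k+1}-t_k\ge \frac{\zeta x_0(1+\zeta)^k}{\eta c_2x_0^q(1+\zeta)^{(k+1)q}}-1 .
\]
Summing gives a leading term of order $1/\big((1+\zeta)^q\eta c_2x_0^{q-1}\big)$, minus $N\lesssim \log(v/x_0)/(1+\zeta)^{q-1}$ after constants are tidied.

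\textbf{Main obstacle.} The delicate step is the constant bookkeeping, so that the geometric sums collapse exactly to the stated factors $(1+\zeta)$ and $(1+\zeta)^{\pm q}$. If the crude staging does not reproduce them, I would instead compare with the integral $\int dx/x^{q}$ directly. That is, I would use
\[
\sum_t \frac{x_{t+1}-x_t}{x_t^q}\ge \int_{x_0}^{v}\frac{dx}{x^q}\cdot(1+\zeta)^{-q},
\]
valid whenever each step ratio $x_{t+1}/x_t\le 1+\zeta$. Steps violating this ratio bound can occur only $O(\log(v/x_0))$ times, which is exactly the source of the logarithmic correction terms.
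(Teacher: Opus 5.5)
Your staging is the decomposition the paper itself uses: first hitting times of $(1+\zeta)^k x_0$, per-stage step counts, then telescoping. Your per-stage estimates are fine. The upper one needs only the lower recursion, since $t_{k+1}-t_k\le\lceil \zeta/(\eta c_1x_0^{q-1}(1+\zeta)^{k(q-1)})\rceil$, so overshoot plays no role there. The gap is the step you postpone (``adjust the constant'', ``after constants are tidied''), and bookkeeping cannot close it.

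(i) There are $N\approx\log(v/x_0)/\log(1+\zeta)$ rounding terms, and they are not bounded by $(1+\zeta)^q(c_2/c_1)\log(v/x_0)$. Take $c_1=c_2$ and $\zeta\to0$: then $1/\log(1+\zeta)\approx1/\zeta$, and the hypothesis $c_2\ge c_1$ says nothing about the number of stages.

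(ii) For $q>1$ your geometric sum is at most $\frac{\zeta}{1-(1+\zeta)^{-(q-1)}}\cdot\frac{1}{\eta c_1x_0^{q-1}}$; for $q\le 1$ it grows with $N$. This is $\le\frac{1+\zeta}{\eta c_1x_0^{q-1}}$ exactly when $q\ge2$. For $1<q<2$ and small $\zeta$ it is about $\frac{1}{(q-1)\eta c_1x_0^{q-1}}$, and no better argument exists. Since $x_t$ is increasing, $(x_{t+1}-x_t)/x_t^q\ge\int_{x_t}^{x_{t+1}}x^{-q}\,dx$ holds with no ratio condition, so $T_v\ge\frac{x_0^{1-q}-v^{1-q}}{(q-1)\eta c_2}$; this is what your integral fallback really gives. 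For $x_{t+1}=x_t+10^{-3}x_t^{3/2}$, $x_0=1$, $v=10^6$, $\zeta=1/2$, this yields $T_v\ge1998$, while the right side of \eqref{eq:seq_Tu} is about $1525$. Symmetrically, for $q>2$, small $\zeta$ and small $\eta$, the true time is about $\frac{1}{(q-1)\eta c x_0^{q-1}}$, which lies below the right side of \eqref{eq:seq_Tl}.

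(iii) If $v\le(1+\zeta)x_0$, your lower-bound sum contains no complete stage, and \eqref{eq:seq_Tl} is in fact false. For $x_{t+1}=x_t+10^{-4}x_t^2$, $x_0=1$, $v=1.01$, one has $x_t\ge1+10^{-4}t$, so $T_v\le100$. The right side of \eqref{eq:seq_Tl} with $\zeta=1$ is about $2500$.

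So the statement needs extra hypotheses, and the paper's own telescoping has the same defects. Its log terms hide the factor $1/\log(1+\zeta)$ coming from the number $g^*$ of stages. Its two leading constants are simultaneously valid only for $q=2$, and the lower one only when $v\gg x_0$. Lemma~\ref{lemma:phase_II} needs only $q=2$ with $v/x_0=\Theta(D/K)$, and there your integral idea becomes an exact telescoping. The difference $\frac{1}{x_t}-\frac{1}{x_{t+1}}=\frac{x_{t+1}-x_t}{x_tx_{t+1}}$ lies between $\frac{\eta c_1}{1+\eta c_2x_t}$ (using $x_{t+1}\le x_t(1+\eta c_2x_t)$) and $\eta c_2$, so
\[
\frac{1/x_0-1/v}{\eta c_2}\;\le\;T_v\;\le\;\frac{(1+\eta c_2v)(1/x_0-1/v)}{\eta c_1}+1 .
\]
This pins $T_v$ down within a factor of about $4c_2/c_1$ once $v\ge2x_0$ and $\eta c_2v\le1$. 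Prove that instead of aiming at \eqref{eq:seq_Tu}--\eqref{eq:seq_Tl} as written. Alternatively, keep your staging bound with its honest constants $\frac{\zeta}{1-(1+\zeta)^{-(q-1)}}$ and $\frac{\log(v/x_0)}{\log(1+\zeta)}+1$.
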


\begin{proof}[Proof of Lemma~\ref{lemma:sequence_time}]
To prove the bounds, let $\cT_{g}$ be the first iteration such that $x_{t} \geq (1+\zeta)^g x_0$. Furthermore, define $g^*$ as the smallest integer satisfying $(1+\zeta)^{g^*} x_0 \geq v$. This implies
\begin{align*}
\frac{\log(\frac{v}{x_0})}{\log(1+\zeta)} \leq g^* < \frac{\log(\frac{v}{x_0})}{\log(1+\zeta)} + 1.
\end{align*}
For $t = \cT_1$, we use the lower bound iteration:

\begin{align*}
    x_{\cT_1} \geq x_0 + \sum_{t=0}^{\cT_1 - 1}\eta c_1 x_t^{q}\geq x_0 + \cT_1\eta c_1 x_0^{q},
\end{align*}
from which we can deduce that
\begin{align}\label{eq:cT1_upper1}
    \cT_1 \leq \frac{x_{\cT_1}-x_0}{\eta c_1 x_0^{q}}.
\end{align}
Utilizing the upper-bound iteration for $x_{\cT_1}$ and the condition $x_{\cT_1-1}\leq x_0(1+\zeta)$, we get
\begin{align}\label{eq:xcT1_upper}
    x_{\cT_1} \leq x_{\cT_1-1} + \eta c_2 x_{\cT_1-1}^{q} \leq x_0(1+\zeta) + \eta c_2 x_0^{q}(1+\zeta)^{q}.
\end{align}
Combining the results from \eqref{eq:cT1_upper1} and \eqref{eq:xcT1_upper} leads to
\begin{align*}
    \cT_1 \leq \frac{\zeta}{\eta c_1 x_0^{q-1}} + \frac{(1+\zeta)^{q-1}c_2}{c_1}.
\end{align*}
The case for $g >1$ is handled similarly. Using the lower bound iteration from $\cT_{g-1}$ to $\cT_g-1$:
\begin{align}\label{eq:xcTg_lower}
    x_{\cT_g} \geq x_{\cT_{g-1}}  + \sum_{t=\cT_{g-1}}^{\cT_g - 1}\eta c_1 x_t^{q} \geq  x_{\cT_{g-1}} + \eta c_1(\cT_g -\cT_{g-1}) x_0^{q}(1+\zeta)^{q(g-1)},
\end{align}
and the difference $x_{\cT_g}- x_{\cT_{g-1}}$ can be upper bounded using the upper bound iteration and $x_{\cT_g-1} \leq x_0 (1+\zeta)^g$ and $x_{\cT_{g-1}} \geq x_0 (1+\zeta)^{g-1}$:
\begin{align}\label{eq:xcTg_upper}
    x_{\cT_g}- x_{\cT_{g-1}} \leq x_{\cT_g-1} + \eta c_2 x_{\cT_g}^{q} - x_{\cT_{g-1}}\leq \zeta (1+\zeta)^{g-1}x_0 + \eta c_2 x_0^{q} (1+\zeta)^{gq}.
\end{align}
Combining \eqref{eq:xcTg_lower} and \eqref{eq:xcTg_upper}, we derive that
\begin{align}\label{eq:cTg_upper}
    \cT_g \leq \cT_{g-1} + \frac{\zeta}{\eta c_1 x_0^{q-1}(1+\zeta)^{(g-1)(q-1)}} + \frac{(1+\zeta)^{q}c_2}{c_1}.
\end{align}
Taking a telescoping sum of the results of \eqref{eq:cTg_upper} from $g=1$ to $g=g^*$ and by the fact that $T_v \leq \cT_{g^*}$, we finally get \eqref{eq:seq_Tu}. For the lower bound, we proceed similarly starting with $t = \cT_1$. We use the upper bound iteration:
\begin{align*}
    x_{\cT_1} \leq x_0 + \sum_{t=0}^{\cT_1 - 1}\eta c_2 x_t^{q}\leq x_0 + \cT_1\eta c_2 x_0^{q}(1+\zeta)^{q}.
\end{align*}
Substitute that $x_{\cT_1}-x_0 \geq \zeta x_0$, we get
\begin{align}\label{eq:xcT1_lower}
    \cT_1 \geq \frac{\zeta}{\eta c_2 x_0^{q-1} (1+\zeta)^{q}}.
\end{align}
A similar derivation for $g > 1$ using the upper bound iteration gives:
\begin{align}\label{eq:xcTg_upper2}
    x_{\cT_g} \leq x_{\cT_{g-1}}  + \sum_{t=\cT_{g-1}}^{\cT_g - 1}\eta c_2 x_t^{q} \leq  x_{\cT_{g-1}} + \eta c_2(\cT_g -\cT_{g-1}) x_0^{q}(1+\zeta)^{gq}.
\end{align}
The difference $x_{\cT_g}- x_{\cT_{g-1}}$ can also be lower bounded by utilizing the fact that $x_{\cT_{g-1}-1} \leq x_0(1+\zeta)^{g-1}$:
\begin{align}\label{eq:xcTg_lower2}
    x_{\cT_g} - x_{\cT_{g-1}} \geq x_{\cT_g} -  x_{\cT_{g-1} -1} - \eta  c_2 x_{\cT_{g-1} -1}^{q-1} \geq \zeta (1+\zeta)^{g-1}x_0 - \eta  c_2 x_0^{q} (1+\zeta)^{(g-1)q}.
\end{align}
Combining the results from  \eqref{eq:xcTg_upper2} and \eqref{eq:xcTg_lower2}, we obtain that,
\begin{align}\label{eq:cTg_lower}
    \cT_g \geq \cT_{g-1} + \frac{\zeta}{\eta  c_2 x_0^{q-1}(1+\zeta)^{g(q-1) +1}} -  \frac{1}{(1+\zeta)^{q}}.
\end{align}
Taking a telescoping sum of the results of \eqref{eq:cTg_lower} from $g=1$ to $g=g^*-1$ and by the fact that $T_v \geq \cT_{g^*-1}$, we finally get \eqref{eq:seq_Tl}. 
\end{proof}

\begin{lemma}\label{lemma:exp_sequence}
Let $x_t$ be a positive sequence for $t\geq 0$. Assume $x_t$ satisfies the iterative formula
\begin{align*}
x_{t+1} = x_t + c_1 e^{- c_2 x_t}
\end{align*}
for given constants $c_1, c_2>0$. Then, for all $t\geq 0$, the sequence $x_t$ is bounded as follows:
\begin{align*}
\frac{1}{c_2}\log(c_1 c_2 t + e^{c_2 x_0}) \leq x_t \leq c_1 e^{- c_2 x_0} + \frac{1}{c_2}\log(c_1 c_2 t + e^{c_2 x_0}).
\end{align*}
\end{lemma}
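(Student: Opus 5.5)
The plan is a continuous-time comparison, made discrete through the monotonicity of $x_t$. Since $c_1 e^{-c_2x_t}>0$, the sequence is strictly increasing. Define the potential $u_t = e^{c_2 x_t}$; both bounds then become statements about $u_t$. The lower bound reads $u_t \ge c_1c_2 t + u_0$. The upper bound is implied by $u_t \le e^{c_2c_1e^{-c_2x_0}}(c_1c_2t+u_0)$. The aim is to show that $u_t$ grows by at least $c_1c_2$ per step, and by at most $c_1c_2$ times a controlled factor.

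For the lower bound, use convexity of the exponential, $e^{y}\ge 1+y$:
\begin{align*}
u_{t+1} = u_t\, e^{c_2c_1e^{-c_2x_t}} \ge u_t\big(1 + c_1c_2 e^{-c_2x_t}\big) = u_t + c_1c_2.
\end{align*}
Telescoping from $0$ to $t$ gives $u_t\ge u_0 + c_1c_2t$. Taking $\frac{1}{c_2}\log$ yields the left inequality immediately.

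For the upper bound, I would compare the increments with the ODE $\dot x = c_1e^{-c_2x}$, whose solution is exactly $\frac1{c_2}\log(c_1c_2t+e^{c_2x_0})$. Since $e^{-c_2x}$ is decreasing and $x_s\ge x_t$ for $s\ge t$, each step satisfies
\begin{align*}
c_1 = (x_{t+1}-x_t)\,e^{c_2x_t} \le \int_{x_t}^{x_{t+1}} e^{c_2 x}\,dx\cdot \frac{1}{1}\,,
\end{align*}
which is the wrong direction. Instead I would bound the overshoot directly. From $u_{t+1}=u_t\exp(c_1c_2/u_t)$ and $e^y\le 1+ye^y$, we get
\[
u_{t+1}\le u_t + c_1c_2\,e^{c_1c_2/u_t}\le u_t + c_1c_2\,e^{c_1c_2/u_0},
\]
where the last step uses monotonicity, $u_t\ge u_0$. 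Telescoping gives $u_t\le u_0 + c_1c_2 e^{c_1c_2/u_0}t\le e^{c_1c_2/u_0}(u_0+c_1c_2t)$. Taking logs,
\[
x_t\le \frac{c_1c_2e^{-c_2x_0}}{c_2} + \frac1{c_2}\log(c_1c_2t+e^{c_2x_0}) = c_1e^{-c_2x_0}+\frac1{c_2}\log(c_1c_2t+e^{c_2x_0}),
\]
which is exactly the stated right inequality.

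The main point of care is the upper bound. The naive integral comparison goes the wrong way, so the argument has to route through the multiplicative form of the update for $u_t$. In that form the single error factor $e^{c_1c_2/u_0}$ is uniform in $t$ by monotonicity, and it turns into the additive constant $c_1e^{-c_2x_0}$ after taking the logarithm.
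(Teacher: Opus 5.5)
Your argument is correct, and it takes a genuinely different route from the paper's.

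\textbf{The paper's route.} For the lower bound it uses an ODE comparison. It takes the exact solution $\underline x_t=\frac{1}{c_2}\log(c_1c_2t+e^{c_2x_0})$ of $\dot x=c_1e^{-c_2x}$, notes $\underline x_{t+1}\le \underline x_t+c_1e^{-c_2\underline x_t}$ because $\underline x$ is increasing, and appeals to a discrete comparison principle. For the upper bound it writes $x_t=x_0+c_1\sum_{\tau<t}e^{-c_2x_\tau}$ and substitutes the lower bound into each summand. It then bounds the resulting sum $\sum_{\tau\le t}(c_1c_2\tau+e^{c_2x_0})^{-1}$ by its first term, which is where $c_1e^{-c_2x_0}$ comes from, plus an integral.

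\textbf{Your route.} You pass to $u_t=e^{c_2x_t}$, whose update $u_{t+1}=u_t e^{c_1c_2/u_t}$ is exact. Both directions then follow from $1+y\le e^y\le 1+ye^y$ for $y\ge 0$. The uniform factor $e^{c_1c_2/u_0}$, which you get from $u_t\ge u_0$, becomes the same additive constant after taking logarithms.

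\textbf{What your route buys.}
\begin{itemize}
\item It needs no comparison principle. The paper's comparison step is justified by monotonicity of the update map $x\mapsto x+c_1e^{-c_2x}$. On $[x_0,\infty)$ this holds iff $c_1c_2\le e^{c_2x_0}$. That is true in the paper's application, where $c_1c_2$ is tiny, but it is not among the lemma's hypotheses. Your proof covers all $c_1,c_2>0$.
\item Your upper bound does not rely on the lower bound.
\item Both halves apply verbatim to the one-sided recursions $x_{t+1}\ge x_t+c_1e^{-c_2x_t}$ and $x_{t+1}\le x_t+c_1'e^{-c_2x_t}$. These are what the proof of Lemma~\ref{lemma:convergence_p} actually feeds into this lemma.
\end{itemize}

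\textbf{What the paper's route buys.} It makes explicit that the iterate tracks the ODE solution up to an offset of size $c_1e^{-c_2x_0}$.

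In a write-up, delete the abandoned integral display; it only re-derives $u_{t+1}-u_t\ge c_1c_2$.
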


\begin{proof}[Proof of Lemma~\ref{lemma:exp_sequence}]
First, we establish the lower bound for $x_t$. We introduce a continuous-time sequence $\underline x_t $, $t\geq 0$ defined by the integral equation with the same initial value.
\begin{align}\label{eq:integral_equation}
 \underline x_t = \underline x_0 + c_1 \cdot \int_{0}^{t} e^{-c_2\underline x_{\tau}}\mathrm{d} \tau,\quad \underline x_0 = x_0.   
\end{align}
Observe that $ \underline x_t$ is clearly an increasing function of $t$. Hence, we obtain
\begin{align*}
 \underline x_{t+1} &= \underline x_{t} + c_1\cdot \int_{t}^{t+1} e^{-c_2\underline x_{\tau}}\mathrm{d} \tau\\
&\leq \underline x_{t} + c_1\cdot \int_{t}^{t+1} e^{-c_2\underline x_{t}}\mathrm{d} \tau\\
&= \underline x_{t} + c_1 \exp(-c_2\underline x_{t} )
\end{align*}
for all $t\in \NN$.
By comparing the preceding inequality with the iterative formula for $\{x_t\}$, the comparison theorem implies that $x_t \geq \underline x_t$ for all $t\in \NN$. Equation \eqref{eq:integral_equation} possesses an exact solution given by
\begin{align*}
\underline x_t = \frac{1}{c_2}\log(c_1 c_2 t + e^{c_2 x_0}).
\end{align*}
Thus, we have
\begin{align*}
x_t \geq \frac{1}{c_2}\log(c_1 c_2 t + e^{c_2 x_0})
\end{align*}
for all $t\in \NN$. This concludes the derivation of the lower bound.

Next, we derive the upper bound for $x_t$. We have
\begin{align*}
 x_t &= x_0 + c_1 \cdot \sum_{\tau=0}^{t-1} e^{-c_2 x_\tau} \\
 &\leq x_0 + c_1 \cdot \sum_{\tau=0}^{t} e^{- \log(c_1 c_2 \tau + e^{c_2 x_0}) }\\
 &= x_0 +c_1\cdot \sum_{\tau=0}^{t} \frac{1}{ c_1 c_2 \tau + e^{c_2 x_0}}\\
 &= x_0 + \frac{c_1}{e^{c_2 x_0}} + c_1\cdot \sum_{\tau=1}^{t} \frac{1}{c_1 c_2 \tau + e^{c_2 x_0}}\\
 &\leq x_0 + \frac{c_1}{e^{c_2 x_0}} + c_1\cdot \int_{0}^t \frac{1}{c_1 c_2 \tau + e^{c_2 x_0}}\mathrm{d}\tau,
\end{align*}
where the second inequality utilizes the lower bound for $x_t$ derived in the first part of the lemma's result. Consequently, we obtain
\begin{align*}
 x_t &\leq x_0 + \frac{c_1}{e^{c_2 x_0}} + \frac{1}{c_2}\log(c_1 c_2 t + e^{c_2 x_0}) - \frac{1}{c_2}\log(e^{c_2 x_0})\\
 &= c_1 e^{- c_2 x_0} + \frac{1}{c_2}\log(c_1 c_2 t + e^{c_2 x_0}).
\end{align*}
This completes the proof.
\end{proof}

\section{Proof of the case when $D=K$}
In this section, we provide the theoretical results for the special case $D = K$. Under this setting, the ground-truth softmax scores reduce to a trivial rank-one structure that $\Sbb^* = \frac{1}{D}\mathbf{1}_D\mathbf{1}_D^\top$. Consequently, the initialization $\Wb_{KQ}^{(0)} = \mathbf{0}_{D \times D}$ already yields $\Sbb^{(0)} = \frac{1}{D}\mathbf{1}_D\mathbf{1}_D^\top$, achieving an exact recovery of $\Sbb^*$ at the start of training. As a result, the gradient with respect to $\Wb_{KQ}$ remains zero throughout the optimization, and the problem effectively reduces to optimizing the single parameter matrix $\Wb_V$. Under this reduced setting, the loss becomes strongly convex in $\Wb_V$, and gradient descent enjoys a linear convergence rate, which is much faster than the $\Theta(1/T)$ rate established in Theorem~\ref{thm:main_result}. Since Theorem~\ref{thm:main_result} provides matching upper and lower bounds and is therefore tight and can not be improved, this linear convergence phenomenon is exclusive to the degenerate case $D = K$. This explains why the proof strategy for Theorem~\ref{thm:main_result} does not extend to the $D = K$ setting.

Now, we present the following Theorem~\ref{thm:full_average} to characterize the loss convergence when a one-layer transformer is supervised by a teacher model $f^*(\cdot)$ with $\Sbb^*=\frac{1}{D}\mathbf{1}_D\mathbf{1}_D^\top$.

\begin{theorem}\label{thm:full_average}
Suppose that $\eta \leq \frac{1}{2}$, then for any $t>0$, the excess loss is minimized as 
\begin{align*}
    \cL(\Wb_V^{(t)}, \Wb_{KQ}^{(t)}) -\cL_{\mathrm{opt}}\leq \frac{\sum_{m=1}^M\|\vb_m\|_2^2}{2}e^{-\eta(t-1)}.
\end{align*}
\end{theorem}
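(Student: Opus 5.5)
We first show that in the case $D=K$ the key-query matrix never moves and the attention is exactly uniform at every step. At initialization $\Wb_{KQ}^{(0)}=\mathbf{0}$, so $\Sbb^{(0)}=\frac1D\mathbf{1}_D\mathbf{1}_D^\top=\Sbb^*$. By induction, suppose $\Sbb^{(t)}=\Sbb^*$. The softmax Jacobian factor in \eqref{eq:wkqt_update_II} equals $\sum_{i_1,i_2}\la\wb_{V,m}^{(t)},\xb_{i_1}\ra\Sbb_{i_1,i}\Sbb_{i_2,i}(\pb_{i_1}-\pb_{i_2})$. When all entries of $\Sbb$ equal $1/D$, this sum vanishes identically, because it is antisymmetric in $(i_1,i_2)$ after summing over $i_2$: it reduces to $\frac1D\sum_{i_1}\la\wb,\xb_{i_1}\ra(\pb_{i_1}-\bar\pb)$ weighted symmetrically. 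This needs checking carefully: it only vanishes in expectation after using the Gaussian symmetry of the $\xb_{i}$ across positions. Concretely, the expected term is a linear combination of $\pb_{i_1}\pb_i^\top$ whose coefficient does not depend on $i_1$, because every position enters $\Vb^*\Xb\Sbb^*$ symmetrically. The $\pb_{i_1}-\bar\pb$ combination then sums to zero. Hence $\Wb_{KQ}^{(t)}=\mathbf 0$ for all $t$.

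Next we reduce to a quadratic, or at least strongly convex, problem in $\Wb_V$. The computation in Lemma~\ref{lemma:para_pattern_v} carries over with $p(t)\equiv 1/D$, so $\wb_{V,m}^{(t)}=C_1(t)\vb_m^*$. Write $u_m=\frac1D\sum_i\la\vb_m^*,\xb_i\ra\sim\cN(0,\|\vb_m^*\|_2^2/D)$. Then every column of both teacher and student output equals $\sigma(u_m)$ and $\sigma(C_1u_m)$ respectively, and positive homogeneity gives $\sigma(C_1u_m)=C_1\sigma(u_m)$ for $C_1\ge0$. The excess loss is therefore
\begin{align*}
\tilde\cL(C_1)=\frac{D}{2}(1-C_1)^2\sum_m\EE[\sigma(u_m)^2]=\frac{c_\sigma}{2}(1-C_1)^2\sum_m\|\vb_m^*\|_2^2 .
\end{align*}
The update \eqref{eq:C_1_updating} with $F_3=F_1=c_\sigma/D$ (Lemmas~\ref{lemma:gaussian_expecation_relu_I}, \ref{lemma:gaussian_expecation_relu_V}) becomes $C_1(t+1)-1=(1-\eta c_\sigma)(C_1(t)-1)$. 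The normalization $\|\vb_m^*\|_2=1$ absorbs the constants; we would double-check the scaling against the precise $D$ factors in the update.

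The convergence rate then follows directly. From $C_1(0)=0$ we get $|C_1(t)-1|=(1-\eta c_\sigma)^t$, so $C_1(t)\in[0,1]$ stays nonnegative when $\eta c_\sigma\le 1$, which holds for $\eta\le 1/2$. The excess loss is therefore
\begin{align*}
\frac{c_\sigma}{2}(1-\eta c_\sigma)^{2t}\sum_m\|\vb_m\|_2^2\le\frac12\sum_m\|\vb_m\|_2^2 e^{-\eta(t-1)}.
\end{align*}
This last inequality uses $c_\sigma\le1$ and $(1-x)\le e^{-x}$. The shift $t-1$ covers the case $c_\sigma<1$, where we only have $2\eta c_\sigma t\ge\eta(t-1)$ up to constants: for ReLU, $c_\sigma=1/2$ gives exactly $e^{-\eta t}$. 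For Leaky ReLU we would verify that $c_\sigma\ge1/2$, which also gives exactly $e^{-\eta t}$.

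The main obstacle is the first step, showing rigorously that the $\Wb_{KQ}$ gradient vanishes in expectation under activations. We would handle it by redoing the $I_3$ through $I_6$ computation with $G^i=[D]$, where the ``$i_1\notin G^i$'' sums are empty. The gradient then reduces to a multiple of $\sum_i\sum_{i'}\pb_{i'}\pb_i^\top$ minus the same matrix, and so equals zero.
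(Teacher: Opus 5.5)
Your argument is correct and follows essentially the paper's route. You show $\Wb_{KQ}^{(t)}\equiv\mathbf 0$ by positional symmetry (your exchangeability argument is a compact version of the paper's explicit check that $I_3=I_4$ and $I_5=I_6$), reduce to the scalar recursion $1-C(t+1)=(1-\eta c_\sigma)\bigl(1-C(t)\bigr)$, and turn $(1-C(t))^2$ into an exponential using $c_\sigma\ge 1/2$. The one difference is how the excess loss is handled. You evaluate it exactly as $\frac{c_\sigma}{2}(1-C(t))^2\sum_m\|\vb_m^*\|_2^2$ via positive homogeneity (using $C(t)\in[0,1]$), whereas the paper upper-bounds it by $\frac12(1-C(t))^2\sum_m\|\vb_m^*\|_2^2$ via $1$-Lipschitz continuity of $\sigma$. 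Either gives the stated bound, in fact with $e^{-\eta t}$ in place of $e^{-\eta(t-1)}$.
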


Before we provide the proof for Theorem~\ref{thm:full_average}, we first provide and prove the following lemma. 

\begin{lemma}\label{lemma:para_pattern_special}
    Under the same conditions of Theorem~\ref{thm:main_result}, there exist a time dependent non-negative scalar $C(t)$, such that  \begin{align}\label{eq:wvt_formula_special}
        &\wb_{V, m}^{(t)}=C(t)\cdot\vb_{m}^*, \ \mathrm{for \ all} \ m\in [M];
    \end{align}
    and $C(t)$ has the following closed formulation:
    \begin{align*}
        C(t) = \big(1-\eta F_1(1)\big)^{t-1},
    \end{align*}
    where the function $F_1(\cdot)$ is defined in~\eqref{eq:F_1(a)}.
    In addition, $\Wb_{KQ}^{(t)}$ remains zero throughout the training. 
\end{lemma}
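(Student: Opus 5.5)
The plan is induction on $t$, re-using the gradient formulas of Lemma~\ref{lemma:gd_rule_details} and the computations in Lemma~\ref{lemma:para_pattern_v}, now specialized to $D=K$. In this case $G^i=[D]$ for every $i$ and $\Sbb^*=\frac{1}{D}\mathbf{1}_D\mathbf{1}_D^\top$.

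\textbf{Step 1: $\Wb_{KQ}$ stays zero.} Suppose $\Wb_{KQ}^{(t)}=\mathbf{0}$. Then every column of $\Sbb^{(t)}$ equals $\frac{1}{D}\mathbf{1}_D$, so $\Sbb^{(t)}_{i_1,i}\Sbb^{(t)}_{i_2,i}=D^{-2}$ for all $i_1,i_2$. In~\eqref{eq:wkqt_update_II} the $\Wb_{KQ}$-gradient for column $i$ has the form $\EE[g_{m,i}(\Xb)\sum_{i_1}\la\wb_{V,m}^{(t)},\xb_{i_1}\ra D^{-2}\sum_{i_2}(\pb_{i_1}-\pb_{i_2})]\pb_i^\top$. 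Here the scalar prefactor $g_{m,i}$ and the pre-activation depend on $\Xb$ only through $\sum_{i'}\xb_{i'}$, which is symmetric in the indices. Exchangeability of the i.i.d.\ columns therefore makes $\EE[g_{m,i}\la\wb_{V,m}^{(t)},\xb_{i_1}\ra]$ independent of $i_1$. Since $\sum_{i_1,i_2}(\pb_{i_1}-\pb_{i_2})=\mathbf{0}$, the gradient vanishes and $\Wb_{KQ}^{(t+1)}=\mathbf{0}$. This is the same mechanism as in Lemma~\ref{lemma:para_pattern_KQ}: with $K=D$ the ``off-group'' sum is empty, and the coefficient of the in-group term is exactly $0$.

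\textbf{Step 2: the recursion for $\wb_{V,m}$.} Assume $\wb_{V,m}^{(t)}=C(t)\vb_m^*$. Repeat the $I_1$/$I_2$ computation of Lemma~\ref{lemma:para_pattern_v} with $p(t)\equiv 1/D$, $K=D$, and the variance of the out-of-group Gaussian equal to $0$. The orthogonal-complement directions $\bxi_{m,k}$ again drop out by independence, so the gradient is parallel to $\vb_m^*$ and the structural form is preserved. The scalar update becomes
\begin{align*}
C(t+1)=C(t)+D\eta\big(F_3(\tfrac1D,0)-C(t)F_1(\tfrac1D)\big).
\end{align*}
Since $F_3(a,0)=F_1(a)$ and $F_1(a)=c_\sigma a$ by Lemma~\ref{lemma:gaussian_expecation_relu_I}, and $F_1(1)=c_\sigma$, this reads
\begin{align*}
1-C(t+1)=\big(1-\eta F_1(1)\big)\big(1-C(t)\big).
\end{align*}

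\textbf{Step 3: closed form, and the expected obstacle.} Unrolling Step~2 from $C(0)=0$ gives $1-C(t)=(1-\eta F_1(1))^t$, and $0\le \eta F_1(1)\le \frac12$ for $\eta\le\frac12$. The stated identity $C(t)=(1-\eta F_1(1))^{t-1}$ is inconsistent with the zero initialization: at $t=0$ its right-hand side is $(1-\eta F_1(1))^{-1}\neq 0$. I therefore expect the hardest part to be reconciling the indexing rather than any computation. My plan is to prove exactly the quantity the recursion controls, namely the deviation $1-C(t)=(1-\eta F_1(1))^{t}$. I will then record the bound $|1-C(t)|\le(1-\eta F_1(1))^{t-1}$, which is the form (exponent $t-1$) consumed in Theorem~\ref{thm:full_average}: there the excess loss equals $\frac{c_\sigma}{2}(1-C(t))^2\sum_m\|\vb_m\|_2^2$, and this is at most $\frac12\sum_m\|\vb_m\|_2^2e^{-\eta(t-1)}$. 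I will read the statement's closed form as this deviation identity and flag the indexing explicitly. Nonnegativity of $C(t)$ follows directly, since $(1-\eta F_1(1))^t\le 1$ gives $C(t)\ge 0$.
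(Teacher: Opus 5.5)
Your proposal is correct and follows the paper's route. It uses induction, the orthogonal decomposition along $\vb_m^*$ to keep $\wb_{V,m}^{(t)}$ parallel to $\vb_m^*$, and the vanishing of the $\Wb_{KQ}$-gradient by exchangeability of the columns; the last point is exactly what the paper's explicit check $I_3=I_4$, $I_5=I_6$ amounts to.

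Your diagnosis of the closed form is also right. From $1-C(t+1)=(1-\eta F_1(1))(1-C(t))$ and $C(0)=0$ one gets $1-C(t)=(1-\eta F_1(1))^{t}$, so the statement's ``$C(t)$'' should read ``$1-C(t)$''. The exponent $t-1$ comes from an off-by-one (and a stray $\eta$) in the paper's own unrolling. Since $1-\eta F_1(1)\in[\tfrac12,1]$, your exponent-$t$ identity still implies the bound used in Theorem~\ref{thm:full_average}.
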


\begin{proof}[Proof of Lemma~\ref{lemma:para_pattern_special}] W.L.O.G., we assume that $\vb_m^*$ is already normalized, and $\bGamma_{m} = [\vb_{m}^*, \bxi_{m, 2}, \ldots, \bxi_{m, d}]\in \RR^{d\times d}$ be an orthogonal matrix with $\vb_{m}$ being its first column.
    We prove this lemma by induction. Since these two conclusions holds at initialization with $C(0) = 0$. It is sufficient to prove that $\nabla_{\wb_{V, m}}\cL(\Wb_{V}^{(t)};\Wb_{KQ}^{(t)}) = c(t)\cdot \vb_m^*$ and $\nabla_{\Wb_{KQ}}\cL(\Wb_{V}^{(t)};\Wb_{KQ}^{(t)}) = \mathbf{0}$, when assuming $\wb_{V, m}^{(t)}=C(t)\cdot\vb_{m}^*$ and $\Wb_{KQ}^{(t)} =\mathbf{0}$. Notice that $\Wb_{KQ}^{(t)} =\mathbf{0}$ implies that $\Sbb^{(t)}_{i', i} = 1/D$ for all $i', i\in [D]$. 
    By the gradient calculations demonstrated in Lemma~\ref{lemma:gd_rule_details},
    we have 

    \begin{align}\label{eq:wvt_formula_specialII}
    \nabla_{\wb_{V, m}}\cL(\Wb_{V}^{(t)};\Wb_{KQ}^{(t)}) &= -\sum_{i=1}^D\sum_{i_1=1}^D\EE\Bigg[ \bigg[ \Yb_{m, i} - \sigma\bigg(\sum_{i_1=1}^D\la\wb_{V, m}^{(t)}, \xb_{i_1}\ra\Sbb^{(t)}_{i_1, i}\bigg)\bigg]\notag\\
    &\quad \cdot\sigma'\bigg(\sum_{i_1=1}^D\la\wb_{V, m}^{(t)}, \xb_{i_1}\ra\Sbb^{(t)}_{i_1, i}\bigg)\xb_{i_1}\Sbb^{(t)}_{i_1, i}\Bigg]\notag\\
    &= -\underbrace{\sum_{i=1}^D\sum_{i_1=1}^D\EE\Bigg[ \Yb_{m, i}\sigma'\bigg(\sum_{i_1=1}^D\frac{\la\wb_{V, m}^{(t)}, \xb_{i_1}\ra}{D}\bigg)\frac{\xb_{i_1}}{D}\Bigg]}_{I_1} \notag\\
    &\quad + \underbrace{\sum_{i_1=1}^D\EE\Bigg[ \sigma\bigg(\sum_{i_1=1}^D\frac{\la\wb_{V, m}^{(t)}, \xb_{i_1}\ra}{D}\bigg)\sigma'\bigg(\sum_{i_1=1}^D\frac{\la\wb_{V, m}^{(t)}, \xb_{i_1}\ra}{D}\bigg)\xb_{i_1}\Bigg]}_{I_2}
\end{align}

For $I_1$, we have 
\begin{align*}
    I_1 &= \sum_{i=1}^D\sum_{i_1=1}^D\EE\Bigg[ \Yb_{m, i}\sigma'\bigg(\sum_{i_1=1}^D\frac{\la\wb_{V, m}^{(t)}, \xb_{i_1}\ra}{D}\bigg)\frac{\bGamma_{m}\bGamma_{m}^\top\xb_{i_1}}{D}\Bigg] \\
    &= \sum_{i=1}^D\sum_{i_1=1}^D\EE\Bigg[\big[f^*(\Xb)\big]_{m, i}\sigma'\bigg(\sum_{i_1=1}^D\frac{\la\wb_{V, m}^{(t)}, \xb_{i_1}\ra}{D}\bigg)\frac{\la\vb^*_{m}, \xb_{i_1}\ra}{D}\Bigg]\cdot\vb^*_m \\
    &\quad + \sum_{i=1}^D\sum_{i_1=1}^D\sum_{k=2}^d\EE\Bigg[\big[f^*(\Xb)\big]_{m, i}\sigma'\bigg(\sum_{i_1=1}^D\frac{\la\wb_{V, m}^{(t)}, \xb_{i_1}\ra}{D}\bigg)\frac{\la\bxi_{m, k}, \xb_{i_1}\ra}{D}\Bigg]\cdot\bxi_{m, k}\\
    &=\sum_{i=1}^D\sum_{i_1=1}^D\EE\Bigg[\big[f^*(\Xb)\big]_{m, i}\sigma'\bigg(\sum_{i_1=1}^D\frac{\la\wb_{V, m}^{(t)}, \xb_{i_1}\ra}{D}\bigg)\frac{\la\vb^*_{m}, \xb_{i_1}\ra}{D}\Bigg]\cdot\vb^*_m.
\end{align*}
The first quality holds as $\cE$ is mean-zero and independent with $\Xb$, and the last equality holds as the orthogonality between $\vb^*_m$ and $\bxi_{m, k}$ implies that $\la\vb^*_m, \xb_{i_2}\ra$ is independent with $\la\bxi_{m, k}, \xb_{i_1}\ra$ for all $i_1, i_2 \in [D]$. Notice that $\big[f^*(\Xb)\big]_{m, i} = \frac{1}{D}\sigma\big(\sum_{i_1=1}^D\la\vb^*_{m}, \xb_{i_1}\ra\big)$ and $\sigma'\big(\sum_{i_1=1}^D\frac{\la\wb_{V, m}^{(t)}, \xb_{i_1}\ra}{D}\big) = \sigma'\big(\frac{C(t)}{D}\sum_{i_1=1}^D\la\vb^*_{m}, \xb_{i_1}\ra\big) = \sigma'\big(\sum_{i_1=1}^D\la\vb^*_{m}, \xb_{i_1}\ra\big)$. Consequently, $\la\bxi_{m, k}, \xb_{i_1}\ra$ is a mean-zero Gaussian random variable, and independent with both $\big[f^*(\Xb)\big]_{m, i}$ and $\sigma'\big(\sum_{i_1=1}^D\frac{\la\wb_{V, m}^{(t)}, \xb_{i_1}\ra}{D}\big)$ simultaneously, implying that 
\begin{align*}
    &\EE\Bigg[\big[f^*(\Xb)\big]_{m, i}\sigma'\bigg(\sum_{i_1=1}^D\frac{\la\wb_{V, m}^{(t)}, \xb_{i_1}\ra}{D}\bigg)\la\bxi_{m, k}, \xb_{i_1}\ra\Bigg]\\
    =&\EE\Bigg[\big[f^*(\Xb)\big]_{m, i}\sigma'\bigg(\sum_{i_1=1}^D\frac{\la\wb_{V, m}^{(t)}, \xb_{i_1}\ra}{D}\bigg)\Bigg]\EE[\la\bxi_{m, k}, \xb_{i_1}\ra] = 0.
\end{align*}
Based on previous results, by plugging $\big[f^*(\Xb)\big]_{m, i} = \frac{1}{D}\sigma\big(\sum_{i_1=1}^D\la\vb^*_{m}, \xb_{i_1}\ra\big)$ and utilizing the definition of $F_1(a)$ in~\eqref{eq:F_1(a)}, we can further derive that 
\begin{align*}
    I_1 &= \sum_{i=1}^D\sum_{i_1=1}^D\EE\Bigg[\frac{1}{D}\sigma\bigg(\sum_{i_1=1}^D\la\vb^*_{m}, \xb_{i_1}\ra\bigg)\sigma'\bigg(\sum_{i_1=1}^D\frac{\la\wb_{V, m}^{(t)}, \xb_{i_1}\ra}{D}\bigg)\frac{\la\vb^*_{m}, \xb_{i_1}\ra}{D}\Bigg]\cdot\vb^*_m\\
    &= \frac{1}{D}\EE\Bigg[\sigma\bigg(\sum_{i_1=1}^D\la\vb^*_{m}, \xb_{i_1}\ra\bigg)\sigma'\bigg(\sum_{i_1=1}^D\la\vb^*_{m}, \xb_{i_1}\ra\bigg)\sum_{i_1=1}^D\la\vb^*_{m}, \xb_{i_1}\ra\Bigg]\cdot\vb^*_m= F_1(1)\cdot\vb^*_m.
\end{align*}
The second equality is derived by fact that $\sigma(ax) = a\sigma(x)$ and $\sigma'(ax) = \sigma'(x)$ if $a\geq 0$. Then we can conclude the final result by the definition of $F_1(a)$ in~\eqref{eq:F_1(a)}. Similar to the process of handling $I_1$, we have the following for $I_2$:
\begin{align*}
    I_2 &= 
    \sum_{i_1=1}^D\EE\Bigg[ \sigma\bigg(\sum_{i_1=1}^D\frac{\la\wb_{V, m}^{(t)}, \xb_{i_1}\ra}{D}\bigg)\sigma'\bigg(\sum_{i_1=1}^D\frac{\la\wb_{V, m}^{(t)}, \xb_{i_1}\ra}{D}\bigg)\bGamma_{m}\bGamma_{m}^\top\xb_{i_1}\Bigg]\\
    &=\frac{C(t)}{D}\EE\Bigg[\sigma\bigg(\sum_{i_1=1}^D\la\vb^*_{m}, \xb_{i_1}\ra\bigg)\sigma'\bigg(\sum_{i_1=1}^D\la\vb^*_{m}, \xb_{i_1}\ra\bigg)\sum_{i_1=1}^D\la\vb^*_{m}, \xb_{i_1}\ra\Bigg]\cdot\vb^*_m =C(t)F_1(1)\cdot\vb^*_m.
\end{align*}
Plugging the calculation results for $I_1$ and $I_2$ into~\eqref{eq:wvt_formula_specialII}, we can immediately derive that $\nabla_{\wb_{V, m}}\cL(\Wb_{V}^{(t)};\Wb_{KQ}^{(t)}) = c(t)\cdot \vb_m^*$, which, as we stated previously, directly conclude~\eqref{eq:wvt_formula_special}. In addition, we can further calculate that 
\begin{align*}
    \wb_{V, m}^{(t+1)} &= C(t+1) \cdot\vb^*_{m}=\Big(C(t) + \eta F_1(1)\big(1-C(t)\big) \Big)\cdot\vb^*_m,
\end{align*}
which implies $C(t)$ possesses the updating rules as:
\begin{align*}
    C(t+1) = C(t) + \eta F_1(1)\big(1-C(t)\big). 
\end{align*}
Subtracting $1$ on both sides of the equation above and rearranging the terms, we can obtain that
\begin{align*}
    1-C(t+1) = \big(1-\eta F_1(1)\big)\big(1-\eta C(t)\big) = \ldots = \big(1-\eta F_1(1)\big)^t\big(1-\eta C(0)\big) = \big(1-\eta F_1(1)\big)^t.
\end{align*}
This proves the closed formulation of $C_1(t)$. In the next, we prove that $\nabla_{\Wb_{KQ}}\cL(\Wb_{V}^{(t)};\Wb_{KQ}^{(t)}) = \mathbf{0}$. By Lemma~\ref{lemma:gd_rule_details}, we have
\begin{align}\label{eq:wkqt_formula_III_special}
&\sqrt{D}\nabla_{\Wb_{KQ}}\cL(\Wb_{V}^{(t)};\Wb_{KQ}^{(t)})\notag\\=& -\frac{1}{D^2}\!\!\sum_{m=1}^M\!\sum_{i=1}^D \EE\Bigg[ \bigg[\big[f^*(\Xb)\big]_{m, i} \!- \!\sigma\bigg(\sum_{i_1=1}^D\frac{\la\wb_{V, m}^{(t)}, \xb_{i_1}\ra}{D}\bigg)\bigg]\sigma'\bigg(\sum_{i_1=1}^D\frac{\la\wb_{V, m}^{(t)}, \xb_{i_1}\ra}{D}\bigg)\notag\\
&\qquad \cdot \sum_{i_1=1}^D \!\sum_{i_2= 1}^D\la\wb_{V, m}^{(t)},\xb_{i_1}\ra(\pb_{i_1}\!- \!\pb_{i_2})\pb_i^\top\Bigg]\notag\\
    =&-\frac{1}{D^2}\underbrace{\sum_{m=1}^M\sum_{i=1}^D \EE\Bigg[ \big[f^*(\Xb)\big]_{m, i}\sigma'\bigg(\sum_{i_1=1}^D\frac{\la\wb_{V, m}^{(t)}, \xb_{i_1}\ra}{D}\bigg) \sum_{i_1=1}^D \sum_{i_2=1}^D\la\wb_{V, m}^{(t)},\xb_{i_1}\ra\pb_{i_1}\pb_i^\top\Bigg]}_{I_3}\notag\\
    &+\frac{1}{D^2}\underbrace{\sum_{m=1}^M\sum_{i=1}^D \EE\Bigg[ \big[f^*(\Xb)\big]_{m, i}\sigma'\bigg(\sum_{i_1=1}^D\frac{\la\wb_{V, m}^{(t)}, \xb_{i_1}\ra}{D}\bigg) \sum_{i_1=1}^D \sum_{i_2=1}^D\la\wb_{V, m}^{(t)},\xb_{i_1}\ra\pb_{i_2}\pb_i^\top\Bigg]}_{I_4}\notag\\
    &+\frac{1}{D^2}\underbrace{\sum_{m=1}^M\sum_{i=1}^D \EE\Bigg[ \sigma\bigg(\sum_{i_1=1}^D\frac{\la\wb_{V, m}^{(t)}, \xb_{i_1}\ra}{D}\bigg)\sigma'\bigg(\sum_{i_1=1}^D\frac{\la\wb_{V, m}^{(t)}, \xb_{i_1}\ra}{D}\bigg) \sum_{i_1=1}^D \sum_{i_2=1}^D\la\wb_{V, m}^{(t)},\xb_{i_1}\ra\pb_{i_1}\pb_i^\top\Bigg]}_{I_5}\notag\\
    &-\frac{1}{D^2}\underbrace{\sum_{m=1}^M\sum_{i=1}^D \EE\Bigg[ \sigma\bigg(\sum_{i_1=1}^D\frac{\la\wb_{V, m}^{(t)}, \xb_{i_1}\ra}{D}\bigg)\sigma'\bigg(\sum_{i_1=1}^D\frac{\la\wb_{V, m}^{(t)}, \xb_{i_1}\ra}{D}\bigg) \sum_{i_1=1}^D \sum_{i_2=1}^D\la\wb_{V, m}^{(t)},\xb_{i_1}\ra\pb_{i_2}\pb_i^\top\Bigg]}_{I_6}.
\end{align}
In the next, we discuss the value of $I_3$, $I_4$, $I_5$, and $I_6$ respectively. For $I_3$, it can be calculated as

\begin{align*}
    I_3 =& C(t)\sum_{m=1}^M\sum_{i_1=1}^D \EE\Bigg[ \sigma\bigg(\sum_{i_1=1}^D\la\vb^*_{m}, \xb_{i_1}\ra\bigg)\sigma'\bigg(\sum_{i_1=1}^D\la\vb^*_{m}, \xb_{i_1}\ra\bigg) \la\vb^*_{m}, \xb_{i_1}\ra\Bigg]\pb_{i_1}\sum_{i=1}^D\pb_i^\top\\
    =&\frac{C(t)}{D}\sum_{m=1}^M\EE\Bigg[ \sigma\bigg(\sum_{i_1=1}^D\la\vb^*_{m}, \xb_{i_1}\ra\bigg)\sigma'\bigg(\sum_{i_1=1}^D\la\vb^*_{m}, \xb_{i_1}\ra\bigg) \sum_{i_1=1}^D\la\vb^*_{m}, \xb_{i_1}\ra\Bigg]\sum_{i_1=1}^D\pb_{i_1}\sum_{i=1}^D\pb_i^\top\\
=&MC(t)F_1(1)\sum_{i_1=1}^D\pb_{i_1}\sum_{i=1}^D\pb_i^\top.
\end{align*}
The second equation holds as $\EE\Big[ \sigma\big(\sum_{i_1=1}^D\la\vb^*_{m},\xb_{i_1}\ra\big)\sigma'\big(\sum_{i_1=1}^D\la\vb^*_{m}, \xb_{i_1}\ra\big) \sum_{i_1=1}^D\la\vb^*_{m}, \xb_{i_1}\ra\Big]$ takes identical value for all $i_1\in [D]$ as they follows the same distribution. Similarly, for $I_4$, we can calculate it as 
\begin{align*}
    I_4= & \frac{C(t)}{D}\sum_{m=1}^M\EE\Bigg[ \sigma\bigg(\sum_{i_1=1}^D\la\vb^*_{m}, \xb_{i_1}\ra\bigg)\sigma'\bigg(\sum_{i_1=1}^D\la\vb^*_{m}, \xb_{i_1}\ra\bigg) \sum_{i_1=1}^D\la\vb^*_{m}, \xb_{i_1}\ra\Bigg]\sum_{i_2=1}^D\pb_{i_2}\sum_{i=1}^D\pb_i^\top\\
=&MC(t)F_1(1)\sum_{i_2=1}^D\pb_{i_2}\sum_{i=1}^D\pb_i^\top.
\end{align*}
This implies that $I_3 = I_4$. Through a similar calculation, we can also get 
\begin{align*}
    I_5=I_6 = MC^2(t)F_1(1)\sum_{i_2=1}^D\pb_{i_2}\sum_{i=1}^D\pb_i^\top.
\end{align*}
Plugging the results that  $I_3=I_4$, and $I_5=I_6$ into~\eqref{eq:wkqt_formula_III_special}, we immediately concludes that $\nabla_{\Wb_{KQ}}\cL(\Wb_{V}^{(t)};\Wb_{KQ}^{(t)}) = \mathbf{0}$. This completes the proof. 
\end{proof}

With the conclusions of Lemma~\ref{lemma:para_pattern_special}, we are ready to prove Theorem~\ref{thm:full_average}.

\begin{proof}[Proof of Theorem~\ref{thm:full_average}]

Since we have demonstrated in Lemma~\ref{lemma:para_pattern_special} that $\Wb_{KQ}^{(t)} = \mathbf{0}$, implying $\Sbb^{(t)} = \frac{1}{D}\mathbf{1}_D\mathbf{1}_D^\top$. We can decompose and simplify the loss as
\begin{align*}
    \cL(\Wb_V^{(t)}; \Wb_{KQ}^{(t)}) &= \frac{1}{2}\sum_{m=1}^M\sum_{i=1}^D \EE\Bigg[\bigg(\Yb_{m, i} - \sigma\bigg(\sum_{i_1=1}^D\frac{\la\wb_{V, m}^{(t)}, \xb_{i_1}\ra}{D}\bigg)\bigg)^2\Bigg]\\
    &= \frac{1}{2}\sum_{m=1}^M\sum_{i=1}^D \EE\Bigg[\bigg(\big[f^*(\Xb)\big]_{m, i} - \sigma\bigg(\sum_{i_1=1}^D\frac{\la\wb_{V, m}^{(t)}, \xb_{i_1}\ra}{D}\bigg)\bigg)^2\Bigg] + \frac{1}{2}\EE\big[\|\cE\|_F^2\big],
\end{align*}
where the last term is essential $\cL_{\mathbf{opt}}$, and the last inequality holds by the independence between $\Xb$ and $\cE$ and the fact that $\cE$ is zero-mean. Since this equation holds, in the next, we directly deal with $\cL(\Wb_V^{(t)}; \Wb_{KQ}^{(t)})- \cL_{\mathbf{opt}}$. By utilizing the fact that $|\sigma(x) - \sigma(y)|\leq |x-y|$ for all $x, y\in \RR$, we can derive that 
    \begin{align*}
        &\cL(\Wb_V^{(t)}; \Wb_{KQ}^{(t)}) - \cL_{\mathbf{opt}}= \frac{1}{2}\sum_{m=1}^M\sum_{i=1}^D \EE\Bigg[\bigg(\big[f^*(\Xb)\big]_{m, i} - \sigma\bigg(\sum_{i_1=1}^D\frac{\la\wb_{V, m}^{(t)}, \xb_{i_1}\ra}{D}\bigg)\bigg)^2\Bigg]\\
    \leq &\frac{1}{2D}\sum_{m=1}^M\EE\Bigg[\bigg(\Big(1-C(t)\Big)\sum_{i_1=1}^D\la\vb^*_m, \xb_{i_1}\ra\bigg)^2\Bigg] = \frac{\big(1-C(t)\big)^2\sum_{m=1}^M\|\vb_m\|_2^2}{2}.
    \end{align*}
    Notice that we have derived $1-C(t) = (1-\eta F_1(1))^{t-1} \leq e^{-\eta F_1(1)(t-1)} \leq e^{-\eta (t-1)/2}$ in Lemma~\ref{lemma:para_pattern_special}, where the last inequality holds as $F_1(1)\geq \frac{1}{2}$ demonstrated by Lemma~\ref{lemma:gaussian_expecation_relu_I}. Plugging this result into the upper bound above, then we complete the proof.
    
\end{proof}


\section{Additional experiments}
In this section, we present additional experimental results on transformer learning of bilinear teacher models under more general training data distributions.


Each batch of training data ${(\Xb_n, \Yb_n)}_{n=1}^N$ is generated with $\Xb_n$ drawn from either
(i). a Student-t distribution with $df = 5$; or
(ii). a mean-centered Gumbel distribution with $loc = 0$ and $scale = 1$. We then repeat the learning experiments for the six types of teacher models described in Section~\ref{section: experiments}. Except for the change in the input data distribution, all other configurations remain identical to those in the Gaussian-data experiments.


The results are demonstrated in the following Figures~\ref{fig:synthe_3},~\ref{fig:synthe_4},~\ref{fig:synthe_5}, and~\ref{fig:synthe_6}. Figure~\ref{fig:synthe_3} and~\ref{fig:synthe_4} report the results when training data are generated from Student-T distribution, while Figure~\ref{fig:synthe_5} and~\ref{fig:synthe_6} report the results when training data are generated from mean-centered Gumbel distribution. We could observe that all these results seems almost identical to those demonstrated in main body. Specifically, for both different distributed training data, we can still observe that the curves of training loss have slopes approximately $-1$ on their tails, and the curves of O.O.D. loss have slopes approximates $-0.5$. These results empirically shows that the $\Theta(1/T)$ convergence rate for training loss and $\cO(1/\sqrt{T})$ convergence rate for O.O.D. loss still hold, even the model are trained on Gaussian data. In addition, we can also observe that the trained softmax attention scores $\Sbb^{(T)}$ perfectly replicate the patterns of $\Sbb^*$, almost identical to the results obtained on Gaussian data. 


\begin{figure}[h]
	\begin{center}
 		\vspace{-0.1in}
			\subfigure[Excess training loss (log-log)]{\includegraphics[width=0.32\textwidth]{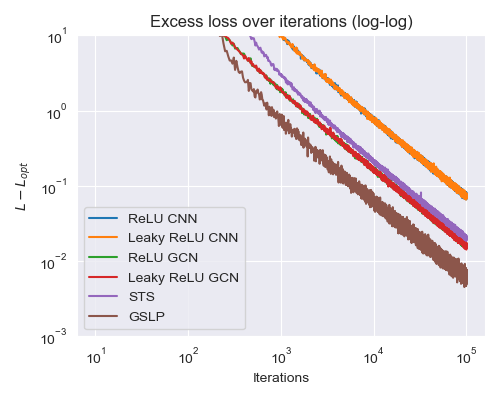}\label{subfig:excess_training_student_T}}
			\subfigure[Excess OOD test loss (log-log)]{\includegraphics[width=0.32\textwidth]{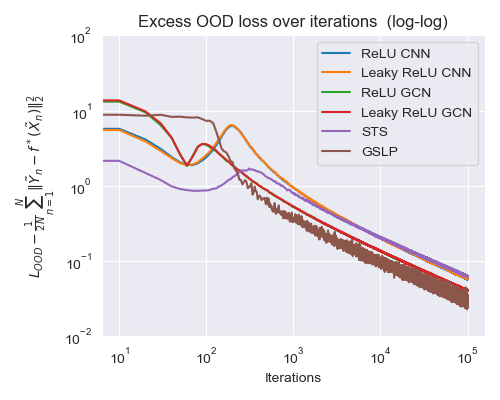}\label{subfig:excess_testing_student_T}}
            \subfigure[Cosine similarity]{\includegraphics[width=0.323\textwidth]{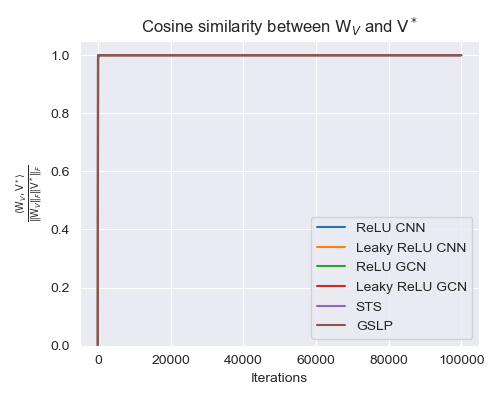}\label{subfig:cos_similarity_student_T}}
	\end{center}
        \caption{Excess training loss, excess OOD test loss (both in log-log scales), and cosine similarity between the value matrix $\Wb_V$ of one layer transformer~\eqref{def:position_only_tf}, and ground truth value matrix $\Vb^*$. These results are presented for experiments where training data is generated from Student-T distribution.}
	\label{fig:synthe_3}
\end{figure}

\begin{figure}[h]
	\begin{center}
			\subfigure[ReLU CNN]{\includegraphics[width=0.30\textwidth]{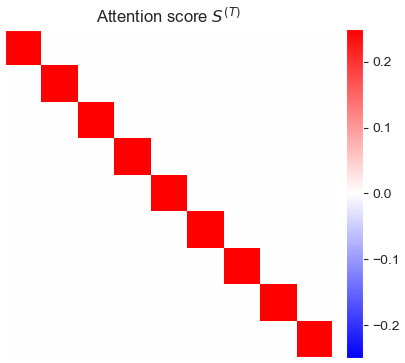}\label{subfig:attn_score_cnn_r_student_T}}
			\subfigure[Leaky ReLU CNN]{\includegraphics[width=0.30\textwidth]{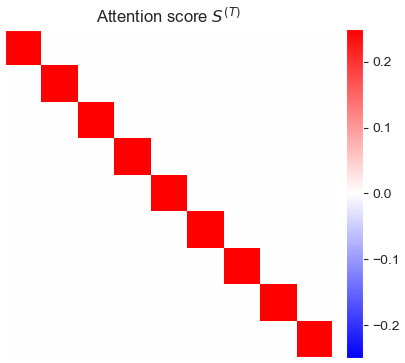}\label{subfig:attn_score_cnn_l_student_T}}
            \subfigure[ReLU GCN]{\includegraphics[width=0.30\textwidth]{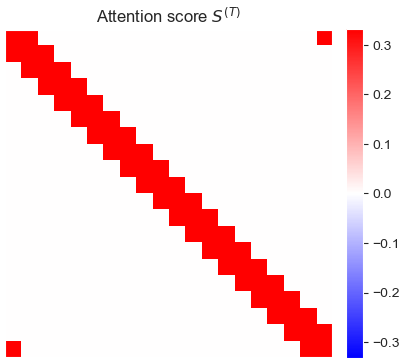}\label{subfig:attn_score_gcn_r_student_T}}
            \vspace{-0.1in}
            \\
			\subfigure[Leaky ReLU GCN]{\includegraphics[width=0.30\textwidth]{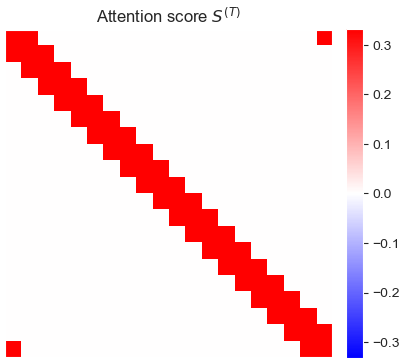}\label{subfig:attn_score_gcn_l_student_T}}
			\subfigure[Sparse token selection]{\includegraphics[width=0.30\textwidth]{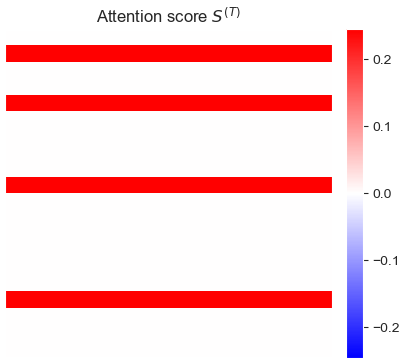}\label{subfig:attn_score_sts_student_T}}
            \subfigure[Group sparse linear predictor]{\includegraphics[width=0.30\textwidth]{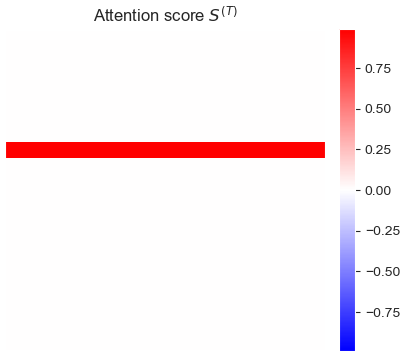}\label{subfig:attn_score_gslp_student_T}}
	\end{center}
        \caption{Heatmap of attention score matrix $\Sbb^{(T)}$ when the training loss converges. These results are presented for experiments where training data is generated from Student-T distribution.}
	\label{fig:synthe_4}
\end{figure}

\begin{figure}[h]
	\begin{center}
			\subfigure[Excess training loss (log-log)]{\includegraphics[width=0.32\textwidth]{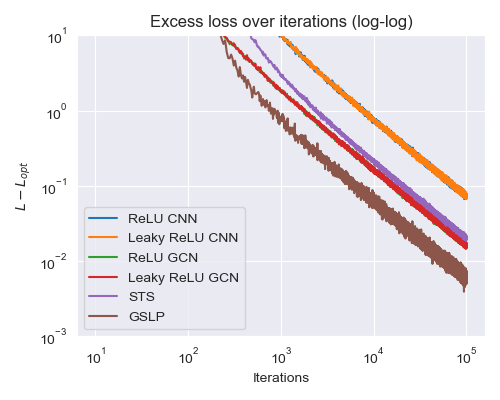}\label{subfig:excess_training_Gumbel}}
			\subfigure[Excess OOD test loss (log-log)]{\includegraphics[width=0.32\textwidth]{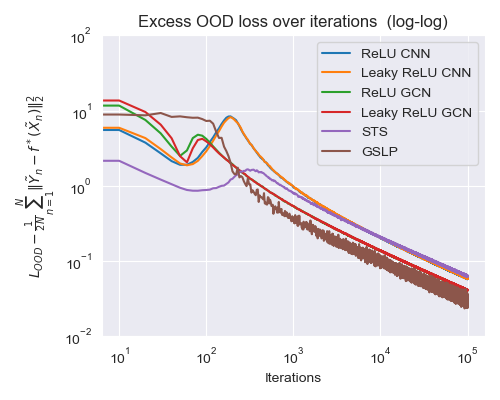}\label{subfig:excess_testing_Gumbel}}
            \subfigure[Cosine similarity]{\includegraphics[width=0.323\textwidth]{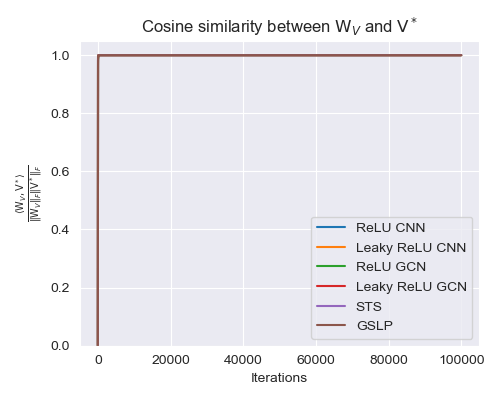}\label{subfig:cos_similarity_Gumbel}}
	\end{center}
        \caption{Excess training loss, excess OOD test loss (both in log-log scales), and cosine similarity between the value matrix $\Wb_V$ of one layer transformer~\eqref{def:position_only_tf}, and ground truth value matrix $\Vb^*$. These results are presented for experiments where training data is generated from mean-centered Gumbel distribution.} 
	\label{fig:synthe_5}
\end{figure}

\begin{figure}[h]
	\begin{center}
			\subfigure[ReLU CNN]{\includegraphics[width=0.30\textwidth]{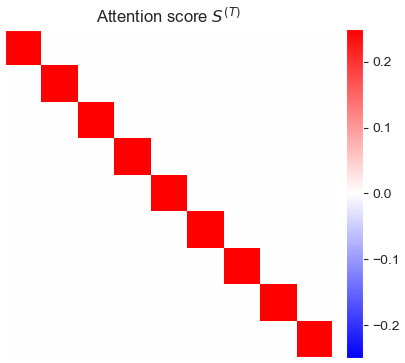}\label{subfig:attn_score_cnn_r_Gumbel}}
			\subfigure[Leaky ReLU CNN]{\includegraphics[width=0.30\textwidth]{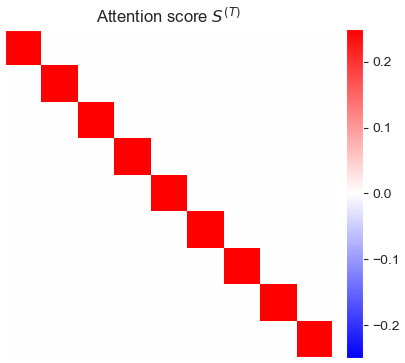}\label{subfig:attn_score_cnn_l_Gumbel}}
            \subfigure[ReLU GCN]{\includegraphics[width=0.30\textwidth]{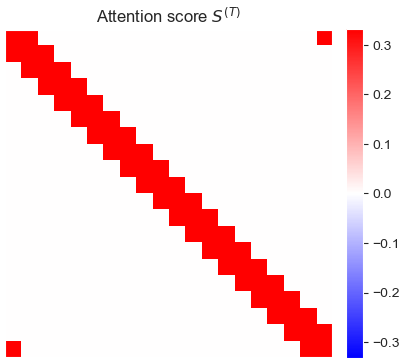}\label{subfig:attn_score_gcn_r_Gumbel}}
            \vspace{-0.1in}
            \\
			\subfigure[Leaky ReLU GCN]{\includegraphics[width=0.30\textwidth]{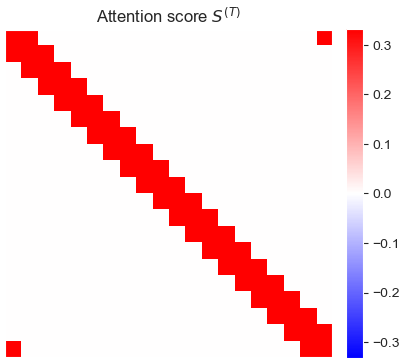}\label{subfig:attn_score_gcn_l_Gumbel}}
			\subfigure[Sparse token selection]{\includegraphics[width=0.30\textwidth]{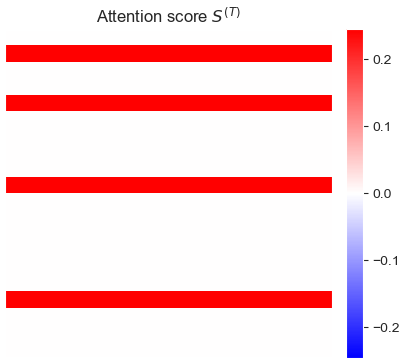}\label{subfig:attn_score_sts_Gumbel}}
            \subfigure[Group sparse linear predictor]{\includegraphics[width=0.30\textwidth]{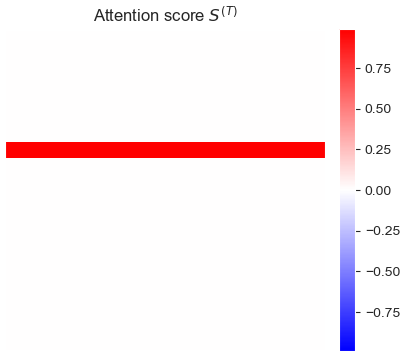}\label{subfig:attn_score_gslp__Gumbel}}
	\end{center}
        \caption{Heatmap of attention score matrix $\Sbb^{(T)}$ when the training loss converges. These results are presented for experiments where training data is generated from mean-centered Gumbel distribution.}
	\label{fig:synthe_6}
\end{figure}

\end{document}